\documentclass{article}

\usepackage[final]{neurips_2025}

\usepackage[utf8]{inputenc} %
\usepackage[T1]{fontenc}    %
\usepackage[breaklinks]{hyperref}       %
\usepackage{url}            %
\usepackage{booktabs}       %
\usepackage{amsfonts}       %
\usepackage{nicefrac}       %
\usepackage[nopatch=footnote]{microtype}      %
\usepackage[dvipsnames]{xcolor}         %
\usepackage{graphicx}
\usepackage{multirow}
\usepackage{float}
\usepackage{paralist}
\usepackage{amssymb}%
\usepackage{pifont}%
\PassOptionsToPackage{hyphens}{url}

\graphicspath{ {./images/} }

\usepackage[frozencache,cachedir=minted-cache]{minted}
\setminted[python]{breaklines, framesep=2mm, fontsize=\footnotesize, numbersep=5pt}

\maxdeadcycles=1000  %

\definecolor{darkblue}{rgb}{0, 0, 0.5}
\hypersetup{colorlinks=true, citecolor=darkblue, linkcolor=darkblue, urlcolor=darkblue}

\usepackage{adjustbox}
\usepackage{multirow}
\usepackage{wrapfig}
\usepackage{scalerel}
\usepackage{float}
\usepackage{fancyhdr}
\usepackage{natbib}
\usepackage{graphicx}
\usepackage{subcaption}
\captionsetup{compatibility=false}
\usepackage{amsthm}
\usepackage{thmtools} 
\usepackage{thm-restate}

\usepackage[T1]{fontenc}    %

\usepackage[utf8]{inputenc}

\usepackage{inconsolata}

\usepackage[scaled=0.85]{helvet}

\usepackage{xltabular}
\usepackage{longtable}
\usepackage{booktabs}       %
\usepackage{colortbl}
\usepackage{lscape} %
\usepackage{times}
\usepackage{latexsym}
\usepackage{datetime}       %
\usepackage{nicefrac}       %
\usepackage{url}            %
\usepackage{fnpct}           %
\usepackage{enumitem}        %
\usepackage{tcolorbox}       %
\usepackage{csquotes}        %
\usepackage{xspace}          %
\definecolor{Gray}{gray}{0.9}

\makeatletter
\xspaceaddexceptions{\csq@qclose@i}
\makeatletter

\usepackage{bbm}
\usepackage{amsfonts}
\usepackage{amssymb}
\usepackage{mathtools}
\usepackage{amsmath}
\usepackage{bm}
\usepackage{nicefrac}

\newtheorem{definition}{Definition}
\newtheorem{altdefinition}{Alternative Definition}
\newtheorem{thm}{Theorem}
\newtheorem{lemma}{Lemma}

\newtheorem{task}{Task}

\newtheorem{model}{DNN}
\newtheorem{submodule}{Submodule}
\newtheorem{causalgraph}{Alg}

\DeclareMathOperator*{\argmax}{argmax}

\usepackage{algorithm}
\usepackage{algpseudocode}  %
\makeatletter

\algnewcommand{\LineComment}[1]{\Statex \hskip\ALG@thistlm \textcolor{blue}{\(\triangleright\) #1}}

\algnewcommand{\FirstLineComment}[1]{\Statex \hskip\ALG@tlm \textcolor{blue}{\(\triangleright\) #1}}

\algnewcommand{\InlineComment}[1]{\hfill\textcolor{blue}{\(\triangleright\) #1}}
\makeatother

\usepackage{cleveref}
\crefname{section}{\S}{\S\S}
\Crefname{section}{\S}{\S\S}
\crefformat{section}{\S#2#1#3}
\crefname{figure}{Fig.}{Fig.}
\crefname{algorithm}{Alg.}{Alg.}
\crefname{line}{line}{lines}
\crefname{appendix}{App.}{}
\crefname{equation}{eq.}{eqs.}
\crefname{table}{Table}{Tables}
\crefname{proposition}{Proposition}{Propositions}
\crefname{assumption}{Assump.}{Assumps.}
\crefname{definition}{Definition}{Definitions}
\crefname{task}{Task}{Tasks}
\crefname{model}{DNN}{DNNs}
\crefname{submodule}{Submodule}{Submodules}
\crefname{causalgraph}{Alg.}{Algs.}
\crefname{thm}{Theorem}{Theorems}
\crefname{lemma}{Lemma}{Lemmas}

\usepackage[table-column-type=s]{siunitx}

\sisetup{mode = math, text-family-to-math = true, text-series-to-math = true, reset-math-version = false, quantity-product}
\DeclareSIUnit\thousand{k}
\DeclareSIUnit\million{M}
\DeclareSIUnit\billion{B}
\DeclareSIUnit\trillion{T}
\DeclareSIUnit\x{x}
\DeclareSIUnit\percent{\%}
\DeclareSIUnit\hour{h}
\DeclareSIUnit\min{m}
\DeclareSIUnit\sec{s}

\xspaceaddexceptions{\textsubscript}

\usepackage{tabularray}
\usepackage{cellspace}
\setlength\cellspacetoplimit{4pt}
\setlength\cellspacebottomlimit{4pt}
\newcommand\cincludegraphics[2][]{\raisebox{-0.2\height}{\includegraphics[#1]{#2}}}
\usepackage{setspace}

\makeatletter
\DeclareRobustCommand*{\escapeus}[1]{%
  \begingroup\@activeus\scantokens{#1 }\endgroup}
\begingroup\lccode`\~=`\_\relax
   \lowercase{\endgroup\def\@activeus{\catcode`\_=\active \let~\_}}
\makeatother

\newcommand{\myemph}[1]{\textsf{{\escapeus{#1}}}}

\setlength {\marginparwidth }{3.2cm}  %
\usepackage[textsize=tiny,disable]{todonotes}

\newcommand{\defn}[1]{\textbf{#1}}

\newcommand{\colourdnn}{purple}
\newcommand{\colourcg}{JungleGreen}
\newcommand{\mydnnmacro}[2]{\newcommand{#1}{{\color{\colourdnn}#2}}}
\newcommand{\mycgmacro}[2]{\newcommand{#1}{{\color{\colourcg}#2}}}

\DeclareMathOperator*{\defeq}{\overset{\mathtt{def}}{=}}

\newcommand*{\circled}[1]{\tikz[baseline=(char.base)]{
        \node[shape=circle,draw,thick,inner sep=1pt] (char) {\normalfont{\small #1}};}}
\DeclareMathOperator*{\myforall}{\forall}

\title{The Non-Linear Representation Dilemma: Is Causal Abstraction Enough for Mechanistic Interpretability?}

\newcommand{\epflid}{{\includegraphics[scale=0.032]{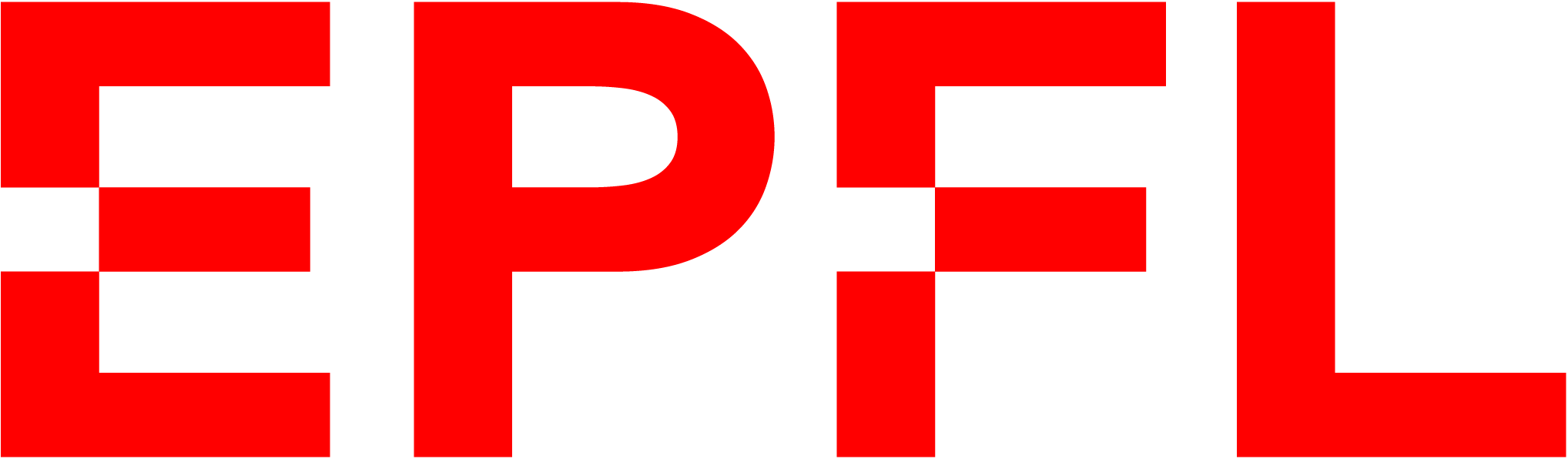}}}
\newcommand{\ethid}{{\includegraphics[scale=0.028]{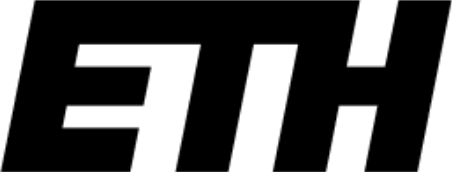}}}

\newcommand{\ethemailadress}[1]{\href{mailto:#1@inf.ethz.ch}{#1}}

\author{
    Denis Sutter,$^\ethid$
    Julian Minder,$^\epflid$ 
    Thomas Hofmann,$^\ethid$ 
    Tiago Pimentel\,$^\ethid$ \\
    \textsuperscript{\ethid}ETH Z\"urich \quad
    \textsuperscript{\epflid}EPFL \\
    {\myemph{\href{mailto:densutter@ethz.ch}{densutter}@ethz.ch},\,\, \myemph{\href{mailto:julian.minder@epfl.ch}{julian.minder}@epfl.ch},\,\, \myemph{\{\ethemailadress{thomas.hofmann}, \ethemailadress{tiago.pimentel}\}@inf.ethz.ch}} \\
    \begin{tblr}{colspec = {Q[c,m] Q[c,m]}, colsep=10pt, stretch=0}
        \cincludegraphics[width=1.1em, keepaspectratio]{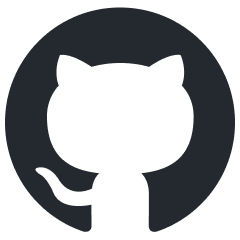} {\fontsize{11pt}{11.5pt}\selectfont\href{https://github.com/densutter/non-linear-representation-dilemma}{\myemph{densutter/non-linear-representation-dilemma}}}
    \end{tblr}\vspace{-20pt}
}

\newcommand{\citeposs}[1]{\citeauthor{#1}'s (\citeyear{#1})}

\newcommand{\prob}{p}
\newcommand{\bx}{\mathbf{x}}

\newcommand{\calX}{\mathcal{X}}
\newcommand{\by}{\mathbf{y}}
\newcommand{\calY}{\mathcal{Y}}

\newcommand{\ftask}{\mathtt{T}}
\newcommand{\R}{\mathbb{R}}

\mycgmacro{\dacg}{\mathcal{G}}
\mycgmacro{\alg}{\mathtt{A}}
\mycgmacro{\algfuncbase}{f}
\mycgmacro{\algfunc}{\algfuncbase_{\alg}}
\newcommand{\algfuncnode}[1]{\algfuncbase_{\alg}^{#1}}

\newcommand{\nodesspace}{\mathcal{N}}
\newcommand{\parents}{\mathtt{par}_{\alg}}
\newcommand{\nodevalue}{v}
\newcommand{\nodesvalues}{\mathbf{v}}

\mycgmacro{\node}{\eta}
\mycgmacro{\nodes}{\boldsymbol{\eta}}
\newcommand{\nodesinput}{\nodes_{\bx}}

\newcommand{\nodesoutput}{\nodes_{\by}}
\newcommand{\nodesinner}{\nodes_{\mathtt{inner}}}

\mycgmacro{\CG}{\mathcal{G}}
\mycgmacro{\pcg}{\prob_{\CG}}

\newcommand{\lfalse}{\mathtt{false}}
\newcommand{\ltrue}{\mathtt{true}}
\newcommand{\ytrue}{\bar{y}}

\mydnnmacro{\dnn}{\mathtt{N}}
\newcommand{\hiddenspace}{\mathcal{H}}
\newcommand{\func}{f}
\mydnnmacro{\dnnfuncbase}{\func}
\mydnnmacro{\dnnfunc}{\dnnfuncbase_{\dnn}}
\newcommand{\dnnfunclayer}[1]{\dnnfuncbase_{\dnn}^{#1}}
\newcommand{\dnnmlpfunclayer}[1]{\dnnfuncbase_{\dnn_{\mathtt{MLP}}}^{#1}}

\newcommand{\layer}{\ell}
\newcommand{\hiddenstates}{\mathbf{h}}
\mydnnmacro{\pdnn}{\prob_{\dnn}}
\newcommand{\pdnncausal}{{\textcolor{\colourdnn}{\prob}}^{\causalmap}_{\dnn}}
\newcommand{\nlayers}{L}
\newcommand{\mlpweights}{\mathbf{W}}
\newcommand{\mlpbias}{\mathbf{b}}
\newcommand{\nonlinearity}{\sigma}
\newcommand{\selfattention}{\mathtt{attn}}

\newcommand{\layerNorm}{\mathtt{LN}}

\newcommand{\concat}{\mathtt{concat}}

\newcommand{\head}{\boldsymbol \eta}
\newcommand{\mlpfunc}{\mathtt{mlp}}

\newcommand{\softmax}{\mathtt{softmax}}
\newcommand{\embeddings}{\mathbf{e}}
\newcommand{\lossfunc}{\mathcal{L}}
\newcommand{\dataset}{\mathcal{D}}
\newcommand{\dimension}{d}
\mydnnmacro{\neuronset}{\boldsymbol{\psi}}
\mydnnmacro{\neuron}{\psi}
\mydnnmacro{\Hiddenstates}{\mathbf{H}}

\newcommand{\algname}[1]{\texttt{\fontsize{9pt}{9pt}\selectfont#1}}
\newcommand{\algbothequality}{\algname{both equality relations}\xspace}
\newcommand{\algleftequality}{\algname{left equality relation}\xspace}
\newcommand{\algidentityoffirst}{\algname{identity of first argument}\xspace}
\newcommand{\algabab}{\algname{ABAB-ABBA}\xspace}
\newcommand{\alganorand}{\algname{And-Or-And}\xspace}
\newcommand{\algandor}{\algname{And-Or}\xspace}

\newcommand{\isequalsubscript}{==}
\newcommand{\isequal}{==}

\newcommand{\inductioncircle}[1]{{\color{MidnightBlue}\circled{\textbf{#1}}}}
\newcommand{\emptyintervention}{\emptyset}
\newcommand{\mathcomment}[1]{\text{\textcolor{gray}{#1}}}

\mydnnmacro{\dnninvfuncbase}{\reflectbox{$f$}}
\newcommand{\dnninvfunclayer}[1]{\dnninvfuncbase_{\dnn}^{#1}}

\newcommand{\causalmap}{\phi}
\newcommand{\interventionsize}{\lvert \neuronset^{\causalmap}_\node \rvert}

\newcommand{\abstractionmap}{\tau}
\mydnnmacro{\hiddenspacefull}{\boldsymbol{\hiddenspace}}
\mycgmacro{\nodesspacefull}{\boldsymbol{\nodesspace}}
\newcommand{\interventionsset}{\boldsymbol{\mathcal{I}}}
\newcommand{\interventionmap}{\omega_{\abstractionmap}}

\usepackage{etoolbox}
\usepackage{titlesec}

\newcommand{\zerodisplayskips}{%
  \setlength{\abovedisplayskip}{0.4em}%
  \setlength{\belowdisplayskip}{0.4em}}%
\appto{\normalsize}{\zerodisplayskips}
\appto{\small}{\zerodisplayskips}
\appto{\footnotesize}{\zerodisplayskips}

\titlespacing{\paragraph}{0pt}{0.5ex plus 0.3ex minus .2ex}{1em}

\titlespacing{\section}{0pt}{0.6ex plus 0.3ex minus .2ex}{0.4ex}
\titlespacing{\subsection}{0pt}{0.5ex plus 0.3ex minus .2ex}{0.3ex}
\titlespacing{\subsubsection}{0pt}{0.5ex plus 0.3ex minus .2ex}{0.3ex}
\setlength{\belowcaptionskip}{-8pt}

\begin{document}

\maketitle

\begin{abstract}
    The concept of \defn{causal abstraction} got recently popularised to demystify the opaque decision-making processes of machine learning models; in short, a neural network can be \defn{abstracted} as a higher-level algorithm if there exists a function which allows us to map between them.
    Notably, most interpretability papers implement these maps as linear functions, motivated by the \defn{linear representation hypothesis}: the idea that features are encoded linearly in a model's representations. 
    However, this linearity constraint is not required by the definition of causal abstraction.
    In this work, we critically examine the concept of causal abstraction by considering arbitrarily powerful alignment maps.
    In particular, we prove that under reasonable assumptions, \emph{any} neural network can be mapped to \emph{any} algorithm, rendering this unrestricted notion of causal abstraction trivial and uninformative.
    We complement these theoretical findings with empirical evidence, demonstrating that it is possible to perfectly map models to algorithms even when these models are incapable of solving the actual task; e.g.,
    on an experiment using randomly initialised language models, our alignment maps reach 100\% interchange-intervention accuracy on the indirect object identification task.
    This raises the \defn{non-linear representation dilemma}: 
    if we lift the linearity constraint imposed to alignment maps in causal abstraction analyses, we are left with no principled way to balance the inherent trade-off between these maps' complexity and accuracy.
    Together, these results suggest an answer to our title's question: causal abstraction is \emph{not} enough for mechanistic interpretability, as it becomes vacuous without assumptions about how models encode information. 
    Studying the connection between this information-encoding assumption and causal abstraction should lead to exciting future work.\looseness=-1
\end{abstract}

\vspace{-10pt}
\section{Introduction}

The increasing popularity of machine learning (ML) models has led to a surge in their deployment across various industries. 
However, the lack of interpretability in these models raises significant concerns, particularly in high-stakes applications where understanding the decision-making process is crucial \citep{Goodman2016EUregulations,tonekaboni2019clinicianswantcontextualizingexplainable,Zhu2020MultimediaIntelligence,Gao2023InterpretabilityofMachine}.
Unsurprisingly, this opacity has motivated a multitude of research on mechanistic (or causal) interpretability, which tries to analyse and understand the hidden algorithms that underlie these models \citep{olah2020zoom,elhage2021mathematical,mueller2024questrightmediatorhistory,ferrando2024primerinnerworkingstransformerbased,sharkey2025openproblemsmechanisticinterpretability}.\looseness=-1

\begin{figure}[t]
    \centering
    \includegraphics[width=0.95\textwidth]{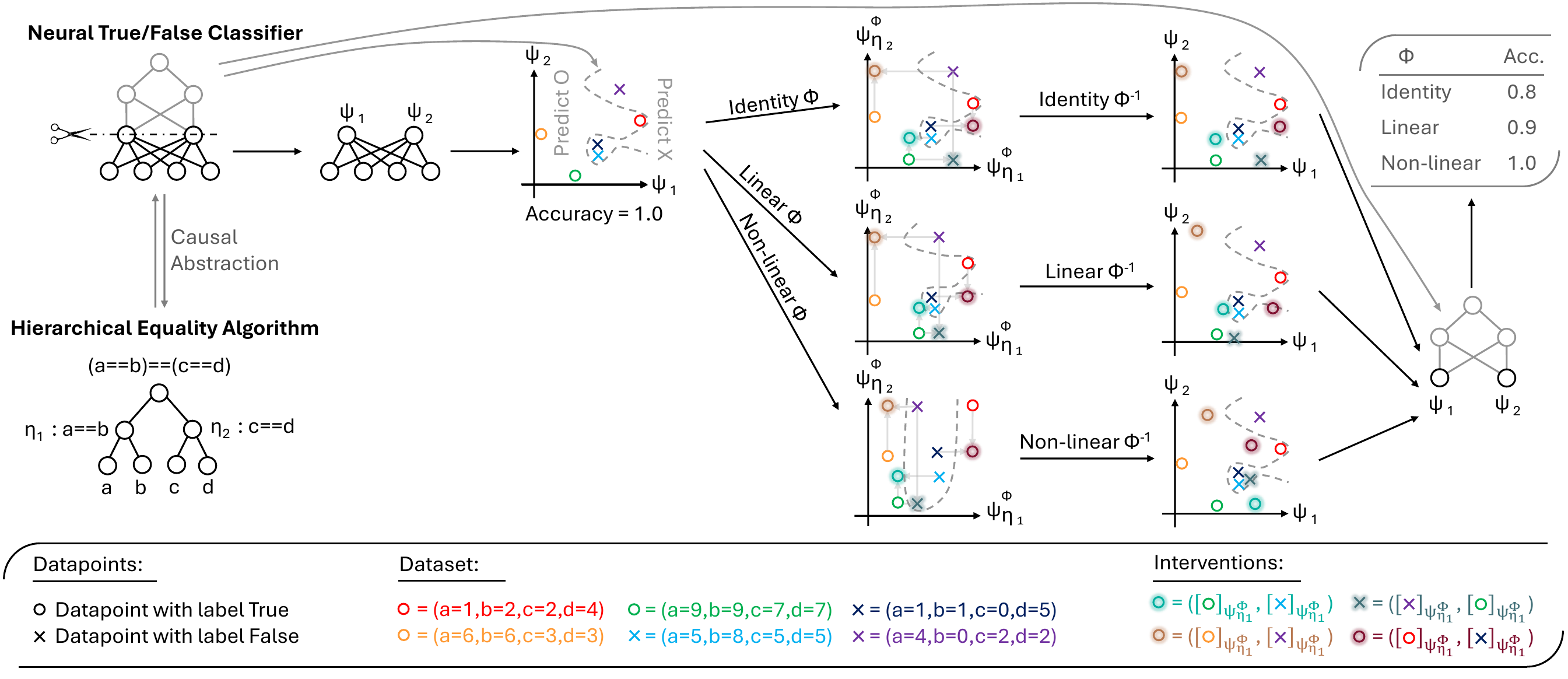}
    \vspace{-8pt}
    \caption{A visualisation of what happens when analysing causal abstractions with increasingly complex alignment maps $\causalmap$. The more complex $\causalmap$ is, the higher the intervention accuracy---and, consequently, the stronger the algorithm--DNN alignment. In \cref{theorem:existing_causalmap_for_any_algorithm}, we show that given arbitrarily complex alignment maps, we can always find a perfect alignment (under reasonable assumptions).\looseness=-1}
    \label{fig:figure_1}
    \vspace{-5pt}
\end{figure}

A promising approach to address this challenge is \defn{causal abstraction} \citep{Beckers2018AbstractingCM,Geiger2023CausalAA}, which tries to map the behaviour of a model to a higher-level (and conceptually simpler) algorithm which solves the task.
At the core of this concept is the idea that if an intervention is found to change a model's behaviour in a way that aligns with a specific algorithm, then that algorithm can be considered implemented by the model.
Recent research, however, has raised considerable issues with this approach \citep[e.g.,][]{Makelov2023IsTT,Mueller2024MissedCA,Sun2025HyperDASTA}. 
Among those, \citet{meloux2025everything} notes that a model's causal abstraction is not necessarily unique, showing that many algorithms can be aligned to the same neural network. 
Additionally, most work on causal abstraction \citep{wu2023interpretability,Geiger2023FindingAB,minder2025controllable, Sun2025HyperDASTA} implicitly assumes information is linearly encoded in models' representations, relying on the \defn{linear representation hypothesis} \citep{alain2018understanding,bolukbasi2016lsh}.
Linearity, however, is not required by the definition of causal abstraction \citep{Beckers2018AbstractingCM} and increasing evidence suggests that not all representations may be linearly encoded \citep{white-etal-2021-non,olah2024linear,Mueller2024MissedCA,Csordas2024RecurrentNN,Engels2024NotAL,engels2025decomposingdarkmattersparse,kantamneni2025languagemodelsusetrigonometry}.\looseness=-1

In this paper, we first prove that, once we drop the linearity constraint, \emph{any} model can be perfectly mapped to \emph{any} algorithm under relatively weak assumptions---e.g., hidden activation's input-injectivity and output-surjectivity, which we will define formally.
This renders causal abstraction vacuous when used without constraints.
If we restrict alignment maps to only consider, e.g., linear functions, this problem does not arise though.
It follows that causal abstraction implicitly relies on strong assumptions about how features are encoded in deep neural networks (DNNs), and becomes trivial without such assumptions.
This puts us at an impasse: we may want to rely on stronger notions of causal abstraction which may leverage non-linearly encoded information, but this may make our analyses vacuous; we call this the \defn{non-linear representation dilemma} (schematised in \cref{fig:figure_1}).

To empirically validate our theoretical results, we reproduce the original distributed alignment search (DAS) experiments \citep{Geiger2023FindingAB}, but while leveraging more complex alignment maps. 
We find that key empirical patterns they observed---such as the first layer being easier to map to the tested algorithms in a hierarchical equality task---vanish when we use more powerful maps. 
Additionally, we find that we can achieve over 80\% interchange intervention accuracy (IIA) using non-linear alignment maps in randomly initialised models.
Extending our experiments to language models from the Pythia suite \citep{biderman2023pythia}, we show that near-perfect maps can be found for randomly initialised models in the indirect object identification (IOI) task \citep{wang2023interpretability}; notably, as training progresses, the complexity of the alignment maps needed to achieve perfect IIA in this task decreases. 
Overall, our results show that causal abstraction, while promising in theory, suffers from a fundamental limitation: without \textit{a priori} constraints on the used alignment maps, it becomes vacuous as a method for understanding neural networks.\looseness=-1

\section{Background}\label{sec:background}
\newcommand{\Argmax}[2]{\mathtt{argmax}_{#1}(#2)}

In this section, we formally define algorithms (\cref{sec:algorithm}) and deep neural networks (\cref{sec:dnn}).
These will then be used to define a causal abstraction (\cref{sec:causal_abstraction}).
First, we formalise a task as a function $\ftask: \calX \to \calY$, where $\bx \in \calX$ represents a set of input features and $\by \in \calY$ denotes the corresponding output.

\vspace{-2pt}
\subsection{Algorithms} \label{sec:algorithm}
\vspace{-1pt}

\newcommand{\partialsmall}{\prec}
\newcommand{\partialsmallalg}{\prec_{\alg}}
\newcommand{\partialsmalldnn}{\prec_{\dnn}}
\newcommand{\constants}{\mathbf{c}}
\newcommand{\constant}{c}
\newcommand{\nodesall}{\nodes_{\mathtt{all}}}

Given a task $\ftask$, we may hypothesise different ways it can be solved.
We term each such hypothesis an algorithm\footnote{“Algorithm” here need not match a formal definition as, e.g., the considered functions may be uncomputable.} $\alg$, which we represent as a deterministic causal model---a directed acyclic graph that implements a function $\algfunc: \calX \to \calY$.\footnote{\citet{Geiger2023CausalAA} also considers cyclic deterministic causal models and \citet{Beckers2018AbstractingCM} considers cyclic and stochastic causal models. We leave the expansion of our work to such models for future work.\looseness=-1}
These causal models have a set of nodes $\nodesall$ which can be decomposed into three disjoint sets: (i) input nodes $\nodesinput$ representing elements in $\bx$,
(ii) output nodes $\nodesoutput$ representing elements in $\by$,
and
(iii) inner nodes $\nodesinner$
representing intermediate variables used in the computation of $\algfunc$.
As we focus on acyclic causal models, the edges in this graph induce a partial ordering on nodes $\nodesinput \partialsmall \nodesinner \partialsmall \nodesoutput$.
Let $\nodevalue_\node$ denote the value held by node $\node$, and let $\nodesvalues_\nodes$ denote values taken by the set of nodes $\nodes$.
The set of incoming edges to a node $\node$ represent a direct causal relationship between that node and its parents $\parents(\node)$, denoted: $\nodevalue_{\node} = \algfuncnode{\node}(\nodesvalues_{\parents(\node)})$.
We can compute algorithm $\algfunc$
by iteratively solving the value of its nodes while respecting their partial ordering:\looseness=-1
\begin{align}
    \nodesvalues_{\nodesinput} \defeq \bx, \qquad\quad 
    \mathop{\forall}_{\node \in \nodesinner \cup \nodesoutput}
    \nodevalue_{\node} = \algfuncnode{\node}(\nodesvalues_{\parents(\node)}), \qquad\quad
    \algfunc(\bx) \defeq \nodesvalues_{\nodesoutput}
\end{align}
where we define $\nodesvalues_{\nodesinput}$ to be the input value $\bx$, and take the value of the output nodes $\nodesvalues_{\nodesoutput}$ as our algorithm's output.
Importantly, for an algorithm $\alg$ to represent a task $\ftask$, its output under ``normal'' operation must be $\algfunc(\bx) = \ftask(\bx)$.
For an example of a task and related algorithms, see \cref{subsubsec:hierarchical_equality_details}.

\newcommand{\interventions}{\mathbf{I}}
\newcommand{\interventionsalg}{\interventions_{\alg}}
\newcommand{\interventionsdnn}{\interventions_{\dnn}}

These causal models, however, allow us to go beyond ``normal'' operations and investigate the behaviour of our algorithm under counterfactual settings.
We can, for instance, investigate what its behaviour would be if we enforce a node $\node'$'s
value to be a constant $\nodevalue_{\node'} = \constant$, which we write as:
\begin{align}
    \nodesvalues_{\nodesinput} = \bx, \quad\,\, 
    \nodevalue_{\node'} = \constant, \quad\,\, 
    \mathop{\forall}_{\node \in (\nodesinner\cup \nodesoutput)\setminus\{\node'\}}\!\!\!\!
    \nodevalue_{\node} = \algfuncnode{\node}(\nodesvalues_{\parents(\node)}), \quad\,\,
    \algfunc(\bx, (\nodevalue_{\node'} \leftarrow \constant)) = \nodesvalues_{\nodesoutput}
\end{align}\\[-5pt]
Now, let $\algfuncnode{:\node}(\bx)$ represent a function which runs our algorithm with input $\bx$ until it reaches node $\node$, outputting its value $\nodevalue_{\node}$.
We can use such interventions to investigate the behaviour of algorithm $\alg$ under input $\bx$, when node $\node'$ is forced to assume the value it would have under $\bx'$ as: \smash{$\algfunc(\bx, (\nodevalue_{\node'} \leftarrow \algfuncnode{:\node'}(\bx')))$}.
Now, let $\interventionsalg$ be a multi-node intervention, e.g., 
\smash{$\interventionsalg = (\nodesvalues_{\nodes'} \leftarrow \constants')$} where 
\smash{$\nodes' = [\node', \node'']$} and $\constants' = [\algfuncnode{:\node'}(\bx'), \constant]$.
We can observe how our model operates under those interventions by running $\algfunc(\bx, \interventionsalg)$. 
See \cref{sec:pseudo_code_intervention} for  a pseudo-code implementation.\looseness=-1

\subsection{Deep Neural Networks}
\label{sec:dnn}

\newcommand{\layerdim}[1]{\lvert \neuronset_{#1} \rvert}
\newcommand{\bxdata}{\bx^{(n)}}
\newcommand{\bydata}{\by^{(n)}}
Deep neural networks (DNNs) are the driving force behind recent advances in ML and can be defined as a sequence of functions \smash{$\dnnfunclayer{\layer}: \hiddenspace_{\neuronset_{\layer}} \to \hiddenspace_{\neuronset_{\layer+1}}$}, where $\neuronset_{\layer}$ denotes the set of neurons in layer $\layer$ and $\hiddenspace_{\neuronset_{\layer}}$ is the corresponding vector space. 
A DNN $\dnn$ with $\nlayers$ layers can be specified as follows:
\begin{align}
    \hiddenstates_{\neuronset_{1}} = \dnnfunclayer{0}(\bx), \qquad
    \hiddenstates_{\neuronset_{\layer+1}} = \dnnfunclayer{\layer}(\hiddenstates_{\neuronset_{\layer}}), \qquad
    \pdnn(\by \mid \bx) = \dnnfunclayer{\nlayers}(\hiddenstates_{\neuronset_{\nlayers}})
\end{align}
where $\hiddenstates_{\neuronset_{\layer}}$ denotes the vector of activations for neurons $\neuronset_{\layer}$. 
We focus on DNNs with real-valued neurons and probabilistic outputs, so that \smash{$\hiddenspace_{\neuronset_{0}} = \calX$, $\hiddenspace_{\neuronset_{\ell}} = \R^{\layerdim{\ell}}$} and \smash{$\hiddenspace_{\neuronset_{\nlayers+1}} = \Delta^{|\calY|-1}$}.
We define \smash{$\dnnfunclayer{\neuronset'}(\bx')$} as the function that computes the activations of the subset of neurons $\neuronset'$ when the network is evaluated on input $\bx'$. In particular, \smash{$\dnnfunclayer{\neuronset_{\layer}}(\bx)$} returns the activations at layer $\layer$ for input $\bx$. Thus, the standard computation of the DNN corresponds to evaluating \smash{$\pdnn(\by \mid \bx) = \dnnfunclayer{\neuronset_{\nlayers+1}}(\bx)$.}
This formulation allows us to instantiate common architectures, such as multi-layer perceptrons (MLPs) or transformers, by specifying the form of each \smash{$\dnnfunclayer{\layer}$} and the structure of the neuron sets $\neuronset_{\layer}$.
The parameters of these models are typically optimised to minimise the cross-entropy loss.

Notably, similarly to the algorithms above, a DNN's architecture induces a partial ordering on its neurons, respecting the order in which they are computed $\neuronset_{0} \partialsmall \neuronset_{\layer} \partialsmall \neuronset_{\nlayers}$.
We can thus analogously define a \defn{DNN intervention} as follows:
given a set of neurons $\neuronset$ in the network and a corresponding set of values $\constants_{\neuronset}$, we denote the intervention by
\smash{$\dnnfunclayer{\neuronset_{\nlayers\!+\!1}}\big(\bx,\, (\hiddenstates_{\neuronset} \leftarrow \constants_{\neuronset})\big)$}.
This notation means that, during the forward computation of the DNN, the activations of the neurons in $\neuronset$ are fixed to the specified values $\constants_{\neuronset}$, while the rest of the network operates as usual.

\section{Causal Abstraction}
\label{sec:Causal_Abstraction}
\label{sec:causal_abstraction}

\newcommand{\nodesint}{\nodes_{\mathtt{int}}}
\newcommand{\neuronsetint}{\neuronset_{\mathtt{int}}}
\newcommand{\neuronsetintcausal}{\neuronset_{\mathtt{int}}^\causalmap}
\mycgmacro{\uselesspartition}{\!\bot}

To define causal abstraction, we will base ourselves on the definitions in \citet{Beckers2018AbstractingCM} and \citet{Geiger2023CausalAA}.
Let an \defn{abstraction map} be defined as $\abstractionmap: \hiddenspacefull \to \nodesspacefull$, where $\hiddenspacefull$ and $\nodesspacefull$ are, respectively, the Cartesian products of the hidden state-spaces in a neural network $\dnn$ (i.e., $\neuronsetint \defeq \neuronset_{1:}$), and the node value-spaces  in an algorithm $\alg$ (i.e., $\nodesint \defeq \nodesinner\cup\nodesoutput$), both excluding the inputs $\neuronset_0$ and $\nodesinput$.
In words, an abstraction map translates the inner states of a neural network into an algorithms' inner states.
Now, consider the DNN intervention $\interventionsdnn = \hiddenstates_{\neuronset} \leftarrow \constants_{\neuronset}$ 
and the algorithm intervention $\interventionsalg = \nodesvalues_{\nodes} \leftarrow \constants_{\nodes}$. 
Further, let $\hiddenspacefull_{\hiddenstates_{\neuronset} = \constants_{\neuronset}}$ be the set of states 
in a DNN for which $\hiddenstates_{\neuronset} = \constants_{\neuronset}$ holds, and equivalently for $\nodesspacefull_{\nodesvalues_{\nodes} = \constants_{\nodes}}$.
Under abstraction map $\abstractionmap$, we can define an intervention map as:\looseness=-1
\begin{align}\label{eq:interventionmap}
    \interventionmap(\hiddenstates_{\neuronset} \leftarrow \constants_{\neuronset}) = \left\{\begin{array}{lr}
         \nodesvalues_{\nodes} \leftarrow \constants_{\nodes} & 
         \,\,\mathtt{if}\,\, \nodesspacefull_{\nodesvalues_{\nodes} = \constants_{\nodes}} = \{\tau(\hiddenstates) \mid \hiddenstates \in \hiddenspacefull_{\hiddenstates_{\neuronset} = \constants_{\neuronset}}\} \\
         \mathtt{undefined} & \mathtt{else}
    \end{array}\right.
\end{align}

Intuitively, $\interventionmap$ maps a DNN intervention $\interventionsdnn$ to an algorithmic one $\interventionsalg$ if the sets of states they induce on $\dnn$ and $\alg$, respectively, are the same.
Further, let $\interventionsset_{\alg}$ be a set of interventions $\interventionsalg$ 
which can be performed on algorithm $\alg$.
We can use $\interventionmap$ to derive a set of equivalent DNN interventions as:
\begin{align}
    \interventionsset_{\dnn} = \{
    \hiddenstates_{\neuronset} \leftarrow \constants_{\neuronset} \mid
    \nodesvalues_{\nodes} \leftarrow \constants_{\nodes} \in \interventionsset_{\alg}, 
    \omega_{\abstractionmap}(\hiddenstates_{\neuronset} \leftarrow \constants_{\neuronset}) = \nodesvalues_{\nodes} \leftarrow \constants_{\nodes}\}
\end{align}
Given these definitions, we now put forward a first notion of causal abstraction.

\begin{definition}[\textbf{from \citealp{Beckers2018AbstractingCM}}]\label{def:tauabstraction}
    An algorithm $\alg$ is a \defn{$\boldsymbol{\abstractionmap}$-abstraction} of a  neural network $\dnn$ iff:
        $\abstractionmap$ is surjective;
        $\interventionsset_{\alg} = \omega_\abstractionmap(\interventionsset_{\dnn})$;\footnote{We overload function $\interventionmap$ here, with $\interventionmap(\interventionsset_{\dnn})$ simply applying $\interventionmap$ elementwise to the interventions in set $\interventionsset_{\dnn}$.} and
        there exists a surjective $\abstractionmap_{\nodesinput}$ such that:
        \begin{align}
            \myforall\nolimits_{\substack{\bx\in\calX \\\interventionsdnn\in\interventionsset_{\dnn}}}:\  \abstractionmap(\dnnfunclayer{\neuronsetint}(\bx,\interventionsdnn))=\algfuncnode{:\nodesint}(\abstractionmap_{\nodesinput}(\bx),\interventionsalg)\quad\mathtt{ where }\,\,\interventionsalg=\interventionmap(\interventionsdnn)
        \end{align}
\end{definition}
In words, the first condition in this definition enforces that all states in an algorithm are needed to abstract the DNN, while the second and third enforce that interventions in the algorithm have the same effect as interventions in the DNN.
We further say that $\alg$ is a \defn{strong $\boldsymbol{\abstractionmap}$-abstraction} of $\dnn$ if it is a $\abstractionmap$-abstraction and $\interventionsset_{\alg}$ is maximal, meaning that any intervention is allowed on 
algorithm $\alg$.
However, while strong $\abstractionmap$-abstractions give us a notion of equivalence between algorithms and DNNs, the maps $\abstractionmap$ may be highly entangled and provide little intuition about the DNN's behaviour.
To ensure algorithmic information is disentangled in the DNN, we say $\abstractionmap$ is a \defn{constructive abstraction map} if there exists a partition of $\dnn$'s neurons $\{\neuronset_{\node} \mid \node \in \nodesint\} \cup \{\neuronset_{\uselesspartition}\}$---where $\neuronset_{\node}$ are non-empty---and there exist maps $\abstractionmap_{\node}$ such that $\abstractionmap$ is equivalent to the block-wise application of $\abstractionmap_{\node}(\hiddenstates_{\neuronset_{\node}})$.
In other words, constructive abstraction maps compute the value $\nodevalue_{\node}$ of each node $\node \in \nodesint$ in $\alg$ using non-overlapping sets of neurons $\neuronset_{\node}$ from $\dnn$, with set $\neuronset_{\uselesspartition}$ being left unused.
We now define a second notion of causal abstraction.\looseness=-1

\begin{definition}[\textbf{from \citealp{Beckers2018AbstractingCM}}]
An algorithm $\alg$ is a \defn{constructive abstraction} of a neural network $\dnn$ iff there exists an $\abstractionmap$:
for which $\alg$ is a strong $\abstractionmap$-abstraction of $\dnn$;
and $\abstractionmap$ is constructive.\looseness=-1
\end{definition}

As we will deal with algorithm--DNN pairs which share the same input and output spaces, we will impose an additional constraint on $\abstractionmap$---one that is not present in \citeposs{Beckers2018AbstractingCM} definition.
Namely, we restrict: $\neuronset_{\nodesinput}$ to be the neurons in layer zero and $\abstractionmap_{\nodesinput}$ to be the identity; and $\neuronset_{\nodesoutput}$ to be the neurons in layer $\nlayers+1$ and $\abstractionmap_{\nodesoutput}$ to be the $\argmax$ operation.\footnote{We note that this implies algorithm $\alg$ and network $\dnn$ must have the same outputs on the input set $\calX$.}

\subsection{Information Encoding in Neural Networks}
\label{sec:Information_encoding_in_dnns}

The definition of constructive abstraction above maps non-overlapping sets of neurons in $\dnn$, i.e., $\neuronset_{\node}$, to nodes in $\alg$, i.e., $\node$.
However, much research in ML interpretability highlights that concept information is not always neuron-aligned and that neurons are often polysemantic \citep{olah2017feature, olah2020zoom, arora-etal-2018-linear, elhage2022superposition}.
In fact, there is a large debate about how information is encoded in DNNs.
We highlight what we see as the three most prominent hypotheses here.
\begin{definition}
    The \textbf{privileged bases hypothesis} \citep{elhage2023privileged} argues that neurons form privileged bases to encode information in a neural network.
\end{definition}
Most evidence in favour of this hypothesis comes from indirect evidence: i.e., the presence of neuron-aligned outlier features or activations in DNNs \citep{kovaleva-etal-2021-bert,elhage2023privileged,he2024understanding,sun2024massive}.
Going back to 2015, \citet{karpathy2015unreasonable} already showed that a single neuron in a language model could carry meaningful information.
Importantly, this hypothesis is consistent with the notion of constructive abstraction above, as it argues each node $\node$'s information should be encoded in separate, non-overlapping sets of neurons.
Several researchers, however, question the special status of neurons assumed by this hypothesis, assuming instead that information is encoded in linear subspaces of the representation space, of which neurons are only a special case.\looseness=-1
\begin{definition}
    The \textbf{linear representation hypothesis} \citep{alain2018understanding} argues that information is encoded in linear subspaces of a neural network.
\end{definition}
A large literature has developed, backed by the linear representation hypothesis, including: concept erasure methods \citep{ravfogel-etal-2020-null,ravfogel2022linear}, probing methodologies \citep{elazar2021amnesic,ravfogel-etal-2021-counterfactual,lasri-etal-2022-probing}, and work on disentangling activations \citep{yun-etal-2021-transformer,elhage2022superposition,huben2024sparse, templeton2024scaling}.
Some, however, still question this idea that all information must be encoded linearly in DNNs: as neural networks implement non-linear functions, there is no a priori reason for why information should be linearly encoded in them \citep{conneau-etal-2018-cram,hewitt-liang-2019-designing,pimentel-etal-2020-information,pimentel-etal-2020-pareto,pimentel-etal-2022-attentional}.
Further, recent research presents strong evidence that some concepts are indeed non-linearly encoded in DNNs \citep{white-etal-2021-non,pimentel-etal-2022-attentional,olah2024linear,Csordas2024RecurrentNN,Engels2024NotAL,engels2025decomposingdarkmattersparse,kantamneni2025languagemodelsusetrigonometry}.
\begin{definition}
    The \textbf{non-linear representation hypothesis} \citep{pimentel-etal-2020-information} argues that information may be encoded in arbitrary non-linear subspaces of a neural network.
\end{definition}

\subsection{Distributed Causal Abstractions}

Following the discussion above, the definition of constructive abstraction may be too strict, as it assumes $\abstractionmap$ must decompose across neurons---and thus that node information is encoded in non-overlapping neurons.
With this in mind, \citet{Geiger2023CausalAA,Geiger2023FindingAB} proposed the notion of distributed interventions: they expose the subspaces where node information is encoded in a DNN $\dnn$ by applying a bijective function to its hidden states; this function's output is then itself a constructive abstraction of algorithm $\alg$.
Here, we make this notion a bit more formal.\looseness=-1

We define $\abstractionmap$ as a \defn{distributed abstraction map} if the following two conditions hold.
First, there exists a bijective function $\causalmap$---termed here an \defn{alignment map}---that maps the inner neurons $\neuronsetint$ of $\dnn$ block-wise to an equal-sized set of \defn{latent variables} $\neuronsetintcausal$, in a manner that respects the partial ordering of computations in the network. 
Specifically, for each layer $\ell$, there exists a bijection \smash{$\causalmap_{\layer}: \R^{\layerdim{\layer}} \to \R^{\layerdim{\layer}}$} on its neurons such that $\causalmap$ is defined as the concatenation of these layer-wise bijections.
Similarly to the neurons' activation $\hiddenstates_{\neuronset}$, we will denote latent variables as $\hiddenstates_{\neuronset^{\causalmap}} = \causalmap(\hiddenstates_{\neuronset})$.
Second, there exists a partition  
\smash{$\{\neuronset^{\causalmap}_{\node} \mid \node \in \nodesint\} \cup \{\neuronset^{\causalmap}_{\uselesspartition}\}$} of the resulting latent variables  $\neuronsetintcausal$--- where \smash{$\neuronset^{\causalmap}_{\node}$} are non-empty---and a set of maps $\abstractionmap_{\node}$ such that
$\abstractionmap$ is equivalent to the block-wise application of \smash{$\abstractionmap_{\node}(\hiddenstates_{\neuronset^{\causalmap}_{\node}})$}.
In words, a distributed abstraction map computes the value \smash{$\nodevalue_{\node}$} of each node \smash{$\node \in \nodesint$} in $\alg$ using non-overlapping partitions of latent variables \smash{$\neuronset^{\causalmap}_{\node}$} from $\dnn$, with partition \smash{$\neuronset^{\causalmap}_{\uselesspartition}$} remaining unused.\looseness=-1

Given an alignment map $\causalmap$, we can perform distributed interventions: \smash{$\hiddenstates_{\neuronset^\causalmap_\node} \leftarrow \constants_{\neuronset^\causalmap_\node}$}.
These interventions are performed by first mapping the hidden state \smash{$\hiddenstates_{\neuronset}$} to the latent variables \smash{$\hiddenstates_{\neuronset^{\causalmap}} = \causalmap(\hiddenstates_{\neuronset})$}, intervening on a subset \smash{$\neuronset^{\causalmap}_\node$} by replacing \smash{$\hiddenstates_{\neuronset^{\causalmap}_\node}$} with desired values $\constants_{\neuronset^{\causalmap}_\node}$, and then mapping these intervened latent variables back to the original neuron base via $\hiddenstates_{\neuronset}' = \causalmap^{-1}(\hiddenstates_{\neuronset^{\causalmap}}')$. 
Thus, interventions are applied in the latent space defined by $\causalmap$, generalising privileged-bases interventions to arbitrary (possibly non-linear) subspaces.
We are now in a position to define distributed abstractions.

\begin{definition}\label{definition:distributed_constructive_abstraction}
An algorithm $\alg$ is a \defn{distributed abstraction} of a neural network $\dnn$ iff there exists an $\abstractionmap$:
for which $\alg$ is a strong $\boldsymbol{\abstractionmap}$-abstraction of $\dnn$;
and $\abstractionmap$ is a distributed abstraction map.
\end{definition}

Finally, we note that the set of all possible interventions $\interventionsset_\alg$ and $\interventionsset_{\dnn}$ may be hard to analyse in practice.
\citet{Geiger2023FindingAB} thus restrict their analyses to what we term here \defn{input-restricted interventions}:
the set of interventions which are themselves producible by a set of other input-restricted interventions.
In other words, we restrict interventions $\hiddenstates_{\neuronset'} \leftarrow \constants_{\neuronset'}$ 
 (where $\neuronset'\subseteq\neuronsetint$ or $\neuronset'\subseteq\neuronsetintcausal$) 
and $\nodesvalues_{\nodes'} \leftarrow \constants_{\nodes'}$ 
to $\constants_{\neuronset'}$ and $\constants_{\nodes'}$ which are a product of other input-restricted interventions, e.g., \smash{$\constants_{\neuronset'} = \dnnfunclayer{\neuronset'}(\bx')$} or \smash{$\constants_{\nodes'} = \algfuncnode{:\nodes'}(\bx', (\nodesvalues_{\nodes''} \leftarrow \algfuncnode{:\nodes''}(\bx'')))$}.
This leads to the definition of \defn{input-restricted \boldsymbol{$\abstractionmap$}-abstraction}: a weakened notion of strong $\abstractionmap$-abstraction, where intervention sets are restricted to input-restricted interventions.
Finally, we define an analogous version of distributed abstraction, which is input-restricted; this is the notion typically used in practice by machine learning practitioners.\looseness=-1

\begin{definition}[\textbf{inspired by \citealp{Geiger2023FindingAB}}]\label{definition:input_restricted_constructive_abstraction}
An algorithm $\alg$ is an \defn{input-restricted distributed abstraction}
of a neural network $\dnn$ iff there exists an $\abstractionmap$:
for which $\alg$ is an input-restricted $\boldsymbol{\abstractionmap}$-abstraction of $\dnn$;
and $\abstractionmap$ is a distributed abstraction map.
\end{definition}

\newcommand{\calV}{\mathcal{V}}

A visual representation of how these causal abstraction definitions are related is given in \cref{app:visualization_causal_abstraction} as \cref{fig:visualization_causal_abstraction}.
Finally, we further introduce \textbf{input-restricted \boldsymbol{$\calV$}-abstractions}: input-restricted distributed abstractions for which we restrict alignment maps $\causalmap$ to be in a specific variational family $\calV$.
The case of linear alignment maps, will be particular important here---as it relates to the linear representation hypothesis---and we will thus explicitly label it as \defn{input-restricted linear abstraction}.\looseness=-1

\subsection{Finding Distributed Abstractions}
\label{subsec:Finding_Distributed_Constructive_Abstractions}
\vspace{-2pt}

\newcommand{\nodesintervened}{\overline{\nodes}}
\newcommand{\xnode}{\bx_{\node}}
\mycgmacro{\notintervened}{\emptyset}
\newcommand{\xnotintervened}{\bx_{\notintervened}}

\newcommand{\xnodedata}{\xnodedata^{(n)}}
\newcommand{\xnotinterveneddata}{\xnotintervened^{(n)}}
\newcommand{\nodesinterveneddata}{\nodesintervened^{(n)}}
\newcommand{\interventiondata}{\interventions^{(n)}}
\newcommand{\bXintervened}{\mathbf{X}^{\nodesintervened}}
\newcommand{\powerset}{\mathcal{P}}

How do we evaluate if an algorithm is an input-restricted distributed abstraction of a DNN?
\citet{Geiger2023FindingAB} proposes an efficient method to answer this, called distributed alignment search (DAS).
Before applying DAS, one must assume a partitioning $\{\neuronset^{\causalmap}_{\node} \mid \node \in \nodesint\} \cup \{\neuronset^{\causalmap}_{\uselesspartition}\}$ which  remains fixed during the method's application; we term \smash{$|\neuronset^{\causalmap}_{\node}|$} the \defn{intervention size}.
The principle behind DAS is then to leverage the constraint on $\tau_{\nodesoutput}$, which is fixed as: \smash{$\nodesvalues_{\nodesoutput} = \argmax_{\by \in \calY} \pdnn(\by \mid \bx)$} since
\smash{$\pdnn(\by \mid \bx) = \hiddenstates_{\neuronset_{\nlayers\!+\!1}}$}.
Given this constraint, we can initialise a parametrised function $\causalmap$, which we train to predict this equality under possible interventions; this is done via gradient descent, minimising the cross-entropy between the DNN and the algorithm.
Specifically, we first select a set of nodes to be intervened $\nodesintervened \in \powerset(\nodesinner)$, where $\powerset$ is a function that takes the powerset of a set, along with corresponding counterfactual inputs $\xnode \in \calX$ for each $\node \in \nodesintervened$ and a base input $\xnotintervened \in \calX$.
We then define the following two interventions:
\begin{align}
    \smash{\interventions_{\dnn} = \big[\hiddenstates_{\neuronset^{\causalmap}_{\node}}\big]_{\node \in \nodesintervened} \leftarrow \big[\dnnfunclayer{\neuronset^\causalmap_\node}(\xnode)\big]_{\node \in \nodesintervened}} 
    \quad \texttt{and} \quad
    \smash{\interventions_{\alg} = \big[\nodesvalues_{\node}\big]_{\node \in \nodesintervened} \leftarrow \big[\algfuncnode{:\node}(\xnode)\big]_{\node\in \nodesintervened}}
\end{align}
Finally, we run our algorithm under base input $\xnotintervened$ and intervention $\interventions_{\alg}$ to get a ground truth output: $\by = \algfunc(\xnotintervened, \interventions_{\alg})$.
Repeating this process $N$ times, we build a dataset $\dataset = \{(\xnotinterveneddata, \interventiondata_{\dnn}, \bydata)\}_{n=1}^{N}$ on which we can train the alignment map $\causalmap$ such that the DNN matches the algorithm:
\begin{align} \label{eq:loss_das}
    \lossfunc = - \!\!\!\! \sum_{(\xnotinterveneddata, \interventiondata_{\dnn}, \bydata) \in \dataset} \!\!\!\! \log \pdnncausal(\bydata \mid \xnotinterveneddata,\interventiondata_{\dnn}), \quad
    \mathtt{where}\quad
    \pdnncausal(\by \mid \xnotintervened, \interventions_{\dnn}) = \dnnfunc(\xnotintervened, \interventions_{\dnn})
\end{align}
Notably, DAS mostly ignores how function $\abstractionmap$ is constructed, relying solely on the assumed definition of $\abstractionmap_{\nodesoutput}$.
Finding a low-loss alignment map $\causalmap$ is then assumed as sufficient evidence that $\alg$ is an input-restricted distributed abstraction of $\dnn$.

\vspace{-3pt}
\section{Unbounded Abstractions are Vacuous} \label{sec:main_theorem}
\vspace{-2pt}

\newcommand{\cmark}{\ding{51}}%
\newcommand{\xmark}{\ding{55}}%
\newcommand{\noninter}{\text{\ding{117}}}%
\newcommand{\inter}{\text{\ding{118}}}%

\newcommand{\crossproduct}{\prod}

\newcommand{\LastInterventionLayer}{\hat{\layer}}
\newcommand{\AllActivationsLayer}{\hiddenspace_{\neuronset_{\layer}}}
\newcommand{\AllActivationsLayerH}[1]{\hiddenspace_{\neuronset_{#1}}}
\newcommand{\AllActivationsLayerClass}{\hiddenspace_{\neuronset_{\layer}}^{(\class)}}
\newcommand{\AllActivationsLayerClassSub}{\hat{\hiddenspace}_{\neuronset_{\layer}}^{(\class)}}
\newcommand{\AllActivationsLayerClassSubH}[2]{\hat{\hiddenspace}_{\neuronset_{#1}}^{(#2)}}
\newcommand{\AllInpIntActivationsLayer}{\hiddenspace_{\neuronset_{\layer}}^{\noninter}}
\newcommand{\AllInpIntActivationsLayerH}[1]{\hiddenspace_{\neuronset_{#1}}^{\noninter}}

\newcommand{\AllInpIntActivationsLayerIntervenedH}[1]{\bar{\hiddenspace}_{\neuronset_{#1}}}
\newcommand{\AllLatentVaraiblesIntervenedH}[1]{\hat{\hiddenspace}_{\neuronset^{\causalmap}_{#1}}}

\newcommand{\AllInpIntActivationsLayerTwo}{\hiddenspace_{\neuronset_{\layer}}^{\inter}}
\newcommand{\AllInpIntActivationsLayerTwoH}[1]{\hiddenspace_{\neuronset_{#1}}^{\inter}}
\newcommand{\AllInpIntTransformedActivationsLayer}{\hiddenspace_{\neuronset^{\causalmap}_{\layer}}}
\newcommand{\AllInpIntTransformedActivationsLayerOneH}[1]{\hiddenspace^{\noninter}_{\neuronset^\causalmap_{#1}}}
\newcommand{\AllInpIntTransformedActivationsLayerTwoH}[1]{\hiddenspace^{\inter}_{\neuronset^\causalmap_{#1}}}

\newcommand{\MLPPredictedClass}{c}
\newcommand{\featurefunc}{g}
\newcommand{\nodemap}{\mathtt{nodemap}}
\newcommand{\bystar}{{\by^{\star}}}
\newcommand{\class}{\bystar}

In this section, we provide our main theorem: that under reasonable assumptions, \emph{any} algorithm $\alg$ can be shown to be an input-restricted distributed abstraction of \emph{any} DNN $\dnn$, making this notion of causal abstraction vacuous.
To show that, we need a few assumptions (for their formal definition, see \cref{appsec:Linear_Representation_Hypothesis_vs_DAS}). 
Our first assumption (\cref{assumption:Countable_infinite}) is that we have a \defn{countable input-space} $\calX$.
While this may not hold in general, it holds for common applications such as language modelling (where the input-space is the countably infinite set of finite strings) or computer vision (where the input-space is a countable union of pixels, which can assume a finite set of values).
The second assumption
(\cref{assumption:injectivity}) is that DNNs are \defn{input-injective in all layers}: i.e., $\dnnfunclayer{\neuronset_\layer}$ is injective for all layers.
This guarantees that no information about a DNN's input $\bx$ is lost when computing the hidden states $\hiddenstates_{\neuronset_\layer}$.
This assumption is also present in prior work \citep[e.g.,][]{pimentel-etal-2020-information} and we show in \cref{app:injectivity_Transformer}---assuming real-valued weights and activations---that this is almost surely true for transformers at initialisation.\footnote{
Also see \citet{nikolaou2025languagemodelsinjectiveinvertible}, who show almost sure injectivity holds for transformers throughout training.\looseness=-1}
Due to floating point precision and neural collapse \citep{Papyan_2020}, it is likely not to hold fully in practice; however, it still seems to be well-approximated in many empirical settings \citep[and \cref{appsec:MLP_Injectivity_in_Hierarchical_Equality_Task}]{morris-etal-2023-text}. 
The third assumption (\cref{assumption:surjectivety}) is  \defn{strict output-surjectivity in all layers}.
This assumption guarantees that in each layer there is at least one choice of $\hiddenstates_{\neuronset_{\layer}}$ that will produce the desired output.
Notably, this assumption may not hold in theory, due to issues like the softmax-bottleneck \citep{yang2018breaking}.
In practice, however, even with large vocabulary sizes, it seems that almost all outputs can still be produced by language models \citep{grivas-etal-2022-low} which is sufficient for these DNNs to be abstracted by many algorithms.
Our fourth assumption (\cref{assumption:minimalrequirements}) is that the \defn{algorithm $\alg$ and DNN $\dnn$ have matchable partial-orderings}, meaning that there is a partitioning of neurons in $\dnn$ which would match the partial-ordering of nodes in $\alg$; this is likely to be the case for most reasonable algorithms given the size of state-of-the-art deep neural networks.
Finally, our last assumption (\cref{assumption:dnnalgsolve}) is that the \defn{DNN $\dnn$ solves the given task} $\ftask$. 
We believe this assumption to be reasonable, as it would be impractical in practice to evaluate a neural network that does not perform the task correctly.\footnote{If the model does not solve the task, perfect IIA is impossible since non-intervened inputs yield incorrect outputs. Thus, assuming the model solves the task is necessary. In practice, however, even when the DNN is imperfect, an alignment map could produce correct outputs for all intervened inputs, achieving near-perfect IIA scores.\looseness=-1}
Given these assumptions, we can now present our main theorem.\looseness=-1

\begin{restatable}{thm}{existingcausalmapforanyalgorithm}\label{theorem:existing_causalmap_for_any_algorithm}
    Given any algorithm $\alg$ and any neural network $\dnn$ such that  \cref{assumption:injectivity,assumption:Countable_infinite,assumption:surjectivety,assumption:minimalrequirements,assumption:dnnalgsolve} hold, we can show that $\alg$ is an input-restricted distributed abstraction of $\dnn$.
\end{restatable}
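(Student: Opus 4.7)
The plan is to exhibit a single distributed abstraction map $\abstractionmap$---specified via an alignment map $\causalmap$ together with block-wise readouts $\abstractionmap_\node$---that witnesses $\alg$ as an input-restricted distributed abstraction of $\dnn$. The key leverage is that $\causalmap$ is only required to be a layer-wise bijection on $\R^{\layerdim{\layer}}$, so we are free to hand-design it. Since $\calX$ is countable by \cref{assumption:Countable_infinite} and input-restricted interventions are generated from inputs in $\calX$, the set of reachable (clean or intervened) hidden states at each layer is countable, while $\R^{\layerdim{\layer}}$ is uncountable; this ``room to spare'' is what makes arbitrary bijective constructions feasible.

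First, I would use \cref{assumption:minimalrequirements} to fix a partition $\{\neuronset_\node \mid \node \in \nodesall\} \cup \{\neuronset_\uselesspartition\}$ of the neurons such that the layer $\layer(\node)$ containing $\neuronset_\node$ respects $\alg$'s partial order. Second, I would use \cref{assumption:injectivity} (layer-wise input-injectivity) to observe that every clean hidden state $\hiddenstates_{\neuronset_\layer} = \dnnfunclayer{\neuronset_\layer}(\bx)$ determines $\bx$ uniquely, and therefore determines every algorithmic value $\algfuncnode{:\node}(\bx)$. I would then define $\causalmap_\layer$ on clean hidden states so that the coordinates in partition $\neuronset^{\causalmap}_\node$ encode $\algfuncnode{:\node}(\bx)$ for each $\node$ with $\layer(\node) = \layer$, and the remaining coordinates bijectively encode $\bx$ (so that no information is lost and later layers can still proceed with their own encodings).

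Third, I would handle interventions. When a distributed intervention replaces $\hiddenstates_{\neuronset^{\causalmap}_\node}$ by an input-restricted value $\causalmap_\layer(\dnnfunclayer{\neuronset_\layer}(\xnode))|_{\neuronset^{\causalmap}_\node}$---which by construction equals the encoding of $\algfuncnode{:\node}(\xnode)$---the preimage under $\causalmap_\layer^{-1}$ must yield a hidden state whose continued forward pass outputs $\algfunc(\xnotintervened, \interventions_\alg)$. Here I would invoke \cref{assumption:surjectivety} (strict output-surjectivity at all layers) to select such preimages, iteratively matching each post-intervention latent to a hidden state that will ultimately produce the correct $\argmax$; on clean passes, \cref{assumption:dnnalgsolve} already guarantees the DNN output equals $\algfunc(\bx)$. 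Since both the domain (reachable latents) and the codomain (engineered preimages) on which this partially specifies $\causalmap_\layer$ are countable subsets of $\R^{\layerdim{\layer}}$, I would extend to a bijection on the full space by any Cantor--Bernstein-style completion on the complement. Setting $\abstractionmap_\node$ as the natural readout of the encoding inside $\neuronset^{\causalmap}_\node$, and forcing $\abstractionmap_{\nodesinput}, \abstractionmap_{\nodesoutput}$ to be the identity and $\argmax$ as required, $\abstractionmap$ becomes a distributed abstraction map by construction, and verifying the two clauses of \cref{definition:input_restricted_constructive_abstraction} reduces to observing that every input-restricted algorithmic intervention is realized by a matched DNN intervention with equal output, and vice versa.

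The main obstacle will be the cross-layer consistency of step three: an intervention at layer $\layer_1$ followed by one at $\layer_2 > \layer_1$ must land at a hidden state whose propagation matches the compound algorithmic intervention $\interventions_\alg$. Because only countably many input-restricted intervention histories exist, we must satisfy only countably many constraints, but doing so requires defining $\causalmap_\layer^{-1}$ on each post-intervention latent by reference to the full intervention history---recoverable from the latent via the injective encoding built at previous layers---and then using strict output-surjectivity to pick a compatible realization at each subsequent layer. This countable, coordinated selection is precisely why the theorem needs all of injectivity, surjectivity, countability, and matchable partial-orderings together; any single missing assumption would leave the construction either undefined on a positive-measure set or unable to realize some required output.
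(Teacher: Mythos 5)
Your proposal is correct and follows essentially the same route as the paper's proof: a layer-by-layer hand-construction of $\causalmap_\layer$ that treats the countable set of input-restricted (pre- and post-intervention) hidden states separately from the uncountable ambient space, uses input-injectivity to keep the full input/intervention history recoverable from the latents, uses strict output-surjectivity to choose preimages of intervention-only latents that force the correct $\argmax$, and extends to a full bijection by a Hilbert-hotel-style completion. The paper merely formalises your "cross-layer consistency" concern as an explicit induction with three invariants (correct per-node readout, correct downstream output, and preservation of prior node values in the unused coordinates), plus auxiliary lemmas for the countability and the uncountability of each output's decision region; your sketch contains all of these ingredients.
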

\vspace{-10pt}

\begin{proof}
    We refer to \cref{appsec:Linear_Representation_Hypothesis_vs_DAS} for the proof.
\end{proof}

\vspace{-5pt}
\section{Experimental Setup}
\vspace{-2pt}

\newcommand{\TrainingSet}{\dataset_{\mathtt{train}}}
\newcommand{\EvalSet}{\dataset_{\mathtt{eval}}}
\newcommand{\TestingSet}{\dataset_{\mathtt{test}}}
\newcommand{\TrainingSetDAS}{\dataset_{\mathtt{train}}^{\mathtt{DAS}}}
\newcommand{\EvalSetDAS}{\dataset_{\mathtt{eval}}^{\mathtt{DAS}}}
\newcommand{\TestingSetDAS}{\dataset_{\mathtt{test}}^{\mathtt{DAS}}}

\newcommand{\orthogonalmatrix}{\mlpweights_{\mathtt{orth}}}
\newcommand{\causalmapidentity}{\causalmap^{\mathtt{id}}}
\newcommand{\Rotation}{\causalmap^{\mathtt{lin}}}
\newcommand{\RevNet}{\causalmap^{\mathtt{nonlin}}}
\newcommand{\zeromatrix}{\mathbf{0}}
\newcommand{\RevNetHiddenDim}{\dimension_{\mathtt{h}}}
\newcommand{\RevNetBlocks}{K}

Building on the previous section's proof that alignment maps between DNNs and algorithms always exist, we now demonstrate their practical learnability and how increasingly complex alignment maps reveal various causal abstractions for different tasks, even on DNNs that do not solve them.

\newcommand{\IIA}{\mathtt{IIA}}
\newcommand{\revnetfunc}{\mathtt{revnet}}
\newcommand{\revnetlayers}{\nlayers_{\mathrm{rn}}}
\newcommand{\revnetdim}{\dimension_{\mathrm{rn}}}
\newcommand{\nonlinearname}{non-linear\xspace}

\paragraph{Alignment Maps.}
To assess how 
 complexity impacts causal-abstraction analyses, we explore three ways to parameterise $\causalmap$.
First, we will consider the simplest \defn{identity maps}: $\causalmapidentity(\hiddenstates) = \hiddenstates$.
This is the least expressive $\causalmap$ we consider, and if we find that $\alg$ abstracts $\dnn$ under this map, we can say that $\alg$ is a constructive abstraction of $\dnn$; further, this map implicitly assumes the privileged bases hypothesis.
For $\causalmapidentity$, we greedily search for the optimal partition $\{\neuronset^\causalmap_\node \mid \node \in \nodesint\}$ (instead of keeping it fixed) by iteratively adding neurons to them. 
For all $\nodesinner$ simultaneously, one neuron is added at a time for each $\neuronset^\causalmap_\node$, up to a maximum allowed intervention size; these neurons are chosen to minimise the loss in \cref{eq:loss_das}.
Second, we will consider \defn{linear maps}: $\Rotation(\hiddenstates) = \orthogonalmatrix\hiddenstates$, where $\orthogonalmatrix \in \R^{\dimension_{\layer} \times \dimension_{\layer}}$ is an orthogonal matrix.
This is the type of alignment map originally considered by \citet{Geiger2023FindingAB}\footnote{We note that, while \citet{Geiger2023FindingAB} describe the used $\causalmap$ as a rotation, their pyvene \citep{wu-etal-2024-pyvene} implementation uses orthogonal matrices. This, however, makes no difference in the power of the alignment map.\looseness=-1}, and implicitly assumes the linear representation hypothesis, evaluating input-restricted linear abstractions.
Finally, we consider \defn{\nonlinearname maps}:
$\RevNet(\hiddenstates) = \revnetfunc[\revnetlayers, \revnetdim](\hiddenstates)$, where $\revnetfunc[\revnetlayers, \revnetdim]$ is a reversible residual network \citep[RevNet;][]{Gomez2017TheRR} with $\revnetlayers$ layers and hidden size $\revnetdim$.
We can modulate the complexity of this final map by increasing $\revnetlayers$ and $\revnetdim$, assuming the non-linear representation hypothesis.
We note that all three maps are bijective and easily invertible.\looseness=-1

\paragraph{Evaluation Metric.}
We evaluate the effectiveness of an alignment map $\phi$ using the \defn{interchange intervention accuracy} (IIA) metric proposed by \citet{Geiger2023FindingAB}. 
For a held out test set $\dataset_{\mathtt{test}}$ with the same structure as the training set $\dataset$ defined in \Cref{subsec:Finding_Distributed_Constructive_Abstractions}, we compute the accuracy of our model (i.e., \smash{$\argmax_{\by'\in \calY} \pdnncausal(\by' \mid \xnotinterveneddata,\interventiondata_{\dnn})$}) when predicting the intervened \smash{$\bydata = \algfunc(\xnotinterveneddata, \interventionsalg^{(n)})$}.
We compare this to the DNN's accuracy on the test set $\dataset_{\mathtt{test}}$ without interventions.%

\subsection{Tasks, Algorithms, and DNNs.}

\newcommand{\somename}{\mathtt{Name}}
\newcommand{\subject}{\mathtt{S}}
\newcommand{\indirectobject}{\mathtt{IO}}
\newcommand{\sometext}{\mathtt{[Text]}}
\newcommand{\exampletext}[1]{\emph{"#1"}}
\newcommand{\colored}[2]{\textcolor{#1}{#2}}

\paragraph{Hierarchical equality task (\citet{Geiger2023FindingAB}).}
We will showcase our results primarily on this task.
Let $\bx = \bx_1 \circ \bx_2 \circ \bx_3 \circ \bx_4$ be a 16-dimensional vector, and $\bx_1$ to $\bx_4$ each be 4-dimensional vectors, where $\circ$ represents vector concatenation.
Further, let $\calX = [-.5, .5]^{16}$.
This task consists of evaluating:
$\by = (\bx_1 == \bx_2) == (\bx_3 == \bx_4)$.
As our DNN $\dnn$, we investigate a 3-layer multi-layer perceptron (MLP) with hidden size $16$, trained to perform this task; we describe this DNN, and its training procedure in more detail in \Cref{subsubsec:hierarchical_equality_details}.
Finally, we explore three algorithms for this task.
The \algbothequality algorithm first computes the two equalities ($\nodevalue_{\node_1} = (\bx_1==\bx_2)$ and $\nodevalue_{\node_2} = (\bx_3==\bx_4)$) separately; it then determines whether they are equivalent as a second step. 
The \algleftequality algorithm first computes the left equality ($\nodevalue_{\node_1} = (\bx_1==\bx_2)$), and then determines in a single step if this is equivalent to $(\bx_3==\bx_4)$.
Finally, the \algidentityoffirst algorithm assumes we copy the first input to a node ($\nodevalue_{\node_1} = \bx_1$) and then compute the output directly.
These three algorithms are more rigorously defined in \cref{subsubsec:hierarchical_equality_details}.

\paragraph{Indirect object identification (IOI) task.}
In a second set of experiments, we explore this task, inspired by \citet{wang2023interpretability} and using the dataset of \citet{fahamu_2022}. 
This task is more realistic and relies on larger (language) models.
Here, inputs $\bx\in\calX$ are strings where two people are first introduced, and later one of them assumes the role of subject ($\subject$), giving or saying something to the other, the indirect object ($\indirectobject$).
The task is then to predict the first token of the $\indirectobject$, with the output set $\calY$ containing the first token of each person's name. 
E.g., $\bx\!=\!\text{``}\mathtt{Friends\ Juana\ and\ Kristi\ found\ a\ mango\ at\ the\ bar.\ Kristi\ gave\ it\ to}$'' and $\by\!=\!\text{``}\mathtt{Juana}$''.
As our DNN, we use models from the Pythia suite \citep{biderman2023pythia} across different sizes (from 31M to 410M parameters) and training stages.
We evaluate the \algabab algorithm where, given two names A and B, an inner node $\nodevalue_{\node_1}$ captures if the sentence structure is ABAB (e.g., ``A and B ... A gave to B'') or ABBA (e.g., ``A and B ... B gave to A''), and the algorithm outputs prediction B if $\nodevalue_{\node_1}$ is ABAB and A otherwise.
This algorithm is more rigorously defined in \cref{app:ioi_task}.

\section{Experiments and Results}\label{sec:Results}

We now proceed with our empirical study, applying alignment maps of varying complexity on both ``toy'' and real neural networks to evaluate their effects on the causal abstraction method DAS.\looseness=-1

\begin{figure}[t]
    \centering
    \includegraphics[width=1.0\textwidth]{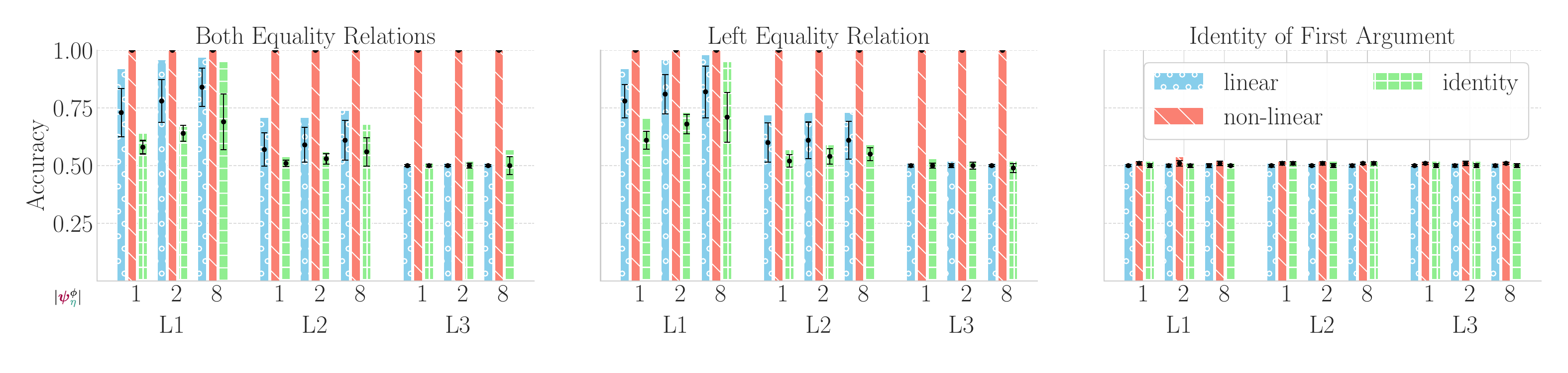}
    \vspace{-20pt}
    \caption{IIA in the hierarchical equality task for causal abstractions trained with different alignment maps $\causalmap$. The figure shows results for all three analysed algorithms for this task.
    The bars represent the max IIA across 10 runs with different random seeds. The black lines represent mean IIA with 95\% confidence intervals. The $\interventionsize$ denotes the intervention size per node. Without interventions, all DNNs reach almost perfect accuracy (>0.99). The used $\RevNet$ uses $\revnetlayers=10$ and $\revnetdim=16$.}
    \label{fig:hierarchical_eq}
\end{figure}

\vspace{-1pt}
\paragraph{Hierarchical equality task, main results.\footnote{As an additional task similar to hierarchical equality, we also explore the \defn{distributive law task} in \Cref{app:dl}.}}
\Cref{fig:hierarchical_eq} presents IIA results across different alignment maps $\phi$ for all three algorithms. 
As expected, the identity map $\causalmapidentity$ generally results in the worst performance.
Using linear alignments ($\Rotation$), we observe patterns consistent with \citet{Geiger2023FindingAB}: 
IIA for \algbothequality and \algleftequality decreases substantially in the third layer, indicating information becomes difficult to manipulate using linear transformations at deeper layers.
With the \nonlinearname alignment ($\RevNet$), this layer-dependent degradation vanishes, yielding near-optimal IIA across all layers.
Consequently, while assuming linear representations seems to enable us to identify the location of certain variables in our DNN, many of these insights fail to generalise when more powerful non-linear alignment maps are employed.
The \algidentityoffirst algorithm's IIA consistently hovers around 50\% for $\causalmapidentity$, $\Rotation$ and $\RevNet$. 
Additional experiments (\cref{appsec:MLP_Injectivity_in_Hierarchical_Equality_Task}) suggest this is caused by insufficient capacity of the used $\revnetfunc$ model, as the identity of $\bx_1$ seems to be encoded in the model's hidden states.\looseness=-1

\vspace{-1pt}
\paragraph{Hierarchical equality task, exploring $\RevNet$'s complexity.}
\Cref{fig:mlp_training_progression} (left) illustrates how varying the hidden size $\revnetdim$ and intervention size $\interventionsize$ affects IIA with the \algbothequality algorithm on layer 1 of our MLP.
\Cref{fig:mlp_training_progression} (right) shows IIA evolution as alignment complexity increases throughout the MLP's training (evaluated on its layer 1).
Remarkably, even with randomly initialised DNNs, we achieve over $80\%$ IIA using the most complex alignment map.
As training progresses, simpler alignment maps gradually attain higher IIA values.
Additional results in \Cref{app:heq_additional_results} extend these findings to other MLP layers, intervention sizes, and algorithms, consistently revealing similar patterns that reinforce our conclusion about the impact of alignment map complexity on IIA dynamics.\looseness=-1

\begin{figure}[t]
    \hfill
    \begin{subfigure}[t!]{0.45\textwidth}
        \centering
        \includegraphics[width=\textwidth]{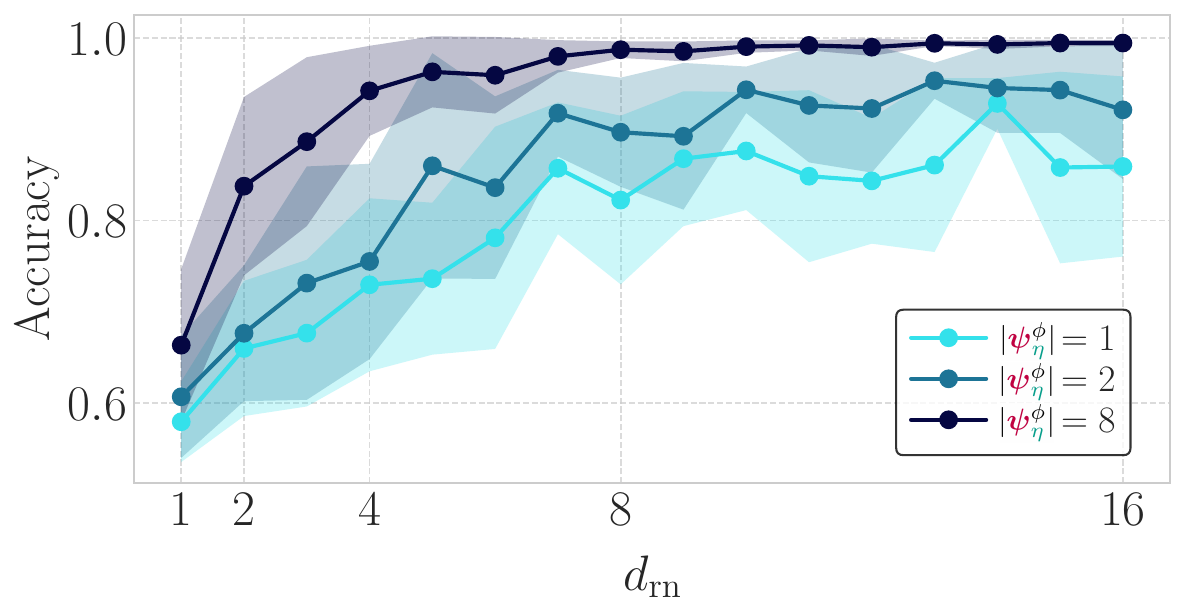}
    \end{subfigure}
    \hfill
    \begin{subfigure}[t!]{0.45\textwidth}
        \centering
        \includegraphics[width=\textwidth]{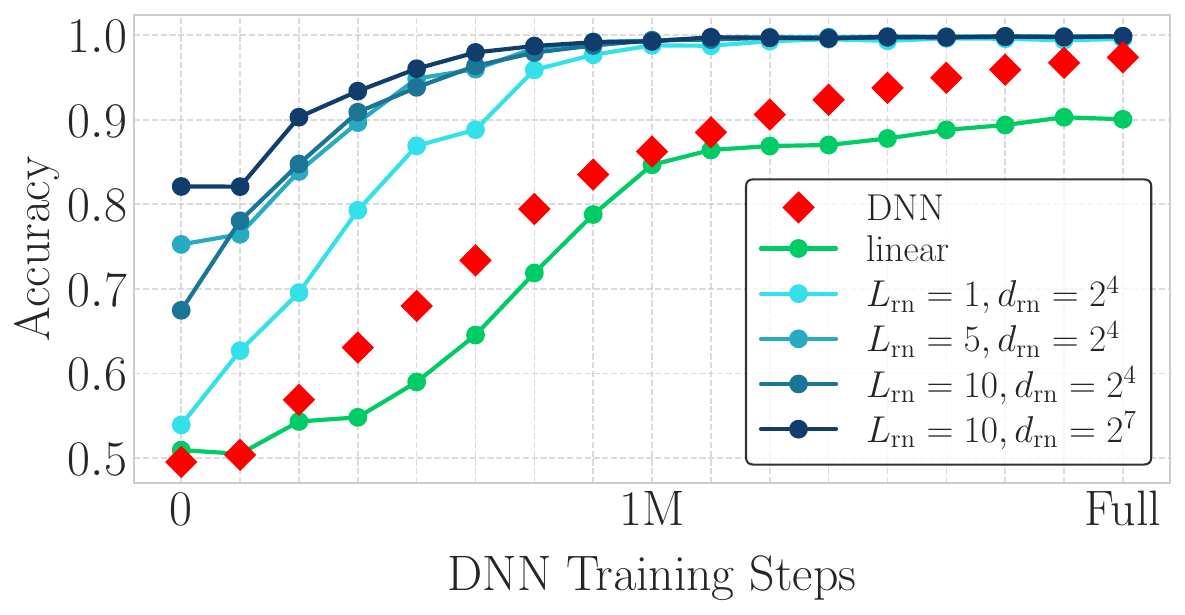}
    \end{subfigure}
    \hfill
    \vspace{-5pt}
    \caption{IIA of alignment between the \algbothequality algorithm and an MLP, with interventions at layer 1.
    Left: 
    Mean IIA over 5 seeds using $\RevNet$ ($\revnetlayers=1$) on the trained DNN.
    Performance improves with larger hidden dimension $\revnetdim$ and intervention size \smash{$\interventionsize$}. 
    Right: 
    Maximum IIA across 5 seeds using $\Rotation$ and $\RevNet$ with \smash{$\interventionsize=8$}.
    Complex alignment maps achieve high IIA even with randomly initialised DNNs, while simpler maps gradually improve as training progresses.}
    \label{fig:mlp_training_progression}
\end{figure}

\vspace{-1pt}
\paragraph{Indirect object identification task, main results.}
\Cref{fig:pythia_combined} (left) presents the results of trying to find causal abstractions between the \algabab algorithm and Pythia language models, exploring how model size affects alignment capabilities.
Notably, despite only the larger models (160M and 410M parameters) successfully learning the IOI task, we can align the algorithm to models of all sizes---including the 31M and 70M parameter models that fail to learn the task. 
Further, and somewhat surprisingly, this alignment is perfect even for randomly initialised models across all sizes; smaller fully trained models (31M, 70M), though, show slightly reduced alignment accuracy.
This reduction may stem from these smaller models 
saturating late in training \citep{godey2024why}, becoming highly anisotropic and
making it harder for $\RevNet$ to access the information needed to match the algorithm.

\vspace{-1pt}
\paragraph{Indirect object identification task, exploring $\RevNet$'s complexity.}
\Cref{fig:pythia_combined} (right) illustrates the interplay between model training progression and algorithmic alignment for the Pythia with 410M parameters.
Notably, while this model begins to acquire task proficiency only around training step 3000 (as indicated by model accuracy), employing an 8-layer $\RevNet$ as alignment map yields near-perfect IIA across all training steps, including for randomly initialised models. 
This pattern partially extends to a 4-layers $\RevNet$ configuration; however, there is a noticeable dip in IIA at step 1000 for this configuration, which may be due to the model over-fitting to unigram statistics \citep{changbergen2022word,belrose2024neural} at this point---thereby making context (and hidden states) be mostly ignored when producing model outputs.
Interestingly, as training advances, even less complex alignment maps (1- and 2-layer $\RevNet$) eventually attain perfect alignment. 
In contrast, linear maps only approximate perfect alignment in the fully trained model, following a similar trend to the DNN's performance.\looseness=-1

\section{Discussion}

Our results show that when we lift the assumption of linear representations, sufficiently complex alignment maps can achieve near-perfect alignment across all models---regardless of their ability to solve the underlying task.
This provides compelling evidence for the non-linear representation dilemma, suggesting causal alignment may be possible even when the model lacks task capability.
We now discuss our results in the context of prior literature, with additional related work in \cref{appsec:Additional_Related_Work}.

\vspace{-2pt}
\paragraph{Causal Abstraction is not Enough.} 
Causal abstraction \citep{Geiger2023CausalAA} has gained traction as a theoretical framework for mechanistic interpretability, promising to overcome probing limitations by analysing DNN behaviour through interventions: if you intervene on a DNN's representations and its behaviour changes in a predictable way, you have identified how the DNN ``truly'' encodes that feature \citep{elazar2021amnesic,ravfogel-etal-2021-counterfactual,lasri-etal-2022-probing}.
Recent critiques of causal abstraction \citep[e.g.,][]{Mueller2024MissedCA} highlight practical shortcomings, including the non-uniqueness of identified algorithms \citep{meloux2025everything} and the risk of ``interpretability illusions'' \citep{Makelov2023IsTT}.
Despite counterarguments to some of these critiques \citep{wu2024replymakelovetal,Jorgensen2025WhatIC}, concerns have emerged that methods based on causal abstraction may introduce new information rather than accurately reflect the behaviour of the DNN \citep{wu2023interpretability,Sun2025HyperDASTA}; as an example, causal abstraction methods applied to random models sometimes yield above-chance performance \citep{Geiger2023FindingAB,arora-etal-2024-causalgym}. 
By examining the implications of assuming arbitrary complex ways in which features may be encoded in a DNN, we show that nearly any neural network can be aligned to any algorithm.
Together, our results thus suggest that the shift in interpretability research to causal abstractions does not, by itself, resolve the core challenge of understanding how representations are encoded.
Additionally, we note that early causal abstraction methods \citep{Geiger2021CausalAO} implicitly rely on the privileged bases hypothesis, while recent advancements \citep{Geiger2023FindingAB} rely on the linear representation hypothesis instead.

\begin{figure}
    \hfill
    \begin{subfigure}[t]{0.49\textwidth}
        \centering
        \includegraphics[width=\textwidth]{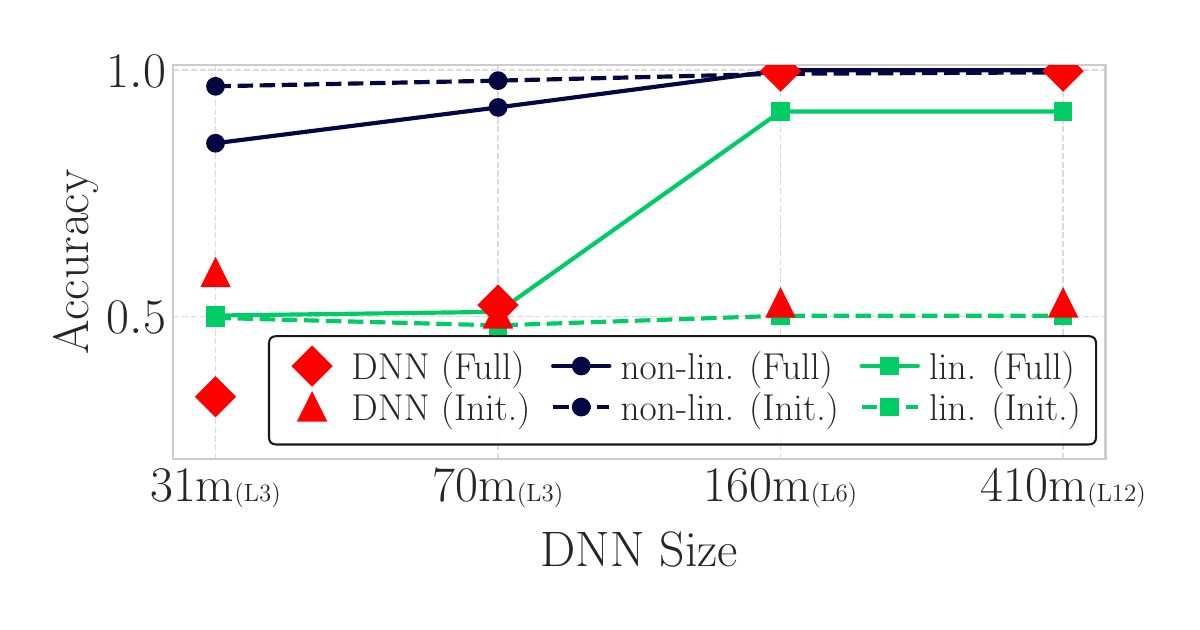}
    \end{subfigure}
    \hfill
    \begin{subfigure}[t]{0.49\textwidth}
        \centering
        \includegraphics[width=\textwidth]{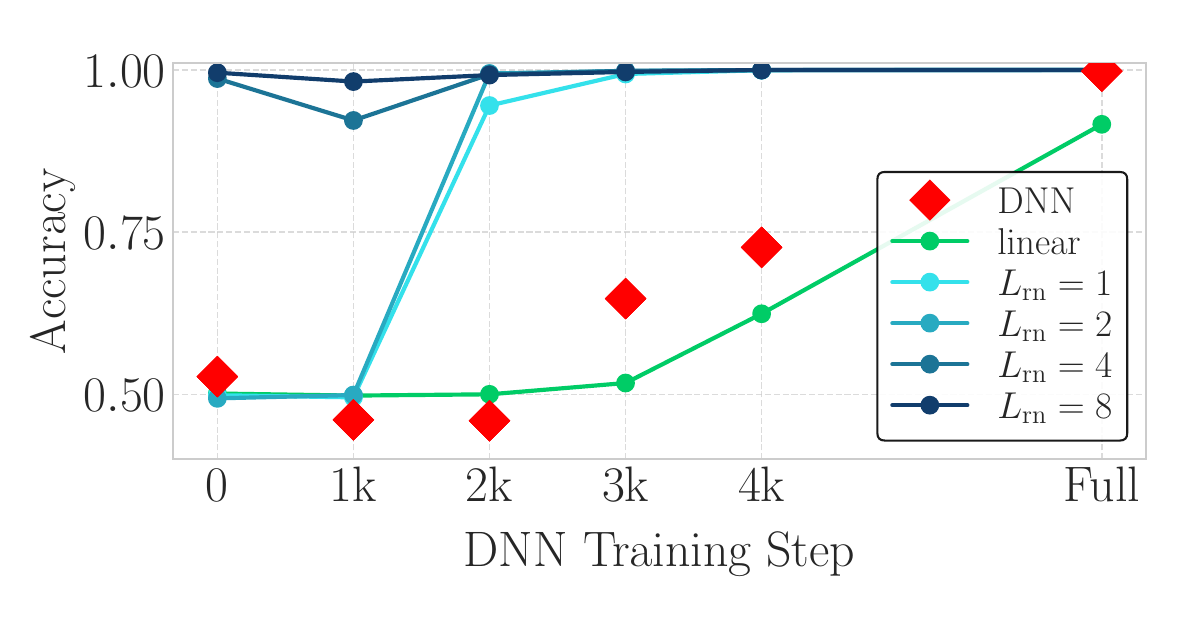}
    \end{subfigure}
    \hfill
    \vspace{-10pt}
    \caption{IIA of alignment between \algabab algorithm and Pythia language models. 
    Left: IIA across model sizes at initialisation (Init.) or after full training (Full), with intervention at the middle layer.
    Right: IIA with increasingly complex alignment maps during \emph{Pythia-410m}'s training.
    Results show complex alignment maps yield near-perfect IIA. All $\RevNet$ use $\revnetdim=64$.}
    \label{fig:pythia_combined}
\end{figure}

\vspace{-2pt}
\paragraph{Balancing the Accuracy vs.\ Complexity of $\causalmap$.} 
Diagnostic probing was a previously popular method for interpretability research \citep{alain2018understanding}, where a \defn{probe} was applied to the hidden representations of a DNN and trained to predict a specific variable.
Notably, the architecture chosen for this probe implicitly reflected assumptions about representation encoding, and the absence of a universally accepted model for representation encoding precluded a theoretically founded choice of probe architecture \citep{belinkov-2022-probing}.
The debate regarding the trade-off between probing complexity and accuracy \citep{hewitt-liang-2019-designing, pimentel-etal-2020-information, pimentel-etal-2020-pareto,voita-titov-2020-information} underscores the risk of complex probes merely memorising variable-specific relations, instead of revealing which information the DNN ``truly'' encodes and uses.
In this paper, we revive this debate by showing a clear analogue in causal abstraction methodologies: the effect of $\causalmap$'s complexity on IIA.
Unfortunately, this debate was never solved by the probing literature, and solutions ranged from:
controlling for the probe's memorisation capacity \citep{hewitt-liang-2019-designing},\footnote{Notably, this method was previously applied to causal abstraction analysis by \citet{arora-etal-2024-causalgym}.}
explicitly measuring a probe's complexity accuracy trade-off \citep{pimentel-etal-2020-pareto},
training minimum description length probes \citep{voita-titov-2020-information}, or leveraging unsupervised probes \citep{burns2023discovering}.\footnote{The complexity---accuracy trade-off in probing arises mainly in supervised settings, where more complex probes can extract richer features from model representations. Unsupervised probing avoids this, lacking the supervision that enables such ``gerrymandered'' mappings.}\looseness=-1

\vspace{-1pt}
\paragraph{The Role of Generalisation.}
We now highlight that \Cref{theorem:existing_causalmap_for_any_algorithm} provides an existence proof for a perfect abstraction map (thus guaranteeing perfect IIA) between a DNN and an algorithm.
This existence proof, however, leverages complex interactions between the intervened hidden states and the DNN's structure, requiring perfect information about both and thus representing a form of extreme overfitting.
Crucially, 
this theorem offers no guarantees regarding the learnability of the alignment map $\causalmap$ from limited data or its generalisation to unseen inputs.
This gap between theoretical existence and practical learnability becomes evident in practise.
For instance, in an additional experiment on the IOI task (in \Cref{app:ioi_additional_results}), we show that when training and test sets contain disjoint sets of names, the learned alignment map fails to generalise, resulting in low IIA on the test set.
This suggests that generalisation should play a crucial role in causal abstraction analysis, as the ability to learn abstraction maps that transfer beyond training data seems fundamental to interpreting a model, distinguishing a genuine understanding about its inner workings from mere training pattern memorisation.\looseness=-1

\vspace{-1pt}
\paragraph{Investigating Representation Encoding in DNNs.} 
How neural networks encode variables/concepts is a long-standing question in interpretability, with three main hypotheses standing out: the privileged bases, linear representation, and non-linear representation hypotheses (see \cref{sec:Information_encoding_in_dnns}).
One way to try to distinguish between these hypotheses is with causal abstraction analyses, but what can we learn about these hypotheses if our methods themselves rely on them as assumptions?
One solution could be to compare results using $\causalmap$ with different architectures.
Our \cref{fig:pythia_combined} (right), for instance, shows that while $\RevNet$ achieves consistently near-perfect results throughout model training, $\Rotation$ accompanies the actual DNN's performance more closely.
Intuitively, we may thus be inclined to support the linear representation hypothesis here.
We (the authors), however, cannot make this intuition formal to justify why we believe this is the case.
Furthermore, $\Rotation$ still manages to sometimes achieve IIA higher than the DNN's accuracy, implying it may also ``learn the task''.
We expect future work will propose novel methodologies to analyse information encoding and try to answer these questions.

\section{Conclusion}

This paper critically examines causal abstraction in machine learning, when no assumptions are imposed on how representations are encoded. We show that, under mild conditions, any algorithm can be perfectly aligned with any DNN, leading to the non-linear representation dilemma. 
Empirical validation through experiments on the hierarchical equality and the indirect object identification tasks corroborate our theoretical insights, demonstrating near perfect IIA even in randomly initialised DNNs.
So, \emph{what should you do if you want to perform a causal analysis of your DNN?}
We believe that it must be decided on a case-by-case basis. 
If you have reason to believe the linear representation hypothesis holds for the features you wish to extract, constraining $\causalmap$ to linear functions may be advised.
If you do not, however, you may face the non-linear representation dilemma, and be forced to investigate some kind of trade-off between $\causalmap$'s accuracy and complexity.

\paragraph{Limitations.}
Our proof that any algorithm can be aligned with any DNN (\cref{theorem:existing_causalmap_for_any_algorithm}) relies on a form of overfitting. 
Yet, our experiments show that the learned alignment maps $\causalmap$ generalise to unseen test data;
studying the factors behind this generalisation would be valuable. 
Further, our theorem relies on two strong assumptions: input-injectivity (\cref{assumption:injectivity}) and strict output-surjectivity (\cref{assumption:surjectivety}) in all layers. While we justify both, there are settings---related to, e.g., the softmax bottleneck---where they may fail; studying these failure modes could clarify our assumptions' limitations.\looseness=-1

\section*{Contributions}

Denis Sutter led the project,
implemented the base version of the DAS code, conducted the MLP experiments and derived the base proof of \Cref{theorem:existing_causalmap_for_any_algorithm} as well as the proof of \Cref{thm:injectivity}.
Julian Minder implemented and ran the language model experiments, produced all plots, 
and helped refine the proof of \Cref{thm:injectivity}. 
Thomas Hofmann provided guidance throughout the project. 
Tiago Pimentel supervised the project, giving initial intuitions for the proofs in both \Cref{theorem:existing_causalmap_for_any_algorithm} and \Cref{thm:injectivity}, refining the proof of \Cref{theorem:existing_causalmap_for_any_algorithm}, and defining the main notation in the paper, integrating feedback from Denis and Julian.
All authors wrote the paper together.\looseness=-1

\section*{Acknowledgments}

This work was mostly done in the Data Analytics Lab at ETH Z\"urich.
We would like to thank Pietro Lesci, Julius Cheng, Marius Mosbach, Chris Potts, and Atticus Geiger for their thoughtful feedback. 
We would also like to thank Frederik Hytting Jørgensen for bringing to our attention a mistake in our original \cref{def:tauabstraction} and for his feedback on our manuscript.
We thank Zhengxuan Wu and Kevin Du for early discussions related to the ideas presented here.
We are grateful to the Data Analytics Lab at ETH for providing access to their computing cluster. 
Julian Minder is supported by the ML Alignment Theory Scholars (MATS) program. 
Denis Sutter gratefully acknowledges the financial support of his parents, Renate and Wendelin Sutter, throughout his graduate studies, during which this work was carried out, as well as the technical support of Urban Moser and Leo Schefer.

\bibliography{biblio.bib}

@article{changbergen2022word,
    title = "Word Acquisition in Neural Language Models",
    author = "Chang, Tyler A.  and
      Bergen, Benjamin K.",
    editor = "Roark, Brian  and
      Nenkova, Ani",
    journal = "Transactions of the Association for Computational Linguistics",
    volume = "10",
    year = "2022",
    address = "Cambridge, MA",
    url = "https://aclanthology.org/2022.tacl-1.1/",
    doi = "10.1162/tacl_a_00444",
    pages = "1--16",
}

@inproceedings{belrose2024neural, 
    author = {Belrose, Nora and Pope, Quintin and Quirke, Lucia and Mallen, Alex and Fern, Xiaoli}, 
    title = {Neural networks learn statistics of increasing complexity}, 
    year = {2024}, 
    publisher = {JMLR.org}, 
    abstract = {The distributional simplicity bias (DSB) posits that neural networks learn low-order moments of the data distribution first, before moving on to higher-order correlations. In this work, we present compelling new evidence for the DSB by showing that networks automatically learn to perform well on maximum-entropy distributions whose low-order statistics match those of the training set early in training, then lose this ability later. We also extend the DSB to discrete domains by proving an equivalence between token n-gram frequencies and the moments of embedding vectors, and by finding empirical evidence for the bias in LLMs. Finally we use optimal transport methods to surgically edit the low-order statistics of one class of images to match those of another, and show early-training networks treat the edited images as if they were drawn from the target class. Code is available at https://github.com/EleutherAI/features-across-time.}, 
    booktitle = {Proceedings of the 41st International Conference on Machine Learning}, 
    url={https://dl.acm.org/doi/10.5555/3692070.3692205},
    articleno = {135}, numpages = {28}, location = {Vienna, Austria}, series = {ICML'24} 
  }

@inproceedings{
  minder2025controllable,
  title={Controllable Context Sensitivity and the Knob Behind It},
  author={Julian Minder and Kevin Du and Niklas Stoehr and Giovanni Monea and Chris Wendler and Robert West and Ryan Cotterell},
  booktitle={The Thirteenth International Conference on Learning Representations},
  year={2025},
  url={https://openreview.net/forum?id=Igm9bbkzHC}
}

@inproceedings{bolukbasi2016lsh, 
    author={Bolukbasi, Tolga and Chang, Kai-Wei and Zou, James and Saligrama, Venkatesh and Kalai, Adam},
    booktitle={Advances in Neural Information Processing Systems},
    editor={D. Lee and M. Sugiyama and U. Luxburg and I. Guyon and R. Garnett},
    pages={},
    title={Man is to Computer Programmer as Woman is to Homemaker? {Debiasing} Word Embeddings},
    url={https://proceedings.neurips.cc/paper_files/paper/2016/file/a486cd07e4ac3d270571622f4f316ec5-Paper.pdf},
    volume={29},
    year={2016},
}

@inproceedings{
  wang2023interpretability,
  title={Interpretability in the Wild: a Circuit for Indirect Object Identification in {GPT}-2 Small},
  author={Kevin Ro Wang and Alexandre Variengien and Arthur Conmy and Buck Shlegeris and Jacob Steinhardt},
  booktitle={The Eleventh International Conference on Learning Representations },
  year={2023},
  url={https://openreview.net/forum?id=NpsVSN6o4ul}
}

@article{ferrando2024primerinnerworkingstransformerbased,
      title={A Primer on the Inner Workings of Transformer-based Language Models}, 
      author={Javier Ferrando and Gabriele Sarti and Arianna Bisazza and Marta R. Costa-jussà},
      year={2024},
      journal={arXiv},
      url={https://arxiv.org/abs/2405.00208}, 
}

@article{sharkey2025openproblemsmechanisticinterpretability,
      title={Open Problems in Mechanistic Interpretability}, 
      author={Lee Sharkey and Bilal Chughtai and Joshua Batson and Jack Lindsey and Jeff Wu and Lucius Bushnaq and Nicholas Goldowsky-Dill and Stefan Heimersheim and Alejandro Ortega and Joseph Bloom and Stella Biderman and Adria Garriga-Alonso and Arthur Conmy and Neel Nanda and Jessica Rumbelow and Martin Wattenberg and Nandi Schoots and Joseph Miller and Eric J. Michaud and Stephen Casper and Max Tegmark and William Saunders and David Bau and Eric Todd and Atticus Geiger and Mor Geva and Jesse Hoogland and Daniel Murfet and Tom McGrath},
      year={2025},
      journal={arXiv},
      url={https://arxiv.org/abs/2501.16496}, 
}

@article{mueller2024questrightmediatorhistory,
      title={The Quest for the Right Mediator: A History, Survey, and Theoretical Grounding of Causal Interpretability}, 
      author={Aaron Mueller and Jannik Brinkmann and Millicent Li and Samuel Marks and Koyena Pal and Nikhil Prakash and Can Rager and Aruna Sankaranarayanan and Arnab Sen Sharma and Jiuding Sun and Eric Todd and David Bau and Yonatan Belinkov},
      year={2024},
      journal={arXiv},
      url={https://arxiv.org/abs/2408.01416}, 
}

@article{templeton2024scaling,
   title={Scaling Monosemanticity: Extracting Interpretable Features from {C}laude 3 {S}onnet},
   author={Templeton, Adly and Conerly, Tom and Marcus, Jonathan and Lindsey, Jack and Bricken, Trenton and Chen, Brian and Pearce, Adam and Citro, Craig and Ameisen, Emmanuel and Jones, Andy and Cunningham, Hoagy and Turner, Nicholas L and McDougall, Callum and MacDiarmid, Monte and Freeman, C. Daniel and Sumers, Theodore R. and Rees, Edward and Batson, Joshua and Jermyn, Adam and Carter, Shan and Olah, Chris and Henighan, Tom},
   year={2024},
   journal={Transformer Circuits Thread},
   url={https://transformer-circuits.pub/2024/scaling-monosemanticity/index.html}
}

@inproceedings{yun-etal-2021-transformer,
    title = "Transformer visualization via dictionary learning: contextualized embedding as a linear superposition of transformer factors",
    author = "Yun, Zeyu  and
      Chen, Yubei  and
      Olshausen, Bruno  and
      LeCun, Yann",
    editor = "Agirre, Eneko  and
      Apidianaki, Marianna  and
      Vuli{\'c}, Ivan",
    booktitle = "Proceedings of Deep Learning Inside Out (DeeLIO): {T}he 2nd Workshop on Knowledge Extraction and Integration for Deep Learning Architectures",
    month = jun,
    year = "2021",
    address = "Online",
    url = "https://aclanthology.org/2021.deelio-1.1/",
    doi = "10.18653/v1/2021.deelio-1.1",
    pages = "1--10",
}

@InProceedings{biderman2023pythia,
  title = 	 {{P}ythia: A Suite for Analyzing Large Language Models Across Training and Scaling},
  author =       {Biderman, Stella and Schoelkopf, Hailey and Anthony, Quentin Gregory and Bradley, Herbie and O'Brien, Kyle and Hallahan, Eric and Khan, Mohammad Aflah and Purohit, Shivanshu and Prashanth, Usvsn Sai and Raff, Edward and Skowron, Aviya and Sutawika, Lintang and Van Der Wal, Oskar},
  booktitle = 	 {Proceedings of the 40th International Conference on Machine Learning},
  pages = 	 {2397--2430},
  year = 	 {2023},
  editor = 	 {Krause, Andreas and Brunskill, Emma and Cho, Kyunghyun and Engelhardt, Barbara and Sabato, Sivan and Scarlett, Jonathan},
  volume = 	 {202},
  series = 	 {Proceedings of Machine Learning Research},
  month = 	 {23--29 Jul},
  pdf = 	 {https://proceedings.mlr.press/v202/biderman23a/biderman23a.pdf},
  url = 	 {https://proceedings.mlr.press/v202/biderman23a.html},
  abstract = 	 {How do large language models (LLMs) develop and evolve over the course of training? How do these patterns change as models scale? To answer these questions, we introduce <em>Pythia</em>, a suite of 16 LLMs all trained on public data seen in the exact same order and ranging in size from 70M to 12B parameters. We provide public access to 154 checkpoints for each one of the 16 models, alongside tools to download and reconstruct their exact training dataloaders for further study. We intend <em>Pythia</em> to facilitate research in many areas, and we present several case studies including novel results in memorization, term frequency effects on few-shot performance, and reducing gender bias. We demonstrate that this highly controlled setup can be used to yield novel insights toward LLMs and their training dynamics. Trained models, analysis code, training code, and training data can be found at https://github.com/EleutherAI/pythia.}
}

@inproceedings{
  wal2025polypythias,
  title={{P}oly{P}ythias: Stability and Outliers across Fifty Language Model Pre-Training Runs},
  author={Oskar van der Wal and Pietro Lesci and Max M{\"u}ller-Eberstein and Naomi Saphra and Hailey Schoelkopf and Willem Zuidema and Stella Biderman},
  booktitle={The Thirteenth International Conference on Learning Representations},
  year={2025},
  url={https://openreview.net/forum?id=bmrYu2Ekdz}
}

@inproceedings{pimentel-etal-2020-pareto,
    title = "{P}areto Probing: Trading Off Accuracy for Complexity",
    author = "Pimentel, Tiago  and
      Saphra, Naomi  and
      Williams, Adina  and
      Cotterell, Ryan",
    editor = "Webber, Bonnie  and
      Cohn, Trevor  and
      He, Yulan  and
      Liu, Yang",
    booktitle = "Proceedings of the 2020 Conference on Empirical Methods in Natural Language Processing (EMNLP)",
    month = nov,
    year = "2020",
    address = "Online",
    publisher = "Association for Computational Linguistics",
    url = "https://aclanthology.org/2020.emnlp-main.254",
    doi = "10.18653/v1/2020.emnlp-main.254",
    pages = "3138--3153",
}

@article{Geiger2023FindingAB,
      title={Finding Alignments Between Interpretable Causal Variables and Distributed Neural Representations}, 
      author={Atticus Geiger and Zhengxuan Wu and Christopher Potts and Thomas Icard and Noah D. Goodman},
      year={2024},
      journal={arXiv},
      eprint={2303.02536},
      archivePrefix={arXiv},
      primaryClass={cs.AI},
      url={https://arxiv.org/abs/2303.02536}, 
}

@inproceedings{grivas-etal-2022-low,
    title = "Low-Rank Softmax Can Have Unargmaxable Classes in Theory but Rarely in Practice",
    author = "Grivas, Andreas  and
      Bogoychev, Nikolay  and
      Lopez, Adam",
    editor = "Muresan, Smaranda  and
      Nakov, Preslav  and
      Villavicencio, Aline",
    booktitle = "Proceedings of the 60th Annual Meeting of the Association for Computational Linguistics (Volume 1: {L}ong Papers)",
    month = may,
    year = "2022",
    address = "Dublin, Ireland",
    publisher = "Association for Computational Linguistics",
    url = "https://aclanthology.org/2022.acl-long.465/",
    doi = "10.18653/v1/2022.acl-long.465",
    pages = "6738--6758",
    abstract = "Classifiers in natural language processing (NLP) often have a large number of output classes. For example, neural language models (LMs) and machine translation (MT) models both predict tokens from a vocabulary of thousands. The Softmax output layer of these models typically receives as input a dense feature representation, which has much lower dimensionality than the output. In theory, the result is some words may be impossible to be predicted via argmax, irrespective of input features, and empirically, there is evidence this happens in small language models (Demeter et al., 2020). In this paper we ask whether it can happen in practical large language models and translation models. To do so, we develop algorithms to detect such unargmaxable tokens in public models. We find that 13 out of 150 models do indeed have such tokens; however, they are very infrequent and unlikely to impact model quality. We release our algorithms and code to the public."
}

@inproceedings{Nair2010RectifiedLU,
author = {Nair, Vinod and Hinton, Geoffrey E.},
title = {Rectified linear units improve restricted {B}oltzmann machines},
year = {2010},
isbn = {9781605589077},
publisher = {Omnipress},
address = {Madison, WI, USA},
abstract = {Restricted Boltzmann machines were developed using binary stochastic hidden units. These can be generalized by replacing each binary unit by an infinite number of copies that all have the same weights but have progressively more negative biases. The learning and inference rules for these "Stepped Sigmoid Units" are unchanged. They can be approximated efficiently by noisy, rectified linear units. Compared with binary units, these units learn features that are better for object recognition on the NORB dataset and face verification on the Labeled Faces in the Wild dataset. Unlike binary units, rectified linear units preserve information about relative intensities as information travels through multiple layers of feature detectors.},
booktitle = {Proceedings of the 27th International Conference on International Conference on Machine Learning},
pages = {807–814},
numpages = {8},
location = {Haifa, Israel},
series = {ICML'10}
}

@inproceedings{Geiger2021CausalAO,
author = {Geiger, Atticus and Lu, Hanson and Icard, Thomas and Potts, Christopher},
title = {Causal abstractions of neural networks},
year = {2021},
isbn = {9781713845393},
publisher = {Curran Associates Inc.},
url={https://proceedings.neurips.cc/paper_files/paper/2021/file/4f5c422f4d49a5a807eda27434231040-Paper.pdf},
address = {Red Hook, NY, USA},
abstract = {Structural analysis methods (e.g., probing and feature attribution) are increasingly important tools for neural network analysis. We propose a new structural analysis method grounded in a formal theory of causal abstraction that provides rich characterizations of model-internal representations and their roles in input/output behavior. In this method, neural representations are aligned with variables in interpretable causal models, and then interchange interventions are used to experimentally verify that the neural representations have the causal properties of their aligned variables. We apply this method in a case study to analyze neural models trained on Multiply Quantified Natural Language Inference (MQNLI) corpus, a highly complex NLI dataset that was constructed with a tree-structured natural logic causal model. We discover that a BERT-based model with state-of-the-art performance successfully realizes parts of the natural logic model's causal structure, whereas a simpler baseline model fails to show any such structure, demonstrating that BERT representations encode the compositional structure of MQNLI.},
booktitle = {Proceedings of the 35th International Conference on Neural Information Processing Systems},
articleno = {733},
numpages = {13},
series = {NIPS '21}
}

@article{Geiger2023CausalAA,
      title={Causal Abstraction: A Theoretical Foundation for Mechanistic Interpretability}, 
      author={Atticus Geiger and Duligur Ibeling and Amir Zur and Maheep Chaudhary and Sonakshi Chauhan and Jing Huang and Aryaman Arora and Zhengxuan Wu and Noah Goodman and Christopher Potts and Thomas Icard},
      year={2024},
      journal={arXiv},
      eprint={2301.04709},
      archivePrefix={arXiv},
      primaryClass={cs.AI},
      url={https://arxiv.org/abs/2301.04709}, 
}

@inproceedings{
  Engels2024NotAL,
  title={Not All Language Model Features Are One-Dimensionally Linear},
  author={Joshua Engels and Eric J Michaud and Isaac Liao and Wes Gurnee and Max Tegmark},
  booktitle={The Thirteenth International Conference on Learning Representations},
  year={2025},
  url={https://openreview.net/forum?id=d63a4AM4hb}
}

@article{Golechha2024ChallengesIM,
      title={Challenges in Mechanistically Interpreting Model Representations}, 
      author={Satvik Golechha and James Dao},
      year={2024},
      journal={arXiv},
      eprint={2402.03855},
      archivePrefix={arXiv},
      primaryClass={cs.LG},
      url={https://arxiv.org/abs/2402.03855}, 
}

@article{Mueller2024MissedCA,
      title={Missed Causes and Ambiguous Effects: Counterfactuals Pose Challenges for Interpreting Neural Networks}, 
      author={Aaron Mueller},
      year={2024},
      journal={arXiv},
      eprint={2407.04690},
      archivePrefix={arXiv},
      primaryClass={cs.LG},
      url={https://arxiv.org/abs/2407.04690}, 
}

@inproceedings{Csordas2024RecurrentNN,
    title = "Recurrent Neural Networks Learn to Store and Generate Sequences using Non-Linear Representations",
    author = "Csord{\'a}s, R{\'o}bert  and
      Potts, Christopher  and
      Manning, Christopher D  and
      Geiger, Atticus",
    editor = "Belinkov, Yonatan  and
      Kim, Najoung  and
      Jumelet, Jaap  and
      Mohebbi, Hosein  and
      Mueller, Aaron  and
      Chen, Hanjie",
    booktitle = "Proceedings of the 7th BlackboxNLP Workshop: {A}nalyzing and Interpreting Neural Networks for NLP",
    month = nov,
    year = "2024",
    address = "Miami, Florida, US",
    publisher = "Association for Computational Linguistics",
    url = "https://aclanthology.org/2024.blackboxnlp-1.17/",
    doi = "10.18653/v1/2024.blackboxnlp-1.17",
    pages = "248--262",
}

@article{arora-etal-2018-linear,
    title = "Linear Algebraic Structure of Word Senses, with Applications to Polysemy",
    author = "Arora, Sanjeev  and
      Li, Yuanzhi  and
      Liang, Yingyu  and
      Ma, Tengyu  and
      Risteski, Andrej",
    editor = "Lee, Lillian  and
      Johnson, Mark  and
      Toutanova, Kristina  and
      Roark, Brian",
    journal = "Transactions of the Association for Computational Linguistics",
    volume = "6",
    year = "2018",
    address = "Cambridge, MA",
    url = "https://aclanthology.org/Q18-1034/",
    doi = "10.1162/tacl_a_00034",
    pages = "483--495",
}

@article{olah2017feature,
  author = {Olah, Chris and Mordvintsev, Alexander and Schubert, Ludwig},
  title = {Feature Visualization},
  journal = {Distill},
  year = {2017},
  note = {https://distill.pub/2017/feature-visualization},
  doi = {10.23915/distill.00007}
}

@article{alain2018understanding,
      title={Understanding intermediate layers using linear classifier probes}, 
      author={Guillaume Alain and Yoshua Bengio},
      year={2016},
      journal={arXiv},
      eprint={1610.01644},
      archivePrefix={arXiv},
      primaryClass={stat.ML},
      url={https://arxiv.org/abs/1610.01644}, 
}

@article{Jorgensen2025WhatIC,
      title={What is causal about causal models and representations?}, 
      author={Frederik Hytting Jørgensen and Luigi Gresele and Sebastian Weichwald},
      year={2025},
      journal={arXiv},
      eprint={2501.19335},
      archivePrefix={arXiv},
      primaryClass={stat.ML},
      url={https://arxiv.org/abs/2501.19335}, 
}

@inproceedings{lasri-etal-2022-probing,
    title = "Probing for the Usage of Grammatical Number",
    author = "Lasri, Karim  and
      Pimentel, Tiago  and
      Lenci, Alessandro  and
      Poibeau, Thierry  and
      Cotterell, Ryan",
    editor = "Muresan, Smaranda  and
      Nakov, Preslav  and
      Villavicencio, Aline",
    booktitle = "Proceedings of the 60th Annual Meeting of the Association for Computational Linguistics (Volume 1: {L}ong Papers)",
    month = may,
    year = "2022",
    address = "Dublin, Ireland",
    publisher = "Association for Computational Linguistics",
    url = "https://aclanthology.org/2022.acl-long.603/",
    doi = "10.18653/v1/2022.acl-long.603",
    pages = "8818--8831",
}

@inproceedings{ravfogel-etal-2021-counterfactual,
    title = "Counterfactual Interventions Reveal the Causal Effect of Relative Clause Representations on Agreement Prediction",
    author = "Ravfogel, Shauli  and
      Prasad, Grusha  and
      Linzen, Tal  and
      Goldberg, Yoav",
    editor = "Bisazza, Arianna  and
      Abend, Omri",
    booktitle = "Proceedings of the 25th Conference on Computational Natural Language Learning",
    month = nov,
    year = "2021",
    address = "Online",
    publisher = "Association for Computational Linguistics",
    url = "https://aclanthology.org/2021.conll-1.15/",
    doi = "10.18653/v1/2021.conll-1.15",
    pages = "194--209",
}

@inproceedings{wu2023interpretability,
 author = {Wu, Zhengxuan and Geiger, Atticus and Icard, Thomas and Potts, Christopher and Goodman, Noah},
 booktitle = {Advances in Neural Information Processing Systems},
 editor = {A. Oh and T. Naumann and A. Globerson and K. Saenko and M. Hardt and S. Levine},
 pages = {78205--78226},
 publisher = {Curran Associates, Inc.},
 title = {Interpretability at Scale: {I}dentifying Causal Mechanisms in {A}lpaca},
 url = {https://proceedings.neurips.cc/paper_files/paper/2023/file/f6a8b109d4d4fd64c75e94aaf85d9697-Paper-Conference.pdf},
 volume = {36},
 year = {2023}
}

@inproceedings{
  meloux2025everything,
  title={Everything, Everywhere, All at Once: Is Mechanistic Interpretability Identifiable?},
  author={Maxime M{\'e}loux and Silviu Maniu and Fran{\c{c}}ois Portet and Maxime Peyrard},
  booktitle={The Thirteenth International Conference on Learning Representations},
  year={2025},
  url={https://openreview.net/forum?id=5IWJBStfU7}
}

@article{Sun2025HyperDASTA,
      title={{HyperDAS}: Towards Automating Mechanistic Interpretability with Hypernetworks}, 
      author={Jiuding Sun and Jing Huang and Sidharth Baskaran and Karel D'Oosterlinck and Christopher Potts and Michael Sklar and Atticus Geiger},
      year={2025},
      journal={arXiv},
      eprint={2503.10894},
      archivePrefix={arXiv},
      primaryClass={cs.CL},
      url={https://arxiv.org/abs/2503.10894}, 
}

@inproceedings{yang2018breaking,
    title={Breaking the Softmax Bottleneck: A High-Rank {RNN} Language Model},
    author={Zhilin Yang and Zihang Dai and Ruslan Salakhutdinov and William W. Cohen},
    booktitle={International Conference on Learning Representations},
    year={2018},
    url={https://openreview.net/forum?id=HkwZSG-CZ},
}

@article{belinkov-2022-probing,
    title = "Probing Classifiers: Promises, Shortcomings, and Advances",
    author = "Belinkov, Yonatan",
    journal = "Computational Linguistics",
    volume = "48",
    number = "1",
    month = mar,
    year = "2022",
    address = "Cambridge, MA",
    publisher = "MIT Press",
    url = "https://aclanthology.org/2022.cl-1.7/",
    doi = "10.1162/coli_a_00422",
    pages = "207--219",
}

@inproceedings{huben2024sparse,
    title={Sparse Autoencoders Find Highly Interpretable Features in Language Models},
    author={Robert Huben and Hoagy Cunningham and Logan Riggs Smith and Aidan Ewart and Lee Sharkey},
    booktitle={The Twelfth International Conference on Learning Representations},
    year={2024},
    url={https://openreview.net/forum?id=F76bwRSLeK}
}

@article{elhage2023privileged,
  title={Privileged bases in the transformer residual stream},
  author={Elhage, Nelson and Lasenby, Robert and Olah, Christopher},
  journal={Transformer Circuits Thread},
  pages={24},
  year={2023},
  url={https://transformer-circuits.pub/2023/privileged-basis/index.html},
}

@inproceedings{burns2023discovering,
    title={Discovering Latent Knowledge in Language Models Without Supervision},
    author={Collin Burns and Haotian Ye and Dan Klein and Jacob Steinhardt},
    booktitle={The Eleventh International Conference on Learning Representations },
    year={2023},
    url={https://openreview.net/forum?id=ETKGuby0hcs}
}

@inproceedings{sun2024massive,
    title={Massive Activations in Large Language Models},
    author={Mingjie Sun and Xinlei Chen and J Zico Kolter and Zhuang Liu},
    booktitle={First Conference on Language Modeling},
    year={2024},
    url={https://openreview.net/forum?id=F7aAhfitX6}
}

@misc{karpathy2015unreasonable,
  title={The unreasonable effectiveness of recurrent neural networks},
  author={Andrej Karpathy},
  url={https://karpathy.github.io/2015/05/21/rnn-effectiveness/},
  volume={21},
  pages={31},
  year={2015}
}

@inproceedings{kovaleva-etal-2021-bert,
    title = "{BERT} Busters: Outlier Dimensions that Disrupt Transformers",
    author = "Kovaleva, Olga  and
      Kulshreshtha, Saurabh  and
      Rogers, Anna  and
      Rumshisky, Anna",
    editor = "Zong, Chengqing  and
      Xia, Fei  and
      Li, Wenjie  and
      Navigli, Roberto",
    booktitle = "Findings of the Association for Computational Linguistics: ACL-IJCNLP 2021",
    month = aug,
    year = "2021",
    address = "Online",
    publisher = "Association for Computational Linguistics",
    url = "https://aclanthology.org/2021.findings-acl.300/",
    doi = "10.18653/v1/2021.findings-acl.300",
    pages = "3392--3405"
}

@article{kantamneni2025languagemodelsusetrigonometry,
      title={Language Models Use Trigonometry to Do Addition}, 
      author={Subhash Kantamneni and Max Tegmark},
      year={2025},
      eprint={2502.00873},
      journal={arXiv},
      primaryClass={cs.AI},
      url={https://arxiv.org/abs/2502.00873}, 
}

@article{engels2025decomposingdarkmattersparse,
title={Decomposing The Dark Matter of Sparse Autoencoders},
author={Joshua Engels and Logan Riggs Smith and Max Tegmark},
journal={Transactions on Machine Learning Research},
issn={2835-8856},
year={2025},
url={https://openreview.net/forum?id=sXq3Wb3vef},
note={}
}

@inproceedings{
  Makelov2023IsTT,
  title={Is This the Subspace You Are Looking for? {A}n Interpretability Illusion for Subspace Activation Patching},
  author={Aleksandar Makelov and Georg Lange and Atticus Geiger and Neel Nanda},
  booktitle={The Twelfth International Conference on Learning Representations},
  year={2024},
  url={https://openreview.net/forum?id=Ebt7JgMHv1}
}

@article{wu2024replymakelovetal,
      title={A Reply to {M}akelov et al. (2023)'s ``Interpretability Illusion'' Arguments}, 
      author={Zhengxuan Wu and Atticus Geiger and Jing Huang and Aryaman Arora and Thomas Icard and Christopher Potts and Noah D. Goodman},
      year={2024},
      journal={arXiv},
      eprint={2401.12631},
      archivePrefix={arXiv},
      primaryClass={cs.LG},
      url={https://arxiv.org/abs/2401.12631}, 
}

@inproceedings{godey2024why,
    title={Why do small language models underperform? {S}tudying Language Model Saturation via the Softmax Bottleneck},
    author={Nathan Godey and {\'E}ric Villemonte de la Clergerie and Beno{\^\i}t Sagot},
    booktitle={First Conference on Language Modeling},
    year={2024},
    url={https://openreview.net/forum?id=MoitXWlXcS}
}

@misc {fahamu_2022,
    author       = {Brian Muhia},
    title        = { ioi (Revision 223da8b) },
    year         = 2022,
    url          = { https://huggingface.co/datasets/fahamu/ioi },
    doi          = { 10.57967/hf/0142 },
    publisher    = { Hugging Face }
}

@inproceedings{conneau-etal-2018-cram,
    title = "What you can cram into a single {\$}{\&}!{\#}* vector: Probing sentence embeddings for linguistic properties",
    author = {Conneau, Alexis  and
      Kruszewski, German  and
      Lample, Guillaume  and
      Barrault, Lo{\"i}c  and
      Baroni, Marco},
    editor = "Gurevych, Iryna  and
      Miyao, Yusuke",
    booktitle = "Proceedings of the 56th Annual Meeting of the Association for Computational Linguistics (Volume 1: Long Papers)",
    month = jul,
    year = "2018",
    address = "Melbourne, Australia",
    publisher = "Association for Computational Linguistics",
    url = "https://aclanthology.org/P18-1198/",
    doi = "10.18653/v1/P18-1198",
    pages = "2126--2136",
}

@inproceedings{he2024understanding,
    title={Understanding and Minimising Outlier Features in Transformer Training},
    author={Bobby He and Lorenzo Noci and Daniele Paliotta and Imanol Schlag and Thomas Hofmann},
    booktitle={The Thirty-eighth Annual Conference on Neural Information Processing Systems},
    year={2024},
    url={https://openreview.net/forum?id=npJQ6qS4bg}
}

@inproceedings{pimentel-etal-2020-information,
    title = "Information-Theoretic Probing for Linguistic Structure",
    author = "Pimentel, Tiago  and
      Valvoda, Josef  and
      Maudslay, Rowan Hall  and
      Zmigrod, Ran  and
      Williams, Adina  and
      Cotterell, Ryan",
    editor = "Jurafsky, Dan  and
      Chai, Joyce  and
      Schluter, Natalie  and
      Tetreault, Joel",
    booktitle = "Proceedings of the 58th Annual Meeting of the Association for Computational Linguistics",
    month = jul,
    year = "2020",
    address = "Online",
    publisher = "Association for Computational Linguistics",
    url = "https://aclanthology.org/2020.acl-main.420/",
    doi = "10.18653/v1/2020.acl-main.420",
    pages = "4609--4622",
    abstract = "The success of neural networks on a diverse set of NLP tasks has led researchers to question how much these networks actually {\textquotedblleft}know{\textquotedblright} about natural language. Probes are a natural way of assessing this. When probing, a researcher chooses a linguistic task and trains a supervised model to predict annotations in that linguistic task from the network`s learned representations. If the probe does well, the researcher may conclude that the representations encode knowledge related to the task. A commonly held belief is that using simpler models as probes is better; the logic is that simpler models will identify linguistic structure, but not learn the task itself. We propose an information-theoretic operationalization of probing as estimating mutual information that contradicts this received wisdom: one should always select the highest performing probe one can, even if it is more complex, since it will result in a tighter estimate, and thus reveal more of the linguistic information inherent in the representation. The experimental portion of our paper focuses on empirically estimating the mutual information between a linguistic property and BERT, comparing these estimates to several baselines. We evaluate on a set of ten typologically diverse languages often underrepresented in NLP research{---}plus English{---}totalling eleven languages. Our implementation is available in \url{https://github.com/rycolab/info-theoretic-probing}."
}

@inproceedings{white-etal-2021-non,
    title = "A Non-Linear Structural Probe",
    author = "White, Jennifer C.  and
      Pimentel, Tiago  and
      Saphra, Naomi  and
      Cotterell, Ryan",
    editor = "Toutanova, Kristina  and
      Rumshisky, Anna  and
      Zettlemoyer, Luke  and
      Hakkani-Tur, Dilek  and
      Beltagy, Iz  and
      Bethard, Steven  and
      Cotterell, Ryan  and
      Chakraborty, Tanmoy  and
      Zhou, Yichao",
    booktitle = "Proceedings of the 2021 Conference of the North American Chapter of the Association for Computational Linguistics: {H}uman Language Technologies",
    month = jun,
    year = "2021",
    address = "Online",
    publisher = "Association for Computational Linguistics",
    url = "https://aclanthology.org/2021.naacl-main.12/",
    doi = "10.18653/v1/2021.naacl-main.12",
    pages = "132--138",
    abstract = "Probes are models devised to investigate the encoding of knowledge{---}e.g. syntactic structure{---}in contextual representations. Probes are often designed for simplicity, which has led to restrictions on probe design that may not allow for the full exploitation of the structure of encoded information; one such restriction is linearity. We examine the case of a structural probe (Hewitt and Manning, 2019), which aims to investigate the encoding of syntactic structure in contextual representations through learning only linear transformations. By observing that the structural probe learns a metric, we are able to kernelize it and develop a novel non-linear variant with an identical number of parameters. We test on 6 languages and find that the radial-basis function (RBF) kernel, in conjunction with regularization, achieves a statistically significant improvement over the baseline in all languages{---}implying that at least part of the syntactic knowledge is encoded non-linearly. We conclude by discussing how the RBF kernel resembles BERT`s self-attention layers and speculate that this resemblance leads to the RBF-based probe`s stronger performance."
}

@inproceedings{pimentel-etal-2022-attentional,
    title = "The Architectural Bottleneck Principle",
    author = "Pimentel, Tiago  and
      Valvoda, Josef  and
      Stoehr, Niklas  and
      Cotterell, Ryan",
    editor = "Goldberg, Yoav  and
      Kozareva, Zornitsa  and
      Zhang, Yue",
    booktitle = "Proceedings of the 2022 Conference on Empirical Methods in Natural Language Processing",
    month = dec,
    year = "2022",
    address = "Abu Dhabi, United Arab Emirates",
    publisher = "Association for Computational Linguistics",
    url = "https://aclanthology.org/2022.emnlp-main.788/",
    doi = "10.18653/v1/2022.emnlp-main.788",
    pages = "11459--11472",
    abstract = "In this paper, we seek to measure how much information a component in a neural network could extract from the representations fed into it. Our work stands in contrast to prior probing work, most of which investigates how much information a model`s representations contain. This shift in perspective leads us to propose a new principle for probing, the architectural bottleneck principle: In order to estimate how much information a given component could extract, a probe should look exactly like the component. Relying on this principle, we estimate how much syntactic information is available to transformers through our attentional probe, a probe that exactly resembles a transformer`s self-attention head. Experimentally, we find that, in three models (BERT, ALBERT, and RoBERTa), a sentence`s syntax tree is mostly extractable by our probe, suggesting these models have access to syntactic information while composing their contextual representations. Whether this information is actually used by these models, however, remains an open question."
}

@inproceedings{hewitt-liang-2019-designing,
    title = "Designing and Interpreting Probes with Control Tasks",
    author = "Hewitt, John  and
      Liang, Percy",
    editor = "Inui, Kentaro  and
      Jiang, Jing  and
      Ng, Vincent  and
      Wan, Xiaojun",
    booktitle = "Proceedings of the 2019 Conference on Empirical Methods in Natural Language Processing and the 9th International Joint Conference on Natural Language Processing (EMNLP-IJCNLP)",
    month = nov,
    year = "2019",
    address = "Hong Kong, China",
    url = "https://aclanthology.org/D19-1275/",
    doi = "10.18653/v1/D19-1275",
    pages = "2733--2743",
}

@inproceedings{voita-titov-2020-information,
    title = "Information-Theoretic Probing with Minimum Description Length",
    author = "Voita, Elena  and
      Titov, Ivan",
    editor = "Webber, Bonnie  and
      Cohn, Trevor  and
      He, Yulan  and
      Liu, Yang",
    booktitle = "Proceedings of the 2020 Conference on Empirical Methods in Natural Language Processing (EMNLP)",
    month = nov,
    year = "2020",
    address = "Online",
    publisher = "Association for Computational Linguistics",
    url = "https://aclanthology.org/2020.emnlp-main.14/",
    doi = "10.18653/v1/2020.emnlp-main.14",
    pages = "183--196",
    abstract = "To measure how well pretrained representations encode some linguistic property, it is common to use accuracy of a probe, i.e. a classifier trained to predict the property from the representations. Despite widespread adoption of probes, differences in their accuracy fail to adequately reflect differences in representations. For example, they do not substantially favour pretrained representations over randomly initialized ones. Analogously, their accuracy can be similar when probing for genuine linguistic labels and probing for random synthetic tasks. To see reasonable differences in accuracy with respect to these random baselines, previous work had to constrain either the amount of probe training data or its model size. Instead, we propose an alternative to the standard probes, information-theoretic probing with minimum description length (MDL). With MDL probing, training a probe to predict labels is recast as teaching it to effectively transmit the data. Therefore, the measure of interest changes from probe accuracy to the description length of labels given representations. In addition to probe quality, the description length evaluates {\textquotedblleft}the amount of effort{\textquotedblright} needed to achieve the quality. This amount of effort characterizes either (i) size of a probing model, or (ii) the amount of data needed to achieve the high quality. We consider two methods for estimating MDL which can be easily implemented on top of the standard probing pipelines: variational coding and online coding. We show that these methods agree in results and are more informative and stable than the standard probes."
}

@inproceedings{Gomez2017TheRR,
 author = {Gomez, Aidan N and Ren, Mengye and Urtasun, Raquel and Grosse, Roger B},
 booktitle = {Advances in Neural Information Processing Systems},
 editor = {I. Guyon and U. Von Luxburg and S. Bengio and H. Wallach and R. Fergus and S. Vishwanathan and R. Garnett},
 pages = {},
 publisher = {Curran Associates, Inc.},
 title = {The Reversible Residual Network: Backpropagation Without Storing Activations},
 url = {https://proceedings.neurips.cc/paper_files/paper/2017/file/f9be311e65d81a9ad8150a60844bb94c-Paper.pdf},
 volume = {30},
 year = {2017}
}

@InProceedings{pmlr-v162-geiger22a,
  title = 	 {Inducing Causal Structure for Interpretable Neural Networks},
  author =       {Geiger, Atticus and Wu, Zhengxuan and Lu, Hanson and Rozner, Josh and Kreiss, Elisa and Icard, Thomas and Goodman, Noah and Potts, Christopher},
  booktitle = 	 {Proceedings of the 39th International Conference on Machine Learning},
  pages = 	 {7324--7338},
  year = 	 {2022},
  editor = 	 {Chaudhuri, Kamalika and Jegelka, Stefanie and Song, Le and Szepesvari, Csaba and Niu, Gang and Sabato, Sivan},
  volume = 	 {162},
  series = 	 {Proceedings of Machine Learning Research},
  month = 	 {17--23 Jul},
  publisher =    {PMLR},
  pdf = 	 {https://proceedings.mlr.press/v162/geiger22a/geiger22a.pdf},
  url = 	 {https://proceedings.mlr.press/v162/geiger22a.html},
  abstract = 	 {In many areas, we have well-founded insights about causal structure that would be useful to bring into our trained models while still allowing them to learn in a data-driven fashion. To achieve this, we present the new method of interchange intervention training (IIT). In IIT, we (1) align variables in a causal model (e.g., a deterministic program or Bayesian network) with representations in a neural model and (2) train the neural model to match the counterfactual behavior of the causal model on a base input when aligned representations in both models are set to be the value they would be for a source input. IIT is fully differentiable, flexibly combines with other objectives, and guarantees that the target causal model is a causal abstraction of the neural model when its loss is zero. We evaluate IIT on a structural vision task (MNIST-PVR), a navigational language task (ReaSCAN), and a natural language inference task (MQNLI). We compare IIT against multi-task training objectives and data augmentation. In all our experiments, IIT achieves the best results and produces neural models that are more interpretable in the sense that they more successfully realize the target causal model.}
}

@inproceedings{wu-etal-2024-pyvene,
    title = "pyvene: A Library for Understanding and Improving {P}y{T}orch Models via Interventions",
    author = "Wu, Zhengxuan and Geiger, Atticus and Arora, Aryaman and Huang, Jing and Wang, Zheng and Goodman, Noah and Manning, Christopher and Potts, Christopher",
    editor = "Chang, Kai-Wei and Lee, Annie and Rajani, Nazneen",
    booktitle = "Proceedings of the 2024 Conference of the North American Chapter of the Association for Computational Linguistics: Human Language Technologies (Volume 3: System Demonstrations)",
    month = jun,
    year = "2024",
    address = "Mexico City, Mexico",
    publisher = "Association for Computational Linguistics",
    url = "https://aclanthology.org/2024.naacl-demo.16",
    pages = "158--165",
}

@article{Beckers2018AbstractingCM, 
    title={Abstracting Causal Models}, 
    volume={33}, 
    url={https://ojs.aaai.org/index.php/AAAI/article/view/4117}, 
    DOI={10.1609/aaai.v33i01.33012678}, 
    number={01}, 
    journal={Proceedings of the AAAI Conference on Artificial Intelligence}, 
    author={Beckers, Sander and Halpern, Joseph Y.}, 
    year={2019}, 
    month={Jul.}, 
    pages={2678-2685}
}

@inproceedings{vaswani2017attention,
author = {Vaswani, Ashish and Shazeer, Noam and Parmar, Niki and Uszkoreit, Jakob and Jones, Llion and Gomez, Aidan N. and Kaiser, \L{}ukasz and Polosukhin, Illia},
title = {Attention is all you need},
year = {2017},
isbn = {9781510860964},
publisher = {Curran Associates Inc.},
url = "https://proceedings.neurips.cc/paper_files/paper/2017/file/3f5ee243547dee91fbd053c1c4a845aa-Paper.pdf",
address = {Red Hook, NY, USA},
abstract = {The dominant sequence transduction models are based on complex recurrent or convolutional neural networks that include an encoder and a decoder. The best performing models also connect the encoder and decoder through an attention mechanism. We propose a new simple network architecture, the Transformer, based solely on attention mechanisms, dispensing with recurrence and convolutions entirely. Experiments on two machine translation tasks show these models to be superior in quality while being more parallelizable and requiring significantly less time to train. Our model achieves 28.4 BLEU on the WMT 2014 English-to-German translation task, improving over the existing best results, including ensembles, by over 2 BLEU. On the WMT 2014 English-to-French translation task, our model establishes a new single-model state-of-the-art BLEU score of 41.0 after training for 3.5 days on eight GPUs, a small fraction of the training costs of the best models from the literature.},
booktitle = {Proceedings of the 31st International Conference on Neural Information Processing Systems},
pages = {6000–6010},
numpages = {11},
location = {Long Beach, California, USA},
series = {NIPS'17}
}

@article{chowdhery2023palm,
author = {Chowdhery, Aakanksha and Narang, Sharan and Devlin, Jacob and Bosma, Maarten and Mishra, Gaurav and Roberts, Adam and Barham, Paul and Chung, Hyung Won and Sutton, Charles and Gehrmann, Sebastian and Schuh, Parker and Shi, Kensen and Tsvyashchenko, Sashank and Maynez, Joshua and Rao, Abhishek and Barnes, Parker and Tay, Yi and Shazeer, Noam and Prabhakaran, Vinodkumar and Reif, Emily and Du, Nan and Hutchinson, Ben and Pope, Reiner and Bradbury, James and Austin, Jacob and Isard, Michael and Gur-Ari, Guy and Yin, Pengcheng and Duke, Toju and Levskaya, Anselm and Ghemawat, Sanjay and Dev, Sunipa and Michalewski, Henryk and Garcia, Xavier and Misra, Vedant and Robinson, Kevin and Fedus, Liam and Zhou, Denny and Ippolito, Daphne and Luan, David and Lim, Hyeontaek and Zoph, Barret and Spiridonov, Alexander and Sepassi, Ryan and Dohan, David and Agrawal, Shivani and Omernick, Mark and Dai, Andrew M. and Pillai, Thanumalayan Sankaranarayana and Pellat, Marie and Lewkowycz, Aitor and Moreira, Erica and Child, Rewon and Polozov, Oleksandr and Lee, Katherine and Zhou, Zongwei and Wang, Xuezhi and Saeta, Brennan and Diaz, Mark and Firat, Orhan and Catasta, Michele and Wei, Jason and Meier-Hellstern, Kathy and Eck, Douglas and Dean, Jeff and Petrov, Slav and Fiedel, Noah},
title = {{PaLM}: Scaling language modeling with pathways},
year = {2023},
issue_date = {January 2023},
publisher = {JMLR.org},
url = "https://jmlr.org/papers/v24/22-1144.html",
volume = {24},
number = {1},
issn = {1532-4435},
abstract = {Large language models have been shown to achieve remarkable performance across a variety of natural language tasks using few-shot learning, which drastically reduces the number of task-specific training examples needed to adapt the model to a particular application. To further our understanding of the impact of scale on few-shot learning, we trained a 540- billion parameter, densely activated, Transformer language model, which we call Pathways Language Model (PaLM).We trained PaLM on 6144 TPU v4 chips using Pathways, a new ML system which enables highly efficient training across multiple TPU Pods. We demonstrate continued benefits of scaling by achieving state-of-the-art few-shot learning results on hundreds of language understanding and generation benchmarks. On a number of these tasks, PaLM 540B achieves breakthrough performance, outperforming the finetuned state-of-the-art on a suite of multi-step reasoning tasks, and outperforming average human performance on the recently released BIG-bench benchmark. A significant number of BIG-bench tasks showed discontinuous improvements from model scale, meaning that performance steeply increased as we scaled to our largest model. PaLM also has strong capabilities in multilingual tasks and source code generation, which we demonstrate on a wide array of benchmarks. We additionally provide a comprehensive analysis on bias and toxicity, and study the extent of training data memorization with respect to model scale. Finally, we discuss the ethical considerations related to large language models and discuss potential mitigation strategies.},
journal = {J. Mach. Learn. Res.},
month = jan,
articleno = {240},
numpages = {113},
keywords = {large language models, few-shot learning, natural language processing, scalable deep learning}
}

@inproceedings{ravfogel-etal-2020-null,
    title = "Null It Out: Guarding Protected Attributes by Iterative Nullspace Projection",
    author = "Ravfogel, Shauli  and
      Elazar, Yanai  and
      Gonen, Hila  and
      Twiton, Michael  and
      Goldberg, Yoav",
    editor = "Jurafsky, Dan  and
      Chai, Joyce  and
      Schluter, Natalie  and
      Tetreault, Joel",
    booktitle = "Proceedings of the 58th Annual Meeting of the Association for Computational Linguistics",
    month = jul,
    year = "2020",
    address = "Online",
    publisher = "Association for Computational Linguistics",
    url = "https://aclanthology.org/2020.acl-main.647/",
    doi = "10.18653/v1/2020.acl-main.647",
    pages = "7237--7256",
}

@InProceedings{ravfogel2022linear,
  title = 	 {Linear Adversarial Concept Erasure},
  author =       {Ravfogel, Shauli and Twiton, Michael and Goldberg, Yoav and Cotterell, Ryan D},
  booktitle = 	 {Proceedings of the 39th International Conference on Machine Learning},
  pages = 	 {18400--18421},
  year = 	 {2022},
  editor = 	 {Chaudhuri, Kamalika and Jegelka, Stefanie and Song, Le and Szepesvari, Csaba and Niu, Gang and Sabato, Sivan},
  volume = 	 {162},
  series = 	 {Proceedings of Machine Learning Research},
  month = 	 {17--23 Jul},
  publisher =    {PMLR},
  pdf = 	 {https://proceedings.mlr.press/v162/ravfogel22a/ravfogel22a.pdf},
  url = 	 {https://proceedings.mlr.press/v162/ravfogel22a.html},
}

@article{elazar2021amnesic,
    author = {Elazar, Yanai and Ravfogel, Shauli and Jacovi, Alon and Goldberg, Yoav},
    title = {Amnesic Probing: Behavioral Explanation with Amnesic Counterfactuals},
    journal = {Transactions of the Association for Computational Linguistics},
    volume = {9},
    pages = {160-175},
    year = {2021},
    month = {03},
    issn = {2307-387X},
    doi = {10.1162/tacl_a_00359},
    url = {https://doi.org/10.1162/tacl\_a\_00359},
    eprint = {https://direct.mit.edu/tacl/article-pdf/doi/10.1162/tacl\_a\_00359/1924189/tacl\_a\_00359.pdf},
}

@article{olah2020zoom,
  author = {Olah, Chris and Cammarata, Nick and Schubert, Ludwig and Goh, Gabriel and Petrov, Michael and Carter, Shan},
  title = {Zoom In: An Introduction to Circuits},
  journal = {Distill},
  year = {2020},
  note = {https://distill.pub/2020/circuits/zoom-in},
  doi = {10.23915/distill.00024.001}
}

@article{olah2024linear,
  author = {Olah, Chris and Jermyn, Adam},
  title = {What is a Linear Representation? {W}hat is a Multidimensional Feature?},
  journal = {Transformer Circuits Thread},
  year = {2024},
  url = {https://transformer-circuits.pub/2024/july-update/index.html#linear-representations},

}

@article{elhage2021mathematical,
   title={A Mathematical Framework for Transformer Circuits},
   author={Elhage, Nelson and Nanda, Neel and Olsson, Catherine and Henighan, Tom and Joseph, Nicholas and Mann, Ben and Askell, Amanda and Bai, Yuntao and Chen, Anna and Conerly, Tom and DasSarma, Nova and Drain, Dawn and Ganguli, Deep and Hatfield-Dodds, Zac and Hernandez, Danny and Jones, Andy and Kernion, Jackson and Lovitt, Liane and Ndousse, Kamal and Amodei, Dario and Brown, Tom and Clark, Jack and Kaplan, Jared and McCandlish, Sam and Olah, Chris},
   year={2021},
   journal={Transformer Circuits Thread},
   url={https://transformer-circuits.pub/2021/framework/index.html}
}

@article{radford2019language,
  title={Language models are unsupervised multitask learners},
  author={Radford, Alec and Wu, Jeffrey and Child, Rewon and Luan, David and Amodei, Dario and Sutskever, Ilya},
  journal={OpenAI blog},
  year={2019},
  url={https://cdn.openai.com/better-language-models/language_models_are_unsupervised_multitask_learners.pdf}
}

@article{elhage2022superposition,
   title={Toy Models of Superposition},
   author={Elhage, Nelson and Hume, Tristan and Olsson, Catherine and Schiefer, Nicholas and Henighan, Tom and Kravec, Shauna and Hatfield-Dodds, Zac and Lasenby, Robert and Drain, Dawn and Chen, Carol and Grosse, Roger and McCandlish, Sam and Kaplan, Jared and Amodei, Dario and Wattenberg, Martin and Olah, Christopher},
   year={2022},
   journal={Transformer Circuits Thread},
   url={https://transformer-circuits.pub/2022/toy_model/index.html}
}

@article{Goodman2016EUregulations,
    author = {Goodman, Bryce and Flaxman, Seth},
    title = {{E}uropean {U}nion Regulations on Algorithmic Decision Making and a ``Right to Explanation''},
    journal = {AI Magazine},
    volume = {38},
    number = {3},
    pages = {50-57},
    doi = {https://doi.org/10.1609/aimag.v38i3.2741},
    url = {https://onlinelibrary.wiley.com/doi/abs/10.1609/aimag.v38i3.2741},
    eprint = {https://onlinelibrary.wiley.com/doi/pdf/10.1609/aimag.v38i3.2741},
    year = {2017}
}

@ARTICLE{Zhu2020MultimediaIntelligence,
  author={Zhu, Wenwu and Wang, Xin and Gao, Wen},
  journal={IEEE Transactions on Multimedia}, 
  title={Multimedia Intelligence: When Multimedia Meets Artificial Intelligence}, 
  year={2020},
  volume={22},
  number={7},
  pages={1823-1835},
  keywords={Streaming media;Machine learning;Hidden Markov models;Cognition;Videos;Data models;Multimedia artificial intelligence;reasoning in multimedia},
  doi={10.1109/TMM.2020.2969791}}

@ARTICLE{Gao2023InterpretabilityofMachine,
  author={Gao, Lei and Guan, Ling},
  journal={IEEE MultiMedia}, 
  title={Interpretability of Machine Learning: Recent Advances and Future Prospects}, 
  year={2023},
  volume={30},
  number={4},
  pages={105-118},
  keywords={Artificial neural networks;Mathematical models;Closed box;Computational modeling;Task analysis;Streaming media;Computer architecture},
  doi={10.1109/MMUL.2023.3272513}}

@article{tonekaboni2019clinicianswantcontextualizingexplainable,
      title={What Clinicians Want: Contextualizing Explainable Machine Learning for Clinical End Use}, 
      author={Sana Tonekaboni and Shalmali Joshi and Melissa D McCradden and Anna Goldenberg},
      year={2019},
      journal={arXiv},
      eprint={1905.05134},
      archivePrefix={arXiv},
      primaryClass={cs.LG},
      url={https://arxiv.org/abs/1905.05134}, 
}

@inproceedings{morris-etal-2023-text,
    title = "Text Embeddings Reveal (Almost) As Much As Text",
    author = "Morris, John  and
      Kuleshov, Volodymyr  and
      Shmatikov, Vitaly  and
      Rush, Alexander",
    editor = "Bouamor, Houda  and
      Pino, Juan  and
      Bali, Kalika",
    booktitle = "Proceedings of the 2023 Conference on Empirical Methods in Natural Language Processing",
    month = dec,
    year = "2023",
    address = "Singapore",
    publisher = "Association for Computational Linguistics",
    url = "https://aclanthology.org/2023.emnlp-main.765/",
    doi = "10.18653/v1/2023.emnlp-main.765",
    pages = "12448--12460",
    abstract = "How much private information do text embeddings reveal about the original text? We investigate the problem of embedding \textit{inversion}, reconstructing the full text represented in dense text embeddings. We frame the problem as controlled generation: generating text that, when reembedded, is close to a fixed point in latent space. We find that although a naive model conditioned on the embedding performs poorly, a multi-step method that iteratively corrects and re-embeds text is able to recover 92{\%} of 32-token text inputs exactly. We train our model to decode text embeddings from two state-of-the-art embedding models, and also show that our model can recover important personal information (full names) from a dataset of clinical notes."
}

@inproceedings{arora-etal-2024-causalgym,
    title = "{C}ausal{G}ym: Benchmarking causal interpretability methods on linguistic tasks",
    author = "Arora, Aryaman  and
      Jurafsky, Dan  and
      Potts, Christopher",
    editor = "Ku, Lun-Wei  and
      Martins, Andre  and
      Srikumar, Vivek",
    booktitle = "Proceedings of the 62nd Annual Meeting of the Association for Computational Linguistics (Volume 1: Long Papers)",
    month = aug,
    year = "2024",
    address = "Bangkok, Thailand",
    publisher = "Association for Computational Linguistics",
    url = "https://aclanthology.org/2024.acl-long.785/",
    doi = "10.18653/v1/2024.acl-long.785",
    pages = "14638--14663",
    abstract = "Language models (LMs) have proven to be powerful tools for psycholinguistic research, but most prior work has focused on purely behavioural measures (e.g., surprisal comparisons). At the same time, research in model interpretability has begun to illuminate the abstract causal mechanisms shaping LM behavior. To help bring these strands of research closer together, we introduce CausalGym. We adapt and expand the SyntaxGym suite of tasks to benchmark the ability of interpretability methods to causally affect model behaviour. To illustrate how CausalGym can be used, we study the pythia models (14M{--}6.9B) and assess the causal efficacy of a wide range of interpretability methods, including linear probing and distributed alignment search (DAS). We find that DAS outperforms the other methods, and so we use it to study the learning trajectory of two difficult linguistic phenomena in pythia-1b: negative polarity item licensing and filler{--}gap dependencies. Our analysis shows that the mechanism implementing both of these tasks is learned in discrete stages, not gradually."
}

@article{Papyan_2020,
   title={Prevalence of neural collapse during the terminal phase of deep learning training},
   volume={117},
   ISSN={1091-6490},
   url={http://dx.doi.org/10.1073/pnas.2015509117},
   DOI={10.1073/pnas.2015509117},
   number={40},
   journal={Proceedings of the National Academy of Sciences},
   publisher={Proceedings of the National Academy of Sciences},
   author={Papyan, Vardan and Han, X. Y. and Donoho, David L.},
   year={2020},
   month=sep, pages={24652–24663} 
}

@misc{nikolaou2025languagemodelsinjectiveinvertible,
      title={Language Models are Injective and Hence Invertible}, 
      author={Giorgos Nikolaou and Tommaso Mencattini and Donato Crisostomi and Andrea Santilli and Yannis Panagakis and Emanuele Rodol\'a},
      year={2025},
      eprint={2510.15511},
      archivePrefix={arXiv},
      primaryClass={cs.LG},
      url={https://arxiv.org/abs/2510.15511}, 
}
\bibliographystyle{acl_natbib}

\clearpage

\appendix

\newpage

\section{Reproducibility}\label{app:reproducability}

We provide the code to reproduce our experiments in \url{https://github.com/densutter/non-linear-representation-dilemma}.
Refer to the \texttt{README.md} for instructions.

\section{Pseudo-code for Running an Intervention on an Algorithm}
\label{sec:pseudo_code_intervention}

In \cref{alg:alg_with_interventions}, we present pseudo-code that demonstrates how algorithms execute under our intervention framework.

\begin{figure*}[h]
    \centering
    \input{algorithms/intervention.tex}
    \caption{Pseudo-code implementation of an algorithm with interventions, where interventions $\interventionsalg$ are specified as a Python dictionary mapping nodes to their intervened values.}
    \label{alg:alg_with_interventions}
\end{figure*}

\section{Additional Related Work}\label{appsec:Additional_Related_Work}

The concept of causal abstraction was ported to deep neural networks by \citet{Geiger2023CausalAA}, providing a generalised framework for understanding how neural networks can be abstracted to higher-level algorithms.
Early work by \citet{Geiger2021CausalAO} explored direct interventions on neuron-aligned activations, laying the groundwork for more sophisticated approaches. 
Building on this, \citet{Geiger2023FindingAB} introduced distributed alignment search (DAS), which uses an alignment map to align distributed representations in neural networks with causal graphs. Several improvements to DAS have been proposed: \citet{Sun2025HyperDASTA} developed HyperDAS, which automates the search for node information using hypernetworks, while \citet{wu2023interpretability} introduced Boundless DAS, which automatically determines intervention size through gradient descent, scaling to larger models.

However, recent work has raised important critiques of causal alignment methods. \citet{meloux2025everything} demonstrated that multiple algorithms can be causally aligned with the same neural network, and conversely, a single algorithm can align with different network subspaces. 
\citet{Mueller2024MissedCA} identified fundamental limitations in counterfactual theories, showing they may miss certain causes and that causal dependencies in neural networks are not necessarily transitive. 
\citet{Makelov2023IsTT} showed that subspace interventions such as those used in DAS can lead to ``interpretability illusions''---cases where manipulating a subspace changes the behaviour of the model through activating parallel pathways, rather than directly controlling the target feature.
In their response, \citet{wu2024replymakelovetal} argued these illusions may be artefacts of specific evaluation approaches rather than fundamental flaws, and that they depend on the definition of causality being used, a point also made by \citet{Jorgensen2025WhatIC}.\looseness=-1

Recent work has also raised significant challenges to the linear representation hypothesis. \citet{white-etal-2021-non} demonstrated that syntactic structure in language models is encoded non-linearly, showing that kernelised structural probes outperform linear ones while maintaining parameter count. 
Similarly, \citet{Csordas2024RecurrentNN} found that recurrent neural networks use fundamentally non-linear representations for sequence tasks. \citet{Engels2024NotAL} provided concrete examples of non-linear feature representations in language models, such as days of the week being encoded on a circular manifold. While \citet{Golechha2024ChallengesIM} argued that some language modelling behaviours may be represented linearly due to next-token prediction and LayerNorm folding, \citet{mueller2024questrightmediatorhistory} advocated for exploring non-linear mediators to uncover more sophisticated abstractions. 
Additional evidence comes from \citet{kantamneni2025languagemodelsusetrigonometry}, who found that language models represent numbers on a helical manifold. 
\citet{olah2024linear} offered an important clarification: the linear representation hypothesis is not about dimensionality but rather about features behaving mathematically linearly through addition and scaling, allowing for multidimensional features with constrained geometry.
This represents a relaxation of the strongest form of the hypothesis.

\section{Schematic of the Relation Between Notions of Causal Abstraction} \label{app:visualization_causal_abstraction}

\begin{figure}[h]
    \includegraphics[width=\textwidth]{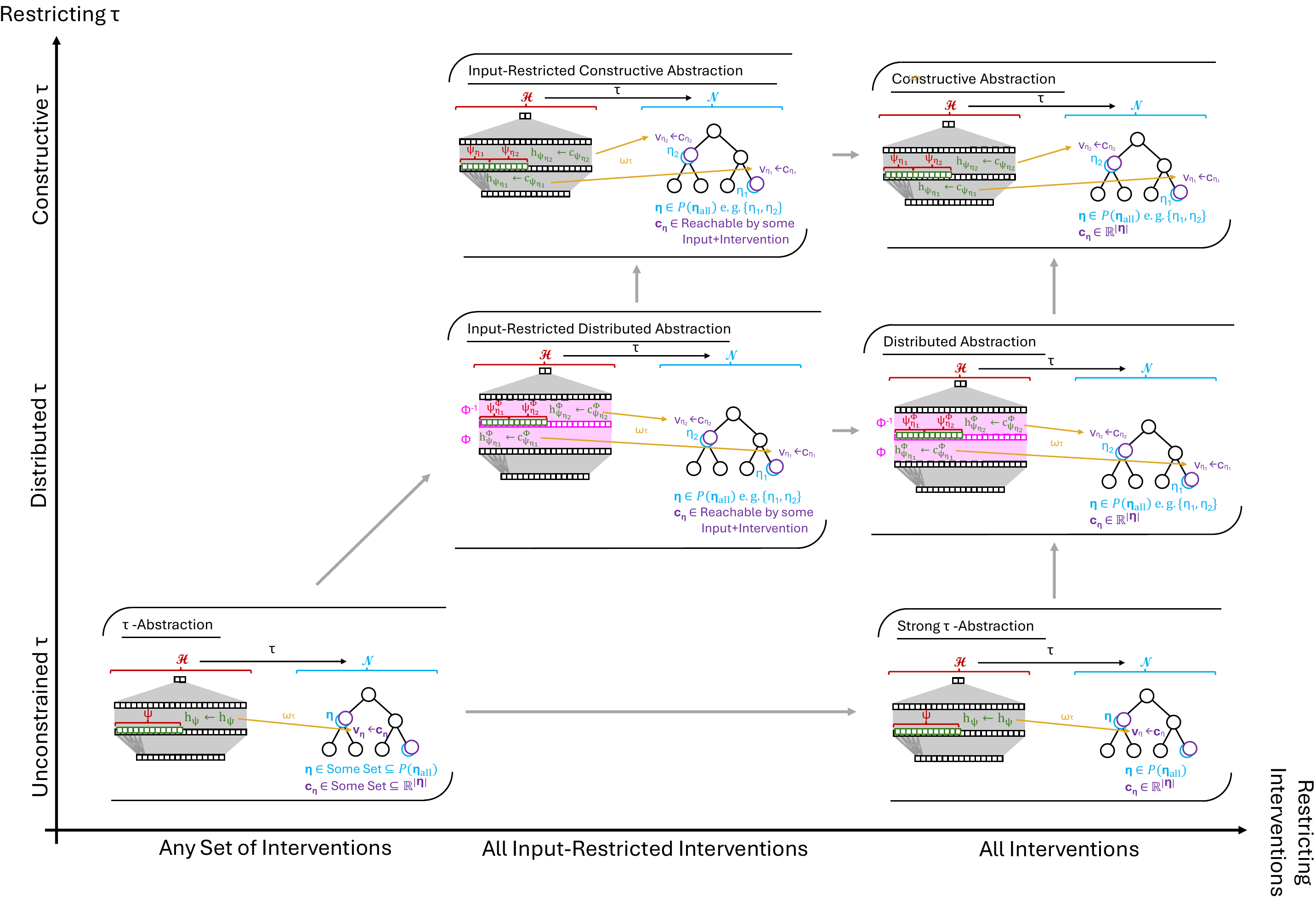}
    \vspace{-12pt}
    \caption{A schematic of the definitions of causal abstraction in \cref{sec:Causal_Abstraction}. The axes represent an increase in how restricted the notion of causal abstraction is based on: $y$-axis, constraints placed on $\tau$; and $x$-axis, constraints placed on the set of allowed interventions. Grey arrows symbolise a superset$\rightarrow$subset relationship: if an $\alg$-$\dnn$ pair fulfils the conditions in the subset, it also fulfils them in the superset.}
    \label{fig:visualization_causal_abstraction}
\end{figure}

\section{DNN Definitions}

\subsection{MLP}\label{appsubsec:general_MLP}
A multi-layer perceptron (MLP) consists of a sequence of linear transformations interleaved with non-linear activation functions.

\begin{submodule}\label{definition:multi_layer_perceptron}
    We can define a \defn{multi-layer perceptron} ($\mlpfunc$) by choosing:
    \begin{align}
        \hiddenstates_{\neuronset_{1}} = \dnnfunclayer{0}(\bx) = \mlpweights_{0}\,\bx, \qquad\qquad 
        \hiddenstates_{\neuronset_{\layer+1}} = \dnnfunclayer{\layer}(\hiddenstates_{\neuronset_{\layer}}) = \mlpweights_{\layer}\left(\nonlinearity(\hiddenstates_{\neuronset_{\layer}})\right) + \mlpbias_{\layer}
    \end{align}
    where $\mlpweights_{0} \in \R^{\layerdim{1} \times |\bx|}$, $\mlpweights_{\layer} \in \R^{\layerdim{\layer+1} \times \layerdim{\layer}}$, and $\mlpbias_{\layer} \in \R^{\layerdim{\layer+1}}$
    are trainable parameters, 
    and $\nonlinearity$ is a non-linearity like ReLU.
    For this model, $\hiddenspace_{\layer} = \R^{\layerdim{\layer}}$ for $0 < \layer < \nlayers$ and $\hiddenspace_{\nlayers} = \R^{|\by|}$.
\end{submodule}

In this work, we focus on MLPs used for classification tasks, whose final layer includes a softmax transformation.
\begin{model}
\label{model:multi_layer_perceptron_for_classification}
    A \defn{classification multi-layer perceptron} (MLP) is defined like \cref{definition:multi_layer_perceptron} but with a softmax on the last layer:
    \begin{align}
        \pdnn(\by \mid \bx) = \dnnfunclayer{\nlayers}(\hiddenstates_{\neuronset_{\nlayers}}) = \softmax(\mlpweights_{\nlayers}\,\hiddenstates_{\neuronset_{\nlayers}})
    \end{align}
    where $\mlpweights_{L} \in \R^{|\by| \times \layerdim{\nlayers}}$ is a trainable parameter.
\end{model}

\subsection{Transformer Language Model}\label{appsubsec:Language_Model_Transformer}

In this section, we provide a definition of decoder-only autoregressive language models \citep{radford2019language, vaswani2017attention}. 
While many variations of transformer architectures have been developed, we focus on the original GPT-2 architecture \citep{radford2019language}.
We highlight that the Pythia models explored in our experiments are slightly different from the original GPT-2 and use parallel attention \citep{chowdhery2023palm}; however, we do not expect this change to strongly affect our results.
We now define the different submodules that compose a transformer.  

\begin{submodule}\label{definition:Embedding}
    The \defn{Embedding} layer in a transformer maps input tokens to vectors:
    \begin{align}
        \dnnfunclayer{0}(\bx) = \embeddings_{\bx}, \qquad\mathtt{where}\,\,\embeddings_{\bx}\in \R^{|\bx| \times \layerdim{1}}
    \end{align}
    In this equation, $\embeddings \in \R^{|\calX| \times \layerdim{1}}$ is a learned parameter matrix and $\bx$ indexes into its rows.\footnote{We ignore positional embeddings here for simplicity, as they do not affect our proofs of injectivity in \cref{app:injectivity_Transformer}. Note that, in that section, we show injectivity on an entire layer's activations $\hiddenstates_{\neuronset_{\layer}}$. 
    Position embeddings would be needed to show injectivity on a single position, e.g., the last token's position $(\hiddenstates_{\neuronset_{\layer}})_{-1}$; a property which we conjecture should also hold.
    Further, we note that position embeddings are used in our experiments.}
\end{submodule}

\begin{submodule}\label{definition:Multi_Head_Self_Attention}
    \defn{Multi-Head Self-Attention} with $H$ heads is defined as:
    \begin{align}
        \selfattention(\hiddenstates) = \concat(\head_1(\hiddenstates), \ldots, \head_H(\hiddenstates))\mlpweights^O\label{eq:transformer_concat_attention}
    \end{align}
    where each head operates in dimension $d_{\head} \ll \layerdim{\layer}$ and computes:
    \begin{align}
        \head_i(\hiddenstates) = \softmax\left(\frac{(\hiddenstates\mlpweights_i^Q)(\hiddenstates\mlpweights_i^K)^\top}{\sqrt{d_k}}\right)\hiddenstates\mlpweights_i^V
    \end{align}
    with learned parameters $\mlpweights^O\in \R^{H d_{\head} \times\layerdim{\layer}}$, $\mlpweights_i^Q,\mlpweights_i^K,\mlpweights_i^V\in \R^{\layerdim{\layer}\times d_{\head}}$.
\end{submodule}

\begin{submodule}\label{definition:Layer_Norm}
    \defn{Layer Normalization} applies per-feature normalization:
    \begin{align}
        \layerNorm(\hiddenstates) = \gamma \odot \frac{\hiddenstates - \mu}{\sigma} + \beta
    \end{align}
    where $\mu$ and $\sigma$ are the mean and standard deviation across all features for a single input, and $\gamma$, $\beta$ are learned parameters.
\end{submodule}

Using these submodules, we define a transformer block.

\begin{submodule}\label{definition:Transformer_Block}
    A \defn{Transformer Block} chains together attention and MLP layers with residual connections:
    \begin{align}
        \hiddenstates' = \hiddenstates + \selfattention(\layerNorm(\hiddenstates)),
        \qquad\qquad
        \dnnfunclayer{\layer}(\hiddenstates) = \hiddenstates' + \mlpfunc(\layerNorm(\hiddenstates'))
    \end{align}
    where $\mlpfunc$ is applied to each token activations separatly as defined in \cref{definition:multi_layer_perceptron}.
\end{submodule}

Finally, we define the complete transformer language model.

\begin{model}\label{model:transformer_language_Model}
    A \defn{transformer language model} consists of an embedding layer, transformer blocks, and an output layer:
    \begin{subequations}
    \begin{align}
        \hiddenstates_{\neuronset_{1}} &= \dnnfunclayer{0}(\bx) = \embeddings_{\bx} \\
        \hiddenstates_{\neuronset_{\layer+1}} &= \dnnfunclayer{\layer}(\hiddenstates_{\neuronset_{\layer}}) \\
        \pdnn(\by \mid \bx) &= \dnnfunclayer{\nlayers}(\hiddenstates_{\neuronset_{\nlayers}}) = \softmax(\layerNorm(\hiddenstates_{\neuronset_{\nlayers}})_{-\!1}\, \mlpweights_{\nlayers})
    \end{align}
    \end{subequations}
    where $\layerNorm(\hiddenstates_{\neuronset_{\nlayers}})_{-\!1}$ selects the final token's position after layernorm is applied.
    For this model, $\hiddenspace_{\neuronset_{\layer}} = \R^{|\bx| \times \layerdim{\layer}}$ for $1 \leq \layer \leq \nlayers$ and $\hiddenspace_{\neuronset_{\nlayers+1}} = \Delta^{|\calY|-1}$.
\end{model}

\section{Proof of \texorpdfstring{\cref{theorem:existing_causalmap_for_any_algorithm}}{Theorem}}
\label{appsec:Linear_Representation_Hypothesis_vs_DAS}

\mydnnmacro{\neuronsetPartition}{\boldsymbol{\Psi}}

In this section, we prove our main theorem. 
For notational simplicity, we write in this section:
\begin{align}
    \underbrace{\dnnfunclayer{:\layer} \defeq \dnnfunclayer{\neuronset_{\layer}} = \dnnfunclayer{\layer-1}\circ\cdots\circ\dnnfunclayer{0}}_{\texttt{Run DNN up to layer } \layer},
    \qquad\mathtt{and}\qquad
    \underbrace{\dnnfunclayer{\layer:} \defeq \dnnfunclayer{\nlayers}\circ\cdots\circ\dnnfunclayer{\layer}}_{\texttt{Run DNN from layer } \layer}
\end{align}
We start by formally stating our assumptions.

\begin{restatable}[Countable input-space]{assumption}{assumptionCountableinfinite}
\label{assumption:Countable_infinite}
    We assume that the space of inputs (i.e., $\calX$) is countable.
\end{restatable}

\begin{restatable}[Input-injectivity in all layers]{assumption}{assumptioninjectivity}
\label{assumption:injectivity}
    We assume that $\dnnfunclayer{:\layer}$ is injective for all layers. %
\end{restatable}

\begin{restatable}[Strict output-surjectivity in all layers]{assumption}{assumptionsurjectivety}
\label{assumption:surjectivety}
    We assume that the composition of $\dnnfunclayer{\layer:}$ and $\abstractionmap_{\nodesoutput}$ is strictly surjective for all layers (we define strict surjectivity in \cref{defn:strict_surjectivity}).
\end{restatable}

\begin{restatable}[Algorithm and DNN have matchable partial-orderings]{assumption}{assumptionminimalrequirements}\label{assumption:minimalrequirements}
    We assume that there exists a partitioning $\{\neuronset_{\node} \mid \node \in \nodesint\} \cup \{\neuronset_{\uselesspartition}\}$ of $\dnn$'s neurons
    $\neuronsetint$---where $\neuronset_{\node}$ are single neurons---which respects the partial-ordering of algorithm $\alg$, i.e., $\node \partialsmall \node' \implies \neuronset_{\node} \partialsmall \neuronset_{\node'}$.
    Further, for each layer at least one neuron is left unused in this partitioning, i.e., $\neuronset_{\uselesspartition} \cap \neuronset_{\layer} \neq \emptyset$.
\end{restatable}
\begin{restatable}[DNN solves the task]{assumption}{assumptiondnnalgsolve}\label{assumption:dnnalgsolve}
    We assume that for any input $\bx \in \calX$, 
    the neural network solves the task correctly, satisfying $\ftask(\bx) = \argmax_{\by \in \calY} \pdnn(\by \mid \bx)$.
\end{restatable}

We provide a longer discussion about why we think these assumptions are reasonable in \cref{appsubsec:Linear_Representation_Hypothesis_vs_DAS_Assumptions}. For convenience, we also put a self-contained version of \cref{definition:input_restricted_constructive_abstraction} (input-restricted distributed abstraction) in \cref{appsec:detailed_def_IRCA}.
Now, we restate our theorem and present its proof.

\existingcausalmapforanyalgorithm*

\begin{proof}
    To show that an algorithm $\alg$ is an input-restricted distributed abstraction of a neural network $\dnn$, we must  show (according to \cref{definition:input_restricted_constructive_abstraction}) that there exists a $\abstractionmap$ for which: $\alg$ is an input-restricted $\boldsymbol{\abstractionmap}$-abstraction of $\dnn$;
    and $\abstractionmap$ is a distributed abstraction map.
    For $\abstractionmap$ to be a distributed abstraction map, we need a partition of hidden variables 
    which allows us to independently compute it per node.
    Further, we need the partitioned hidden variables $\neuronsetintcausal$
    to be the output of an alignment map $\causalmap$ which is layer-wise decomposable.
    We thus have:
    \begin{align}
        \underbrace{\hiddenstates_{\neuronset^{\causalmap}} = \causalmap(\hiddenstates) = [\causalmap_{\layer}(\hiddenstates_{\neuronset_{\layer}})]_{\layer=0}^{\nlayers+1},}_{\texttt{Align DNN}}
        \,\,\,\,
        \underbrace{\neuronsetPartition^{\causalmap} = \{\neuronset^{\causalmap}_{\node} \mid \node \in \nodesint\} \cup \{\neuronset^{\causalmap}_{\uselesspartition}\},}_{\texttt{Partition hidden variables}}
        \,\,\,\,
        \underbrace{\abstractionmap(\hiddenstates) = [\abstractionmap_{\node}(\hiddenstates_{\neuronset^{\causalmap}_{\node}})]_{\node \in \nodesint}}_{\texttt{Compute abstraction}}
    \end{align}
    Therefore, to define a distributed abstraction map $\abstractionmap$,
    we must define the following three terms:
    (i) a set of layer-wise alignment maps $\{\causalmap_{\layer}\}_{\layer=1}^{\nlayers}$ (note that the alignment maps $\causalmap_{0}$ and $\causalmap_{\nlayers+1}$ are fixed by definition);
    (ii) a partition of hidden variables $\neuronsetPartition^{\causalmap}$; and
    (iii) a set of per-node functions $\{\abstractionmap_{\node}\}_{\node \in \nodesint}$.
    To prove this theorem, then, we must show that there exists a way to define these terms while ensuring that $\alg$ is an input-restricted $\boldsymbol{\abstractionmap}$-abstraction of $\dnn$.

    We now note that---given \cref{assumption:minimalrequirements} and independently of our choice of alignment map $\causalmap$---there exists at least one partition $\neuronsetPartition^{\causalmap}$ of the hidden variables $\neuronsetintcausal$ in $\dnn$ for which:
    \begin{align}
        \underbrace{
        \forall_{\node, \node' \in \nodesint}:
        \node \partialsmall \node' \implies \neuronset^{\causalmap}_{\node} \partialsmall \neuronset^{\causalmap}_{\node'}}_{\texttt{respect partial ordering}},
        \quad
        \underbrace{
        \forall_{\node \in \nodesint}:
        |\neuronset^{\causalmap}_{\node}| = 1,}_{\texttt{1 neuron per partition}}
        \quad
        \underbrace{
        \forall_{0 < \layer \leq \nlayers}:
        \neuronset^{\causalmap}_{\layer} \not\subseteq \bigcup_{\node \in \nodesint} \neuronset^{\causalmap}_{\node}}_{\texttt{no layer is fully occupied}}
    \end{align}
    where we define $\neuronset^\causalmap_\layer$ as the latent variables given when applying $\causalmap_\layer$ on $\neuronset_\layer$. 
    To facilitate our proof, we choose one such partition $\neuronsetPartition^{\causalmap}$ which we will keep fixed independently of our choice of alignment map $\causalmap$.
    Given partition $\neuronsetPartition^{\causalmap}$, we can assign each node $\node$ to a specific layer $\layer$, as $\neuronset^{\causalmap}_{\node} $ contains a single hidden variable and therefore trivially belongs to a single layer.
    We therefore can define $\nodes_\layer$ as all nodes associated with layer $\layer$:
    \begin{align}
        \node\in\nodes_\layer\Leftrightarrow\neuronset^{\causalmap}_{\node}\subseteq\neuronset^\causalmap_\layer
    \end{align}
    We now consider the application of interventions on  $\dnn$ as layer-wise on $\neuronset^{\causalmap}_{\node}\subseteq\neuronset^\causalmap_\layer$ for $\node\in\nodes_\layer$.
    Let us therefore define $\interventionsset^\layer_\dnn$ as the set of all interventions
    on $\neuronset^{\causalmap}_{\node}$ for $\node\in\nodes_\layer$, where we note that $\interventionsset^\layer_\dnn$ also includes an empty intervention (i.e., no intervention). 
    For notational convenience, we will write the set of all interventions up to layer $\layer$ as $\interventionsset^{:\layer}_\dnn$, and the set of all nodes associated with those layers as $\nodes_{:\layer}$:
    \begin{align}
        \interventionsset^{:\layer}_\dnn=\interventionsset^{0}_\dnn \times \interventionsset^{1}_\dnn \times \cdots \times \interventionsset^{\layer}_\dnn,
        \qquad
        \nodes_{:\layer} = \nodes_{0} \cup \nodes_{1} \cup \cdots \cup \nodes_{\layer}
    \end{align}
    where $\times$ denotes a Cartesian product.
    We analogously define $\interventionsset^{\layer}_\alg$ and $\interventionsset^{:\layer}_\alg$.

    Finally, we get to an induction proof that will complete this theorem.
    We will iteratively construct abstraction and alignment maps for each layer such that it holds that:
    \begin{align}
        \myforall\limits_{\overset{\interventionsdnn \in \interventionsset^{:\layer\!-\!1}_{\dnn}}{\bx\in\calX\,\,\,\,}}\,
        \myforall\limits_{\node \in \nodes_{\layer}}: \underbrace{\abstractionmap_{\node}(\hiddenstates'_{\neuronset^{\causalmap}_{\node}}) = \algfuncnode{\node}(\bx, \interventions_{\alg}),}_{\texttt{tested condition}} 
        \qquad\,\,\,\,\,\,
        \,\,\mathtt{where}\,\,
        \underbrace{\hiddenstates'_{\neuronset^{\causalmap}_{\layer}} = \causalmap_{\layer}(\dnnfunclayer{:\layer}(\bx, \interventionsdnn)),}_{\!\!\!\!\!\!\!\!\!\!\!\!\!\!\!\!\!\!\!\!\!\!\!\!\!\!\!\!\!\!\!\!\!\!\!\!\!\!\!\!\!\!\!\!\!\!\!\!
        \texttt{pre-int.\ hidden variable, as }\interventionsdnn \in \interventionsset^{:\layer\!-\!1}_{\dnn}
        \!\!\!\!\!\!\!\!\!\!\!\!\!\!\!\!\!\!\!\!\!\!\!\!\!\!\!\!\!\!\!\!\!\!\!\!\!\!\!\!\!\!\!\!\!\!\!\!}
        \,\,\mathtt{and}\,\,
        \interventions_{\alg} = \omega_{\abstractionmap}(\interventionsdnn) 
        && \inductioncircle{1} \nonumber
    \end{align}
    where int.\ stands for intervention.
    Note that if this holds for all layers, we have proven that $\alg$ is an input-restricted $\abstractionmap$-abstraction of $\dnn$, as we can perfectly reconstruct the behaviour of algorithm $\alg$ from $\dnn$'s states under any intervention.\footnote{The attentive reader may note condition \inductioncircle{1} only guarantees we can reconstruct the behaviour of algorithm $\alg$ from pre-intervention hidden variables. \cref{lemma:post_intervention_first_condition} shows the same holds for post-intervention hidden variables.\looseness=-1}
    Also note, however, that our definition of abstraction map restricts $\tau_{\nodesoutput}(\hiddenstates_{\neuronset_{\nlayers+1}}) = \argmax \hiddenstates_{\neuronset_{\nlayers+1}}$, so special care must be taken to guarantee that this last identity will be preserved.
    We thus also require an additional condition to hold at each step:
    \begin{align}
        \myforall\limits_{\substack{\interventionsdnn \in \interventionsset^{:\layer}_{\dnn} \\ \bx\in\calX}}:
        \underbrace{\tau_{\nodesoutput}(\dnnfunclayer{\layer:}(\hiddenstates'_{\neuronset_{\layer}})) = \algfunc(\bx, \interventions_{\alg}),}_{\texttt{tested condition}} 
        \qquad\,\,\,\,\,\,\,\,\,\,\,\,
        \,\,\mathtt{where}\,\,
        \underbrace{\hiddenstates'_{\neuronset_{\layer}} = \dnnfunclayer{:\layer}(\bx, \interventionsdnn),}_{\!\!\!\!\!\!\!\!\!\!\!\!\!\!\!\!\!\!\!\!\!\!\!\!\!\!\!\!\!\!\!\!\!\!\!\!\!\!\!\!\!\!\!\!\!\!\!\!
        \texttt{post-int.\ neurons, as }\interventionsdnn \in \interventionsset^{:\layer}_{\dnn}
        \!\!\!\!\!\!\!\!\!\!\!\!\!\!\!\!\!\!\!\!\!\!\!\!\!\!\!\!\!\!\!\!\!\!\!\!\!\!\!\!\!\!\!\!\!\!\!\!}
        \,\,\mathtt{and}\,\,
        \interventions_{\alg} = \omega_{\abstractionmap}(\interventionsdnn) \,\,\,\,\,\,\,\, && \inductioncircle{2} \nonumber
    \end{align}
    Finally, for convenience, we add a third condition to our inductive proof which will make the other two conditions easier to guarantee:
    \begin{align}
        \myforall_{\node \in \nodes_{:\layer\!-\!1}}
        \exists g_{\node}
        \myforall_{\overset{\interventionsdnn \in \interventionsset^{:\layer\!-\!1}_{\dnn}\!\!\!}{\bx\in\calX\,\,\,\,}}:
        \underbrace{ g_{\node}(\hiddenstates'_{\neuronset^{\causalmap}_{\layer} \cap \neuronset^{\causalmap}_{\uselesspartition}}) = \algfuncnode{\node}(\bx, \interventions_{\alg}),}_{\texttt{tested condition}} 
        \,\,\mathtt{where}\,\,
        \underbrace{\hiddenstates'_{\neuronset^{\causalmap}_{\layer}} = \causalmap_{\layer}(\dnnfunclayer{:\layer}(\bx, \interventionsdnn)),}_{\!\!\!\!\!\!\!\!\!\!\!\!\!\!\!\!\!\!\!\!\!\!\!\!\!\!\!\!\!\!\!\!\!\!\!\!\!\!\!\!\!\!\!\!\!\!\!\!
        \texttt{pre-int.\ hidden variable, as }\interventionsdnn \in \interventionsset^{:\layer\!-\!1}_{\dnn}
        \!\!\!\!\!\!\!\!\!\!\!\!\!\!\!\!\!\!\!\!\!\!\!\!\!\!\!\!\!\!\!\!\!\!\!\!\!\!\!\!\!\!\!\!\!\!\!\!}
        \mathtt{and}\,\,
        \interventions_{\alg} = \omega_{\abstractionmap}(\interventionsdnn) 
         && \!\! \inductioncircle{3} \nonumber
    \end{align}
    This condition guarantees that information about previous nodes (i.e., $\node \in \nodes_{:\layer\!-\!1}$) is preserved in each layer's non-intervened neurons (i.e., $\neuronset^{\causalmap}_{\layer} \cap \neuronset^{\causalmap}_{\uselesspartition}$).
    This final condition will be useful to guarantee conditions \inductioncircle{1} and \inductioncircle{2} are preserved in future layers.

    \paragraph{Statement.} Conditions \inductioncircle{1}, \inductioncircle{2}, and \inductioncircle{3} hold for all layers $\layer$ in a DNN.

    \paragraph{Base Case (\boldsymbol{$\layer=0$}).}
    For layer $\layer=0$, 
    we have $\nodes_{\layer} = \nodesinput$.
    We also have both $\causalmap_{\layer}$ and $\abstractionmap_{\node}$ as the identity function.
    Further, we consider $\interventionsset^{:-1}=\{\emptyintervention\}$ and $\interventionsset^{:0}=\{\emptyintervention\}$---where symbol $\emptyintervention$ here denotes an empty intervention---and we consider $\dnnfunclayer{:0}$ to be the identity on $\bx$. 
    (Note that layer $\dnnfunclayer{0}$ is not applied in $\dnnfunclayer{:0}$.)
    Now, it is easy to prove our base case:
    \begin{itemize}
        \item $\inductioncircle{1}$ follows trivially, as 
    $\algfuncnode{\nodesinput}(\bx, \interventionsalg) = \bx$ and $\dnnfunclayer{:0}(\bx, \interventionsdnn) = \bx$.
        \item $\inductioncircle{2}$ follows from \cref{assumption:dnnalgsolve}.
        \item $\inductioncircle{3}$ follows trivially given $\nodes_{:-1}$ is an empty set. 
    \end{itemize}

    \paragraph{Induction Step (given \boldsymbol{$\layer-1$}, then \boldsymbol{$\layer$}).}
    Now, due to the inductive hypothesis, we assume that $\inductioncircle{1}$, $\inductioncircle{2}$ and $\inductioncircle{3}$ hold for layer $(\layer\!-\!1)$.
    Given this, we must now prove that these conditions also hold for layer $\layer$.
    We will consider two cases: $\nodes_\layer$ is either empty or not.
    Before doing so, however, we note that \inductioncircle{1} and \inductioncircle{3} hold for layer $(\layer\!-\!1)$'s pre-intervention hidden variables.
    In \Cref{lemma:post_intervention_first_condition,lemma:post_intervention_third_condition}, we show that the same applies for the post-intervention hidden variables.

    Let's consider the \textbf{case where \boldsymbol{$\nodes_\layer$} is empty}.
    
    In this case, we can simply define $\causalmap_\layer$ as the identity map.
    Further, given an empty $\nodes_\layer$, we know that there are no interventions in this layer, i.e., $\interventionsset^{\layer}_{\dnn} = \{\emptyintervention\}$, and, as such, we have that:
    $\interventionsset^{:\layer}_{\dnn} = \interventionsset^{:\layer\!-\!1}_{\dnn} \times \interventionsset^{\layer}_{\dnn} = \interventionsset^{:\layer\!-\!1}_{\dnn}$.
    We can now prove the induction step for this case.
    \begin{itemize}
        \item $\inductioncircle{1}$ is true trivially, since  $\nodes_\layer$ is empty.
        \item $\inductioncircle{2}$ follows using the inductive hypothesis. 
        Let $\interventionsdnn \in \interventionsset^{:\layer}_{\dnn}$ and $\bx \in \calX$.
        Now, let $\hiddenstates'_{\neuronset_{\layer}} = \dnnfunclayer{:\layer}(\bx, \interventionsdnn)$, $\hiddenstates'_{\neuronset_{\layer\!-\!1}} = \dnnfunclayer{:\layer\!-\!1}(\bx, \interventionsdnn)$, and $\interventionsalg = \interventionmap(\interventionsdnn)$.
        We can now show that:
        \begin{subequations}
            \begin{align}
                \abstractionmap_{\nodesoutput}\Big(\dnnfunclayer{\layer:}(\hiddenstates'_{\neuronset_{\layer}})\Big) 
                &= \abstractionmap_{\nodesoutput}\bigg(\dnnfunclayer{\layer:}\Big(\dnnfunclayer{:\layer}(\bx, \interventionsdnn)\Big)\bigg) 
                & \mathcomment{definition of }\hiddenstates'_{\neuronset_{\layer}} \\
                &= \abstractionmap_{\nodesoutput}\bigg(\dnnfunclayer{\layer:}\Big(\dnnfunclayer{\layer\!-\!1}\big(\dnnfunclayer{:\layer\!-\!1}(\bx, \interventionsdnn)\big)\Big)\bigg) 
                & \mathcomment{no intervention at layer }\layer \\
                &= \abstractionmap_{\nodesoutput}\bigg(\dnnfunclayer{\layer\!-\!1:}\Big(\dnnfunclayer{:\layer\!-\!1}(\bx, \interventionsdnn)\Big)\bigg)
                & \mathcomment{definition of }\dnnfunclayer{\layer\!-\!1:} \\
                &= \abstractionmap_{\nodesoutput}\bigg(\dnnfunclayer{\layer\!-\!1:}\Big(\hiddenstates'_{\neuronset_{\layer\!-\!1}}\Big)\bigg)
                & \mathcomment{definition of }\hiddenstates'_{\neuronset_{\layer\!-\!1}} \\
                &= \algfunc(\bx, \interventionsalg)
                & \mathcomment{inductive hypothesis on }\inductioncircle{2}
            \end{align}
        \end{subequations}
        This shows $\inductioncircle{2}$ holds for layer $\layer$ when $\nodes_{\layer}$ is empty.
        \item $\inductioncircle{3}$ follows using the inductive hypothesis. 
        Let $\interventionsdnn \in \interventionsset^{:\layer-1}_{\dnn}$, $\bx \in \calX$, and $\node \in \nodes_{:\layer-1}$.
        Now, let $\hiddenstates'_{\neuronset^{\causalmap}_{\layer}} = \causalmap_{\layer}(\dnnfunclayer{:\layer}(\bx, \interventionsdnn))$, $\hiddenstates'_{\neuronset^{\causalmap}_{\layer\!-\!1}} = \causalmap_{\layer\!-\!1}(\dnnfunclayer{:\layer\!-\!1}(\bx, \interventionsdnn))$, and $\interventionsalg = \interventionmap(\interventionsdnn)$.
        Further, let $g_{\node}^{\layer\!-\!1}(\hiddenstates'_{\neuronset^{\causalmap}_{\layer\!-\!1}}) = \algfuncnode{:\node}(\bx, \interventionsalg)$; we know such function exists due to the inductive hypothesis on \inductioncircle{1} and \inductioncircle{3}, together with \Cref{lemma:post_intervention_first_condition,lemma:post_intervention_third_condition}.
        Finally, since $\dnnfunclayer{:\layer}$ is injective (by \cref{assumption:injectivity}) and since $\dnnfunclayer{:\layer} = \dnnfunclayer{\layer-1} \circ \dnnfunclayer{:\layer-1}$, we know that each $\hiddenstates'_{\neuronset^{\causalmap}_{\layer\!-\!1}}$ is mapped to a unique $\hiddenstates'_{\neuronset^{\causalmap}_{\layer}}$ in the next layer.
        We can thus define function $\dnninvfunclayer{\layer-1}$ on the domain formed by these hidden variables which, given a hidden variable $\hiddenstates'_{\neuronset^{\causalmap}_{\layer}}$ returns its ``parent'' $\hiddenstates'_{\neuronset^{\causalmap}_{\layer\!-\!1}}$; in other words, $\dnninvfunclayer{\layer-1}$ is an partial inverse of $\dnnfunclayer{\layer-1}$ defined only on its image.
        Defining now function $g_{\node}^{\layer} = g_{\node}^{\layer-1}\circ\dnninvfunclayer{\layer-1}$, we can show:
        \begin{subequations}
        \begin{align}
            g_{\node}^{\layer}(\hiddenstates'_{\neuronset^{\causalmap}_{\layer} \cap \neuronset^{\causalmap}_{\uselesspartition}}) 
            &= g_{\node}^{\layer}(\hiddenstates'_{\neuronset^{\causalmap}_{\layer}}) 
            & \nodes_{\layer}\,\mathcomment{is empty}\\
            &= g_{\node}^{\layer}\bigg(\causalmap_{\layer}\Big(\dnnfunclayer{\layer-1}(\hiddenstates'_{\neuronset^{\causalmap}_{\layer\!-\!1}})\Big)\bigg) 
            & \mathcomment{no intervention at layer}\,\layer\\
            &= g_{\node}^{\layer}\bigg(\dnnfunclayer{\layer-1}(\hiddenstates'_{\neuronset^{\causalmap}_{\layer\!-\!1}})\bigg) 
            & \causalmap_{\layer}\,\mathcomment{is the identity function} \\
            &= g_{\node}^{\layer\!-\!1}\bigg({\dnninvfunclayer{\layer-1}}\Big(\dnnfunclayer{\layer-1}(\hiddenstates'_{\neuronset^{\causalmap}_{\layer\!-\!1}})\Big)\bigg) 
            & \mathcomment{definition of}\,g^{\layer}\\
            &= g_{\node}^{\layer\!-\!1}(\hiddenstates'_{\neuronset^{\causalmap}_{\layer\!-\!1}}) 
            & \mathcomment{definition of}\,\dnninvfunclayer{\layer-1} \\
            &= \algfuncnode{:\node}(\bx, \interventionsalg)
            & \mathcomment{inductive hypothesis on }\inductioncircle{1}\,\mathcomment{ and }\inductioncircle{3}
        \end{align}
        \end{subequations}
    \end{itemize}

    Let's now consider the \textbf{case when \boldsymbol{$\nodes_\layer$} is not empty}.

    To show $\inductioncircle{1}$, $\inductioncircle{2}$ and $\inductioncircle{3}$ for layer $\layer$ we need to find a suitable bijective $\causalmap_{\layer}$.
    We now show a careful way to construct this map which satisfies these conditions.
    To do so, we will again split this step of the proof into two parts.
    We will first take care of the case in which no interventions are applied to layer $\layer$, guaranteeing that the model behaves correctly in those cases.
    In that case, we must handle the set of \defn{input-restricted pre-intervention hidden states} in layer $\layer$, which we define as:
    \begin{align}
        \AllInpIntActivationsLayer
        \defeq \{\dnnfunclayer{\neuronset_{\layer}}(\bx, \interventions_{\dnn}) \mid 
        \bx \in \calX, \underbrace{\interventionsdnn \in \interventionsset_{\dnn}^{:\layer-1}}_{\!\!\!\!\!\!\!\!\!\!\!\!\!\!\!\!\!\!\!\!\!\!\!\!\!\!\!\!\!\!\!\!\!\!\!\!\!\!\!\!\!\!\!\!\!\!\!\!\texttt{pre-int., as we do not include}\,\interventionsset^{\layer}\!\!\!\!\!\!\!\!\!\!\!\!\!\!\!\!\!\!\!\!\!\!\!\!\!\!\!\!\!\!\!\!\!\!\!\!\!\!\!\!\!\!\!\!\!\!\!\!} 
        \}
    \end{align}
    Notably, instead of defining the entire alignment map $\causalmap_{\layer}$ at once, we will first define its behaviour only on those hidden states.
    We will denote this domain-restricted function as $\causalmap_{\layer}^{\noninter}$.
    Given this function, we will be able to define a set of \defn{input-restricted pre-intervention hidden variables} in layer $\layer$ as:
    \begin{align}
        \AllInpIntTransformedActivationsLayerOneH{\layer}
        \defeq \{
        \causalmap_{\layer}^{\noninter}(\hiddenstates_{\neuronset_{\layer}})
        \mid 
        \hiddenstates_{\neuronset_{\layer}} \in \AllInpIntActivationsLayer \}
    \end{align}
    where $\noninter$ represents the non-intervened hidden states and variables, and we will use $\inter$ to represent the intervened instances.
    Note that $\AllInpIntTransformedActivationsLayerOneH{\layer}$ is the set of representations output by alignment map $\causalmap_{\layer}^{\noninter}$.
    
    The second case we will consider will then handle interventions on this layer, and will again guarantee that the model behaves as expected in those cases.
    We thus define the set of \defn{input-restricted post-intervention hidden variables} as:
    \begin{align}
        \AllInpIntTransformedActivationsLayer
        \defeq \{\dnnfunclayer{\neuronset_{\layer}^{\causalmap}}(\bx, \interventions_{\dnn}) \mid 
        \bx \in \calX, \underbrace{\interventionsdnn \in \interventionsset_{\dnn}^{:\layer}}_{\!\!\!\!\!\!\!\!\!\!\!\!\!\!\!\!\!\!\!\!\!\!\!\!\texttt{post-int., as we include}\,\interventionsset^{\layer}\!\!\!\!\!\!\!\!\!\!\!\!\!\!\!\!\!\!\!\!\!\!\!\!} \}
    \end{align}
    Notably, what an intervention on layer $\layer$ does is re-combine the representations in $\AllInpIntTransformedActivationsLayerOneH{\layer}$.
    We can thus write $\AllInpIntTransformedActivationsLayer$ in terms of $\AllInpIntTransformedActivationsLayerOneH{\layer}$ as:
    \begin{align}
        \AllInpIntTransformedActivationsLayer
        = \left(\bigtimes_{\node \in \nodes_{\layer}} 
        \underbrace{\{
        \causalmap_{\layer}(\hiddenstates_{\neuronset_{\layer}})_{\neuronset^{\causalmap}_{\node}}
        \mid 
        \hiddenstates_{\neuronset_{\layer}} \in \AllInpIntActivationsLayer \}}_{\texttt{pre-int.\ h.v., projected to }\neuronset^{\causalmap}_{\node}}\right) 
        \times 
        \underbrace{\{\causalmap_{\layer}(\hiddenstates_{\neuronset_{\layer}})_{\neuronset^{\causalmap}_{\uselesspartition} \cap \neuronset^{\causalmap}_{\layer}}
        \mid 
        \hiddenstates_{\neuronset_{\layer}} \in \AllInpIntActivationsLayer \}}_{\texttt{pre-int.\ h.v., projected to }\neuronset^{\causalmap}_{\uselesspartition}}
    \end{align}
    We further define the set of \defn{input-restricted intervention-only hidden variables} as:
    \begin{align}
        \AllInpIntTransformedActivationsLayerTwoH{\layer}
        = 
        \AllInpIntTransformedActivationsLayer
        \setminus
        \AllInpIntTransformedActivationsLayerOneH{\layer}
    \end{align}
    By carefully defining the behaviour of $\causalmap_{\layer}$ on this set, we can guarantee the conditions above to hold.
    In particular, we will define this part of the function via its inverse ${\causalmap_{\layer}^{\inter}}^{-1}$, which maps these hidden variables back to hidden states.
    We therefore have $\causalmap_{\layer}$ and its partial inverse defined as:
    \begin{align}
        \causalmap_{\layer}(\hiddenstates)=
        \begin{cases}
        \causalmap_{\layer}^{\noninter}(\hiddenstates),\quad \texttt{if $\hiddenstates\in\AllInpIntActivationsLayerH{\layer}$}\\
        \causalmap_{\layer}^{\inter}(\hiddenstates),\quad \texttt{if $\hiddenstates\in\AllInpIntActivationsLayerTwoH{\layer}$}
        \end{cases}
        \qquad
        \causalmap_{\layer}^{-1}(\hiddenstates)=\begin{cases}
        {\causalmap_{\layer}^{\noninter}}^{-1}(\hiddenstates),\quad \texttt{if $\hiddenstates\in\AllInpIntTransformedActivationsLayerOneH{\layer}$}\\
        {\causalmap_{\layer}^{\inter}}^{-1}(\hiddenstates),\quad \texttt{if $\hiddenstates\in\AllInpIntTransformedActivationsLayerTwoH{\layer}$}
        \end{cases}
    \end{align}

    We now define $\causalmap_{\layer}^{\noninter}$.
    \begin{definition}\label{Definition:phi_1}
        Partial map $\causalmap_{\layer}^{\noninter}: \AllInpIntActivationsLayerH{\layer} \to \R^{|\neuronset^{\layer}|}$ is some fixed function 
        that is injective on each dimension, i.e., 
            $\forall_{i\in\{1,...,|\neuronset^{\causalmap}_{\layer}|\}}\ \forall_{\hiddenstates_1,\hiddenstates_2\in\AllInpIntActivationsLayerH{\layer}}:\hiddenstates_1\neq\hiddenstates_2\Rightarrow\causalmap_{\layer}^{\noninter}(\hiddenstates_1)_i\neq\causalmap_{\layer}^{\noninter}(\hiddenstates_2)_i$.
    \end{definition}
    Such a function exists, because \smash{$\AllInpIntActivationsLayerH{\layer}$} is countable (\Cref{theorem:intervened_activations_countable_infinite}) and $\R$ is uncountable.
    Further, its partial inverse \smash{${\causalmap_\layer^{\noninter}}^{-1}$}, defined on the image \smash{$\AllInpIntTransformedActivationsLayerOneH{\layer}$}, exists because \smash{$\causalmap_\layer^{\noninter}$} is injective.
    
    We can now prove that conditions \inductioncircle{1} and \inductioncircle{3} hold.
    We also prove that \inductioncircle{2} holds when $\interventionsdnn \in \interventionsset_{\dnn}^{:\layer-1}$, i.e., when there is no intervention in layer $\layer$.
    \begin{itemize}[topsep=0pt]
        \item \inductioncircle{1} follows using the inductive hypothesis.
        Let $\interventionsdnn \in \interventionsset^{:\layer-1}_{\dnn}$, $\bx \in \calX$, and $\node \in \nodes_{\layer}$.
        Further, let $\hiddenstates'_{\neuronset^{\causalmap}_{\layer}} = \causalmap_{\layer}(\dnnfunclayer{:\layer}(\bx, \interventionsdnn))$, 
        \smash{$\hiddenstates'_{\neuronset^{\causalmap}_{\layer\!-\!1}} \!=\! \causalmap_{\layer\!-\!1}(\dnnfunclayer{:\layer\!-\!1}(\bx, \interventionsdnn))$},
        and $\interventionsalg = \interventionmap(\interventionsdnn)$.
        Now, note that there exists a function $g_{\node}$ for which $\nodevalue_{\node} = g_{\node}(\nodesvalues_{\nodes_{:\layer-1}})$, as the parents of $\node$ are a subset of $\nodes_{:\layer-1}$.
        It now suffices to show that $\hiddenstates'_{\neuronset^{\causalmap}_{\node}}$ encodes information about $\nodesvalues_{\nodes_{:\layer-1}} = [\algfuncnode{:\node}(\bx, \interventionsalg)]_{\node \in \nodes_{:\layer-1}}$.
        By the inductive hypothesis on $\inductioncircle{1}$ and $\inductioncircle{3}$, together with \cref{lemma:post_intervention_first_condition,lemma:post_intervention_third_condition}, we know that $\hiddenstates'_{\neuronset_{\layer\!-\!1}^\causalmap}$ encodes information about $\nodesvalues_{\nodes_{:\layer\!-\!1}}$;
        let $g_{\nodes_{:\layer-1}}$ be a function that extracts this information, i.e., $g_{\nodes_{:\layer-1}}(\hiddenstates'_{\neuronset_{\layer\!-\!1}^\causalmap}) = \nodesvalues_{\nodes_{:\layer-1}}$.
        Now, since 
        $\dnnfunclayer{:\layer}$ is injective,
        and $\causalmap_{\layer}^{\noninter}$ is injective on each output dimension,
        we know that $\hiddenstates'_{\neuronset_{\layer}} = 
        [\causalmap_{\layer}^{\noninter}(
        \dnnfunclayer{\layer-1}(\causalmap^{-1}_{\layer-1}(\hiddenstates'_{\neuronset_{\layer\!-\!1}^\causalmap})))]_{\neuronset_{\node}^{\causalmap}}$ contains the same information as $\hiddenstates'_{\neuronset_{\layer\!-\!1}^\causalmap}$.
        We can thus construct (partial) inverses 
        $\dnninvfunclayer{\layer-1}$, $\causalmap_{\layer,\neuronset_{\node}^{\causalmap}}^{\noninter-1}$ and define $\abstractionmap_{\node}$ as the composition $g_{\node}\circ g_{:\node-1}\circ
        \causalmap_{\layer-1}\circ
        \dnninvfunclayer{\layer-1}\circ
        \smash{\causalmap_{\layer,\neuronset_{\node}^{\causalmap}}^{\noninter-1}}$, which concludes this step of the proof:\looseness=-1
        \begin{subequations}
        \begin{align}
            \abstractionmap_{\node}([\hiddenstates'_{\neuronset^{\causalmap}_{\layer}}]_{\neuronset_{\node}^{\causalmap}}) 
            &= 
            \abstractionmap_{\node}([\causalmap_{\layer}^{\noninter}(
        \dnnfunclayer{\layer-1}(\causalmap_{\layer-1}^{-1}(\hiddenstates'_{\neuronset_{\layer\!-\!1}^\causalmap})))]_{\neuronset_{\node}^{\causalmap}}) 
            & \!\!\!\!\!\!\!\!\!\!\!\!\!\!
            \mathcomment{definition of }\hiddenstates'_{\neuronset_{\layer}} \\
            &= 
            g_{\node}(g_{:\node-1}(
            \causalmap_{\layer-1}(
            \dnninvfunclayer{\layer-1}(
            \causalmap_{\layer,\neuronset_{\node}^{\causalmap}}^{\noninter-1}(
            [\causalmap_{\layer}^{\noninter}(
        \dnnfunclayer{\layer-1}(\causalmap_{\layer-1}^{-1}(\hiddenstates'_{\neuronset_{\layer\!-\!1}^\causalmap})))]_{\neuronset_{\node}^{\causalmap}}))))) 
             \!\!\!\!\!\!\!\!\!\!\!\!\!\!\!\!\!\!\!\!\!\!\!\! \\
            &= 
            g_{\node}(g_{:\node-1}(\hiddenstates'_{\neuronset_{\layer\!-\!1}^\causalmap})) \\
            &= \algfuncnode{:\node}(\bx, \interventionsalg)
        \end{align}
        \end{subequations}
        \item $\inductioncircle{2}$ when $\interventionsdnn \in \interventionsset^{:\layer-1}_{\dnn}$ follows using the inductive hypothesis. 
        Let $\interventionsdnn \in \interventionsset^{:\layer-1}_{\dnn}$ and $\bx \in \calX$.
        Now, let $\hiddenstates'_{\neuronset_{\layer}} = \dnnfunclayer{:\layer}(\bx, \interventionsdnn)$, $\hiddenstates'_{\neuronset_{\layer\!-\!1}} = \dnnfunclayer{:\layer\!-\!1}(\bx, \interventionsdnn)$, and $\interventionsalg = \interventionmap(\interventionsdnn)$.
        We can show that:
        \begin{subequations}
            \begin{align}
                \abstractionmap_{\nodesoutput}\Big(\dnnfunclayer{\layer:}(\hiddenstates'_{\neuronset_{\layer}})\Big) 
                &= \abstractionmap_{\nodesoutput}\bigg(\dnnfunclayer{\layer:}\Big(\dnnfunclayer{:\layer}(\bx, \interventionsdnn)\Big)\bigg) 
                & \mathcomment{definition of }\hiddenstates'_{\neuronset_{\layer}} \\
                &= \abstractionmap_{\nodesoutput}\bigg(\dnnfunclayer{\layer:}\Big(\dnnfunclayer{\layer\!-\!1}\big(\dnnfunclayer{:\layer\!-\!1}(\bx, \interventionsdnn)\big)\Big)\bigg) 
                & \mathcomment{no intervention at layer }\layer \\
                &= \abstractionmap_{\nodesoutput}\bigg(\dnnfunclayer{\layer\!-\!1:}\Big(\dnnfunclayer{:\layer\!-\!1}(\bx, \interventionsdnn)\Big)\bigg)
                & \mathcomment{definition of }\dnnfunclayer{\layer\!-\!1:} \\
                &= \abstractionmap_{\nodesoutput}\bigg(\dnnfunclayer{\layer\!-\!1:}\Big(\hiddenstates'_{\neuronset_{\layer\!-\!1}}\Big)\bigg)
                & \mathcomment{definition of }\hiddenstates'_{\neuronset_{\layer\!-\!1}} \\
                &= \algfunc(\bx, \interventionsalg)
                & \mathcomment{inductive hypothesis on }\inductioncircle{2}
            \end{align}\label{eq:condition_two_for_No_intervention}
        \end{subequations}
        This shows $\inductioncircle{2}$ holds for layer $\layer$ when there is no intervention in layer $\layer$.
        \item \inductioncircle{3} follows using the inductive hypothesis. 
        Let $\interventionsdnn \in \interventionsset^{:\layer-1}_{\dnn}$, $\bx \in \calX$, and $\node \in \nodes_{:\layer-1}$.
        Now, let $\hiddenstates'_{\neuronset^{\causalmap}_{\layer}} = \causalmap_{\layer}(\dnnfunclayer{:\layer}(\bx, \interventionsdnn))$, 
        \smash{$\hiddenstates'_{\neuronset^{\causalmap}_{\layer\!-\!1}} \!=\! \causalmap_{\layer\!-\!1}(\dnnfunclayer{:\layer\!-\!1}(\bx, \interventionsdnn))$},
        and $\interventionsalg = \interventionmap(\interventionsdnn)$.
        Further, let $g_{\node}^{\layer\!-\!1}(\hiddenstates'_{\neuronset^{\causalmap}_{\layer\!-\!1}}) = \algfuncnode{:\node}(\bx, \interventionsalg)$; 
        we know such function exists due to the inductive hypothesis on \inductioncircle{1} and \inductioncircle{3} together with \Cref{lemma:post_intervention_first_condition,lemma:post_intervention_third_condition}.
        Finally, since $\dnnfunclayer{:\layer}$ and $\causalmap^{\noninter}_{\layer}$ are injective
        and $\causalmap_{\layer-1}$ is bijective,
        we can define the partial inverse function $\dnninvfunclayer{\layer-1}$ of their composition $\causalmap^{\noninter}_{\layer}\circ\dnnfunclayer{\layer}\circ\causalmap_{\layer-1}^{-1}$ 
        (applied only to their image) which, given the hidden variable $\hiddenstates'_{\neuronset^{\causalmap}_{\layer}}$ returns its ``parent'' $\hiddenstates'_{\neuronset^{\causalmap}_{\layer\!-\!1}}$.
        Defining now a function $\widehat{g}_{\node}^{\layer} = g_{\node}^{\layer-1}\circ\dnninvfunclayer{\layer-1}$ and 
        $g_{\node}^{\layer}(\hiddenstates'_{\neuronset^{\causalmap}_{\layer} \cap \neuronset^{\causalmap}_{\uselesspartition}}) = \widehat{g}_{\node}^{\layer}(\hiddenstates'_{\neuronset^{\causalmap}_{\layer}})$---which exists, as $\causalmap^{\noninter}_{\layer}$ is injective on each dimension---we can show:
        \begin{subequations}
        \begin{align}
            g_{\node}^{\layer}(\hiddenstates'_{\neuronset^{\causalmap}_{\layer} \cap \neuronset^{\causalmap}_{\uselesspartition}}) 
            &= \widehat{g}_{\node}^{\layer}(\hiddenstates'_{\neuronset^{\causalmap}_{\layer}}) 
            & \causalmap^{\noninter}_{\layer}\,\mathcomment{is injective on all dimensions}\\
            &= \widehat{g}_{\node}^{\layer}\bigg(\causalmap^{\noninter}_{\layer}\Big(\dnnfunclayer{\layer-1}(\causalmap^{-1}_{\layer-1}(\hiddenstates'_{\neuronset^{\causalmap}_{\layer\!-\!1}}))\Big)\bigg) \!\!\!\!\!\!\!\!
            & \mathcomment{no intervention at layer}\,\layer\\
            &= g_{\node}^{\layer\!-\!1}\bigg({\dnninvfunclayer{\layer-1}}\Big(\causalmap^{\noninter}_{\layer}\Big(\dnnfunclayer{\layer-1}(\causalmap^{-1}_{\layer-1}(\hiddenstates'_{\neuronset^{\causalmap}_{\layer\!-\!1}}))\Big)\Big)\bigg) \!\!\!\!\!\!\!\!\!\!\!\!\!\!\!\!\!\!\!\!\!\!\!\!\!\!\!\!\!\!\!\!
            & \mathcomment{definition of}\,g^{\layer}\\
            &= g_{\node}^{\layer\!-\!1}(\hiddenstates'_{\neuronset^{\causalmap}_{\layer\!-\!1}}) 
            & \mathcomment{definition of}\,\dnninvfunclayer{\layer-1} \\
            &= \algfuncnode{:\node}(\bx, \interventionsalg)
            & \mathcomment{inductive hypothesis on }\inductioncircle{1}\,\mathcomment{ and }\inductioncircle{3}
        \end{align}
        \end{subequations}
    \end{itemize}

    \newcommand{\goutput}{g^{\star}}

    We have now proved \inductioncircle{1} and \inductioncircle{3}. 
    We have also partially proved \inductioncircle{2} for cases where there is no intervention in layer $\layer$.
    \footnote{
    This also proves the result for any input $\bx \in \calX$ and intervention $\interventionsdnn \in \interventionsset^{\layer}$ where $\hiddenstates \in \AllInpIntActivationsLayerH{\layer}$. By \inductioncircle{3} at layer $\layer$, there exists $\interventionsdnn' \in \interventionsset^{\layer-1}$—constructed by applying only the interventions from $\interventionsdnn$ on layers $<\layer$—such that $\dnnfunclayer{:\layer}(\bx, \interventionsdnn) = \dnnfunclayer{:\layer}(\bx, \interventionsdnn')$. Since both encode the same values on $\nodes_{<\layer}$ according to \inductioncircle{3}, which fully determine the output of $\algfunc$, and since \inductioncircle{2} holds for $(\bx, \interventionsdnn')$ at layer $\layer$, it must also hold for $(\bx, \interventionsdnn)$.}
    We now finish our proof by considering cases where there is an intervention in this layer $\layer$. 
    In the second case, we need to handle intervention-only representations $\AllInpIntTransformedActivationsLayerTwoH{\layer}$.
    We will now define ${\causalmap_\layer^{\inter}}^{-1}$ on this domain $\AllInpIntTransformedActivationsLayerTwoH{\layer}$ to fulfil $\inductioncircle{2}$.
    \vspace{-8pt}
    \begin{definition}\label{Definition:phi_2}
        Partial map ${\causalmap_\layer^{\inter}}^{-1}: \AllInpIntTransformedActivationsLayerTwoH{\layer} \to \R^{|\neuronset^{\layer}|}$ is some fixed function such that it holds:
        \begin{enumerate}[topsep=0pt]
            \item ${\causalmap_\layer^{\inter}}^{-1}$ maps to the set $\AllActivationsLayerH{\layer}\setminus\AllInpIntActivationsLayerH{\layer}$ 
            \item ${\causalmap_\layer^{\inter}}^{-1}$ is an injective map
            \item Let $\interventionsdnn \in \interventionsset^{:\layer}_{\dnn}\setminus\interventionsset^{:\layer-1}_{\dnn}$ and $\bx \in \calX$.
            Now, let $\hiddenstates'_{\neuronset^{\causalmap}_{\layer}} = \dnnfunclayer{\neuronset^{\causalmap}_{\layer}}(\bx, \interventionsdnn)$, 
            and $\interventionsalg = \interventionmap(\interventionsdnn)$.
            We have that $\dnnfunclayer{\layer:}({\causalmap_{\layer}^{\inter}}^{-1}(\hiddenstates)) = \algfunc(\bx, \interventionsalg)$.
        \end{enumerate}
    \end{definition}
    Where the first two conditions ensure the necessary bijectivity of $\causalmap_{\layer}$ and the last characteristic ensures $\inductioncircle{2}$.
    Now, let $\bx \in \calX$ be  any input and $\interventions_\dnn\in\interventionsset_{\dnn}^{:\layer}$ any intervention.
    Further, let $\hiddenstates'_{\neuronset^{\causalmap}_{\layer}} = \dnnfunclayer{\neuronset^{\causalmap}_{\layer}}(\bx, \interventionsdnn)$, 
    and $\interventionsalg = \interventionmap(\interventionsdnn)$.
    We now note that---given $\inductioncircle{1}$ and $\inductioncircle{3}$,  and \cref{lemma:post_intervention_first_condition,lemma:post_intervention_third_condition}---the value $\nodevalue_{\node} = \algfuncnode{:\node}(\bx, \interventionsalg)$
    for all nodes $\node\in\nodes_{:\layer}$ are encoded in $\hiddenstates'_{\neuronset^{\causalmap}_{\layer}}$.
    This is enough information to determine the algorithm's output $\class = \algfunc(\bx, \interventionsalg)$. 
    Now, define a function $\goutput$ which maps an element $\hiddenstates'_{\neuronset^{\causalmap}_{\layer}} \in \AllInpIntTransformedActivationsLayerTwoH{\layer}$ to the output algorithm $\alg$ expects.
    Further, by \Cref{theorem:infinite_class_points} there exists an uncountably infinite set of hidden states $\AllActivationsLayerClass$ such that:
    \begin{align}
        \forall\hiddenstates\in\AllActivationsLayerClass: \class=\argmax_{\by'\in \calY}{[\dnnfunclayer{\layer:}(\hiddenstates_{\neuronset_{\layer}})]_{\by'}}
    \end{align} 
    We define $\AllActivationsLayerClassSub=\AllActivationsLayerClass\setminus\AllInpIntActivationsLayer$, which---as $\AllInpIntActivationsLayer$ is countable---is still uncountably infinite.
    We can now map any $\hiddenstates\in\AllInpIntTransformedActivationsLayerTwoH{\layer}$ to an element in $\AllActivationsLayerClassSubH{\layer}{\goutput(\hiddenstates)}$ fulfilling the third characteristic of \cref{Definition:phi_2}. 
    That such a mapping exists adhering to the first and second characteristic of \cref{Definition:phi_2} is ensured by the fact that $\AllActivationsLayerClassSubH{\layer}{\class}$ is uncountable and  $\AllInpIntTransformedActivationsLayerTwoH{\layer}$ is countable (shown in \Cref{theorem:AllInpIntTransformedActivationsLayerTwoH_countable_infinite}).
    Further, as ${\causalmap_\layer^{\inter}}^{-1}$ is injective, its partial inverse ${\causalmap_\layer^{\inter}}$ on its image $\AllInpIntActivationsLayerTwoH{\layer}$ exists.
    
    The attentive reader may have noticed that we defined $\causalmap_\layer$ only over the domain $\AllInpIntActivationsLayerH{\layer}\cup\AllInpIntActivationsLayerTwoH{\layer}$ instead over $\R^{|\neuronset_{\layer}|}$. We note that it is simple to extend $\causalmap_\layer$ to an $\causalmap'_\layer$ defined over $\R^{|\neuronset_{\layer}|}$. Let $\mathtt{id}$ be the identity function and $\mathtt{extract\_unused\_rep}$ be defined by the algorithm given in \cref{alg:injective_g_hat}. A bijective function $\causalmap'_\layer$ over $\R^{|\neuronset_{\layer}|}$ mapping to $\R^{|\neuronset_{\layer}|}$ can be defined as:
    \begin{align}
        \causalmap'_\layer(\hiddenstates)=
        \left\{
        \begin{array}{ll}
             \causalmap_{\layer}(\hiddenstates) & \texttt{if}\, \hiddenstates\in\AllInpIntActivationsLayerH{\layer}\cup\AllInpIntActivationsLayerTwoH{\layer} \\
             \mathtt{extract\_unused\_rep}(\hiddenstates,\AllInpIntTransformedActivationsLayerOneH{\layer}\cup\AllInpIntTransformedActivationsLayerTwoH{\layer}), & \texttt{if}\, \hiddenstates\in(\AllInpIntTransformedActivationsLayerOneH{\layer}\cup\AllInpIntTransformedActivationsLayerTwoH{\layer})\setminus(\AllInpIntActivationsLayerH{\layer}\cup\AllInpIntActivationsLayerTwoH{\layer}) \\
             \hiddenstates & \texttt{else}
        \end{array}
        \right.
    \end{align}
    which completes the proof.
\end{proof}

    \begin{figure*}[t]
        \centering
    \begin{minted}[escapeinside=||,mathescape=true,frame=lines,linenos,xleftmargin=2em, xrightmargin=2em]{python}
    def |$\mathtt{extract\_unused\_rep}$|(|$\hiddenstates$|, |$\AllInpIntTransformedActivationsLayerOneH{\layer}\cup\AllInpIntTransformedActivationsLayerTwoH{\layer}$|):
        rep=|$\hiddenstates$|
        while rep|$\in\AllInpIntTransformedActivationsLayerOneH{\layer}\cup\AllInpIntTransformedActivationsLayerTwoH{\layer}$|:
            rep=|$\causalmap_{\layer}^{-1}($|rep|$)$|
        return rep
    \end{minted}
        \vspace{-5pt}
        \caption{Pseudo-code for $\mathtt{extract\_unused\_rep}$. This function returns an unique element in $(\AllInpIntActivationsLayerH{\layer}\cup\AllInpIntActivationsLayerTwoH{\layer})\setminus(\AllInpIntTransformedActivationsLayerOneH{\layer}\cup\AllInpIntTransformedActivationsLayerTwoH{\layer})$ for each $\hiddenstates\in(\AllInpIntTransformedActivationsLayerOneH{\layer}\cup\AllInpIntTransformedActivationsLayerTwoH{\layer})\setminus(\AllInpIntActivationsLayerH{\layer}\cup\AllInpIntActivationsLayerTwoH{\layer})$ ensuring bijectivity of $\causalmap'_\layer(\hiddenstates)$.\looseness=-1}
        \label{alg:injective_g_hat}
    \end{figure*}

\subsection{Discussion about Assumptions}
\label{appsubsec:Linear_Representation_Hypothesis_vs_DAS_Assumptions}

\paragraph{\Cref{assumption:Countable_infinite} (Countable input-space).}
While this assumption cannot be made on all neural networks like MLPs, it holds for models working on language and images. 
The set of all images with a specific resolution is finite, as it considers a finite number of pixels where each pixel has a finite number of channels (e.g. values for red, green and blue) and each channel is a number between 0 and 255.
The set of all sequences in a language model is also countably infinite, as each set of sequences of some length is finite given finite tokens; so we have a set made out of the countable union of finite sets, which is still countable.

\paragraph{\Cref{assumption:injectivity} (Input-injectivity in all layers).}
Neural network layers (e.g., MLP blocks) are not necessarily injective.
The usage of learnable weights, activation functions like ReLU \citep{Nair2010RectifiedLU} and information bottlenecks makes it possible to have a non-injective model.
However, we prove in \cref{app:injectivity_Transformer} that transformers, at least, are almost surely injective at initialisation on their inputs. Further, \citet{nikolaou2025languagemodelsinjectiveinvertible} recently published a proof---as well as empirical evidence---that transformers are almost surely injective in the hidden states of their last token, both at initialisation and after training.
We also see in our empirical experiments in \cref{appsec:MLP_Injectivity_in_Hierarchical_Equality_Task} that the MLPs we analyse are also, in practice, injective---or close enough to it that we observe no collisions in embedding space.

\paragraph{\Cref{assumption:surjectivety} (Strict output-surjectivity in all layers).}
Surjectivity can be defined on the output distribution $\dnnfunclayer{\layer:}: \hiddenspace_{\neuronset_{\layer}} \to \Delta^{|\calY|-1}$, but that is a rather strong assumption.
For our proofs, we will rely on strict surjectivity on the classification space instead ($\abstractionmap_{\nodesoutput} \circ \dnnfunclayer{\layer:}: \hiddenspace_{\neuronset_{\layer}} \to |\calY|)
$, such that every class can be predicted.
However, surjectivity on the classification space still does not necessarily hold for DNNs. 
LLMs have problems like the softmax bottleneck \citep{yang2018breaking}, which can lead to a model having insufficient capacity to predict all possible tokens. 
\citet{grivas-etal-2022-low} also evaluate and find this problem, but show that surjectivity on the tokens is still likely in practice, making this a reasonable assumption in LLM settings.

\paragraph{\Cref{assumption:minimalrequirements} (Algorithm and DNN have matchable partial-orderings).}
We assume this since, for a neural network $\dnn$ to be abstracted by the algorithm $\alg$, we need it to have this minimal width and depth.\looseness=-1

\paragraph{\Cref{assumption:dnnalgsolve} (DNN solves the task).} We assume this because, if a neural network does not solve the given task, it will also not be abstracted by an algorithm which implements it.

\subsection{Detailed Version of \texorpdfstring{\cref{definition:input_restricted_constructive_abstraction}}{Definition}}\label{appsec:detailed_def_IRCA}

\cref{definition:input_restricted_constructive_abstraction} can also be written without referring to previous definitions as following:
\begin{altdefinition}[\textbf{Equivalent to \cref{definition:input_restricted_constructive_abstraction}}]
An algorithm $\alg$ is an \defn{input-restricted distributed abstraction}
of a neural network $\dnn$ iff there exists an $\abstractionmap$, $\interventionsset_{\alg}$, and $\interventionsset_{\dnn}$ such that 
\begin{itemize}
    \item $\abstractionmap$ is a distributed abstraction map. I.e., there exists an alignment map $\causalmap$, a latent-variable partition \smash{$\{\neuronset^{\causalmap}_{\node} \mid \node \in \nodesint\} \cup \{\neuronset^{\causalmap}_{\uselesspartition}\}$} of $\neuronsetintcausal$ (with non-empty $\neuronset^{\causalmap}_{\node}$),
    and subabstraction maps $\{\abstractionmap_{\node} \mid \node \in \nodesint\}$
    such that
    $\abstractionmap$ is equivalent to computing the value of each node block-wise with \smash{$\abstractionmap_{\node}(\hiddenstates_{\neuronset^{\causalmap}_{\node}})$}.
    An alignment map $\causalmap$ is a bijective function that maps the inner neurons $\neuronsetint$ of $\dnn$ onto an equal-sized set of latent variables $\neuronsetintcausal$, with $\causalmap$ respecting the network’s computational order by being the combination of layer-wise bijections \smash{$\causalmap_{\layer}: \R^{\layerdim{\layer}} \to \R^{\layerdim{\layer}}$} applied to the neurons of each of the DNN's layers ($\layer$);
    \item $\interventionsset_{\alg}$ and $\interventionsset_{\dnn}$ are a maximal input-restricted intervention set. A maximal input-restricted intervention set is composed of all interventions produced from other input-restricted interventions, i.e., it is a set with $\hiddenstates_{\neuronset^\causalmap} \leftarrow \constants_{\neuronset^\causalmap}$  (where $\neuronset^\causalmap\subseteq\neuronsetintcausal$) or $\nodesvalues_{\nodes} \leftarrow \constants_{\nodes}$ (where $\nodes\subseteq\nodesint$) where $\constants_{\neuronset^\causalmap}$ or $\constants_{\nodes}$ arise from valid input-restricted computations (e.g., $\constants_{\neuronset^\causalmap} = \dnnfunclayer{\neuronset^\causalmap}(\bx)$ or $\constants_{\nodes} = \algfuncnode{:\nodes}(\bx, \constants_{\nodes} \leftarrow \algfuncnode{:\nodes}(\bx))$).
    \item $\abstractionmap$ is surjective;
    \item $\interventionsset_{\alg} = \omega_\abstractionmap(\interventionsset_{\dnn})$;
    \item There exists a surjective $\abstractionmap_{\nodesinput}$ such that
    \begin{align}
        \myforall_{\substack{\bx\in\calX \\\interventionsdnn\in\interventionsset_{\dnn}}}:\  \abstractionmap(\dnnfunclayer{\neuronsetintcausal}(\bx,\interventionsdnn))=\algfuncnode{:\nodesint}(\abstractionmap_{\nodesinput}(\bx),\interventionsalg)\quad\mathtt{ where }\,\,\interventionsalg=\interventionmap(\interventionsdnn)
    \end{align}
\end{itemize}
\end{altdefinition}

\subsection{Useful Definitions and Lemmas for \texorpdfstring{\cref{theorem:existing_causalmap_for_any_algorithm}}{Theorem}}\label{appsubsec:main_theorem_lemmas}

\begin{definition}\label{defn:strict_surjectivity}
    We say the composition of a function $f: \R^d \to \Delta^{|\calY|-1}$ with $\argmax$ is \defn{strictly surjective} if, for any output $\bystar \in \calY$, there exists an input $\hiddenstates\in \R^d$ for which $f$ outputs $\bystar$ no matter how ties are broken in the $\argmax$. 
    Formally:
    \begin{align}
        \forall \bystar \in \calY,\exists \hiddenstates\in\R^{d},\forall \by'\in\calY\setminus\{\bystar\}: [f(\hiddenstates)]_{\bystar} > [f(\hiddenstates)]_{\by'}
    \end{align}
\end{definition}

\begin{lemma}\label{lemma:post_intervention_first_condition}
    Let $\dnn$ be a DNN and $\alg$ be an algorithm.
    Further, let $\abstractionmap$ be a distributed abstraction map with partition $\neuronsetPartition$ and $\{\abstractionmap_\node\}_{\node \in \nodesint}$.
    If, for all $\node \in \nodes_{\layer}$, $\abstractionmap_\node$ satisfies
    the conditions in \inductioncircle{1} (defined in \cref{theorem:existing_causalmap_for_any_algorithm}'s proof) applied on layer \boldsymbol{$\layer$}'s pre-intervention hidden variables, i.e., if:
    \begin{align}
        \myforall\limits_{\overset{\interventionsdnn \in \interventionsset^{:\layer\!-\!1}_{\dnn}}{\bx\in\calX\,\,\,\,}}:
        \underbrace{\abstractionmap_{\node}(\hiddenstates'_{\neuronset^{\causalmap}_{\node}}) = \algfuncnode{\node}(\bx, \interventions_{\alg}),}_{\texttt{tested condition}} 
        \qquad
        \,\,\mathtt{where}\,\,
        \underbrace{\hiddenstates'_{\neuronset^{\causalmap}_{\layer}} = \dnnfunclayer{\neuronset^{\causalmap}_{\layer}}(\bx, \interventionsdnn)),}_{\!\!\!\!\!\!\!\!\!\!\!\!\!\!\!\!\!\!\!\!\!\!\!\!\!\!\!\!\!\!\!\!\!\!\!\!\!\!\!\!\!\!\!\!\!\!\!\!
        \texttt{pre-int.\ hidden variable, as }\interventionsdnn \in \interventionsset^{:\layer\!-\!1}_{\dnn}
        \!\!\!\!\!\!\!\!\!\!\!\!\!\!\!\!\!\!\!\!\!\!\!\!\!\!\!\!\!\!\!\!\!\!\!\!\!\!\!\!\!\!\!\!\!\!\!\!}
        \,\,\mathtt{and}\,\,
        \interventions_{\alg} = \omega_{\abstractionmap}(\interventionsdnn)
    \end{align}
    where we note that $\dnnfunclayer{\neuronset^{\causalmap}_{\layer}} = \causalmap_{\layer} \circ \dnnfunclayer{:\layer}$ when no intervention is applied to layer $\layer$.
    Then $\abstractionmap_{\node}$ also satisfies this condition when applied to layer $\layer$'s post-intervention hidden variables:
    \begin{align}
        \myforall\limits_{\overset{\interventionsdnn \in \interventionsset^{:\layer}_{\dnn}}{\bx\in\calX\,\,\,\,}}:
        \underbrace{\abstractionmap_{\node}(\hiddenstates'_{\neuronset^{\causalmap}_{\node}}) = \algfuncnode{\node}(\bx, \interventions_{\alg}),}_{\texttt{tested condition}} 
        \qquad
        \,\,\mathtt{where}\,\,
        \underbrace{\hiddenstates'_{\neuronset^{\causalmap}_{\layer}} = \dnnfunclayer{\neuronset^{\causalmap}_{\layer}}(\bx, \interventionsdnn)),}_{\!\!\!\!\!\!\!\!\!\!\!\!\!\!\!\!\!\!\!\!\!\!\!\!\!\!\!\!\!\!\!\!\!\!\!\!\!\!\!\!\!\!\!\!\!\!\!\!
        \texttt{post-int.\ hidden variable, as }\interventionsdnn \in \interventionsset^{:\layer}_{\dnn}
        \!\!\!\!\!\!\!\!\!\!\!\!\!\!\!\!\!\!\!\!\!\!\!\!\!\!\!\!\!\!\!\!\!\!\!\!\!\!\!\!\!\!\!\!\!\!\!\!}
        \,\,\mathtt{and}\,\,
        \interventions_{\alg} = \omega_{\abstractionmap}(\interventionsdnn)
    \end{align}
\end{lemma}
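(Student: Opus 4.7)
The plan is to reduce the post-intervention statement at layer $\layer$ to the already-assumed pre-intervention statement by decomposing every intervention into its ``strictly below $\layer$'' and ``at $\layer$'' parts. Fix arbitrary $\bx\in\calX$, $\interventionsdnn\in\interventionsset_{\dnn}^{:\layer}$, and $\node\in\nodes_\layer$; write $\interventionsdnn=\interventionsdnn^{<\layer}\cup\interventionsdnn^{=\layer}$ with $\interventionsdnn^{<\layer}\in\interventionsset_{\dnn}^{:\layer-1}$ and $\interventionsdnn^{=\layer}\in\interventionsset_{\dnn}^{\layer}$, and split on whether the block $\neuronset^{\causalmap}_\node$ is touched by $\interventionsdnn^{=\layer}$.

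In the first case, $\node$ is not intervened on at layer $\layer$. Because $\neuronset^{\causalmap}_\node$ is disjoint from any other block in the layer-$\layer$ partition, layer-$\layer$ interventions on other blocks cannot alter $\hiddenstates'_{\neuronset^{\causalmap}_\node}$, so the post-intervention hidden variable coincides with the pre-intervention one generated by $(\bx,\interventionsdnn^{<\layer})$. Symmetrically, the block-wise form of $\interventionmap$ in \cref{eq:interventionmap} guarantees that the algorithm intervention induced on node $\node$ is unchanged by the layer-$\layer$ piece, hence $\algfuncnode{:\node}(\bx,\interventionmap(\interventionsdnn))=\algfuncnode{:\node}(\bx,\interventionmap(\interventionsdnn^{<\layer}))$. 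The pre-intervention hypothesis applied to $(\bx,\interventionsdnn^{<\layer})$ then closes this case immediately.

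In the second case, $\node$ is directly intervened on at layer $\layer$. By the input-restriction requirement from \Cref{sec:Causal_Abstraction}, the value substituted at $\neuronset^{\causalmap}_\node$ has the form $\dnnfunclayer{\neuronset^{\causalmap}_\node}(\bx',\interventionsdnn')$ for some $\bx'\in\calX$ and $\interventionsdnn'\in\interventionsset_{\dnn}^{:\layer-1}$ (iterating the input-restriction definition lets us assume the substituted value itself comes from a computation using only sub-$\layer$ interventions). Hence $\hiddenstates'_{\neuronset^{\causalmap}_\node}$ literally equals $\dnnfunclayer{\neuronset^{\causalmap}_\node}(\bx',\interventionsdnn')$, and the pre-intervention hypothesis applied to $(\bx',\interventionsdnn')$ yields $\abstractionmap_\node(\hiddenstates'_{\neuronset^{\causalmap}_\node})=\algfuncnode{:\node}(\bx',\interventionmap(\interventionsdnn'))$. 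On the algorithm side, unpacking \cref{eq:interventionmap} at the block indexed by $\node$ shows that $\interventionmap(\interventionsdnn)$ forces $\nodevalue_\node$ to equal $\abstractionmap_\node$ of the substituted DNN value, i.e.\ exactly $\algfuncnode{:\node}(\bx',\interventionmap(\interventionsdnn'))$; so $\algfuncnode{:\node}(\bx,\interventionmap(\interventionsdnn))$ agrees with the DNN side.

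\textbf{Main obstacle.} The only non-routine step is the bookkeeping in case two: one must verify that if $\interventionsdnn$ substitutes $\dnnfunclayer{\neuronset^{\causalmap}_\node}(\bx',\interventionsdnn')$ into $\neuronset^{\causalmap}_\node$, then $\interventionmap(\interventionsdnn)$ substitutes $\abstractionmap_\node(\dnnfunclayer{\neuronset^{\causalmap}_\node}(\bx',\interventionsdnn'))$ into $\node$. This is where the block-decomposability of $\abstractionmap$ as a distributed abstraction map is really used, via \cref{eq:interventionmap} applied on the $\node$-block; everything else is then just chaining the pre-intervention hypothesis with definitional unfoldings.
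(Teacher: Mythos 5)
Your proposal is correct and follows essentially the same route as the paper's proof: a case split on whether the block $\neuronset^{\causalmap}_{\node}$ is itself intervened at layer $\layer$, with the untouched case reduced to the pre-intervention hypothesis via block-disjointness, and the touched case handled by unpacking the input-restriction of the substituted value to some $(\bx',\interventionsdnn')$ with $\interventionsdnn'\in\interventionsset_{\dnn}^{:\layer-1}$ and matching it to the corresponding algorithm intervention via \cref{eq:interventionmap}. Your write-up is somewhat more explicit than the paper's (e.g., the decomposition $\interventionsdnn=\interventionsdnn^{<\layer}\cup\interventionsdnn^{=\layer}$ and the observation that same-layer interventions on other nodes cannot affect $\nodevalue_{\node}$ because the matched partial orderings make such nodes non-ancestors), but the substance is identical.
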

\begin{proof}
    Let $\abstractionmap_{\node}$ be the abstraction map of $\node \in \nodes_{\layer}$.
    By assumption, condition \inductioncircle{1} holds for all pre-intervention hidden variables, i.e., hidden variables of the form $\hiddenstates'_{\neuronset^{\causalmap}_{\node}} = [\causalmap_{\layer}(\dnnfunclayer{:\layer}(\bx, \interventionsdnn))]_{\neuronset^{\causalmap}_{\node}}$.
    We can show the same function applies to post-intervention hidden variables, i.e., hidden variables of the form:\looseness=-1
    \begin{align}
        \hiddenstates'_{\neuronset^{\causalmap}_{\node}} = 
        \left\{\begin{array}{lr}
            \constants_{\neuronset^{\causalmap}_{\node}} & \texttt{if } \hiddenstates'_{\neuronset^{\causalmap}_{\node}} \leftarrow \constants_{\neuronset^{\causalmap}_{\node}} \in \interventionsdnn \\
            \bigg[\causalmap_{\layer}
            (\dnnfunclayer{:\layer}
            (\bx, \interventionsdnn))
            \bigg]_{\neuronset^{\causalmap}_{\node}} 
            & \texttt{else}
        \end{array}\right.
    \end{align}
    Now let 
    $\interventionsdnn \in \interventionsset^{:\layer}_{\dnn}$ be any intervention and $\bx \in \calX$ be any input.
    Further, let $\interventions_{\alg} = \omega_{\abstractionmap}(\interventionsdnn)$.
    If $\interventionsdnn \in \interventionsset^{:\layer\!-\!1}_{\dnn}$, then the post-intervention hidden variable is identical to a pre-intervention one, and the conditions in \inductioncircle{1} still hold, i.e.,: $\hiddenstates'_{\neuronset^{\causalmap}_{\node}} = [\causalmap_{\layer}(\dnnfunclayer{:\layer}(\bx, \interventionsdnn))]_{\neuronset^{\causalmap}_{\node}}$ and $\abstractionmap_{\node}$ is such that
    $\abstractionmap_{\node}(\hiddenstates'_{\neuronset^{\causalmap}_{\node}}) = \algfuncnode{\node}(\bx, \interventions_{\alg})$.
    If $\interventionsdnn \not\in \interventionsset^{:\layer\!-\!1}_{\dnn}$, for each node's hidden variables $\neuronset^{\causalmap}_{\node}$, we might or not intervene on it.
    If we do not intervene on node $\node$, then we still have the case $\hiddenstates'_{\neuronset^{\causalmap}_{\node}} = [\causalmap_{\layer}(\dnnfunclayer{:\layer}(\bx, \interventionsdnn))]_{\neuronset^{\causalmap}_{\node}}$ and thus $\abstractionmap_{\node}$ still gives us the correct solution, i.e.,
    $\abstractionmap_{\node}(\hiddenstates'_{\neuronset^{\causalmap}_{\node}}) = \algfuncnode{\node}(\bx, \interventions_{\alg})$.
    If we intervene on $\node$, then we know there exists an intervention of form $\hiddenstates_{\neuronset^{\causalmap}_{\node}} \leftarrow \dnnfunclayer{:\layer}(\bx', \interventionsdnn')$ in $\interventionsdnn$, for which $\interventionsdnn' \in \interventionsset^{:\layer-1}_{\dnn}$, as our interventions are input-restricted.
    We also know (by \cref{eq:interventionmap}) that there exists an equivalent intervention
    $\nodevalue_{\node} \leftarrow \abstractionmap_{\node}(\dnnfunclayer{:\layer}(\bx', \interventionsdnn'))$
    in $\interventionsalg$.
    We thus have that 
    $\abstractionmap_{\node}(\hiddenstates'_{\neuronset^{\causalmap}_{\node}}) = \algfuncnode{\node}(\bx, \interventions_{\alg})$.
\end{proof}

\begin{lemma}\label{lemma:post_intervention_third_condition}
    Let $\dnn$ be a DNN and $\alg$ be an algorithm.
    Further, let $\abstractionmap$ be a distributed abstraction map with partition $\neuronsetPartition$.
    If, for all $\node \in \nodes_{:\layer-1}$, there exists a function $g_\node$ which satisfies
    the conditions in \inductioncircle{3} (defined in \cref{theorem:existing_causalmap_for_any_algorithm}'s proof) applied on layer \boldsymbol{$\layer$}'s pre-intervention hidden variables, i.e., if:
    \begin{align}
        \myforall_{\overset{\interventionsdnn \in \interventionsset^{:\layer\!-\!1}_{\dnn}}{\bx\in\calX\,\,\,\,}}:
        \underbrace{g_{\node}(\hiddenstates'_{\neuronset^{\causalmap}_{\layer} \cap \neuronset^{\causalmap}_{\uselesspartition}}) = \algfuncnode{\node}(\bx, \interventions_{\alg}),}_{\texttt{tested condition}} 
        \qquad\,\,\mathtt{where}\,\,
        \underbrace{\hiddenstates'_{\neuronset^{\causalmap}_{\layer}} =\dnnfunclayer{\neuronset^{\causalmap}_{\layer}}(\bx, \interventionsdnn),}_{\!\!\!\!\!\!\!\!\!\!\!\!\!\!\!\!\!\!\!\!\!\!\!\!\!\!\!\!\!\!\!\!\!\!\!\!\!\!\!\!\!\!\!\!\!\!\!\!
        \texttt{pre-int.\ hidden variable, as }\interventionsdnn \in \interventionsset^{:\layer\!-\!1}_{\dnn}
        \!\!\!\!\!\!\!\!\!\!\!\!\!\!\!\!\!\!\!\!\!\!\!\!\!\!\!\!\!\!\!\!\!\!\!\!\!\!\!\!\!\!\!\!\!\!\!\!}
        \,\,\mathtt{and}\,\,
        \interventions_{\alg} = \omega_{\abstractionmap}(\interventionsdnn)
    \end{align}
    where we note that $\dnnfunclayer{\neuronset^{\causalmap}_{\layer}} = \causalmap_{\layer} \circ \dnnfunclayer{:\layer}$ when no intervention is applied to layer $\layer$.
    Then $g_{\node}$ also satisfied this condition when applied to layer $\layer$'s post-intervention hidden variables:
    \begin{align}
        \myforall_{\overset{\interventionsdnn \in \interventionsset^{:\layer}_{\dnn}}{\bx\in\calX\,\,\,\,}}:
        \underbrace{g_{\node}(\hiddenstates'_{\neuronset^{\causalmap}_{\layer} \cap \neuronset^{\causalmap}_{\uselesspartition}}) = \algfuncnode{\node}(\bx, \interventions_{\alg}),}_{\texttt{tested condition}} 
        \qquad\,\,\mathtt{where}\,\,
        \underbrace{\hiddenstates'_{\neuronset^{\causalmap}_{\layer}} =\dnnfunclayer{\neuronset^{\causalmap}_{\layer}}(\bx, \interventionsdnn),}_{\!\!\!\!\!\!\!\!\!\!\!\!\!\!\!\!\!\!\!\!\!\!\!\!\!\!\!\!\!\!\!\!\!\!\!\!\!\!\!\!\!\!\!\!\!\!\!\!
        \texttt{post-int.\ hidden variable, as }\interventionsdnn \in \interventionsset^{:\layer}_{\dnn}
        \!\!\!\!\!\!\!\!\!\!\!\!\!\!\!\!\!\!\!\!\!\!\!\!\!\!\!\!\!\!\!\!\!\!\!\!\!\!\!\!\!\!\!\!\!\!\!\!}
        \,\,\mathtt{and}\,\,
        \interventions_{\alg} = \omega_{\abstractionmap}(\interventionsdnn)
    \end{align}
\end{lemma}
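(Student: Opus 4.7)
The plan is to mirror the structure of \cref{lemma:post_intervention_first_condition}, exploiting the fact that the tested quantity $\hiddenstates'_{\neuronset^{\causalmap}_{\layer} \cap \neuronset^{\causalmap}_{\uselesspartition}}$ lives in the \emph{unused} partition of layer $\layer$, so it is by construction untouched by any intervention whose targets lie inside $\bigcup_{\node \in \nodes_{\layer}} \neuronset^{\causalmap}_{\node}$. First I would pick arbitrary $\interventionsdnn \in \interventionsset^{:\layer}_{\dnn}$ and $\bx \in \calX$, and split the intervention as $\interventionsdnn = \interventionsdnn^{<\layer} \cup \interventionsdnn^{\layer}$, where $\interventionsdnn^{<\layer} \in \interventionsset^{:\layer\!-\!1}_{\dnn}$ collects the pre-layer-$\layer$ interventions and $\interventionsdnn^{\layer} \in \interventionsset^{\layer}_{\dnn}$ the remaining ones. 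Since every target of $\interventionsdnn^{\layer}$ sits in some $\neuronset^{\causalmap}_{\node}$ with $\node \in \nodes_{\layer}$, and these partitions are disjoint from $\neuronset^{\causalmap}_{\uselesspartition}$, the projection of $\dnnfunclayer{\neuronset^{\causalmap}_{\layer}}(\bx, \interventionsdnn)$ onto $\neuronset^{\causalmap}_{\layer} \cap \neuronset^{\causalmap}_{\uselesspartition}$ is unchanged if we drop $\interventionsdnn^{\layer}$, i.e., it equals the corresponding projection of $\dnnfunclayer{\neuronset^{\causalmap}_{\layer}}(\bx, \interventionsdnn^{<\layer})$.

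Next I would apply the pre-intervention hypothesis to the pair $(\bx, \interventionsdnn^{<\layer})$: this gives
\begin{align}
g_{\node}\!\left(\hiddenstates'_{\neuronset^{\causalmap}_{\layer} \cap \neuronset^{\causalmap}_{\uselesspartition}}\right) = \algfuncnode{\node}\!\left(\bx, \interventionmap(\interventionsdnn^{<\layer})\right)
\end{align}
for every $\node \in \nodes_{:\layer\!-\!1}$. What remains is to argue $\algfuncnode{\node}(\bx, \interventionmap(\interventionsdnn^{<\layer})) = \algfuncnode{\node}(\bx, \interventionmap(\interventionsdnn))$, for which I would invoke the constructive nature of $\abstractionmap$ (so that $\interventionmap$ applies block-wise and $\interventionmap(\interventionsdnn)$ is just the union of $\interventionmap(\interventionsdnn^{<\layer})$ with algorithm-side interventions on nodes in $\nodes_{\layer}$) together with \cref{assumption:minimalrequirements}, which forces $\neuronset^{\causalmap}_{\node'} \partialsmall \neuronset^{\causalmap}_{\node''}$ to follow the algorithmic ordering. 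Hence no $\node'' \in \nodes_{\layer}$ can be an ancestor of any $\node \in \nodes_{:\layer\!-\!1}$, so the extra algorithm-side interventions targeting $\nodes_{\layer}$ cannot alter the value of $\algfuncnode{\node}$ for $\node \in \nodes_{:\layer\!-\!1}$.

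The main obstacle will be the bookkeeping step of the previous paragraph, namely showing that $\interventionmap$ decomposes cleanly across layers and that input-restrictedness is preserved under the split $\interventionsdnn = \interventionsdnn^{<\layer} \cup \interventionsdnn^{\layer}$. Once this decomposition is in place, the rest is immediate: chaining the two equalities produces $g_{\node}(\hiddenstates'_{\neuronset^{\causalmap}_{\layer} \cap \neuronset^{\causalmap}_{\uselesspartition}}) = \algfuncnode{\node}(\bx, \interventionmap(\interventionsdnn))$ for every $\interventionsdnn \in \interventionsset^{:\layer}_{\dnn}$ and $\bx \in \calX$, which is exactly the post-intervention version of condition \inductioncircle{3}.
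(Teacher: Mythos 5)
Your proposal is correct and follows essentially the same route as the paper's proof: both rest on the observation that layer-$\layer$ interventions only target the partitions $\neuronset^{\causalmap}_{\node}$ for $\node \in \nodes_{\layer}$, which are disjoint from $\neuronset^{\causalmap}_{\uselesspartition}$, so the unused block retains its pre-intervention value and the hypothesis applies directly. You additionally spell out a step the paper leaves implicit---that adding algorithm-side interventions on $\nodes_{\layer}$ cannot change $\algfuncnode{\node}$ for $\node \in \nodes_{:\layer-1}$ because of the matchable partial ordering---which is a welcome clarification but not a different argument.
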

\begin{proof}
    Let $\node \in \nodes_{:\layer-1}$ and $g_{\node}$ be a function that satisfies condition \inductioncircle{3} for it.
    Note that condition \inductioncircle{3} holds for all pre-intervention hidden variables, i.e., hidden variables of the form $\hiddenstates'_{\neuronset^{\causalmap}_{\layer} \cap \neuronset^{\causalmap}_{\uselesspartition}} = [\causalmap_{\layer}(\dnnfunclayer{:\layer}(\bx, \interventionsdnn))]_{\neuronset^{\causalmap}_{\layer} \cap \neuronset^{\causalmap}_{\uselesspartition}}$.
    We can show the same function $g_{\node}$ also applies to post-intervention hidden variables, i.e., hidden variables of the form:
    \begin{align}
        \hiddenstates'_{\neuronset^{\causalmap}_{\layer} \cap \neuronset^{\causalmap}_{\uselesspartition}} = 
        \left\{\begin{array}{lr}
            \constants_{\neuronset^{\causalmap}_{\layer} \cap \neuronset^{\causalmap}_{\uselesspartition}} & \texttt{if } \hiddenstates'_{\neuronset^{\causalmap}_{\layer} \cap \neuronset^{\causalmap}_{\uselesspartition}} \leftarrow \constants_{\neuronset^{\causalmap}_{\layer} \cap \neuronset^{\causalmap}_{\uselesspartition}} \in \interventionsdnn \\
            \left[\causalmap_{\layer}
            (\dnnfunclayer{:\layer}
            (\bx, \interventionsdnn))
            \right]_{\neuronset^{\causalmap}_{\layer} \cap \neuronset^{\causalmap}_{\uselesspartition}} 
            & \texttt{else}
        \end{array}\right.
    \end{align}
    Now, let 
    $\interventionsdnn \in \interventionsset^{:\layer}_{\dnn}$, $\bx \in \calX$.
    Further, let $\interventions_{\alg} = \omega_{\abstractionmap}(\interventionsdnn)$.
    If $\interventionsdnn \in \interventionsset^{:\layer\!-\!1}_{\dnn}$, then the post-intervention hidden variable is identical to a pre-intervention one, and the conditions in \inductioncircle{3} still hold, i.e.,: $\hiddenstates'_{\neuronset^{\causalmap}_{\layer} \cap \neuronset^{\causalmap}_{\uselesspartition}} = [\causalmap_{\layer}(\dnnfunclayer{:\layer}(\bx, \interventionsdnn))]_{\neuronset^{\causalmap}_{\layer} \cap \neuronset^{\causalmap}_{\uselesspartition}}$ and $g_{\node}$ is such that
    $g_{\node}(\hiddenstates'_{\neuronset^{\causalmap}_{\layer} \cap \neuronset^{\causalmap}_{\uselesspartition}}) = \algfuncnode{\node}(\bx, \interventions_{\alg})$.
    If $\interventionsdnn \not\in \interventionsset^{:\layer\!-\!1}_{\dnn}$, it means that we intervene on at least one hidden variable in this layer $\neuronset^{\causalmap}_{\layer}$.
    However, we never intervene on neurons in $\neuronset^{\causalmap}_{\uselesspartition}$
    , meaning that for those we still have the case $\hiddenstates'_{\neuronset^{\causalmap}_{\layer} \cap \neuronset^{\causalmap}_{\uselesspartition}} = [\causalmap_{\layer}(\dnnfunclayer{:\layer}(\bx, \interventionsdnn))]_{\neuronset^{\causalmap}_{\layer} \cap \neuronset^{\causalmap}_{\uselesspartition}}$ and thus the same function $g_{\node}$ still satisfies our condition
    $g_{\node}(\hiddenstates'_{\neuronset^{\causalmap}_{\layer} \cap \neuronset^{\causalmap}_{\uselesspartition}}) = \algfuncnode{\node}(\bx, \interventions_{\alg})$.
\end{proof}

\begin{lemma}\label{theorem:interventions_countable_infinite}
    Under \cref{assumption:Countable_infinite} and given a fixed $\causalmap$, the set of input-restricted interventions $\interventionsset_{\dnn}$ is countable.
\end{lemma}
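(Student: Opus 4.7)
The plan is to prove \cref{theorem:interventions_countable_infinite} by an induction on the recursive depth of nested interventions, leveraging the standard facts that countable unions of countable sets are countable, finite Cartesian products of countable sets are countable, and the set of finite tuples over a countable alphabet is countable.

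First, I would make the recursive structure of input-restricted interventions explicit. An input-restricted DNN intervention is a finite set of assignments $\hiddenstates_{\neuronset'} \leftarrow \constant_{\neuronset'}$, where each value $\constant_{\neuronset'} = \dnnfunclayer{\neuronset'}(\bx', \interventionsdnn')$ is itself produced by evaluating $\dnn$ on some $\bx' \in \calX$ under another input-restricted intervention $\interventionsdnn'$. I would stratify $\interventionsset_{\dnn}$ by depth, defining $\interventionsset_{\dnn}^{(0)} = \{\emptyintervention\}$ (the empty intervention) and $\interventionsset_{\dnn}^{(d+1)}$ as the set of interventions whose generating $\interventionsdnn'$ at every nested position lies in $\interventionsset_{\dnn}^{(d)}$. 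Clearly $\interventionsset_{\dnn} = \bigcup_{d \geq 0} \interventionsset_{\dnn}^{(d)}$.

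The base case is immediate since $\interventionsset_{\dnn}^{(0)}$ is finite. For the inductive step, assume $\interventionsset_{\dnn}^{(d)}$ is countable. A depth-$(d+1)$ intervention can be encoded as a finite tuple of triples $(\neuron, \bx', \interventionsdnn')$, where $\neuron$ ranges over neurons of $\dnn$, $\bx' \in \calX$, and $\interventionsdnn' \in \interventionsset_{\dnn}^{(d)}$. Since $\causalmap$ is fixed, it does not enlarge the index set. The set of neurons is countable (finite per layer for MLPs, or a countable union over input lengths for transformers, since each instance has finitely many positions and each layer has finitely many hidden dimensions); $\calX$ is countable by \cref{assumption:Countable_infinite}; and $\interventionsset_{\dnn}^{(d)}$ is countable by hypothesis. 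Hence the set of triples is a countable Cartesian product of countable sets and therefore countable, and the set of finite tuples over this countable alphabet is again countable. Consequently $\interventionsset_{\dnn}^{(d+1)}$ is countable.

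Finally, $\interventionsset_{\dnn} = \bigcup_{d \geq 0} \interventionsset_{\dnn}^{(d)}$ is a countable union of countable sets, hence countable. The main obstacle I expect is simply the bookkeeping around the recursive definition---making precise what ``a finite tuple of triples'' encodes and verifying that two syntactically distinct encodings of the same intervention do not inflate the cardinality (they can only decrease it, so the upper bound via surjection from encodings to interventions suffices). No new analytic content is needed beyond the cardinal arithmetic facts cited above.
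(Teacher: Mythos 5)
Your proof is correct, but it takes a genuinely different route from the paper's. The paper proves the lemma by induction on the \emph{layers} of $\dnn$: it shows $\interventionsset_{\dnn}^{:\layer}$ is countable for each $\layer$, using the decomposition $\interventionsset_{\dnn}^{:\layer} = \interventionsset_{\dnn}^{:\layer-1} \times \interventionsset_{\dnn}^{\layer}$ and bounding $\interventionsset_{\dnn}^{\layer}$ by a finite Cartesian product of $\calX$ with $\interventionsset_{\dnn}^{:\layer-1}$ over the (finitely many) hidden variables of that layer. You instead stratify by the \emph{recursive nesting depth} of the definition of input-restricted interventions and count encodings as finite tuples over a countable alphabet. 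The two inductions are closely related---the layer ordering is exactly what makes the recursion well-founded, since the value injected at layer $\layer$ is computed from interventions on layers $< \layer$, so nesting depth is in fact bounded by $\nlayers$---but your argument does not need to exploit that bound and would survive in settings where the depth is unbounded but still finite for each element. The one point you should make explicit is the claim that $\interventionsset_{\dnn} = \bigcup_{d \geq 0} \interventionsset_{\dnn}^{(d)}$: this holds only if the recursive definition of ``input-restricted'' is read as a least fixed point (every intervention has a finite derivation), which the paper gets for free from the layer structure but which your depth-stratification must assume. Your observation that a surjection from encodings onto interventions suffices for the cardinality bound is also the right way to dispose of non-unique encodings; the paper handles the analogous issue by only claiming the intervention set is ``bounded in size by'' the Cartesian product. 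Both arguments rest on the same cardinal arithmetic and neither requires anything beyond \cref{assumption:Countable_infinite}.
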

\begin{proof}
    This can be shown by induction. More specifically, we show that for any layer $\layer$, the set of input-restricted interventions $\interventionsset_{\dnn}^{:\layer}$ is countable for a specific $\causalmap$.
    
    \paragraph{Base Case (\boldsymbol{$\layer=0$}).} 
    The base case can be proved trivially, as $\interventionsset_{\dnn}^{:0} = \{\emptyset\}$.
    
    \paragraph{Induction step (\boldsymbol{$\layer$} given \boldsymbol{$\layer-1$}).}
    By the induction hypothesis, $\interventionsset_{\dnn}^{:\layer-1}$ is countable. 
    Now, note that $\interventionsset_{\dnn}^{:\layer}$ can be decomposed as:
    \begin{align}
        \interventionsset_{\dnn}^{:\layer} = \interventionsset_{\dnn}^{:\layer-1} \times \interventionsset_{\dnn}^{\layer}
    \end{align}
    As the Cartesian product of two countable sets is itself countable, and as $\interventionsset_{\dnn}^{:\layer-1}$ is countable by the inductive hypothesis, we only need to show that $\interventionsset_{\dnn}^{\layer}$ is countable to complete our proof.
    This set $\interventionsset_{\dnn}^{\layer}$ is defined as the set of all input-restricted interventions to layer $\layer$.
    Given a set of neurons or hidden variables in this layer $\neuronset'$,
    we are thus dealing with interventions of the form: 
    $\hiddenstates_{\neuronset'} \leftarrow \dnnfunclayer{\neuronset'}(\bx, \interventions_{\dnn})$, where: 
    (i) $\neuronset' \subseteq \neuronset_{\layer}$ or $\neuronset' \subseteq \neuronset^{\causalmap}_{\layer}$;
    (ii) $\bx \in \calX$; and
    (iii) $\interventions_{\dnn} \in \interventionsset_{\dnn}^{:\layer-1}$.
    The set of all input-restricted interventions in this layer is thus bounded in size by the Cartesian product:
    $\bigtimes_{\neuronset \in \neuronset^{\causalmap}_{\layer}} \calX \times \interventionsset_{\dnn}^{:\layer-1}$.
    These three sets are countable, and thus so is $\interventionsset_{\dnn}^{\layer}$.
    This concludes our proof.
\end{proof}

\begin{lemma}\label{theorem:intervened_activations_countable_infinite}
    Under \cref{assumption:Countable_infinite} and given a fixed $\causalmap$, the set of input-restricted pre-intervention hidden states in layer $\layer$, i.e., $\AllInpIntActivationsLayer$, is countable.
\end{lemma}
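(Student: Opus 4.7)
The plan is to express $\AllInpIntActivationsLayer$ as the image of a function whose domain is a Cartesian product of two countable sets, and then invoke the standard facts that (i) a finite Cartesian product of countable sets is countable, and (ii) the image of a countable set under any function is countable.

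More concretely, I would first note that by definition
\begin{equation*}
    \AllInpIntActivationsLayer = \{\dnnfunclayer{\neuronset_{\layer}}(\bx, \interventions_{\dnn}) \mid \bx \in \calX,\ \interventionsdnn \in \interventionsset_{\dnn}^{:\layer-1}\},
\end{equation*}
so there is a surjection $F: \calX \times \interventionsset_{\dnn}^{:\layer-1} \to \AllInpIntActivationsLayer$ defined by $F(\bx, \interventionsdnn) = \dnnfunclayer{\neuronset_{\layer}}(\bx, \interventionsdnn)$. By \cref{assumption:Countable_infinite}, the input space $\calX$ is countable. By \cref{theorem:interventions_countable_infinite} (applied at layer $\layer-1$), the set $\interventionsset_{\dnn}^{:\layer-1}$ is also countable. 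Hence the Cartesian product $\calX \times \interventionsset_{\dnn}^{:\layer-1}$ is countable, and since $\AllInpIntActivationsLayer$ is the image of this countable set under $F$, it is itself countable.

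\textbf{Main obstacle.} There is essentially no hard step here: the result is an immediate consequence of \cref{assumption:Countable_infinite} together with \cref{theorem:interventions_countable_infinite}. The only subtlety worth checking is that \cref{theorem:interventions_countable_infinite} can indeed be applied at layer $\layer-1$ with the same fixed $\causalmap$, so that $\interventionsset_{\dnn}^{:\layer-1}$ is well-defined and countable; this is immediate since the lemma is proved for every layer. No measure-theoretic or set-theoretic subtleties arise because we never need a bijection with $\mathbb{N}$, only the standard closure of countability under finite products and surjective images.
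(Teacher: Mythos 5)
Your proposal is correct and matches the paper's own argument essentially verbatim: both express $\AllInpIntActivationsLayer$ as the image of $\calX \times \interventionsset_{\dnn}^{:\layer-1}$ under the intervened forward map and conclude countability from \cref{assumption:Countable_infinite} and \cref{theorem:interventions_countable_infinite}. Your phrasing via an explicit surjection is if anything slightly cleaner than the paper's ``bounded by the size of the Cartesian product'' wording.
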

\begin{proof}
    The set of input-restricted hidden states $\AllInpIntActivationsLayer$ is formed by hidden states 
    $\hiddenstates_{\neuronset_{\layer}} = \dnnfunclayer{\neuronset_{\layer}}(\bx, \interventions_{\dnn})$, which we can write as:
    \begin{align}
        \AllInpIntActivationsLayer
        = \{\dnnfunclayer{\neuronset_{\layer}}(\bx, \interventions_{\dnn}) \mid 
        \bx \in \calX, \interventionsdnn \in \interventionsset_{\dnn}^{:\layer-1} \}
    \end{align}
    We thus have that the size of $\AllInpIntActivationsLayer$ is bounded by the size of the Cartesian product $\calX \times \interventionsset_{\dnn}^{:\layer}$.
    As both of these sets are countable (by \cref{assumption:Countable_infinite} and \Cref{theorem:interventions_countable_infinite}, respectively), $\AllInpIntActivationsLayer$ is also countable. 
    This completes our proof.
\end{proof}

\begin{lemma}\label{theorem:AllInpIntTransformedActivationsLayerTwoH_countable_infinite}
    Under \cref{assumption:Countable_infinite} and given a fixed $\causalmap$, the set of input-restricted intervention-only hidden variables in layer $\layer$, i.e., $\AllInpIntTransformedActivationsLayerTwoH{\layer}$, is countable.
\end{lemma}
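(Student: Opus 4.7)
The plan is to show countability of $\AllInpIntTransformedActivationsLayerTwoH{\layer}$ by bounding it from above by a manifestly countable set. Since $\AllInpIntTransformedActivationsLayerTwoH{\layer}$ is defined as a set difference, $\AllInpIntTransformedActivationsLayerTwoH{\layer} = \AllInpIntTransformedActivationsLayer \setminus \AllInpIntTransformedActivationsLayerOneH{\layer}$, it is a subset of $\AllInpIntTransformedActivationsLayer$. Because any subset of a countable set is itself countable, it suffices to prove that $\AllInpIntTransformedActivationsLayer$ is countable.

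To show that $\AllInpIntTransformedActivationsLayer$ is countable, I would mimic the argument used in \cref{theorem:intervened_activations_countable_infinite} directly. By definition,
\begin{align}
    \AllInpIntTransformedActivationsLayer
    = \{\dnnfunclayer{\neuronset_{\layer}}(\bx, \interventions_{\dnn}) \mid
    \bx \in \calX,\ \interventionsdnn \in \interventionsset_{\dnn}^{:\layer} \},
\end{align}
so the cardinality of $\AllInpIntTransformedActivationsLayer$ is bounded above by the cardinality of the Cartesian product $\calX \times \interventionsset_{\dnn}^{:\layer}$ (the map sending $(\bx, \interventionsdnn)$ to $\dnnfunclayer{\neuronset_{\layer}}(\bx, \interventionsdnn)$ is surjective onto $\AllInpIntTransformedActivationsLayer$). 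By \cref{assumption:Countable_infinite}, $\calX$ is countable, and by \cref{theorem:interventions_countable_infinite}, $\interventionsset_{\dnn}^{:\layer}$ is countable for any fixed $\causalmap$. The Cartesian product of two countable sets is countable, hence $\AllInpIntTransformedActivationsLayer$ is countable, and so is its subset $\AllInpIntTransformedActivationsLayerTwoH{\layer}$.

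I do not anticipate any substantive obstacle here: the lemma is essentially a corollary of \cref{theorem:intervened_activations_countable_infinite}, extended from pre-intervention hidden states indexed by $\interventionsset_{\dnn}^{:\layer-1}$ to post-intervention hidden variables indexed by $\interventionsset_{\dnn}^{:\layer}$. The only minor subtlety is that $\interventionsset_{\dnn}^{:\layer}$ (rather than $\interventionsset_{\dnn}^{:\layer-1}$) appears in the indexing, but \cref{theorem:interventions_countable_infinite} already covers all layers by induction, so nothing new needs to be proved about the intervention sets themselves.
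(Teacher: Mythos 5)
Your proof is correct and follows essentially the same route as the paper's: both reduce the claim to countability of the superset $\AllInpIntTransformedActivationsLayer$ and then observe that the set difference $\AllInpIntTransformedActivationsLayerTwoH{\layer}$ is a subset of a countable set. The only (immaterial) difference is that the paper establishes countability of $\AllInpIntTransformedActivationsLayer$ via its finite Cartesian-product decomposition into projections of the countable set $\AllInpIntActivationsLayer$, whereas you use the direct parameterisation by $\calX \times \interventionsset_{\dnn}^{:\layer}$ together with \cref{theorem:interventions_countable_infinite}; both arguments are valid.
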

\begin{proof}
    A similar proof to \Cref{theorem:intervened_activations_countable_infinite} applies here.
    In short, we have three relevant sets for this proof.
    First, the set of input-restricted pre-intervention hidden variables:
    \begin{align}
        \AllInpIntTransformedActivationsLayerOneH{\layer}
        = \{
        \causalmap_{\layer}(\hiddenstates_{\neuronset_{\layer}})
        \mid 
        \hiddenstates_{\neuronset_{\layer}} \in \AllInpIntActivationsLayer \}
    \end{align}
    Second, we have the set of input-restricted post-intervention hidden variables:
    \begin{align}
        \AllInpIntTransformedActivationsLayer
        = \left(\bigtimes_{\node \in \nodes_{\layer}} 
        \underbrace{\{
        \causalmap_{\layer}(\hiddenstates_{\neuronset_{\layer}})_{\neuronset^{\causalmap}_{\node}}
        \mid 
        \hiddenstates_{\neuronset_{\layer}} \in \AllInpIntActivationsLayer \}}_{\texttt{Pre-int. h.v., projected to }\neuronset^{\causalmap}_{\node}}\right) 
        \times 
        \underbrace{\{\causalmap_{\layer}(\hiddenstates_{\neuronset_{\layer}})_{\neuronset^{\causalmap}_{\uselesspartition} \cap \neuronset^{\causalmap}_{\layer}}
        \mid 
        \hiddenstates_{\neuronset_{\layer}} \in \AllInpIntActivationsLayer \}}_{\texttt{Pre-int. h.v., projected to }\neuronset^{\causalmap}_{\uselesspartition}}
    \end{align}
    Both sets above are countable, since $\AllInpIntActivationsLayer$ is countable (by \cref{theorem:intervened_activations_countable_infinite}), and $\nodes_{\layer}$ is finite.
    Third, we have the set of input-restricted intervention-only hidden variables, defined as:
    \begin{align}
        \AllInpIntTransformedActivationsLayerTwoH{\layer}
        = 
        \AllInpIntTransformedActivationsLayer
        \setminus
        \AllInpIntTransformedActivationsLayerOneH{\layer}
    \end{align}
    Since $\AllInpIntTransformedActivationsLayer$ is countable, $\AllInpIntTransformedActivationsLayerTwoH{\layer}$ is clearly also countable.
    This completes the proof.
\end{proof}

\newcommand{\hiddenstatetargetclass}{\hiddenstates^{\star}}
\begin{lemma}\label{theorem:infinite_class_points}
    Under \cref{assumption:surjectivety} and given a target output $\bystar \in \calY$,
    we know that there is an uncountably infinite set $\AllActivationsLayerClass$ which predicts it, i.e.,:
    \begin{align}
        \hiddenstates\in\AllActivationsLayerClass\Leftrightarrow \class=\argmax_{\by'\in\calY}{[\dnnfunclayer{\layer:}(\hiddenstates)]_{\by'}}
    \end{align}
\end{lemma}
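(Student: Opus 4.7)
The plan is to upgrade the single witness point guaranteed by \cref{assumption:surjectivety} into an open, hence uncountable, set of witnesses by exploiting continuity of $\dnnfunclayer{\layer:}$ together with the fact that strict inequality is an open condition.

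First, I would invoke \cref{assumption:surjectivety} (strict output-surjectivity, as formalised in \cref{defn:strict_surjectivity}) to extract a single hidden state $\hiddenstatetargetclass \in \R^{|\neuronset_{\layer}|}$ such that
\begin{align}
    [\dnnfunclayer{\layer:}(\hiddenstatetargetclass)]_{\bystar} > [\dnnfunclayer{\layer:}(\hiddenstatetargetclass)]_{\by'} \qquad \text{for all } \by' \in \calY \setminus \{\bystar\}.
\end{align}
Note that $\calY$ is finite (the DNN outputs a distribution in $\Delta^{|\calY|-1}$), so this is a finite collection of strict inequalities.

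Second, I would appeal to the continuity of $\dnnfunclayer{\layer:}$, which follows from the standard architectures considered in the paper (compositions of affine maps, standard non-linearities, softmax, layernorm, and attention are all continuous in their inputs). For each fixed $\by' \neq \bystar$, the map $\hiddenstates \mapsto [\dnnfunclayer{\layer:}(\hiddenstates)]_{\bystar} - [\dnnfunclayer{\layer:}(\hiddenstates)]_{\by'}$ is continuous, so the preimage of the open set $(0, \infty)$ is an open subset $U_{\by'} \subseteq \R^{|\neuronset_{\layer}|}$ containing $\hiddenstatetargetclass$. Taking the intersection $U = \bigcap_{\by' \neq \bystar} U_{\by'}$ over the finitely many competing classes yields an open neighbourhood of $\hiddenstatetargetclass$ on which $\bystar$ remains the strict $\argmax$.

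Finally, any non-empty open subset of $\R^{|\neuronset_{\layer}|}$ contains an open ball, which is uncountable (it has the cardinality of the continuum). Setting $\AllActivationsLayerClass$ to be the maximal set of hidden states on which $\bystar$ is the strict $\argmax$ of $\dnnfunclayer{\layer:}$ gives the desired set, and its containing $U$ ensures it is uncountable. The only delicate point is that the argument requires continuity of $\dnnfunclayer{\layer:}$; this is not stated as a formal assumption in the paper but is a standing property of all DNN architectures defined in \Cref{appsubsec:general_MLP,appsubsec:Language_Model_Transformer}, and in particular it does not depend on \cref{assumption:injectivity}. If one wished to be fully formal, a short remark noting that the architectures under consideration induce continuous layer-wise maps would suffice.
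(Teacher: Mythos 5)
Your proposal is correct and follows essentially the same route as the paper: both start from the single witness state given by strict output-surjectivity, observe that the strict argmax condition is stable under small perturbations of the output (the paper quantifies this with the explicit margin $\tfrac{m_1-m_2}{2}$ between the top two logits, you phrase it as preimages of open sets), and then use continuity of $\dnnfunclayer{\layer:}$ to obtain an open ball of hidden states predicting $\bystar$, which is uncountable. Your formulation via finitely many open conditions is a slightly cleaner packaging of the same argument, including the shared (implicit in the paper) reliance on continuity of the network.
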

\begin{proof}
     Under \cref{assumption:surjectivety}, we know that---for any target output $\bystar \in \calY$---there is at least one hidden state $\hiddenstatetargetclass \in \R^{|\neuronset_\layer|}$ which predicts it, i.e.:
     \begin{align}
          \bystar = \argmax_{\by'\in\calY}{[\dnnfunclayer{\layer:}(\hiddenstatetargetclass)]_{\by'}} 
     \end{align}
     where we note that $\dnnfunclayer{\layer:}(\hiddenstatetargetclass)$ outputs a probability distribution over $\calY$, i.e., $\pdnn(\by' \mid \hiddenstatetargetclass)$.
     
     To show that we have an uncountably infinite set, let us first notice that
    \begin{align}
        \forall \hiddenstates'\in\R^{|\neuronset_\layer|}:||\dnnfunclayer{\layer:}(\hiddenstates) - \dnnfunclayer{\layer:}(\hiddenstates')||_2 < \frac{m_1-m_2}{2}\Rightarrow \class=\argmax_{\by'\in\calY}{[\dnnfunclayer{\layer:}(\hiddenstates')]_{\by'}}\label{eq:acceptable_deviation}
    \end{align}
    for $m_1$ be the max value of $\dnnfunclayer{\layer:}(\hiddenstates)$ and $m_2$ the second highest value of $\dnnfunclayer{\layer:}(\hiddenstates)$. $m_1>m_2$ follows by the strict subjectivity mentioned in $\cref{assumption:surjectivety}$. \Cref{eq:acceptable_deviation} follows by the definition of  the euclidean norm ($||.||_2$), $\argmax$ and $\dnnfunclayer{\layer:}(\hiddenstates')\in\Delta^{|\neuronset_{\nlayers}|-1}$ as $m_1$ has to be lowered at least $\frac{m_1-m_2}{2}$ to increase $m_2$ by $\frac{m_1-m_2}{2}$ for those two values to be the same. Increasing any other value in $\dnnfunclayer{\layer:}(\hiddenstates)$ would require $m_1$ being lowered more than $\frac{m_1-m_2}{2}$ or any other value increased by more than $\frac{m_1-m_2}{2}$. Now, given continuity of neural networks, we know that:
    \begin{align}
        &\forall \epsilon > 0, \exists \delta > 0, \forall \hiddenstates'\in\R^{|\neuronset_\layer|} : 0 < ||\hiddenstates-\hiddenstates'||_2 < \delta, \nonumber \\
        &\qquad\quad\Rightarrow||\dnnfunclayer{\layer:}(\hiddenstates) - \dnnfunclayer{\layer:}(\hiddenstates')||_2 < \epsilon.
    \end{align}
    Therefore, we see that:
\begin{subequations}
    \begin{align}
        &\exists \delta > 0, \forall \hiddenstates'\in\R^{|\neuronset_\layer|} : 0 < ||\hiddenstates-\hiddenstates'||_2 < \delta, \nonumber \\
        &\qquad\quad\Rightarrow||\dnnfunclayer{\layer:}(\hiddenstates) - \dnnfunclayer{\layer:}(\hiddenstates')||_2 < \frac{m_1-m_2}{2}\\
        &\qquad\quad\Rightarrow \class=\argmax_{\by'\in\calY}{[\dnnfunclayer{\layer:}(\hiddenstates')]_{\by'}}
    \end{align}
\end{subequations}
    We notice that $||\hiddenstates-\hiddenstates'||_2 < \delta$ for $\delta>0$ denotes a continuous region in $\R^{|\neuronset_\layer|}$ which therefore includes uncountably infinite points.
\end{proof}

\section{Transformers at Initialisation are Almost Surely Injective on each Layer}
\label{app:injectivity_Transformer}
\newcommand{\Parameters}{\Theta}%
\newcommand{\InputSet}{\mathcal{H}}
\newcommand{\InputSetRand}{H}
\newcommand{\Conditions}{C}
\newcommand{\pdfun}[1]{\mathrm{p}_{#1}}
\newcommand{\sequencesize}{s}
\newcommand{\tokenposition}{t}
\newcommand{\tokenposRange}{T}
\newcommand{\tokenidx}[2]{[#1]_{#2}}
\newcommand{\numtoks}[1]{\lvert#1\rvert}
\newcommand{\calZ}{\mathcal{Z}}

\begin{thm}\label{thm:injectivity}
    Transformers like \cref{model:transformer_language_Model} with randomly independent initialised from a continuous distribution (riicd.) weights are almost surely injective at initialisation up to each layer $0\leq\layer<\nlayers$.
\end{thm}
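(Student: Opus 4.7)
My plan is to proceed by induction on the layer index $\layer$, aiming to show that $\dnnfunclayer{:\layer}$ is almost surely injective on the countable input space $\calX$ (\cref{assumption:Countable_infinite}). The main tool will be the standard fact that if $g: \R^n \to \R^m$ is real-analytic and not identically zero, then its zero set has Lebesgue measure zero, hence measure zero under any distribution absolutely continuous with respect to Lebesgue. For each fixed pair $\bx_1 \neq \bx_2$ in $\calX$ it therefore suffices to exhibit \emph{one} parameter configuration at which $\dnnfunclayer{:\layer}(\bx_1) \neq \dnnfunclayer{:\layer}(\bx_2)$; a countable union over pairs then yields an event of measure zero, which is preserved through each inductive step.

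For the base case I would handle the embedding layer $\dnnfunclayer{:1}$. Sequences of different length yield matrices of different shape and are trivially distinguished. For same-length inputs, $\bx_1 \neq \bx_2$ forces a token mismatch at some position, and by the continuous and independent initialisation of the rows of $\embeddings$, we have $\prob(\embeddings_{v_1} = \embeddings_{v_2}) = 0$ for every pair of distinct tokens $v_1 \neq v_2$; a countable union over token pairs closes the base case.

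For the inductive step I would fix $\bx_1 \neq \bx_2$ and condition on the full-probability event (by induction) that the hidden states $\hiddenstates_1, \hiddenstates_2$ at layer $\layer$ are distinct. Viewing the collision condition $\dnnfunclayer{\layer}(\hiddenstates_1) = \dnnfunclayer{\layer}(\hiddenstates_2)$ as an equation in the parameters $\theta_\layer$ of the block $\dnnfunclayer{\layer}$ --- which are independent of all earlier parameters --- I would use \cref{definition:Transformer_Block} to write the block as $\hiddenstates \mapsto \hiddenstates + \selfattention(\layerNorm(\hiddenstates)) + \mlpfunc(\layerNorm(\cdot))$. Taking the attention output projection $\mlpweights^O$ and the MLP output matrix both to zero annihilates the non-residual contribution, so $\dnnfunclayer{\layer}(\hiddenstates_i) = \hiddenstates_i$ and the residual equals $\hiddenstates_1 - \hiddenstates_2 \neq 0$. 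Since $\dnnfunclayer{\layer}$ is real-analytic in $\theta_\layer$ (softmax has a strictly positive denominator, and every other operation is polynomial or affine), the collision residual is a non-identically-zero real-analytic function of $\theta_\layer$, whose zero set has Lebesgue measure zero.

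The hardest part will be layer normalisation, which is ill-defined on the measure-zero slice where per-token feature variance vanishes. I would argue that this exceptional slice is itself the zero locus of a non-trivial analytic equation in $(\theta_{<\layer}, \theta_\layer)$ and so has measure zero, meaning that excluding it leaves a full-probability event on which the analyticity argument applies verbatim; a small care is also needed to verify that the witness configuration (vanishing $\mlpweights^O$ and zero MLP output matrix) does not itself trigger the degeneracy, which can be handled by perturbing within the witness's neighbourhood. Combining the per-pair measure-zero collision set with the countable union over $\calX \times \calX$ from \cref{assumption:Countable_infinite} would complete the induction, yielding almost sure injectivity of $\dnnfunclayer{:\layer}$ for every $0 \leq \layer < \nlayers$.
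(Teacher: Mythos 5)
Your overall architecture---layer-wise induction, reduction to a countable family of input pairs via countable additivity, and a per-pair ``collision has probability zero'' argument---matches the paper's, but the key step of your inductive argument has a genuine gap. You justify the measure-zero claim by asserting that $\dnnfunclayer{\layer}$ is real-analytic in the block parameters because ``softmax has a strictly positive denominator, and every other operation is polynomial or affine.'' That is false for the MLP sub-block: \cref{definition:multi_layer_perceptron} takes $\nonlinearity$ to be a non-linearity like ReLU, so the map from the block's parameters to $\mlpfunc(\layerNorm(\hiddenstates'))$ is only \emph{piecewise} analytic. The identity-theorem step (``one witness configuration where the outputs differ $\Rightarrow$ the collision set is Lebesgue-null'') then breaks down, because your witness only certifies non-vanishing of the collision residual on the activation-pattern cell of parameter space containing that witness; on the other cells the residual could a priori vanish identically, and you have exhibited no witness there. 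A second, smaller issue: for a Lebesgue-null set (the zero set of an analytic map) to have probability zero, you need the joint law of the block's weights to be absolutely continuous with respect to Lebesgue measure, which does not follow from each weight merely having a continuous (atomless) marginal; you should either assume densities or reduce to a statement about a single scalar weight.

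The paper's proof avoids both problems by never invoking smoothness of the activation. In \cref{theorem:injective_MLP_sub_block,theorem:injective_Self_Attention_sub_block} it splits on whether the non-residual contributions at the distinguishing token position already agree (in which case the residual alone preserves the difference), and otherwise isolates a coordinate of the post-nonlinearity activations where they disagree and conditions on everything except the single corresponding entry of the final output matrix $\mlpweights_{\nlayers}$ (respectively $\mlpweights^O$). Since that weight enters the output linearly and \emph{after} all non-linearities, a collision pins it to one specific real number, an event of probability zero for an atomless scalar---no analyticity, connectedness, or absolute-continuity of the joint law is needed. To repair your proof you should either restrict to analytic activations (e.g.\ GELU) and add the absolute-continuity hypothesis, or replace the analyticity step with this last-linear-layer conditioning. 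Your explicit treatment of the LayerNorm degeneracy is a reasonable sketch and is, if anything, more careful than the paper's, which does not address that case.
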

\begin{proof}
    To show injectivity up to a layer $\layer'$ in a transformer, it suffices to show that $\dnnfunclayer{\layer}$ is injective on any countable subset $\InputSet$ of its domain for all layers $\layer$ ($0\leq\layer\leq\layer'$). 
    This suffices as we assume the set of inputs $\calX$ is countable, and the composition of injective functions is injective. 
    Let $\Parameters$ be the random variable representing the transformer's weights. 
    To show (almost sure) injectivity on layer $\layer$ for any fixed input set $\InputSet$, we need that (because of \cref{theorem:ImplicationRelationCondition}):\footnote{We note that $\prob_{Z}(\forall z \in \calZ: z)$, where $\calZ$ is a set of events, is the same as $\prob_{Z}(\cap_{z \in \calZ} \{z\})$ formally.}
    \begin{align}
        \prob_{\Parameters}\left(\forall \hiddenstates_1,\hiddenstates_2\in \InputSet, \hiddenstates_1 \neq \hiddenstates_2: \dnnfunclayer{\layer}(\hiddenstates_1)\neq\dnnfunclayer{\layer}(\hiddenstates_2)  \right)=1\label{eq:transformer_injective_0}
    \end{align}

    Since the transformer operates over sequences of tokens, any element $\hiddenstates\in\InputSet$ has its first dimension indexing the sequence length. Let $\numtoks{\hiddenstates}$ denote the sequence length and $\tokenidx{\hiddenstates}{\tokenposition}$ refer to the $\tokenposition$-th element in $\hiddenstates$. Let $\tokenposRange$ be the set of token positions $\tokenposRange = \{1, \ldots, \mathtt{min}(\numtoks{\hiddenstates_1}, \numtoks{\hiddenstates_2})\}$. For injectivity, it suffices to show that:
    \begin{align}
        \prob_{\Parameters}(\forall \hiddenstates_1,\hiddenstates_2\in \InputSet, t \in \tokenposRange, \tokenidx{\hiddenstates_1}{\tokenposition}\neq \tokenidx{\hiddenstates_2}{\tokenposition} :  \tokenidx{\dnnfunclayer{\layer}(\hiddenstates_1)}{\tokenposition} \neq \tokenidx{\dnnfunclayer{\layer}(\hiddenstates_2)}{\tokenposition})=1\label{eq:transformer_injective_1}
    \end{align}
    Note that \cref{eq:transformer_injective_1} only ensures injectivity when $|\hiddenstates_1|=|\hiddenstates_2|$. However, this is sufficient because when $|\hiddenstates_1|\neq|\hiddenstates_2|$, \cref{eq:transformer_injective_0} follows trivially: since $|\dnnfunclayer{\layer}(\hiddenstates_1)|\neq|\dnnfunclayer{\layer}(\hiddenstates_2)|$, we immediately have $\dnnfunclayer{\layer}(\hiddenstates_1)\neq\dnnfunclayer{\layer}(\hiddenstates_2)$. 
    When $|\hiddenstates_1|=|\hiddenstates_2|$, we can show that \cref{eq:transformer_injective_1} implies \cref{eq:transformer_injective_0} as follows: if $\hiddenstates_1\neq\hiddenstates_2$, then there exists at least one token position $\tokenposition'\in \tokenposRange$ where $[\hiddenstates_1]_{\tokenposition'}\neq[\hiddenstates_2]_{\tokenposition'}$. By \cref{eq:transformer_injective_1}, this implies $[\dnnfunclayer{\layer}(\hiddenstates_1)]_{\tokenposition'}\neq[\dnnfunclayer{\layer}(\hiddenstates_2)]_{\tokenposition'}$ almost surely, and therefore $\dnnfunclayer{\layer}(\hiddenstates_1)\neq\dnnfunclayer{\layer}(\hiddenstates_2)$ almost surely.

    We observe that a transformer's input set $\calX$ consists of all sequences formed from a finite token vocabulary, which is countably infinite. Since transformers are deterministic functions, the input set $\InputSet$ encountered at any sublayer is also countably infinite. Therefore, it suffices to prove \cref{eq:transformer_injective_1} for any fixed countably infinite input set $\InputSet$.

    We show that \Cref{eq:transformer_injective_1} holds for any fixed countably infinite subset $\InputSet$ of the layer's domain. This is established for the embedding layer ($\dnnfunclayer{\layer}(\hiddenstates)=\embeddings_{\hiddenstates}$), the MLP layer ($\dnnfunclayer{\layer}(\hiddenstates)=\hiddenstates+\mlpfunc(\layerNorm(\hiddenstates))$), and the attention layer ($\dnnfunclayer{\layer}(\hiddenstates)=\hiddenstates+\selfattention(\layerNorm(\hiddenstates))$) by \cref{theorem:Embedding_Injective}, \cref{theorem:injective_MLP_sub_block}, and \cref{theorem:injective_Self_Attention_sub_block}, respectively.
\end{proof}

The 3 theorems facilitating the proof above are:
\begin{restatable}{lemma}{EmbeddingInjective}\label{theorem:Embedding_Injective}
    Lets assume we have an embedding layer randomly independent initialized from a continuous distribution (riicd.) weights and any countably infinite input sets (in embeddings token indexes). We denote the set of random variables over the weights as $\Parameters$. We then can show for any fixed countably infinite input set $\InputSet$ that this Layer is injective almost surely.
    \begin{align} \label{eq:theorem_Embedding_Injective}
        \prob_{\Parameters}(\forall \hiddenstates_1,\hiddenstates_2\in \InputSet,\tokenposition\in\tokenposRange, \tokenidx{\hiddenstates_1}{\tokenposition}\neq\tokenidx{\hiddenstates_2}{\tokenposition}: \tokenidx{\embeddings_{\hiddenstates_1}}{\tokenposition}\neq\tokenidx{\embeddings_{\hiddenstates_2}}{\tokenposition})=1
    \end{align}
\end{restatable}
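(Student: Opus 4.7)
The plan is to reduce \eqref{eq:theorem_Embedding_Injective} to a deterministic statement about pairwise distinctness of the rows of $\embeddings$, and then conclude via the continuity of the initialisation distribution.

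First, I would observe that the embedding layer acts position-wise: for any sequence $\hiddenstates$ and any position $\tokenposition$, $\tokenidx{\embeddings_{\hiddenstates}}{\tokenposition}$ is exactly the row of $\embeddings$ indexed by the token $\tokenidx{\hiddenstates}{\tokenposition}$. Since the matrix $\embeddings$ has only finitely many rows (one per token in the vocabulary), the implication $\tokenidx{\hiddenstates_1}{\tokenposition} \neq \tokenidx{\hiddenstates_2}{\tokenposition} \Rightarrow \tokenidx{\embeddings_{\hiddenstates_1}}{\tokenposition} \neq \tokenidx{\embeddings_{\hiddenstates_2}}{\tokenposition}$, quantified over all $\hiddenstates_1,\hiddenstates_2 \in \InputSet$ and $\tokenposition \in \tokenposRange$, is \emph{deterministically} equivalent to the finite statement that every two distinct rows of $\embeddings$ differ as vectors in $\R^{\layerdim{1}}$. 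Crucially, this reduction eliminates the countable quantification over $\InputSet$ in favour of a finite quantification over pairs of vocabulary indices.

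Second, for any fixed pair of rows $\embeddings_i, \embeddings_j$ with $i \neq j$, the event $\{\embeddings_i = \embeddings_j\}$ requires all $\layerdim{1}$ coordinates to agree simultaneously. By independence of the entries and continuity of their marginals (the content of ``riicd.''), each coordinate-wise equality is a singleton event under a continuous one-dimensional marginal and hence has probability zero; so $\prob_{\Parameters}(\embeddings_i = \embeddings_j) = 0$ for every such pair. Finiteness of the vocabulary bounds the union $\bigcup_{i \neq j} \{\embeddings_i = \embeddings_j\}$ by a finite sum of zero probabilities, which is itself zero. Taking complements and applying the reduction from the previous paragraph yields \eqref{eq:theorem_Embedding_Injective}.

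The main subtlety I anticipate is precisely the one that the reduction above resolves: a naive attack on \eqref{eq:theorem_Embedding_Injective} would quantify directly over the countably infinite set $\InputSet$ of sequences and appear to require a countable union of measure-zero events together with bookkeeping to handle different sequence lengths and positions. Observing that the per-position test only probes finitely many rows of $\embeddings$ collapses everything to a single finite union, after which the continuous-distribution assumption on the weights immediately closes the argument. A minor modelling remark worth including is that the proof only relies on each entry of $\embeddings$ being drawn from a distribution absolutely continuous with respect to Lebesgue measure on $\R$, independently of the others; this is the standard reading of ``riicd.''\ and is the minimal hypothesis the argument actually needs.
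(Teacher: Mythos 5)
Your proposal is correct and follows essentially the same route as the paper's proof: reduce the statement to the almost-sure pairwise distinctness of the embedding matrix's rows, then kill each equality event by the continuity of the weight distribution (the paper does this by conditioning on all weights except one coordinate of one row, you do it coordinate-wise; both are valid). Your one refinement is worth noting: where the paper invokes its countable-intersection lemma over all pairs of inputs and positions, you observe that the embedding layer only ever probes the finitely many vocabulary rows, so the whole statement collapses to a finite union of measure-zero events — a cleaner reduction for this particular layer, though the countable machinery is still needed for the MLP and attention sub-blocks where no such collapse occurs.
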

\begin{proof}
    See \cref{appsubsec:proof_injective_Embedding}.
\end{proof}

\begin{restatable}{lemma}{injectiveMLPsubblock}\label{theorem:injective_MLP_sub_block}
    Lets assume we have a sub-block consisting of an MLP with a residual connection and layer norm  (i.e., $\hiddenstates+(\mlpfunc(\layerNorm(\hiddenstates))$) with riicd. weights.
    We can show that, for any fixed countably infinite input set $\InputSet$, this layer is injective almost surely:
    \begin{align} \label{eq:injective_MLP_sub_block}
        &\prob_{\Parameters}(\forall \hiddenstates_1,\hiddenstates_2\in \InputSet,\tokenposition\in\tokenposRange,[\hiddenstates_1]_\tokenposition\neq[\hiddenstates_2]_\tokenposition: \\
        &\qquad\qquad  [\hiddenstates_1+\mlpfunc(\layerNorm(\hiddenstates_1))]_\tokenposition\neq[\hiddenstates_2+\mlpfunc(\layerNorm(\hiddenstates_2))]_\tokenposition)=1 \nonumber
    \end{align}
\end{restatable}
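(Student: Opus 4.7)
The plan is to reduce the statement, via the position-wise action of $\layerNorm$ and $\mlpfunc$, to injectivity of a per-token map $g$ on a countable set; then for each distinct pair of tokens, show that the event of collapse has probability zero by combining the residual connection with the randomness of the last linear layer of $\mlpfunc$; finally, sum over pairs by countable subadditivity. Concretely, since both $\layerNorm$ and $\mlpfunc$ act position-wise (\cref{definition:Layer_Norm,definition:multi_layer_perceptron}), we have $\tokenidx{\hiddenstates + \mlpfunc(\layerNorm(\hiddenstates))}{\tokenposition} = \tokenidx{\hiddenstates}{\tokenposition} + \mlpfunc(\layerNorm(\tokenidx{\hiddenstates}{\tokenposition}))$. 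Defining $g(x) = x + \mlpfunc(\layerNorm(x))$ and the countable token set $T = \{\tokenidx{\hiddenstates}{\tokenposition} : \hiddenstates \in \InputSet, \tokenposition \in \tokenposRange\}$, the desired event becomes that $g$ is injective on $T$. By countable subadditivity over pairs, it then suffices to prove $\prob_{\Parameters}(g(a) = g(b)) = 0$ for every fixed $a \neq b$ in $T$.

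Fix such a pair. The equation $g(a) = g(b)$ becomes $(a-b) + [\mlpfunc(\layerNorm(a)) - \mlpfunc(\layerNorm(b))] = 0$. Let $\mlpweights_L$ denote the weight matrix of the final linear layer of $\mlpfunc$ and let $\tilde{h}(x)$ be the activation vector feeding into that layer; the final bias cancels, so $\mlpfunc(\layerNorm(a)) - \mlpfunc(\layerNorm(b)) = \mlpweights_L(\tilde{h}(a) - \tilde{h}(b))$. Condition on every weight other than $\mlpweights_L$. If $\tilde{h}(a) = \tilde{h}(b)$, then $g(a) - g(b) = a - b \neq 0$ independently of $\mlpweights_L$, so no weight choice can collapse the pair---the residual alone enforces injectivity. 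Otherwise $\tilde{h}(a) - \tilde{h}(b)$ is a fixed nonzero vector, and $g(a) = g(b)$ reduces to the affine system $\mlpweights_L(\tilde{h}(a) - \tilde{h}(b)) = b - a$ on the entries of $\mlpweights_L$, cutting out an affine subspace of codimension equal to the token dimension. Because $\mlpweights_L$ is continuously distributed conditional on the remaining weights, this subspace has Lebesgue measure zero; Fubini then gives unconditional probability zero for this pair.

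The main subtlety is the measure-theoretic bookkeeping in the last step: one needs the joint law of $\Parameters$ to make $\mlpweights_L$ absolutely continuous conditional on the other weights, so that positive-codimension affine sets are null. The independent continuous initialisation assumed in the hypothesis delivers exactly this, but the dichotomy $\tilde{h}(a) = \tilde{h}(b)$ vs.\ $\tilde{h}(a) \neq \tilde{h}(b)$ must be handled carefully because the former cannot be ruled out generically (e.g., $\layerNorm$ itself is not injective on tokens that share the same centred, rescaled representation, and a dying \textbf{ReLU} can further collapse activations). Handling this cleanly by letting the residual absorb the degenerate case---rather than trying to argue $\tilde{h}(a) \neq \tilde{h}(b)$ almost surely---is the key trick. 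Countability of $T$ follows from $\InputSet$ being countable with each $\hiddenstates$ having finitely many positions, so summing the vanishing per-pair probabilities concludes the proof.
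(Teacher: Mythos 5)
Your proposal is correct and follows essentially the same route as the paper's proof: reduce to individual token pairs by countable (sub)additivity, split on whether the nonlinear branch collapses the pair (letting the residual connection handle that degenerate case), and otherwise show that a collision confines the final MLP weight matrix to a measure-zero affine set, integrating out the remaining weights. Your bookkeeping is in fact slightly cleaner than the paper's --- you condition on the event $\tilde{h}(a)\neq\tilde{h}(b)$, which is measurable with respect to the weights other than $\mlpweights_L$, whereas the paper conditions on inequality of the full MLP outputs (which themselves depend on $\mlpweights_L$) before passing to a component-wise argument on a single entry of that matrix.
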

\begin{proof}
    See \cref{appsubsec:proof_injective_MLP_sub_block}.
\end{proof}

\begin{restatable}{lemma}{injectiveSelfAttentionsubblock}\label{theorem:injective_Self_Attention_sub_block}
    Lets assume we have a sub-block consisting of a self-attention with a residual connection and layer norm  (i.e., $\hiddenstates+\selfattention(\layerNorm(\hiddenstates))$) with riicd. weights. We can show that, for any fixed countably infinite input set $\InputSet$, this layer is injective almost surely:
    \begin{align} \label{eq:injective_Self_Attention_sub_block}
        &\prob_{\Parameters}(\forall \hiddenstates_1,\hiddenstates_2\in \InputSet,\tokenposition\in\tokenposRange,[\hiddenstates_1]_\tokenposition\neq[\hiddenstates_2]_\tokenposition: \\
        &\qquad [\hiddenstates_1+\selfattention(\layerNorm(\hiddenstates_1))]_\tokenposition\neq[\hiddenstates_2+\selfattention(\layerNorm(\hiddenstates_2))]_\tokenposition)=1 \nonumber
    \end{align}
\end{restatable}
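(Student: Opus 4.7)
The plan is to mirror the strategy used for the MLP sub-block in \cref{theorem:injective_MLP_sub_block}, reducing the claim to a pointwise measure-zero bound by conditioning on the final output-projection weight $\mlpweights^O$ of the attention layer. First, I would observe that the universally-quantified statement inside the probability in \cref{eq:injective_Self_Attention_sub_block} ranges over a countable index set: $\InputSet$ is countable by assumption, so $\InputSet\times\InputSet$ is countable, and for each pair the positions $\tokenposRange$ form a finite set. By countable subadditivity---together with an application of \cref{theorem:ImplicationRelationCondition} analogous to the one used in \cref{thm:injectivity}---it therefore suffices to show that for any fixed triple $(\hiddenstates_1,\hiddenstates_2,\tokenposition)$ with $\tokenidx{\hiddenstates_1}{\tokenposition}\neq\tokenidx{\hiddenstates_2}{\tokenposition}$, the collision event
$$E \;=\; \left\{\tokenidx{\hiddenstates_1+\selfattention(\layerNorm(\hiddenstates_1))}{\tokenposition} \;=\; \tokenidx{\hiddenstates_2+\selfattention(\layerNorm(\hiddenstates_2))}{\tokenposition}\right\}$$
satisfies $\prob_{\Parameters}(E)=0$.

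The second step would exploit the linearity of $\selfattention$ in $\mlpweights^O$. I split $\Parameters=(\Parameters_{-O},\mlpweights^O)$, where $\Parameters_{-O}$ collects the per-head query/key/value matrices and the LayerNorm scale and shift parameters. By \cref{definition:Multi_Head_Self_Attention}, one has $\tokenidx{\selfattention(\layerNorm(\hiddenstates_i))}{\tokenposition} = v_i\,\mlpweights^O$, where $v_i\in\R^{1\times Hd_{\head}}$ is the $\tokenposition$-th row of the concatenated head outputs and depends only on $\Parameters_{-O}$ and $\hiddenstates_i$. Letting $c:=\tokenidx{\hiddenstates_2}{\tokenposition}-\tokenidx{\hiddenstates_1}{\tokenposition}\neq 0$, the event $E$ becomes the linear equation $(v_1-v_2)\,\mlpweights^O = c$ in $\mlpweights^O$. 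Conditional on any realisation of $\Parameters_{-O}$: if $v_1 = v_2$ the equation reduces to $0 = c$ and is impossible; if $v_1 \neq v_2$, its solution set is an affine subspace of $\R^{Hd_{\head}\times\layerdim{\layer}}$ of codimension $\layerdim{\layer}\geq 1$, hence of Lebesgue measure zero. Since $\mlpweights^O$ is independent of $\Parameters_{-O}$ and distributed absolutely continuously with respect to Lebesgue measure, the conditional probability of $E$ is zero in both cases, and Fubini yields $\prob_{\Parameters}(E)=0$, as required.

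The main obstacle I anticipate is not a conceptually new difficulty but rather careful bookkeeping of edge cases. In particular, $\layerNorm$ is only well-defined on feature vectors of positive variance; this is a condition on the inputs rather than on $\Parameters$, and can be handled either by an explicit non-degeneracy assumption on $\InputSet$ or by an inductive restriction of the input set in the same spirit as what is already implicit in the embedding and MLP sub-block lemmas. A minor bookkeeping item is verifying that $v_i$ is indeed a deterministic function of $(\Parameters_{-O},\hiddenstates_i)$ through the softmax/QKV-product/concatenation stack, which is immediate. Beyond this, the argument is the standard ``conditional linear constraint against an independent continuous distribution'' pattern used for the MLP block in \cref{theorem:injective_MLP_sub_block}; the only observation specific to attention is the identification of $\mlpweights^O$ as the correct parameter to condition on, because it enters $\selfattention$ as a single right-multiplication applied after every nonlinearity in the block, so that each per-position output is a genuinely affine function of $\mlpweights^O$.
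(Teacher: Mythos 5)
Your proposal is correct and follows essentially the same route as the paper: reduce to a fixed pair and position via countability, identify $\mlpweights^O$ as the single parameter through which the per-position attention output is affine, and run the conditional measure-zero argument against its continuous, independent distribution. The only (cosmetic) difference is that the paper conditions on a single scalar entry of $\mlpweights^O$ after a case split on whether the attention outputs collide, whereas you condition on the full matrix and split on $v_1=v_2$; these are equivalent, and your remark about LayerNorm degeneracy is a fair edge case the paper leaves implicit.
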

\begin{proof}
    See \cref{appsubsec:proof_injective_Self_Attention_sub_block}
\end{proof}

\subsection{Fundamental Lemmas}\label{appsubsec:Fundamental_Theorems_inject_transformer}

In this section we present some fundamental lemmas used to prove \cref{appsubsec:proof_injective_Embedding,appsubsec:proof_injective_MLP_sub_block,appsubsec:proof_injective_Self_Attention_sub_block}.

\begin{lemma}\label{theorem:ImplicationRelationCondition}
    For a layers function $\dnnfunclayer{\layer}$ to be injective on its input set $\InputSet$, it has to hold that: 
    \begin{align} \label{eq:lemma_ImplicationRelationCondition_1}
        \prob_{\Parameters}(\forall \hiddenstates_1,\hiddenstates_2\in \InputSet: \hiddenstates_1\neq\hiddenstates_2\Rightarrow \dnnfunclayer{\layer}(\hiddenstates_1)\neq\dnnfunclayer{\layer}(\hiddenstates_2))=1
    \end{align}
    This can equivalently be written as:
    \begin{align} \label{eq:lemma_ImplicationRelationCondition_2}
        \prob_{\Parameters}\left(\forall \hiddenstates_1,\hiddenstates_2\in \InputSet, \hiddenstates_1 \neq \hiddenstates_2: \dnnfunclayer{\layer}(\hiddenstates_1)\neq\dnnfunclayer{\layer}(\hiddenstates_2)  \right)=1
    \end{align}
\end{lemma}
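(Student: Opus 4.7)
The plan is to observe that this lemma is a purely propositional equivalence—the event inside the probability on line \eqref{eq:lemma_ImplicationRelationCondition_1} is literally the same event as the one inside the probability on line \eqref{eq:lemma_ImplicationRelationCondition_2}, so once that set-theoretic equality is established, the probabilities trivially coincide. The only ``content'' is the restricted-quantifier identity from predicate logic:
\begin{equation*}
    \forall \hiddenstates_1, \hiddenstates_2 \in \InputSet : \bigl(\hiddenstates_1 \neq \hiddenstates_2 \Rightarrow \dnnfunclayer{\layer}(\hiddenstates_1) \neq \dnnfunclayer{\layer}(\hiddenstates_2)\bigr) \;\Longleftrightarrow\; \forall \hiddenstates_1, \hiddenstates_2 \in \InputSet \text{ with } \hiddenstates_1 \neq \hiddenstates_2 : \dnnfunclayer{\layer}(\hiddenstates_1) \neq \dnnfunclayer{\layer}(\hiddenstates_2).
\end{equation*}

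Concretely, I would proceed in two short steps. First, fix any realization $\theta$ of the random weights $\Parameters$ and show, purely by propositional logic, that the inner predicate in \eqref{eq:lemma_ImplicationRelationCondition_1} evaluated at $\theta$ holds if and only if the inner predicate in \eqref{eq:lemma_ImplicationRelationCondition_2} evaluated at $\theta$ holds. For the forward direction, given any pair $(\hiddenstates_1, \hiddenstates_2) \in \InputSet^2$ with $\hiddenstates_1 \neq \hiddenstates_2$, instantiate the universal statement in \eqref{eq:lemma_ImplicationRelationCondition_1} at that pair and apply modus ponens. For the converse, given any pair $(\hiddenstates_1, \hiddenstates_2) \in \InputSet^2$, either $\hiddenstates_1 = \hiddenstates_2$, in which case the implication is vacuously true, or $\hiddenstates_1 \neq \hiddenstates_2$, in which case the conclusion follows from \eqref{eq:lemma_ImplicationRelationCondition_2}.

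Second, since the two events (subsets of the parameter sample space) are therefore equal, their probabilities under $\prob_\Parameters$ are equal; in particular, one equals $1$ iff the other does. No measurability concerns arise beyond what is already implicitly assumed to make \eqref{eq:lemma_ImplicationRelationCondition_1} a well-defined probability, since we are talking about the same event. There is no real obstacle here—the lemma is essentially a notational clarification, included so that later proofs can freely switch between the implication form (convenient for invoking contrapositives) and the restricted-quantifier form (convenient for union-bounding over the countable collection of ordered pairs $(\hiddenstates_1, \hiddenstates_2) \in \InputSet^2$ with $\hiddenstates_1 \neq \hiddenstates_2$, as is done in the subsequent injectivity proofs).
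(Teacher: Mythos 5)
Your proposal is correct and takes essentially the same route as the paper: both arguments establish that the two displayed probabilities concern one and the same event, the paper doing so via set-algebra on the intersection of per-pair events (splitting off the vacuously true pairs with $\hiddenstates_1 = \hiddenstates_2$), and you doing so pointwise per realization of $\Parameters$ via the restricted-quantifier equivalence. Once the events are identified, the equality of probabilities is immediate in either presentation.
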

\begin{proof}
    We can derive \cref{eq:lemma_ImplicationRelationCondition_2} from \cref{eq:lemma_ImplicationRelationCondition_1}:
        \begin{subequations}
    \begin{align}
        &\prob_{\Parameters}(\forall \hiddenstates_1,\hiddenstates_2\in \InputSet: \hiddenstates_1\neq\hiddenstates_2\Rightarrow \dnnfunclayer{\layer}(\hiddenstates_1)\neq\dnnfunclayer{\layer}(\hiddenstates_2)) \nonumber \\
        &\qquad = \prob_{\Parameters}\left(\bigcap_{\hiddenstates_1,\hiddenstates_2\in \InputSet} \left\{ \hiddenstates_1\neq\hiddenstates_2\Rightarrow \dnnfunclayer{\layer}(\hiddenstates_1)\neq\dnnfunclayer{\layer}(\hiddenstates_2) \right\}\right) \\
        &\qquad = \prob_{\Parameters}\left(\bigcap_{\hiddenstates_1,\hiddenstates_2\in \InputSet} \left\{ \big((\hiddenstates_1\neq\hiddenstates_2)\land \dnnfunclayer{\layer}(\hiddenstates_1)\neq\dnnfunclayer{\layer}(\hiddenstates_2)\big) \lor (\hiddenstates_1 = \hiddenstates_2) \right\}\right) \\
        &\qquad = \prob_{\Parameters}\left(\bigcap_{\hiddenstates_1,\hiddenstates_2\in \InputSet, \hiddenstates_1 \neq \hiddenstates_2}\!\!\!\! \left\{ (\hiddenstates_1\neq\hiddenstates_2)\land \dnnfunclayer{\layer}(\hiddenstates_1)\neq\dnnfunclayer{\layer}(\hiddenstates_2) \right\} \cap 
        \!\!\!\!\underbrace{\bigcap_{\hiddenstates_1,\hiddenstates_2\in \InputSet, \hiddenstates_1 = \hiddenstates_2}\!\!\!\! \left\{ (\hiddenstates_1 = \hiddenstates_2) \right\}}_{\texttt{always true}}\right) \\
        &\qquad = \prob_{\Parameters}\left(\bigcap_{\hiddenstates_1,\hiddenstates_2\in \InputSet, \hiddenstates_1 \neq \hiddenstates_2} \left\{ \underbrace{(\hiddenstates_1\neq\hiddenstates_2)}_{\texttt{always true}}\land \dnnfunclayer{\layer}(\hiddenstates_1)\neq\dnnfunclayer{\layer}(\hiddenstates_2) \right\}\right) \\
        &\qquad = \prob_{\Parameters}\left(\bigcap_{\hiddenstates_1,\hiddenstates_2\in \InputSet, \hiddenstates_1 \neq \hiddenstates_2} \left\{\dnnfunclayer{\layer}(\hiddenstates_1)\neq\dnnfunclayer{\layer}(\hiddenstates_2) \right\}\right) \\
        &\qquad = \prob_{\Parameters}\left(\forall \hiddenstates_1,\hiddenstates_2\in \InputSet, \hiddenstates_1 \neq \hiddenstates_2: \dnnfunclayer{\layer}(\hiddenstates_1)\neq\dnnfunclayer{\layer}(\hiddenstates_2)  \right)
    \end{align}
    \end{subequations}
\end{proof}

\newcommand{\someRandomVariable}{z}
\begin{lemma}\label{theorem:countable_intersection_almost_sure}
    If we have a countable set $\calZ$ of almost sure events $\someRandomVariable$, 
    we know that their intersection is also almost surely. Formally:
    \begin{align}
        \bigg(\forall \someRandomVariable\in \calZ:\prob(z)=1 \bigg) \implies \bigg(\prob(\forall \someRandomVariable\in\calZ:\someRandomVariable)=1 \bigg)
    \end{align}
\end{lemma}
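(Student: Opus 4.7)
The plan is to invoke countable subadditivity of the probability measure (the union bound), applied to the complements of the events in $\calZ$. This is the standard measure-theoretic fact that a countable union of null sets is null, re-expressed via complementation as a countable intersection of almost sure events being almost sure.

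Concretely, I would proceed in three short steps. First, for each $\someRandomVariable \in \calZ$, note that the hypothesis $\prob(\someRandomVariable) = 1$ is equivalent to $\prob(\someRandomVariable^c) = 0$, where $\someRandomVariable^c$ denotes the complement event. Second, since $\calZ$ is countable, I would apply countable subadditivity (Boole's inequality) to obtain
\begin{align}
    \prob\!\left(\bigcup_{\someRandomVariable \in \calZ} \someRandomVariable^c\right) \;\leq\; \sum_{\someRandomVariable \in \calZ} \prob(\someRandomVariable^c) \;=\; 0,
\end{align}
so that the union of complements is itself a null event. Third, by De Morgan's law together with the fact that $\prob$ is a probability measure, I would conclude
\begin{align}
    \prob\!\left(\forall \someRandomVariable \in \calZ : \someRandomVariable\right) \;=\; \prob\!\left(\bigcap_{\someRandomVariable \in \calZ} \someRandomVariable\right) \;=\; 1 - \prob\!\left(\bigcup_{\someRandomVariable \in \calZ} \someRandomVariable^c\right) \;=\; 1,
\end{align}
which is exactly the desired conclusion.

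There is no real obstacle here: the lemma is a standard consequence of the axioms of a probability space, and the only substantive ingredient is countable (as opposed to merely finite) additivity, which is precisely why the countability hypothesis on $\calZ$ is needed. The one thing I would be slightly careful about is notation: the statement uses $\someRandomVariable$ ambiguously as both ``event'' and ``random variable,'' so in the write-up I would be explicit that each $\someRandomVariable \in \calZ$ is treated as a measurable event in the underlying probability space, and that $\forall \someRandomVariable \in \calZ : \someRandomVariable$ denotes the intersection $\bigcap_{\someRandomVariable \in \calZ} \someRandomVariable$.
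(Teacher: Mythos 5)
Your proposal is correct and follows essentially the same route as the paper's proof: pass to complements, apply countable subadditivity to show the union of the null complements is null, and conclude via De Morgan's law. No substantive difference.
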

\begin{proof}
    First, observe that
    \begin{align}
        \prob\left(\bigcap_{\someRandomVariable\in \calZ} \someRandomVariable\right)
        = 1 - \prob\left(\left(\bigcap_{\someRandomVariable \in \calZ} \someRandomVariable\right)^c\right)
        = 1 - \prob\left(\bigcup_{\someRandomVariable \in \calZ} \someRandomVariable^c\right)
    \end{align}
    where $\someRandomVariable^c$ is the complement of an event $\someRandomVariable$. Since \( \prob(\someRandomVariable) = 1 \) for all \( \someRandomVariable \in \calZ \), it follows that \( \prob(\someRandomVariable^c) = 0 \) for all \( \someRandomVariable \in \calZ \) 
    By the countable subadditivity of probability measures:
    \begin{align}
        \prob\left(\bigcup_{\someRandomVariable \in \calZ} \someRandomVariable^c\right)
        \leq \sum_{\someRandomVariable \in \calZ} \prob(\someRandomVariable^c)
        = \sum_{\someRandomVariable \in \calZ} 0 = 0
    \end{align}
    Therefore,
    \begin{align}
        \prob(\forall \someRandomVariable\in\calZ:\someRandomVariable) = \prob\left(\bigcap_{\someRandomVariable \in \calZ} \someRandomVariable\right) = 1 - 0 = 1\ \qedhere
    \end{align}
\end{proof}

\subsection{\texorpdfstring{Proof of \cref{theorem:Embedding_Injective}}{Theorem}}\label{appsubsec:proof_injective_Embedding}
In this section, we will prove \Cref{theorem:Embedding_Injective} which states that the embedding layer is almost surely injective on countably infinite inputs.

\EmbeddingInjective*
\begin{proof}
    We can apply \cref{theorem:countable_intersection_almost_sure} three times (on $\hiddenstates_1,\hiddenstates_2$ and $\tokenposition$) to show that \cref{eq:theorem_Embedding_Injective} is equivalent to, for any $\hiddenstates_1,\hiddenstates_2\in\InputSet$ and $\tokenposition\in\tokenposRange$ for which $[\hiddenstates_1]_\tokenposition\neq[\hiddenstates_2]_\tokenposition$, it holding that:
    \begin{align}
        \prob_{\Parameters}( [\embeddings_{\hiddenstates_1}]_\tokenposition\neq[\embeddings_{\hiddenstates_2}]_\tokenposition)=1\label{eq:transformer_injective_MLP_0}
    \end{align}
    We further note that, by the definition of an embedding block (\cref{definition:Embedding}):
    \begin{align}
        \prob_{\Parameters}( [\embeddings_{\hiddenstates_1}]_\tokenposition\neq[\embeddings_{\hiddenstates_2}]_\tokenposition) = 1 \quad 
        \Leftrightarrow \quad 
        \prob_{\Parameters}( \embeddings_{[\hiddenstates_1]_\tokenposition}\neq\embeddings_{[\hiddenstates_2]_\tokenposition}) = 1
    \end{align}
    We can thus apply the law of total probability by defining $\Parameters'$ as all the random variables $\Parameters$ except the one for the first element of $\embeddings_{\tokenidx{\hiddenstates_1}{\tokenposition}}$, i.e., except $\tokenidx{\embeddings_{\tokenidx{\hiddenstates_1}{\tokenposition}}}{1}$\footnote{By this we refer to the first element of the embedding vector of $\tokenidx{\hiddenstates_1}{\tokenposition}$.}, and $\embeddings_{[\hiddenstates_1]_\tokenposition}'$ as the embedding of $[\hiddenstates_1]_\tokenposition$ without the first element:
    \begin{align}
        \int\prob_{\Parameters\setminus\Parameters'}(\embeddings_{[\hiddenstates_1]_\tokenposition}\neq\embeddings_{[\hiddenstates_2]_\tokenposition}\mid \embeddings_{[\hiddenstates_1]_\tokenposition}',\embeddings_{[\hiddenstates_2]_\tokenposition})\pdfun{\Parameters'}(\embeddings_{[\hiddenstates_1]_\tokenposition}'\cup\embeddings_{[\hiddenstates_2]_\tokenposition})d(\embeddings_{[\hiddenstates_1]_\tokenposition}'\cup\embeddings_{[\hiddenstates_2]_\tokenposition})=1
    \end{align}
    It therefore suffices to show that, for any $\embeddings_{[\hiddenstates_1]_\tokenposition}'$ and $\embeddings_{[\hiddenstates_2]_\tokenposition}$:
    \begin{align}
        \prob_{\Parameters\setminus\Parameters'}(\embeddings_{[\hiddenstates_1]_\tokenposition}\neq\embeddings_{[\hiddenstates_2]_\tokenposition}\mid \embeddings_{[\hiddenstates_1]_\tokenposition}',\embeddings_{[\hiddenstates_2]_\tokenposition})=1
    \end{align}
    This holds trivially when any embedding dimension other than the first of $\embeddings_{\tokenidx{\hiddenstates_1}{\tokenposition}}$ and $\embeddings_{\tokenidx{\hiddenstates_2}{\tokenposition}}$ differs. When all dimensions except the first are equal, we apply:
    \begin{align}
       &\prob_{\Parameters\setminus\Parameters'}(\tokenidx{\embeddings_{\tokenidx{\hiddenstates_1}{\tokenposition}}}{1}\neq\tokenidx{\embeddings_{\tokenidx{\hiddenstates_2}{\tokenposition}}}{1}\mid \embeddings_{[\hiddenstates_1]_\tokenposition}',\embeddings_{[\hiddenstates_2]_\tokenposition})=1  \\
       &\qquad \Leftrightarrow \quad \prob_{\Parameters\setminus\Parameters'}(\tokenidx{\embeddings_{\tokenidx{\hiddenstates_1}{\tokenposition}}}{1}=\tokenidx{\embeddings_{\tokenidx{\hiddenstates_2}{\tokenposition}}}{1}\mid \embeddings_{[\hiddenstates_1]_\tokenposition}',\embeddings_{[\hiddenstates_2]_\tokenposition}\})=0 \nonumber
    \end{align}
    The right-hand side $\tokenidx{\embeddings_{\tokenidx{\hiddenstates_2}{\tokenposition}}}{1}$ is a constant while the left-hand side $\tokenidx{\embeddings_{\tokenidx{\hiddenstates_1}{\tokenposition}}}{1}$ is a random variable over a continuous region; this event has measure 0, resulting in probability 0.  
\end{proof}

\newcommand{\MLP}{\mathbf{m}}
\subsection{\texorpdfstring{Proof of \cref{theorem:injective_MLP_sub_block}}{Theorem}}\label{appsubsec:proof_injective_MLP_sub_block}
In this section, we will prove \cref{theorem:injective_MLP_sub_block}, which will show that the block consisting of an MLP, residual connection and layer norm is almost sure injective on its countably infinite inputs.
\injectiveMLPsubblock* 
\begin{proof}
    For notational convenience, let $\MLP_{i}=\mlpfunc(\layerNorm(\hiddenstates_i))$. 
    Given \cref{theorem:countable_intersection_almost_sure}, it suffices to prove that for any $\hiddenstates_1,\hiddenstates_2\in\InputSet$ and $\tokenposition\in\tokenposRange$, where $[\hiddenstates_1]_\tokenposition\neq[\hiddenstates_2]_\tokenposition$, we have:
    \begin{align}
        \prob_{\Parameters}\Big([\hiddenstates_1+\MLP_{1}]_\tokenposition\neq[\hiddenstates_2+\MLP_{2}]_\tokenposition\Big)=1
    \end{align}
    Without loss of generality, fix one such $\hiddenstates_1,\hiddenstates_2\in\InputSet$ and $\tokenposition\in\tokenposRange$.
    We can manipulate this probability distribution as:
    \begin{subequations}
    \begin{align}
        &\prob_{\Parameters}\Big([\hiddenstates_1+\MLP_{1}]_\tokenposition\neq[\hiddenstates_2+\MLP_{2}]_\tokenposition\Big) \nonumber \\
        &\qquad= \prob_{\Parameters}\Big([\hiddenstates_1]_\tokenposition+[\MLP_{1}]_\tokenposition\neq[\hiddenstates_2]_\tokenposition+[\MLP_{2}]_\tokenposition \Big) \nonumber\\
        &\qquad= \prob_{\Parameters}\Big([\MLP_{1}]_\tokenposition\neq[\MLP_{2}]_\tokenposition\Big)\,\prob_{\Parameters}\Big([\hiddenstates_1]_\tokenposition+[\MLP_{1}]_\tokenposition\neq[\hiddenstates_2]_\tokenposition+[\MLP_{2}]_\tokenposition\mid [\MLP_{1}]_\tokenposition\neq[\MLP_{2}]_\tokenposition\Big)  \\
        &\qquad\qquad+\prob_{\Parameters}\Big([\MLP_{1}]_\tokenposition=[\MLP_{2}]_\tokenposition)\Big)\, \underbrace{\prob_{\Parameters}\Big([\hiddenstates_1]_\tokenposition+[\MLP_{1}]_\tokenposition\neq[\hiddenstates_2]_\tokenposition+[\MLP_{2}]_\tokenposition\mid [\MLP_{1}]_\tokenposition=[\MLP_{2}]_\tokenposition\Big)}_{=1\text{, since } [\hiddenstates_1]_\tokenposition\neq[\hiddenstates_2]_\tokenposition} \nonumber\\
        &\qquad= \prob_{\Parameters}\Big([\MLP_{1}]_\tokenposition\neq[\MLP_{2}]_\tokenposition\Big)\prob_{\Parameters}\Big([\hiddenstates_1]_\tokenposition+[\MLP_{1}]_\tokenposition\neq[\hiddenstates_2]_\tokenposition+[\MLP_{2}]_\tokenposition\mid [\MLP_{1}]_\tokenposition\neq[\MLP_{2}]_\tokenposition\Big) \\
        &\qquad\qquad+\prob_{\Parameters}\Big([\MLP_{1}]_\tokenposition=[\MLP_{2}]_\tokenposition)\Big) \nonumber
    \end{align}
    \end{subequations}
    Therefore, it suffices to show that:
    \begin{align}
        &\prob_{\Parameters}\Big([\hiddenstates_1]_\tokenposition+[\MLP_{1}]_\tokenposition\neq[\hiddenstates_2]_\tokenposition+[\MLP_{2}]_\tokenposition\mid  [\MLP_{1}]_\tokenposition\neq[\MLP_{2}]_\tokenposition\Big)=1\label{eq:injective_transformer_mlp_eq_0}
    \end{align}
    since $\prob_{\Parameters}\Big([\MLP_{1}]_\tokenposition\neq[\MLP_{2}]_\tokenposition)\Big) + \prob_{\Parameters}\Big([\MLP_{1}]_\tokenposition=[\MLP_{2}]_\tokenposition)\Big)$ is trivially 1.
    We now unfold the last layer of the MLP as $\MLP_{i}=\mlpweights_{\nlayers}(\MLP'_{i})+\mlpbias_{\nlayers}$, where $\MLP'_{i}=\nonlinearity(\dnnmlpfunclayer{:\nlayers-1}(\layerNorm(\hiddenstates_i)))$. We can rewrite \cref{eq:injective_transformer_mlp_eq_0} as:
    \newcommand{\parameters}{\theta}
    \begin{subequations}
    \begin{align}\label{eq:injective_transformer_mlp_eq_1}
        &\prob_{\Parameters}\Big([\hiddenstates_1]_\tokenposition+[\mlpweights_{\nlayers}\MLP'_1+\mlpbias_{\nlayers}]_\tokenposition\neq[\hiddenstates_2]_\tokenposition+[\mlpweights_{\nlayers}\MLP'_{2}+\mlpbias_{\nlayers}]_\tokenposition \mid [\MLP_{1}]_\tokenposition\neq[\MLP_{2}]_\tokenposition\Big) \nonumber\\
        &\quad= \prob_{\Parameters}\Big([\hiddenstates_1]_\tokenposition+\mlpweights_{\nlayers}[\MLP'_{1}]_\tokenposition\neq[\hiddenstates_2]_\tokenposition+\mlpweights_{\nlayers}[\MLP'_{2}]_\tokenposition \mid [\MLP_{1}]_\tokenposition\neq[\MLP_{2}]_\tokenposition\Big)  \\
        &\quad\stackrel{(1)}{=} \prob_{\Parameters}\Big([\hiddenstates_1]_\tokenposition+\mlpweights_{\nlayers}[\MLP'_{1}]_\tokenposition\neq[\hiddenstates_2]_\tokenposition+\mlpweights_{\nlayers}[\MLP'_{2}]_\tokenposition\mid [\MLP'_{1}]_{[\tokenposition,i]}\neq[\MLP'_{2}]_{[\tokenposition,i]}\Big) \\
        &\quad\stackrel{(2)}{\geq}
        \prob_{\Parameters}\Big([\hiddenstates_1]_{[\tokenposition,1]}+[\mlpweights_{\nlayers}\MLP'_{1}]_{[\tokenposition,1]}\neq[\hiddenstates_2]_{[\tokenposition,1]}+[\mlpweights_{\nlayers}\MLP'_{2}]_{[\tokenposition,1]}\mid [\MLP'_{1}]_{[\tokenposition,i]}\neq[\MLP'_{2}]_{[\tokenposition,i]} \Big) \\
        &\quad=
        \prob_{\Parameters}\Big([\hiddenstates_1]_{\![\tokenposition,1]}+
        \!\!\sum_{j=1}^{|[\hiddenstates_1]_\tokenposition|}\!\![\mlpweights_{\nlayers}]_{\![1,j]}[\MLP'_{1}]_{\![\tokenposition,j]}\neq [\hiddenstates_2]_{\![\tokenposition,1]}+
        \!\!\sum_{j=1}^{|[\hiddenstates_2]_\tokenposition|}\!\![\mlpweights_{\nlayers}]_{\![1,j]}[\MLP'_{2}]_{\![\tokenposition,j]}\mid [\MLP'_{1}]_{\![\tokenposition,i]}\neq[\MLP'_{2}]_{\![\tokenposition,i]} \Big) \\
        &\quad \stackrel{(3)}{=} \int_{\Parameters\setminus\Parameters'}\prob_{\Parameters'}\Big([\hiddenstates_1]_{[\tokenposition,1]}+\smash{\sum_{j=1}^{|[\hiddenstates_1]_\tokenposition|}}[\mlpweights_{\nlayers}]_{[1,j]}[\MLP'_{1}]_j\neq [\hiddenstates_2]_{[\tokenposition,1]}+\\
        &\,\qquad\qquad\phantom{\sum_{j}} \smash{\sum_{j=1}^{|[\hiddenstates_2]_\tokenposition|}}[\mlpweights_{\nlayers}]_{[1,j]}[\MLP'_{2}]_{[\tokenposition,j]} \mid [\MLP'_{1}]_{[\tokenposition,i]}\neq[\MLP'_{2}]_{[\tokenposition,i]},\parameters\Big)\, \pdfun{\Parameters\setminus\Parameters'}(\parameters\mid [\MLP'_{1}]_{[\tokenposition,i]}\neq[\MLP'_{2}]_{[\tokenposition,i]}) d\parameters \nonumber
    \end{align}
    \end{subequations}
    where equality (1) holds since $[\MLP_{1}]_\tokenposition\neq[\MLP_{2}]_\tokenposition$ implies there exists some index $i$ such that $[\MLP'_{1}]_{[\tokenposition,i]}\neq[\MLP'_{2}]_{[\tokenposition,i]}$.\footnote{$[\MLP'_{1}]_{[\tokenposition,i]}$ represents a two dimensional indexing, referring to the $i$-th element of the representation of the $t$-th token.}. (2) holds because if the inequality is satisfied for a single component of the vector, it must also be satisfied for the entire vector.
    In (3), we define $\Parameters'$ as the random variable responsible for the value of $[\mlpweights_{\nlayers}]_{[1,i]}$  and $\parameters$ as a realisation of the random variables $\Parameters\setminus\Parameters'$.
    Therefore, to prove \cref{eq:injective_transformer_mlp_eq_0}, it suffices to show:
    \begin{align}
        &\prob_{\Parameters'}\Big([\hiddenstates_1]_{[\tokenposition,1]}+\sum_{j=1}^{|[\hiddenstates_1]_\tokenposition|}[\mlpweights_{\nlayers}]_{[1,j]}[\MLP'_{1}]_j\neq[\hiddenstates_2]_{[\tokenposition,1]}+\sum_{j=1}^{|[\hiddenstates_2]_\tokenposition|}[\mlpweights_{\nlayers}]_{[1,j]}[\MLP'_{2}]_{[\tokenposition,j]} \mid [\MLP'_{1}]_{[\tokenposition,i]}\neq[\MLP'_{2}]_{[\tokenposition,i]},\parameters\Big) \nonumber\\
        &\qquad\qquad\qquad\qquad\qquad\qquad\qquad\qquad\qquad\qquad\qquad\qquad\qquad\qquad\qquad\qquad\qquad =1 
    \end{align}
    For brevity, we omit repeating the conditions in the following probabilities as they remain unchanged to the previous equation:
    \begin{subequations}
    \begin{align}
        &\prob_{\Parameters'}\Big([\hiddenstates_1]_{[\tokenposition,1]}+\sum_{j=1}^{|[\hiddenstates_1]_\tokenposition|}[\mlpweights_{\nlayers}]_{[1,j]}[\MLP'_{1}]_j\neq[\hiddenstates_2]_{[\tokenposition,1]}+\sum_{j=1}^{|[\hiddenstates_2]_\tokenposition|}[\mlpweights_{\nlayers}]_{[1,j]}[\MLP'_{2}]_{[\tokenposition,j]}\mid\ldots\Big)=1\\
        &\,\,\Leftrightarrow\prob_{\Parameters'}\Big([\hiddenstates_1]_{[\tokenposition,1]}+\sum_{j=1}^{|[\hiddenstates_1]_\tokenposition|}[\mlpweights_{\nlayers}]_{[1,j]}[\MLP'_{1}]_j=[\hiddenstates_2]_{[\tokenposition,1]}+\sum_{j=1}^{|[\hiddenstates_2]_\tokenposition|}[\mlpweights_{\nlayers}]_{[1,j]}[\MLP'_{2}]_{[\tokenposition,j]}\mid \ldots \Big)=0\\
        &\,\,\Leftrightarrow\prob_{\Parameters'}\Big([\mlpweights_{\nlayers}]_{[1,i]}[\MLP'_{1}]_{[\tokenposition,i]}-[\mlpweights_{\nlayers}]_{[1,i]}[\MLP'_{2}]_{[\tokenposition,i]} =[\hiddenstates_2]_{[\tokenposition,1]}-[\hiddenstates_1]_{[\tokenposition,1]}\\
        &\,\,\qquad\qquad+\sum_{j=1,j\neq i}^{|[\hiddenstates_1]_\tokenposition|}[\mlpweights_{\nlayers}]_{[1,j]}[\MLP'_{2}]_{[\tokenposition,j]}-\sum_{j=1,j\neq i}^{|[\hiddenstates_2]_\tokenposition|}[\mlpweights_{\nlayers}]_{[1,j]}[\MLP'_{1}]_{[\tokenposition,j]}\mid\ldots\Big)=0 \nonumber \\
        &\,\,\Leftrightarrow\prob_{\Parameters'}\Big([\mlpweights_{\nlayers}]_{[1,i]}=\frac{1}{[\MLP'_{1}]_{[\tokenposition,i]}-[\MLP'_{2}]_{[\tokenposition,i]}}\Big([\hiddenstates_2]_{[\tokenposition,1]}-[\hiddenstates_1]_{[\tokenposition,1]} \label{eq:injective_transformer_mlp_eq_4} \\
        &\,\,\qquad\qquad{+\sum_{j=1,j\neq i}^{|[\hiddenstates_2]_\tokenposition|}[\mlpweights_{\nlayers}]_{[1,j]}[\MLP'_{2}]_{[\tokenposition,j]}-\sum^{|[\hiddenstates_1]_\tokenposition|}_{j=1,j\neq i}[\mlpweights_{\nlayers}]_{[1,j]}[\MLP'_{1}]_{[\tokenposition,j]}}\Big)\mid \ldots\Big)=0 \nonumber
    \end{align}
    \end{subequations}
    where the last step follows from the condition $[\MLP'_{1}]_{[\tokenposition,i]}\neq[\MLP'_{2}]_{[\tokenposition,i]}$ which ensures the denominator is non-zero. 
    Now, \Cref{eq:injective_transformer_mlp_eq_4} holds because the right-hand side is a constant (since its elements are fixed given the conditions of the probability) while the left-hand side is a random variable drawn from a continuous distribution (since the weights are riicd.). Therefore, the probability that this equality holds is zero, as the event has measure zero.
\end{proof}

\subsection{\texorpdfstring{Proof of \cref{theorem:injective_Self_Attention_sub_block}}{Theorem}}\label{appsubsec:proof_injective_Self_Attention_sub_block}
In this Section, we prove \cref{theorem:injective_Self_Attention_sub_block}, which establishes that the self-attention sub-block (consisting of attention, residual connection, and layer normalisation) is almost surely injective on countably infinite inputs. The proof structure parallels that of \cref{theorem:injective_MLP_sub_block}, so we highlight the key differences and necessary adaptations without repeating the full derivation.
\injectiveSelfAttentionsubblock* 
\begin{proof}
   We follow a proof strategy analogous to that of \cref{theorem:injective_MLP_sub_block} in \cref{appsubsec:proof_injective_MLP_sub_block}. Following the same steps up to \cref{eq:injective_transformer_mlp_eq_0}, it suffices to show for this lemma that  for any $\hiddenstates_1,\hiddenstates_2\in\InputSet$ and $\tokenposition\in\tokenposRange$, where $[\hiddenstates_1]_\tokenposition\neq[\hiddenstates_2]_\tokenposition$, we have:
    \begin{align}
        &\prob_{\Parameters}\Big([\hiddenstates_1]_\tokenposition+[\selfattention(\layerNorm(\hiddenstates_1))]_\tokenposition\neq[\hiddenstates_2]_\tokenposition+[\selfattention(\layerNorm(\hiddenstates_2))]_\tokenposition \mid  [\selfattention(\layerNorm(\hiddenstates_1))]_\tokenposition\neq[\selfattention(\layerNorm(\hiddenstates_2))]_\tokenposition\Big) \nonumber \\
        &\qquad\qquad\qquad\qquad\qquad\qquad\qquad\qquad\qquad\qquad\qquad\qquad\qquad\qquad\qquad\qquad\qquad =1 
    \end{align}
    We can write this according to the definition of an attention block (\cref{definition:multi_layer_perceptron}):
    \begin{align}
        &\prob_{\Parameters}\Big([\hiddenstates_1]_\tokenposition+(\mlpweights^O)^T[\hiddenstates'_1]_\tokenposition\neq[\hiddenstates_2]_\tokenposition+(\mlpweights^O)^T[\hiddenstates'_2]_\tokenposition \mid [\selfattention(\layerNorm(\hiddenstates_1))]_\tokenposition\neq[\selfattention(\layerNorm(\hiddenstates_2))]_\tokenposition\Big)
    \end{align}
    where $\hiddenstates'_1$ and $\hiddenstates'_2$ are the hidden states after concatenation in the self-attention mechanism (see \cref{eq:transformer_concat_attention}). The remainder of the proof follows the same approach as the proof of \cref{theorem:injective_MLP_sub_block} in \cref{appsubsec:proof_injective_MLP_sub_block}, starting from \cref{eq:injective_transformer_mlp_eq_1}.
\end{proof}

\section{MLP Injectivity in Hierarchical Equality Task}\label{appsec:MLP_Injectivity_in_Hierarchical_Equality_Task}

We see in \cref{fig:hierarchical_eq} that the IIA remains low for the \algidentityoffirst algorithm on a fully trained model even when using a $\RevNet$ alignment map (based on $\revnetfunc$). 
A reasonable assumption for why would be that the fully trained model does not fulfil some assumption required by our proof of \cref{theorem:existing_causalmap_for_any_algorithm} (any algorithm is an input-restricted distributed abstraction for any model) given in \cref{sec:main_theorem}. 
In this section, we present follow-up experiments investigating the reason for this disagreement between our empirical results on the \algidentityoffirst algorithm and the theoretical result of \cref{theorem:existing_causalmap_for_any_algorithm}.

Let us first note that to prove \cref{theorem:existing_causalmap_for_any_algorithm} we rely on an existence proof: showing there exists a function $\causalmap$ which satisfies the conditions for a DNN to be abstracted by an algorithm.
It says nothing, however, about this function being learnable in practice.
Our experiments, however, measure IIA on an unseen test set---which requires $\RevNet$ to not only fit a training set, but generalise to new data. 
Therefore, following our proof of \cref{theorem:existing_causalmap_for_any_algorithm} we explore the IIA on the train set. 
However, on the normal training set (with $1{,}280{,}000$ samples), we still do not get an IIA over 0.55. 
On the other hand, if we repeat the experiment with only $1{,}000$ training samples, we see $\RevNet$ achieves an IIA of over $0.99$ on the training set. 
Therefore, it is likely that the $\revnetfunc$ used when defining $\RevNet$ does not have enough capacity to fit the overly complex function our proof describes.

To further analyse why the capacity of the used $\revnetfunc$ is not sufficient, we analyse the injectivity of the evaluated MLP by investigating its hidden representations.
We first evaluate $1{,}280{,}000$ randomly sampled inputs and their hidden states, checking if they are all unique. 
In these $1{,}280{,}000$ samples (and repeating this experiment with 10 different random seeds), no collisions were found, implying the evaluated MLP is (at least close to) injective. 

\begin{table*}
    \centering
    \resizebox{\textwidth}{!}{%
    \begin{tabular}{cccccc}
    \toprule
    & All Pairs & Same Output & Not Same Output & Same Variables & Not Same Variables \\ \midrule
Input   & $8.5e{-2}{\scriptscriptstyle\,\pm\,1.1e{-2}}$ & $8.5e{-2}{\scriptscriptstyle\,\pm\,1.1e{-2}}$ & $1.7e{-1}{\scriptscriptstyle\,\pm\,1.9e{-2}}$ & $8.5e{-2}{\scriptscriptstyle\,\pm\,1.1e{-2}}$ & $1.6e{-1}{\scriptscriptstyle\,\pm\,1.9e{-2}}$ \\
Layer 1 & $5.7e{-4}{\scriptscriptstyle\,\pm\,4.5e{-4}}$ & $5.7e{-4}{\scriptscriptstyle\,\pm\,4.5e{-4}}$ & $2.4e{-2}{\scriptscriptstyle\,\pm\,6.2e{-3}}$ & $5.7e{-4}{\scriptscriptstyle\,\pm\,4.5e{-4}}$ & $1.1e{-2}{\scriptscriptstyle\,\pm\,4.3e{-3}}$ \\
Layer 2 & $4.5e{-4}{\scriptscriptstyle\,\pm\,3.8e{-4}}$ & $4.5e{-4}{\scriptscriptstyle\,\pm\,3.8e{-4}}$ & $3.2e{-2}{\scriptscriptstyle\,\pm\,1.2e{-2}}$ & $4.5e{-4}{\scriptscriptstyle\,\pm\,3.8e{-4}}$ & $1.2e{-2}{\scriptscriptstyle\,\pm\,6.7e{-3}}$ \\
Layer 3 & $3.3e{-4}{\scriptscriptstyle\,\pm\,2.8e{-4}}$ & $3.3e{-4}{\scriptscriptstyle\,\pm\,2.8e{-4}}$ & $8.9e{-2}{\scriptscriptstyle\,\pm\,4.3e{-2}}$ & $3.3e{-4}{\scriptscriptstyle\,\pm\,2.8e{-4}}$ & $1.5e{-2}{\scriptscriptstyle\,\pm\,1.0e{-2}}$ \\
    \bottomrule
\end{tabular}

    }
    \caption{An approximation of the minimal Euclidean distance of the trained MLP model in the hierarchical equality task using $1{,}280{,}000$ samples. The minimal Euclidean distance is computed between all samples to a randomly selected subset of $10{,}000$ samples. 
    We compute it over 10 different random seeds and present the mean and standard deviation (the number after $\pm$).}
    \label{table:Results_Injectivity_Experiments}
\end{table*}

We now examine the supposition that the model finds it more difficult to distinguish between hidden states that share the same values for the variables in \algbothequality than between those that do not. To this end, we compute the minimal Euclidean distance between hidden states across the entire set of $1{,}280{,}000$ samples to a randomly selected subset of $10{,}000$ samples. 
Specifically, we measure the minimal pairwise Euclidean distance among: (i) all sample pairs, (ii) sample pairs sharing the same output, (iii) sample pairs sharing the same values for both equality variables in \algbothequality, (iv) sample pairs that do not share the same output and (v) sample pairs that do not share the same values for both equality variables. 
The results are presented in \cref{table:Results_Injectivity_Experiments}. We observe that the minimal Euclidean distances are smaller for pairs sharing the same output or the same equality-variable values compared to pairs that do not. This suggests that, although injectivity is preserved, a RevNet likely will find it more challenging to separate hidden states that share variable values.\looseness=-1

\section{Additional Experiment Details}\label{app:exp_details}

In this section, we present additional details about our hierarchical equality task (in \cref{subsubsec:hierarchical_equality_details}) and indirect object identification (in \cref{app:ioi_task}) experiments.
We also present details and results on the distributive law task (in \cref{app:dl}).

\subsection{Hierarchical Equality Task}\label{subsubsec:hierarchical_equality_details}

\begin{task}[\textbf{from \citealp{Geiger2023FindingAB}}]
\label{task:hierarchical_equality_task}
    The \defn{hierarchical equality task} is defined as follows.
    Let $\bx = \bx_1 \circ \bx_2 \circ \bx_3 \circ \bx_4$ be a 16-dimensional vector, where each $\bx_i \in \R^4$ for $i \in \{1,2,3,4\}$, and $\circ$ denotes vector concatenation. The input space is $\calX = [-0.5, 0.5]^{16}$, and the output space is $\calY = \{\lfalse, \ltrue\}$. The task function is:
    \begin{align}
        \ftask(\bx) = \big((\bx_1 == \bx_2) == (\bx_3 == \bx_4)\big),
    \end{align}
    where the equality $(\bx_i == \bx_j)$ holds if and only if $\bx_i$ and $\bx_j$ are equal as vectors in $\R^4$.
\end{task}

\subsubsection{Algorithms}\label{app:heq_algorithms}
We define the following three candidate algorithms in detail.

\begin{causalgraph}\label{alg:both_equality_relations}
    The \defn{\algbothequality \algname{alg.}}\! to solve \cref{task:hierarchical_equality_task} has $\nodesinner\!\!=\!\!\{\node_{\bx_1\isequalsubscript\bx_2}, \!\node_{\bx_3\isequalsubscript\bx_4}\}$
    and:
    \begin{align}
        &\algfuncnode{\node_{\bx_1\isequalsubscript\bx_2}}(\nodesvalues_{\parents(\node_{\bx_1\isequalsubscript\bx_2})}) = 
        (\nodevalue_{\nodes_{\bx_1}} \isequal \nodevalue_{\nodes_{\bx_2}}) \nonumber \\
       &\algfuncnode{\node_{\bx_3\isequalsubscript\bx_4}}(\nodesvalues_{\parents(\node_{\bx_3\isequalsubscript\bx_4})}) = 
        (\nodevalue_{\nodes_{\bx_3}} \isequal \nodevalue_{\nodes_{\bx_4}}) \nonumber\\
        &\algfuncnode{\node_{y}}(\nodesvalues_{\parents(\node_{y})}) =
        (\nodevalue_{\node_{\bx_1\isequalsubscript\bx_2}} \isequal \nodevalue_{\node_{\bx_3\isequalsubscript\bx_4}})\nonumber
    \end{align}
\end{causalgraph}

\begin{causalgraph}\label{alg:left_equality_relation}
    The \defn{\algleftequality \algname{alg.}} to solve \cref{task:hierarchical_equality_task} has $\nodesinner\!=\!\{\node_{\bx_1\isequalsubscript\bx_2}\}$
    and:
    \begin{align}
        &\algfuncnode{\node_{\bx_1\isequalsubscript\bx_2}}(\nodesvalues_{\parents(\node_{\bx_1\isequalsubscript\bx_2})}) = 
        (\nodevalue_{\node_{\bx_1}} \isequal \nodevalue_{\node_{\bx_2}}) \nonumber\\
        &\algfuncnode{\node_{y}}(\nodesvalues_{\parents(\node_{y})}) =
        (\nodevalue_{\node_{\bx_1\isequalsubscript\bx_2}} \isequal (\nodevalue_{\node_{\bx_3}} \isequal \nodevalue_{\node_{\bx_4}}))\nonumber
    \end{align}
\end{causalgraph}

\begin{causalgraph}\label{alg:identityoffirst}
    The \defn{\algidentityoffirst \algname{alg.}} to solve \cref{task:hierarchical_equality_task} has $\nodesinner\!\!=\!\!\{\node_{\bx_1}\}$
    and:
    \begin{align}
        &\algfuncnode{\node_{\bx_1}}(\parents(\node_{\bx_1})) = \bx_1 \nonumber \\
        &\algfuncnode{\node_{y}}(\parents(\node_{y})) = ((\nodevalue_{\node_{\bx_1}}==\bx_2)==(\bx_3==\bx_4))\nonumber
    \end{align}
\end{causalgraph}

\subsubsection{Training Details}\label{app:heq_training_details}

For the hierarchical equality task, we use a 3-layer MLP with $\lvert \neuronset_{1} \rvert = \lvert \neuronset_{2} \rvert = \lvert \neuronset_{3} \rvert = 16$. The model is trained using the Adam optimiser with learning rate 0.001 and cross-entropy loss. We use a batch size of 1024 and train on 1,048,576 samples, with 10,000 samples each for evaluation and testing. Training runs for a maximum of 20 epochs with early stopping after 3 epochs of no improvement.

For the training progression experiments, we use the same configuration but limit training to 2 epochs.

When training the alignment maps $\causalmap$, we use a batch size of 6400 and train for up to 50 epochs with early stopping after 5 epochs of no improvement (using a threshold of 0.001 for the required change, compared to 0 for MLP training). We use the Adam optimiser with learning rate 0.001 and cross-entropy loss.
To generate the datasets for DAS, for \Cref{alg:both_equality_relations} we intervene with a probability of 1/3 on $\node_{\bx_1==\bx_2}$, 1/3 on $\node_{\bx_3==\bx_4}$, and 1/3 on both variables. The samples for the base and source inputs are generated such that $(\bx_1 == \bx_2)$ and $(\bx_3 == \bx_4)$ each hold 50\% of the time. For \cref{alg:left_equality_relation} and \cref{alg:identityoffirst} we intervene on $\node_{\bx_1==\bx_2}$ and $\node_{\bx_1}$ for all samples, respectively. For each algorithm, we sample $1{,}280{,}000$ interventions for training, $10{,}000$ for evaluation, and $10{,}000$ for testing.

\subsubsection{Additional Results}\label{app:heq_additional_results}

We present results for the three candidate algorithms for the hierarchical equality task, analysing the effect of hidden size $\revnetdim$ and intervention size $\interventionsize$ across all MLP layers.

For the \algbothequality algorithm, \Cref{fig:hidden_size_both_equality_relations,fig:training_progression_grid_max_both_equality_relations,fig:training_progression_grid_mean_both_equality_relations} demonstrate that the hidden size experiment aligns with previously reported trends, while also showing how alignment maps $\phi$ of increasing complexity perform across training epochs, layers, and intervention sizes.

For the \algleftequality algorithm, as shown in \Cref{fig:hidden_size_left_equality_relation,fig:training_progression_grid_max_left_equality_relation,fig:training_progression_grid_mean_left_equality_relation}, we observe similar patterns: increasing hidden size and intervention size improves performance, and alignment is generally more successful in later layers during early training.

For the \algidentityoffirst algorithm, \Cref{fig:hidden_size_identity_of_first_argument,fig:training_progression_grid_max_identity_of_first_argument,fig:training_progression_grid_mean_identity_of_first_argument} reveal that, interestingly, some alignment is achieved—especially in layer 3—during the first half of training, but this effect diminishes in the second half.

Overall, these results demonstrate that the hidden size experiment is consistent with the findings reported in the main paper.
They also show that it is easier, in untrained models, to find an alignment map for later layers, and that transient alignment can occur in specific layers and algorithms during the initial stages of training.\looseness=-1

\begin{figure}[h]
    \centering
    \begin{subfigure}{\textwidth}
        \centering
        \includegraphics[width=1.0\textwidth]{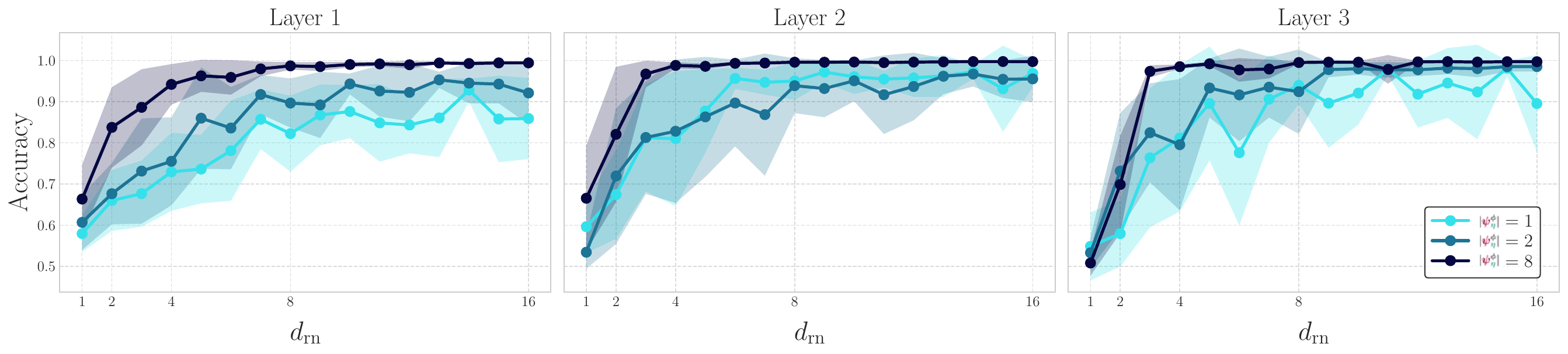}
        \vspace{-15pt}
        \caption{\algbothequality}
        \label{fig:hidden_size_both_equality_relations}
        \vspace{1.5em}
    \end{subfigure}
    \begin{subfigure}{\textwidth}
        \centering
        \includegraphics[width=1.0\textwidth]{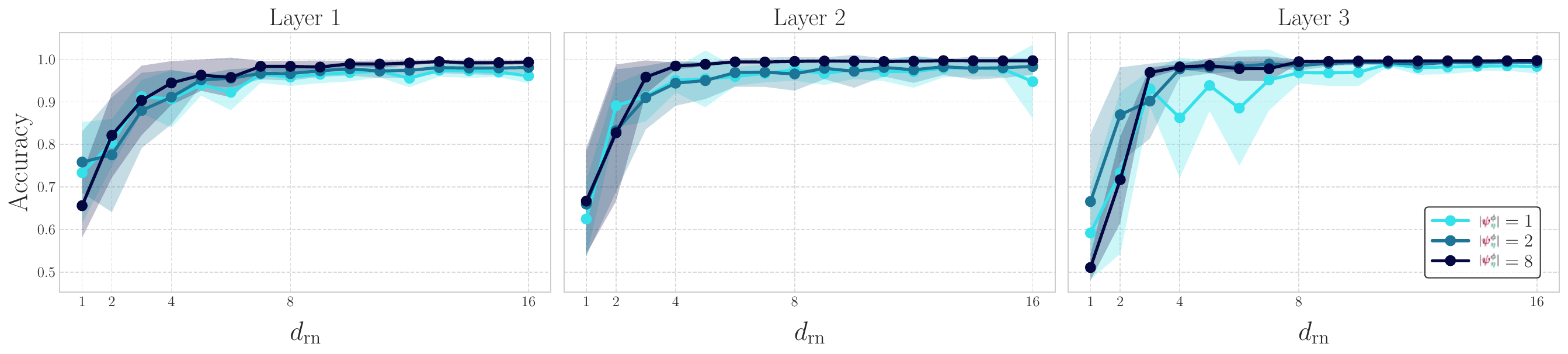}
        \vspace{-15pt}
        \caption{\algleftequality}
        \label{fig:hidden_size_left_equality_relation}
        \vspace{1.5em}
    \end{subfigure}
    \begin{subfigure}{\textwidth}
        \centering
        \includegraphics[width=1.0\textwidth]{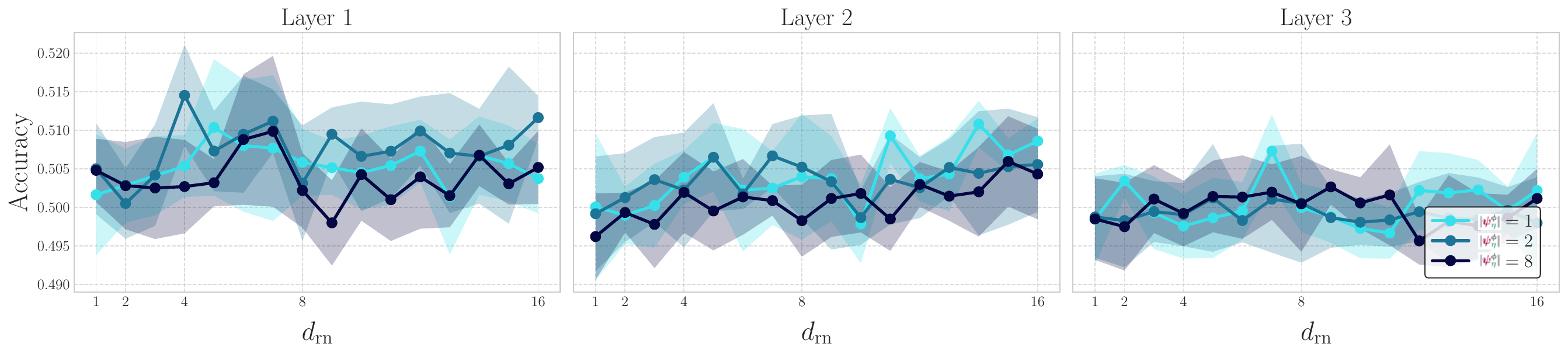}
        \vspace{-15pt}
        \caption{\algidentityoffirst}
        \label{fig:hidden_size_identity_of_first_argument}
        \vspace{1em}
    \end{subfigure}
    \caption{Mean IIA over 5 seeds using $\RevNet$ ($\revnetlayers=1$) on the trained DNN. Performance improves with larger hidden dimension $\revnetdim$ and intervention size \smash{$\interventionsize$}. Each subplot corresponds to one of the three candidate algorithms for the hierarchical equality task, showing how the model's representational capacity influences performance.}
    \label{fig:hidden_size_heq_all}
\end{figure}

\newcommand{\algnamesmall}[1]{\texttt{\fontsize{9pt}{9pt}\selectfont#1}}
\newcommand{\algbothequalitysmall}{\algnamesmall{both} \algnamesmall{equality} \algnamesmall{relations}\xspace}
\newcommand{\algleftequalitysmall}{\algnamesmall{left} \algnamesmall{equality} \algnamesmall{relation}\xspace}
\newcommand{\algidentityoffirstsmall}{\algnamesmall{identity} \algnamesmall{of} \algnamesmall{first} \algnamesmall{argument}\xspace}

\begin{figure}[h]
    \centering
    \begin{subfigure}{.5\textwidth}
        \centering
        \includegraphics[width=\textwidth]{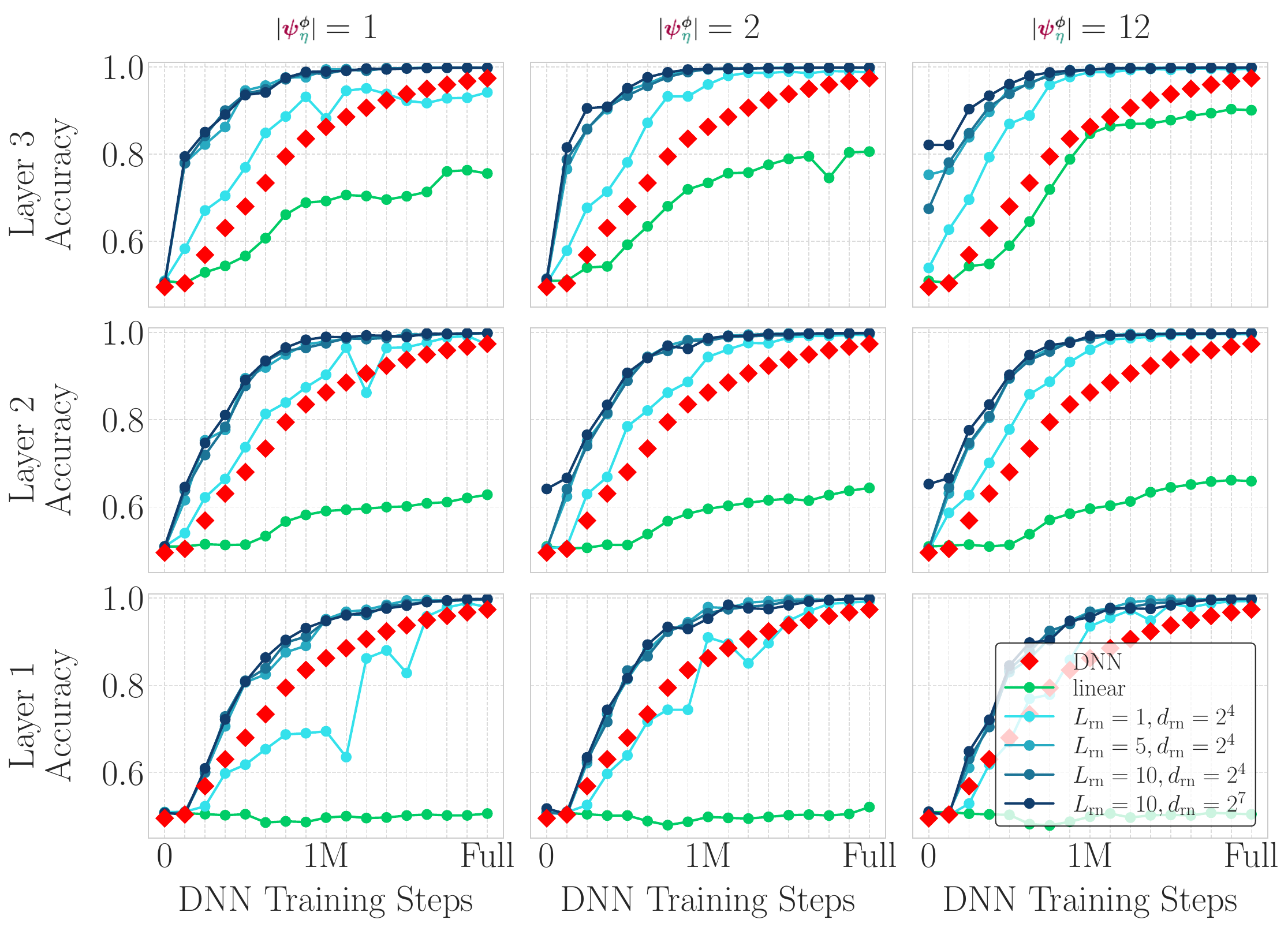}
        \caption{Max IIA for \algbothequalitysmall}
        \label{fig:training_progression_grid_max_both_equality_relations}
        \vspace{1em}
    \end{subfigure}%
    \begin{subfigure}{.5\textwidth}
        \centering
        \includegraphics[width=\textwidth]{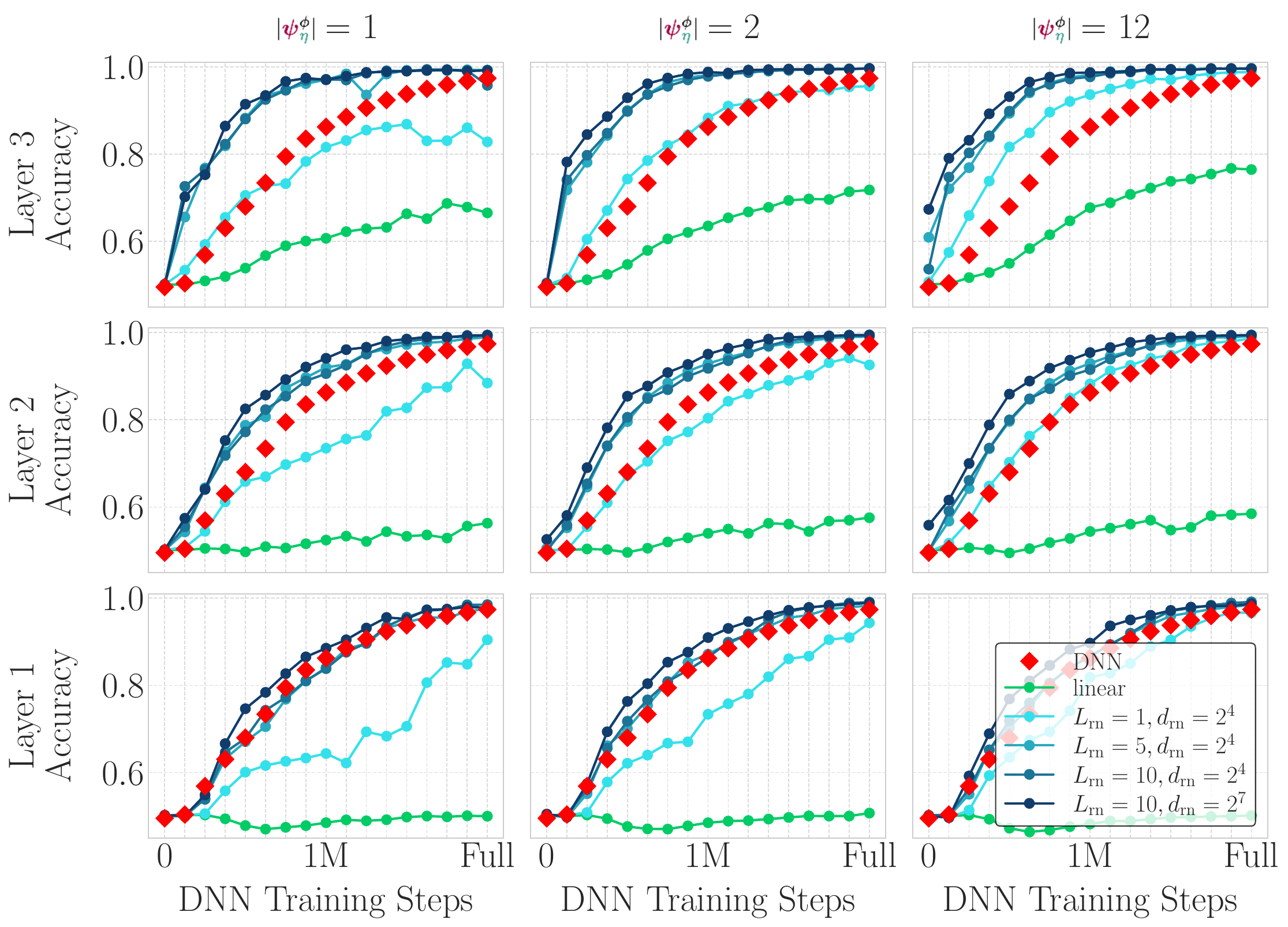}
        \caption{Mean IIA for \algbothequalitysmall}
        \label{fig:training_progression_grid_mean_both_equality_relations}
        \vspace{1em}
    \end{subfigure}
    \begin{subfigure}{.5\textwidth}
        \centering
        \includegraphics[width=\textwidth]{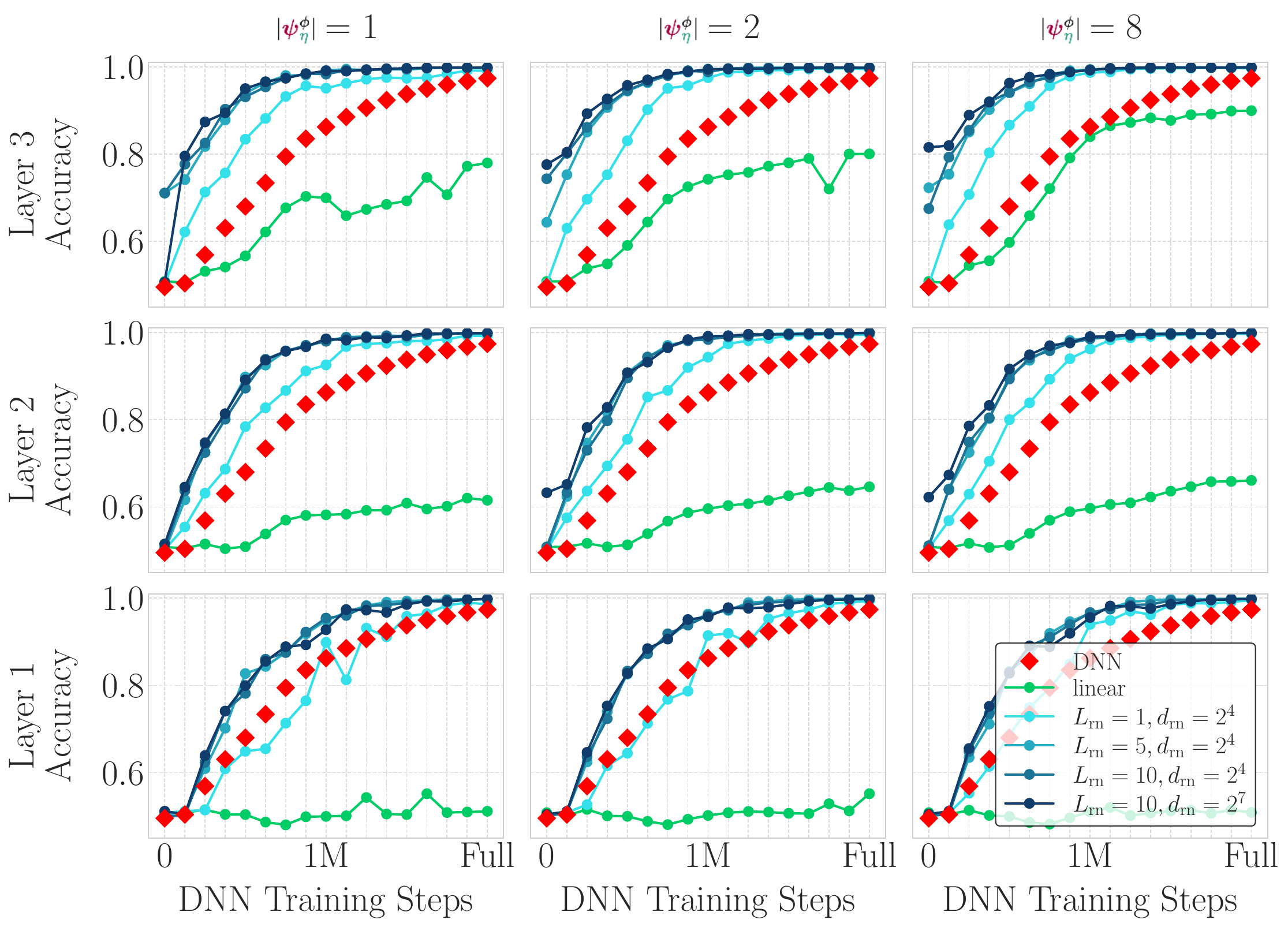}
        \caption{Max IIA for \algleftequalitysmall}
    \label{fig:training_progression_grid_max_left_equality_relation}
        \vspace{1em}
    \end{subfigure}%
    \begin{subfigure}{.5\textwidth}
        \centering
        \includegraphics[width=\textwidth]{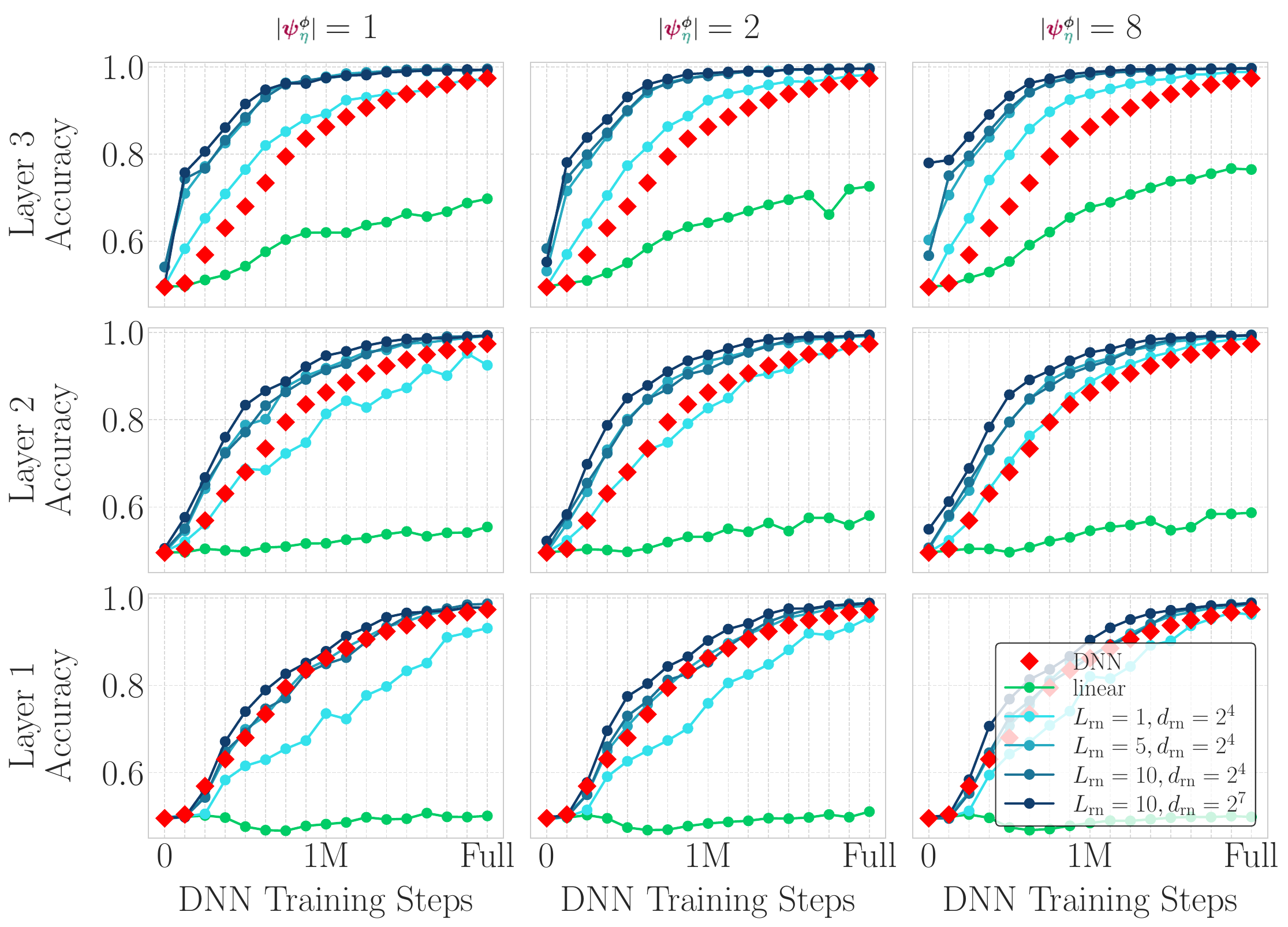}
        \caption{Mean IIA for \algleftequalitysmall}
        \label{fig:training_progression_grid_mean_left_equality_relation}
        \vspace{1em}
    \end{subfigure}
    \begin{subfigure}{.5\textwidth}
        \centering
        \includegraphics[width=\textwidth]{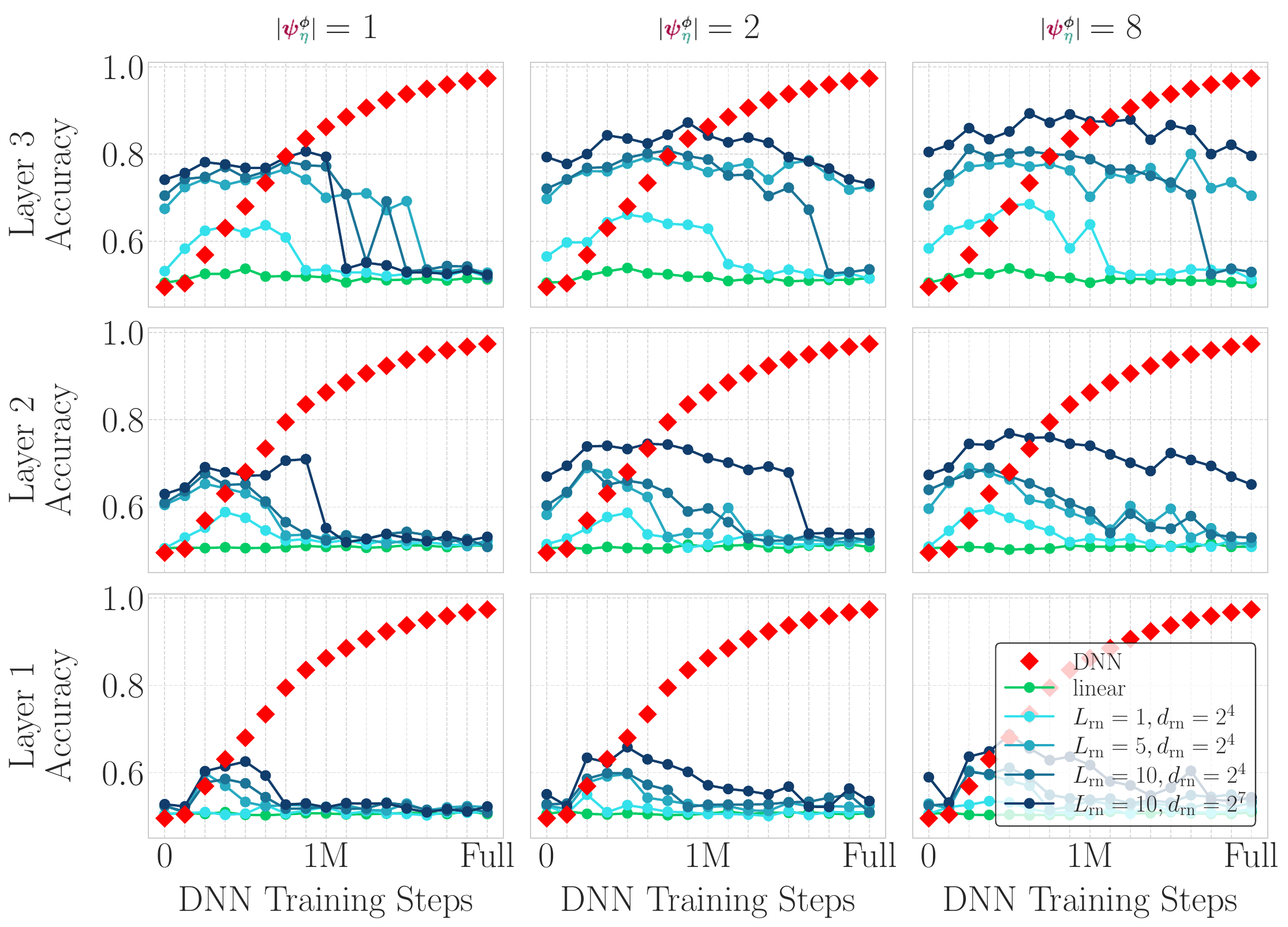}
        \caption{Max IIA for \algidentityoffirstsmall}
    \label{fig:training_progression_grid_max_identity_of_first_argument}
    \end{subfigure}%
    \begin{subfigure}{.5\textwidth}
        \centering
        \includegraphics[width=\textwidth]{images/mean_training_progression_Left_Equality_Relation.pdf}
        \caption{Mean IIA for \algidentityoffirstsmall}
    \label{fig:training_progression_grid_mean_identity_of_first_argument}
    \end{subfigure}
    \vspace{.1em}
    \caption{IIA over 5 seeds for each combination of MLP layer (rows) and intervention size (columns) during training progression for the tested algorithms.}
\end{figure}

\subsection{Indirect Object Identification Task}\label{app:ioi_task}

\begin{task}\label{task:Indirect_object_identification}
    The \defn{Indirect Object Identification} (IOI) task involves predicting the indirect object in sentences with a specific structure. Each input $\bx\in\calX$ consists of a text where a subject ($\subject$) and an indirect object ($\indirectobject$) are introduced, followed by the $\subject$ giving something to the $\indirectobject$. For example:
    \begin{align}
        \text{"Friends Juana and Kristi found a mango at the bar. Kristi gave it to" $\Rightarrow$ "Juana"} \nonumber
    \end{align}
    Here, "Juana" and "Kristi" are introduced, with "Kristi" ($\subject$) appearing again before giving something to "Juana" ($\indirectobject$).
    The output set $\calY$ consists of the first tokens of the two names:
    \begin{align}
        \calY=\{\mathtt{first\_token}(\subject),\mathtt{first\_token}(\indirectobject)\}
    \end{align}
\end{task}
\subsubsection{Algorithm}\label{app:ioi_algorithms}

For this task, we evaluate the \defn{ABAB-ABBA} algorithm. Denoting the two names in the story as A and B, this algorithm determines whether the sentence follows an ABAB pattern (e.g., \exampletext{Friends \colored{blue}{Juana} and \colored{red}{Kristi} found a mango at the bar. \colored{blue}{Juana} gave it to \colored{red}{Kristi}}) or an ABBA pattern (e.g., \exampletext{Friends \colored{blue}{Juana} and \colored{red}{Kristi} found a mango at the bar. \colored{red}{Kristi} gave it to \colored{blue}{Juana}}). If the pattern is ABAB (where B is the indirect object $\indirectobject$), the algorithm predicts the first token of B. Conversely, for an ABBA pattern, it predicts the first token of A. In our experiments, we intervene on whether an input follows the ABAB pattern or not.

\begin{causalgraph}\label{alg:abab_abba}
    The \algabab algorithm for the IOI task has one inner node $\nodesinner = \{\node_{1}\}$ and is defined as follows:
    \begin{subequations}
    \begin{align}
        \algfuncnode{\node_{1}}(\parents(\node_{1})) &= \mathtt{check\_is\_abab\_pattern}(\nodevalue_{\nodes_{\bx}}) \\
        \algfuncnode{\node_{y}}(\parents(\node_{y})) &=
        \begin{cases}
            \mathtt{first\_token}(\mathtt{get\_name\_b}(\nodevalue_{\nodes_{\bx}})) & \text{if } \nodevalue_{\node_{1}} = \ltrue \\
            \mathtt{first\_token}(\mathtt{get\_name\_a}(\nodevalue_{\nodes_{\bx}})) & \text{if } \nodevalue_{\node_{1}} = \lfalse
        \end{cases}
    \end{align}
    \end{subequations}
    Here, $\mathtt{get\_name\_a}(\bx)$ extracts the first name (denoted A, e.g. \colored{blue}{Juana} in our example) and $\mathtt{get\_name\_b}(\bx)$ extracts the second name (denoted B, e.g. \colored{red}{Kristi} in our example) from the input sentence $\bx$. 
    The function $\mathtt{check\_is\_abab\_pattern}(\bx)$ returns $\ltrue$ if the sentence follows an ``ABAB'' structure (e.g., ``A and B ... A gave to B'', meaning B is the $\indirectobject$) and $\lfalse$ if it follows an ``ABBA'' structure (e.g., ``A and B ... B gave to A'', meaning A is the $\indirectobject$). 
    $\mathtt{first\_token}(\text{Name})$ returns the first token of the specified name. The output $\by$ is the first token of the indirect object.\looseness=-1
\end{causalgraph}

\subsubsection{Training Details}

\begin{wrapfigure}{r}{0.5\textwidth}
\centering
\vspace{-27pt}
\includegraphics[width=.5\textwidth]{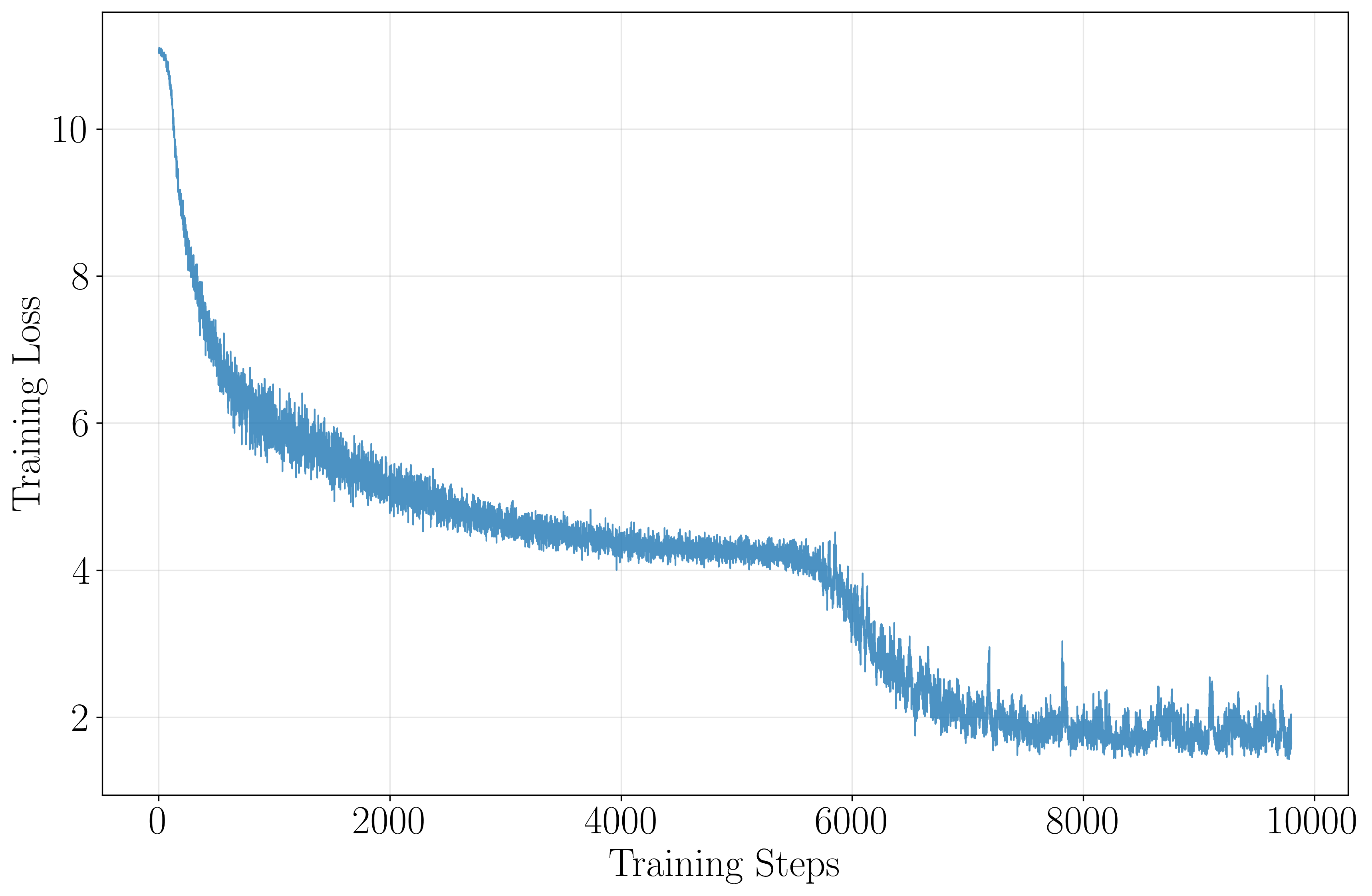}
\vspace{-10pt}
\caption{Cross Entropy loss ($y$-axis) during training ($x$-axis) of the alignment map for the randomly initialised \emph{Pythia-31m}. The loss plateaus between 4k and 6k steps, and suddenly drops after 6k steps.}
\label{fig:training_run_grokking}
\vspace{-10pt}
\end{wrapfigure}

We use models from the Pythia suite \citep{biderman2023pythia} to evaluate the IIA performance of the different $\causalmap$ on the IOI task. Specifically, we employ the Pythia 31M, 70M, 160M, and 410M parameter models. We also examine different training checkpoints provided by these models to analyse how IIA evolves during training. 
To assess robustness, we replicate a subset of experiments using alternative Pythia model seeds from \citep{wal2025polypythias}.

We train all alignment maps on 2 epochs of $10^6$ interventions based on data from \citet{fahamu_2022}, with a batch size of $256$ and a learning rate of $10^{-4}$. 
For all experiments, we set the intervention size $\interventionsize$ to half of the DNN's hidden dimension.
For smaller models (31M, 70M), we train using float64 precision and a learning rate of $10^{-3}$, as these adjustments proved crucial for convergence. We also note that we observed quite severe grokking behaviour, where models had low IIA for a long time, which quickly jumped to high IIA values at a certain point of training (see \Cref{fig:training_run_grokking}; \href{https://wandb.ai/jkminder/CausalAbstractionLLM/runs/t6bpcgch}{wandb run}).

\subsubsection{Additional Results}
\label{app:ioi_additional_results}

\paragraph{Robustness across random seeds.}
In \Cref{fig:pythia_multiple_seeds}, we examine how our main results from \Cref{sec:Results} generalise across multiple training seeds of the Pythia model. The key trends hold consistently across all 5 seeds - we can find perfect alignments using $\RevNet$ in most cases. However, we observe two notable exceptions. For one seed, the DNN fails to learn the IOI task even after full training. For another seed, we cannot find an alignment using even complex alignment maps under our current setup.\footnote{These failures occur in different seeds: seed 3 shows poor IIA despite learning the task, while seed 4 fails to learn the IOI task.} All other seeds achieve perfect alignment under $\RevNet$. We hypothesise that the alignment failure case is primarily due to suboptimal training of the alignment map. Due to computational constraints, we did not perform extensive hyperparameter tuning that might have achieved convergence.

\begin{figure}[h]
\centering
\resizebox{0.3\textwidth}{!}{\includegraphics{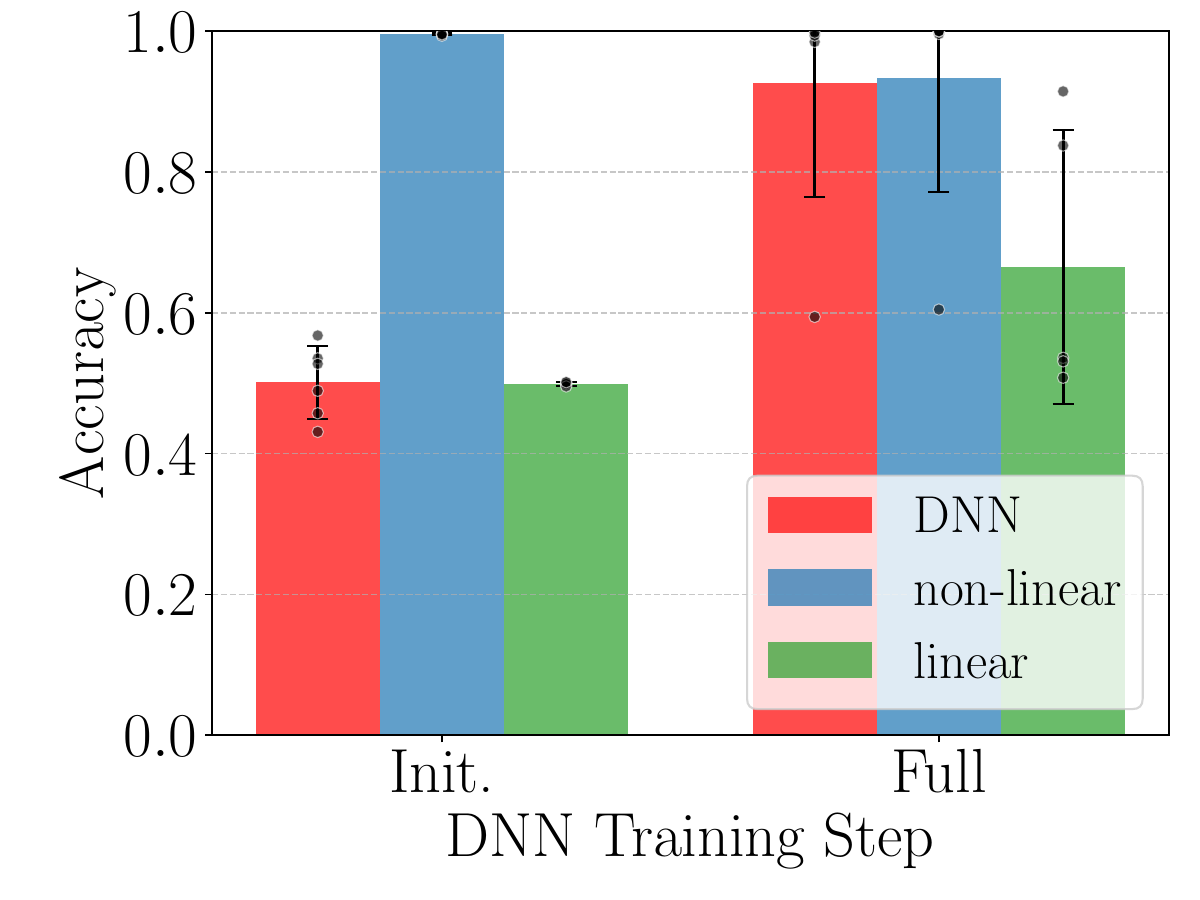}}
\caption{IIA of alignment between the \algabab algorithm and the \emph{Pythia-410m} model across multiple seeds (seeds 1 to 5 from \citealp{wal2025polypythias}), with interventions at layer 12. We evaluate the IIA of both $\Rotation$ and $\RevNet$ (with $\RevNetHiddenDim=64, \RevNetBlocks=1$) on randomly initialised (Init.) and fully trained (Full) DNNs.}
\label{fig:pythia_multiple_seeds}
\end{figure}

\paragraph{Generalisation across distinct name sets.} 
In the main paper, we split the dataset from \citet{fahamu_2022} by ensuring that no two sentences appear in both the training and evaluation sets. However, this splitting strategy does not guarantee that the names themselves are distinct between training and evaluation sets. In \Cref{fig:pythia_name_split}, we examine the results when using completely different sets of names for training and evaluation. The results differ substantially: we cannot find an alignment using even complex alignment maps for the randomly initialised DNN. This suggests that IIA on the randomly initialised DNN may depend critically on overlap between the specific entities encountered during training and evaluation. For the fully trained DNN, we observe perfect alignment using $\RevNet$ and reasonably high alignment using $\Rotation$.

\begin{figure}[h]
\centering
\includegraphics[width=0.5\textwidth]{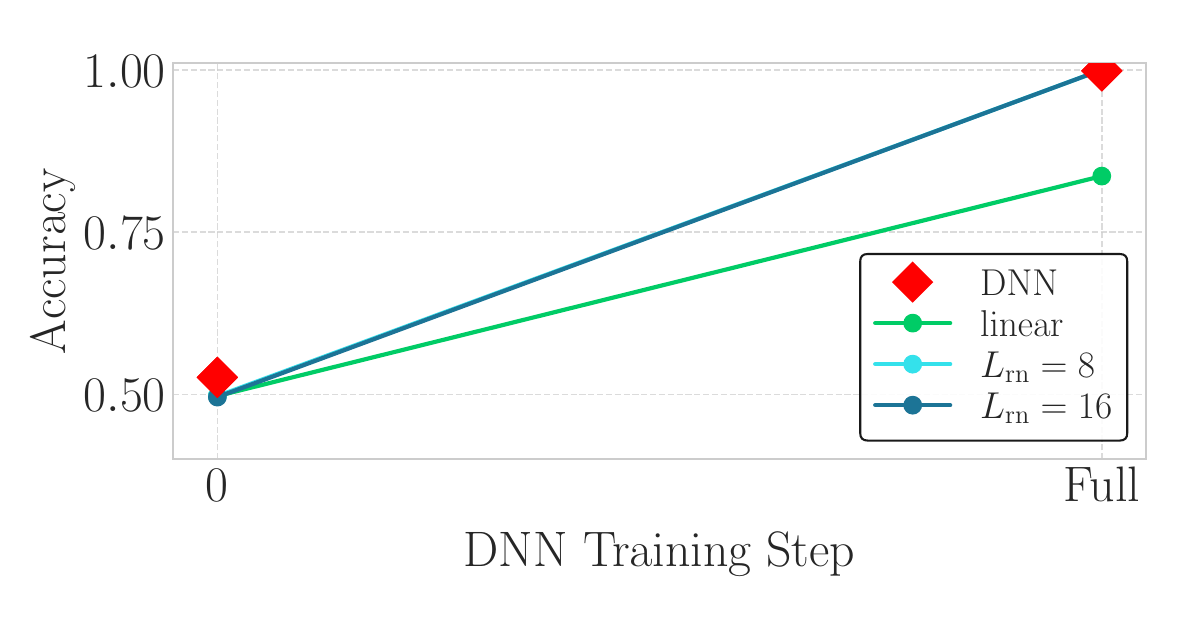}
\vspace{-10pt}
\caption{IIA of alignment between the \algabab algorithm and the \emph{Pythia-410m} model using a different set of names for training and evaluation, with interventions at layer 12. We evaluate the IIA of both $\Rotation$ and $\RevNet$ (with $\RevNetHiddenDim=64, \RevNetBlocks=1$) on randomly initialised and fully trained (Full) DNNs.\looseness=-1}
\label{fig:pythia_name_split}
\end{figure}

\subsection{Distributive Law Task}\label{app:dl}

We now study a similar task to the hierarchical equality (in \cref{subsubsec:hierarchical_equality_details}), based on the distributive law of and ($\land$) and or ($\lor$). 
\begin{task}\label{task:distributive_law}
    The \defn{distributive law task} is defined as follows.
    Let $\bx = \bx_1 \circ \bx_2 \circ \bx_3 \circ \bx_4 \circ \bx_5 \circ \bx_6$ be a 24-dimensional vector, where each $\bx_i \in [-0.5, 0.5]^4$ for $i \in \{1,2,3,4,5,6\}$, and $\circ$ denotes vector concatenation. The input space is $\calX = [-0.5, 0.5]^{24}$, and the output space is $\calY = \{\lfalse, \ltrue\}$. The task function is
    \begin{align}
        \ftask(\bx) = \big((\bx_1 == \bx_2) \wedge (\bx_3 == \bx_4)\big) \vee \big((\bx_3 == \bx_4) \wedge (\bx_5 == \bx_6)\big),
    \end{align}
    where the equality $(\bx_i == \bx_j)$ holds if and only if $\bx_i$ and $\bx_j$ are equal as vectors in $\R^4$.
\end{task}

\subsubsection{Algorithms}\label{app:dl_algorithms}
We define the following two candidate algorithms.

\begin{causalgraph}\label{alg:AndOrAnd}
    The \defn{\alganorand \algname{alg.}} to solve \cref{task:distributive_law} has 
    \[\nodesinner=\{\node_{(\bx_1\isequal\bx_2)\land(\bx_3\isequal\bx_4)}, \node_{(\bx_3\isequal\bx_4)\land(\bx_5\isequal\bx_6)}\}\]
    and it is defined as follows:
    \begin{align}
        &\algfuncnode{\node_{(\bx_1\isequal\bx_2)\land(\bx_3\isequal\bx_4)}}(\nodesvalues_{\parents(\node_{(\bx_1\isequal\bx_2)\land(\bx_3\isequal\bx_4)})}) = 
        (\nodevalue_{\nodes_{\bx_1}} \isequal \nodevalue_{\nodes_{\bx_2}}) \land (\nodevalue_{\nodes_{\bx_3}} \isequal \nodevalue_{\nodes_{\bx_4}}) \nonumber\\
        &\algfuncnode{\node_{(\bx_3\isequal\bx_4)\land(\bx_5\isequal\bx_6)}}(\nodesvalues_{\parents(\node_{(\bx_3\isequal\bx_4)\land(\bx_5\isequal\bx_6)})}) = 
        (\nodevalue_{\nodes_{\bx_3}} \isequal \nodevalue_{\nodes_{\bx_4}}) \land (\nodevalue_{\nodes_{\bx_5}} \isequal \nodevalue_{\nodes_{\bx_6}}) \nonumber\\
        &\algfuncnode{\node_{y}}(\nodesvalues_{\parents(\node_{y})}) =
        \nodevalue_{\node_{(\bx_1\isequal\bx_2)\land(\bx_3\isequal\bx_4)}} \lor \nodevalue_{\node_{(\bx_3\isequal\bx_4)\land(\bx_5\isequal\bx_6)}}\nonumber
    \end{align}
\end{causalgraph}
\begin{causalgraph}\label{alg:AndOr}
    The \defn{\algandor \algname{alg.}} to solve \cref{task:distributive_law} has 
    \[\nodesinner=\{\node_{\bx_3\isequal\bx_4}, \node_{(\bx_1\isequal\bx_2)\lor(\bx_5\isequal\bx_6)}\}\]
    and it is defined as follows:
    \begin{align}
        &\algfuncnode{\node_{\bx_3\isequal\bx_4}}(\nodesvalues_{\parents(\node_{\bx_3\isequal\bx_4})}) = 
        (\nodevalue_{\nodes_{\bx_3}} \isequal \nodevalue_{\nodes_{\bx_4}}) \nonumber\\
        &\algfuncnode{\node_{(\bx_1\isequal\bx_2)\lor(\bx_5\isequal\bx_6)}}(\nodesvalues_{\parents(\node_{(\bx_1\isequal\bx_2)\lor(\bx_5\isequal\bx_6)})}) = 
        (\nodevalue_{\nodes_{\bx_1}} \isequal \nodevalue_{\nodes_{\bx_2}}) \lor (\nodevalue_{\nodes_{\bx_5}} \isequal \nodevalue_{\nodes_{\bx_6}}) \nonumber\\
        &\algfuncnode{\node_{y}}(\nodesvalues_{\parents(\node_{y})}) =
        \nodevalue_{\node_{\bx_3\isequal\bx_4}} \land \nodevalue_{\node_{(\bx_1\isequal\bx_2)\lor(\bx_5\isequal\bx_6)}}\nonumber
    \end{align}
\end{causalgraph}

\subsubsection{Training Details}\label{app:dl_training_details}
For the distributive law task, we use a 3-layer MLP (see \cref{appsubsec:general_MLP}) with an input dimensionality of 24, hidden layers of dimensionality $\lvert \neuronset_{1} \rvert = \lvert \neuronset_{2} \rvert =\lvert \neuronset_{3} \rvert = 24$, and an output dimensionality of 2. The model is trained using the Adam optimiser with a learning rate of 0.001 and cross-entropy loss. We use a batch size of 1024. The datasets are generated by randomly sampling input vectors $\bx = \bx_1 \circ \dots \circ \bx_6$ such that the target label $\ytrue = \ftask(\bx)$ is true 50\% of the time. We sample $1{,}048{,}576$ samples for training, $10{,}000$ for evaluation, and $10{,}000$ for testing. Training runs for a maximum of 20 epochs with early stopping after 3 epochs of no improvement.

For training $\causalmap$, we use a batch size of 6400 and train for up to 50 epochs with early stopping after 5 epochs of no improvement (using a threshold of 0.001 for the required change). 
We use the Adam optimiser with learning rate 0.001 and cross-entropy loss.
To generate the intervened datasets:
For \cref{alg:AndOrAnd}, we intervene with a probability of 1/3 on $\node_{(\bx_1\isequal\bx_2)\land(\bx_3\isequal\bx_4)}$, 1/3 on $\node_{(\bx_3\isequal\bx_4)\land(\bx_5\isequal\bx_6)}$, and 1/3 on both variables.
For \cref{alg:AndOr}, we intervene with a probability of 1/3 on $\node_{\bx_3\isequal\bx_4}$, 1/3 on $\node_{(\bx_1\isequal\bx_2)\lor(\bx_5\isequal\bx_6)}$, and 1/3 on both variables.
For both algorithms, the samples for the base and source inputs are generated such that the output of the intervention changes compared to the base input 50\% of the time. We sample $1{,}280{,}000$ interventions for training, $10{,}000$ for evaluation , and $10{,}000$ for testing for each algorithm.

\subsubsection{Results}\label{app:dl_results}

In this section, we discuss the results on the distributed law task using the And-or-And and And-Or algorithms. 
Our findings corroborate the results presented in the main paper.
As shown in \cref{fig:distributive_law_combined}, using linear and identity alignment maps reveals distinct dynamics. 
The \algandor algorithm achieves higher IIA using $\Rotation$, particularly in later layers where the IIA of $\Rotation$ on the \alganorand algorithm approaches 0.5. 
However, these dynamics completely vanish when using a more complex alignment map like $\RevNet$, where we achieve almost perfect IIA everywhere.

\begin{figure}[h]
    \centering
    \includegraphics[width=1.0\textwidth]{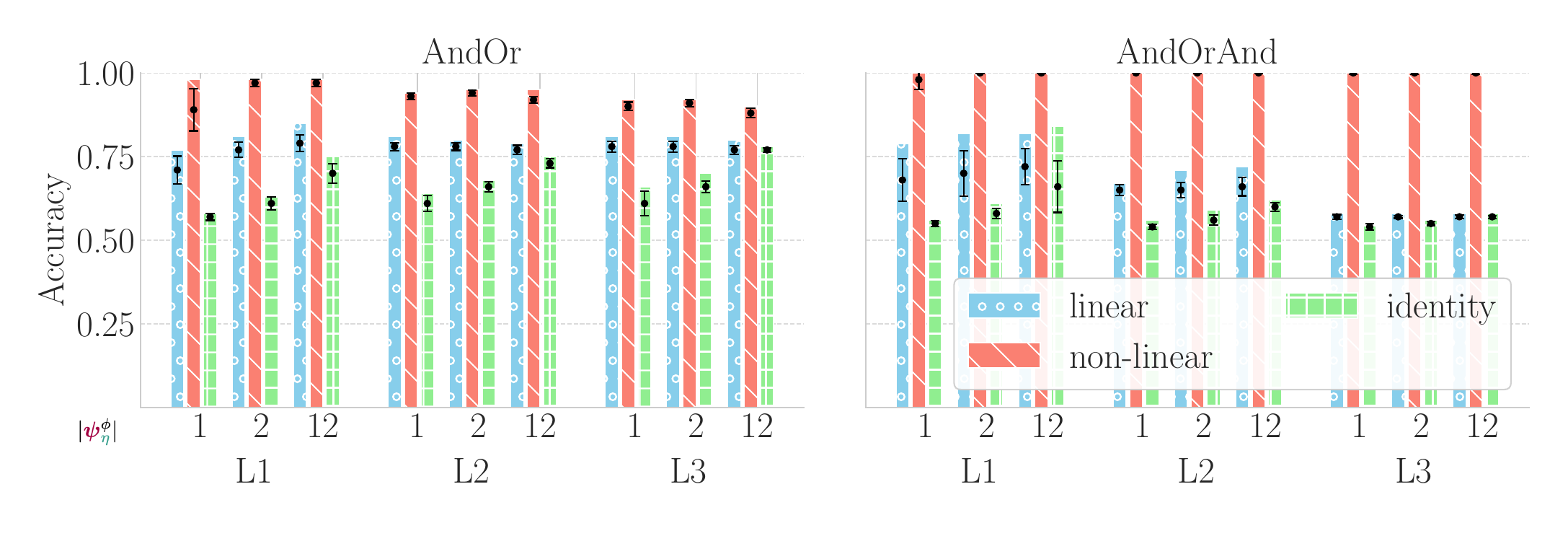}
    \vspace{-30pt}
    \caption{IIA in the distributive law task for causal abstractions trained with different alignment maps $\causalmap$. The figure shows results for both analysed algorithms for this task. The bars represent the max IIA across 10 runs with different random seeds. The black lines represent mean IIA with 95\% confidence intervals. The $\interventionsize$ denotes the intervention size per node. Without interventions, all DNNs reach 100\% accuracy. The used $\RevNet$ uses $\revnetlayers=10$ and $\revnetdim=24$.}
    \label{fig:distributive_law_combined}
    \vspace{10pt}
\end{figure}

\Cref{fig:training_progression_grid_max_andor,fig:training_progression_grid_max_andorand} present the evaluated IIA throughout model training.
These training progression plots show that randomly initialised models often achieve IIA above 0.8 with non-linear alignment maps, supporting our insight that when the notion of causal abstraction is equipped with $\RevNet$ it may identify algorithms which are not necessarily implemented by the underlying model. 
In \cref{fig:training_progression_grid_mean_andor,fig:training_progression_grid_mean_andorand}, we plot the mean IIA over 5 seeds instead of the maximum IIA.

\begin{figure}[h]
    \centering
    \begin{subfigure}{.5\textwidth}
        \centering
        \includegraphics[width=\textwidth]{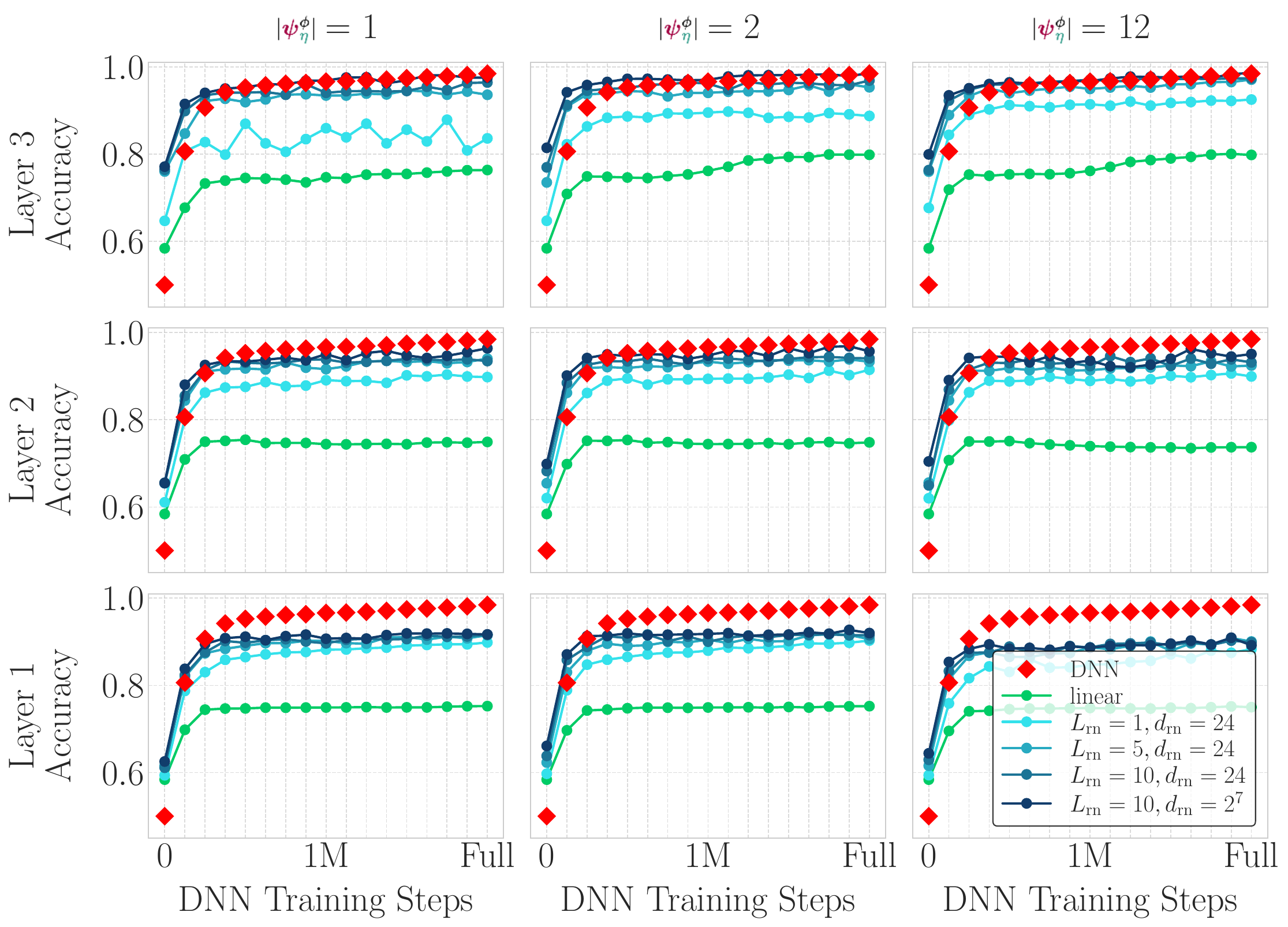}
        \caption{Max IIA for \algandor}
        \label{fig:training_progression_grid_max_andor}
        \vspace{1em}
    \end{subfigure}%
    \begin{subfigure}{.5\textwidth}
        \centering
        \includegraphics[width=\textwidth]{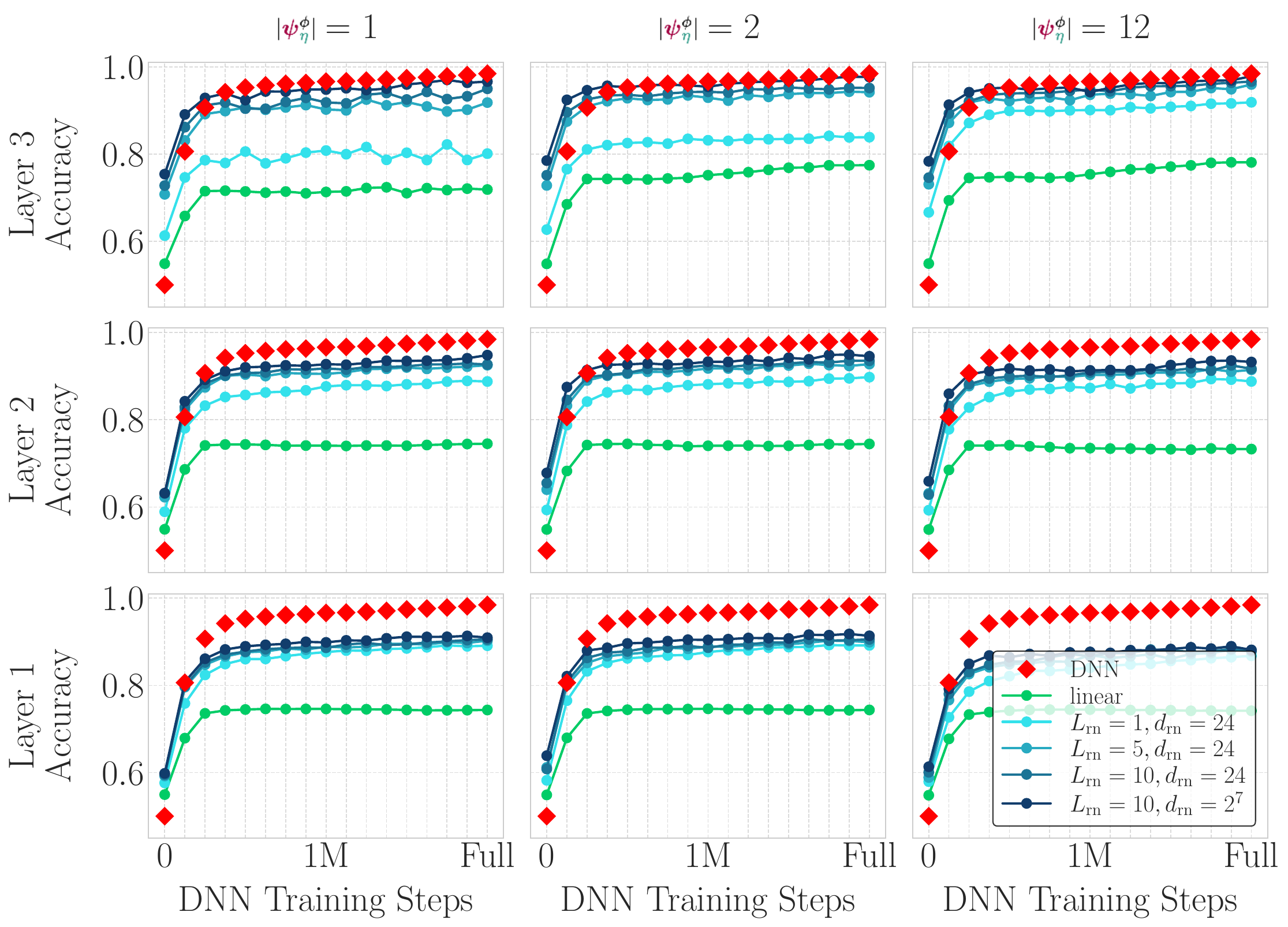}
        \caption{Max IIA for \algandor}
        \label{fig:training_progression_grid_mean_andor}
        \vspace{1em}
    \end{subfigure}
    \begin{subfigure}{.5\textwidth}
        \centering
        \includegraphics[width=\textwidth]{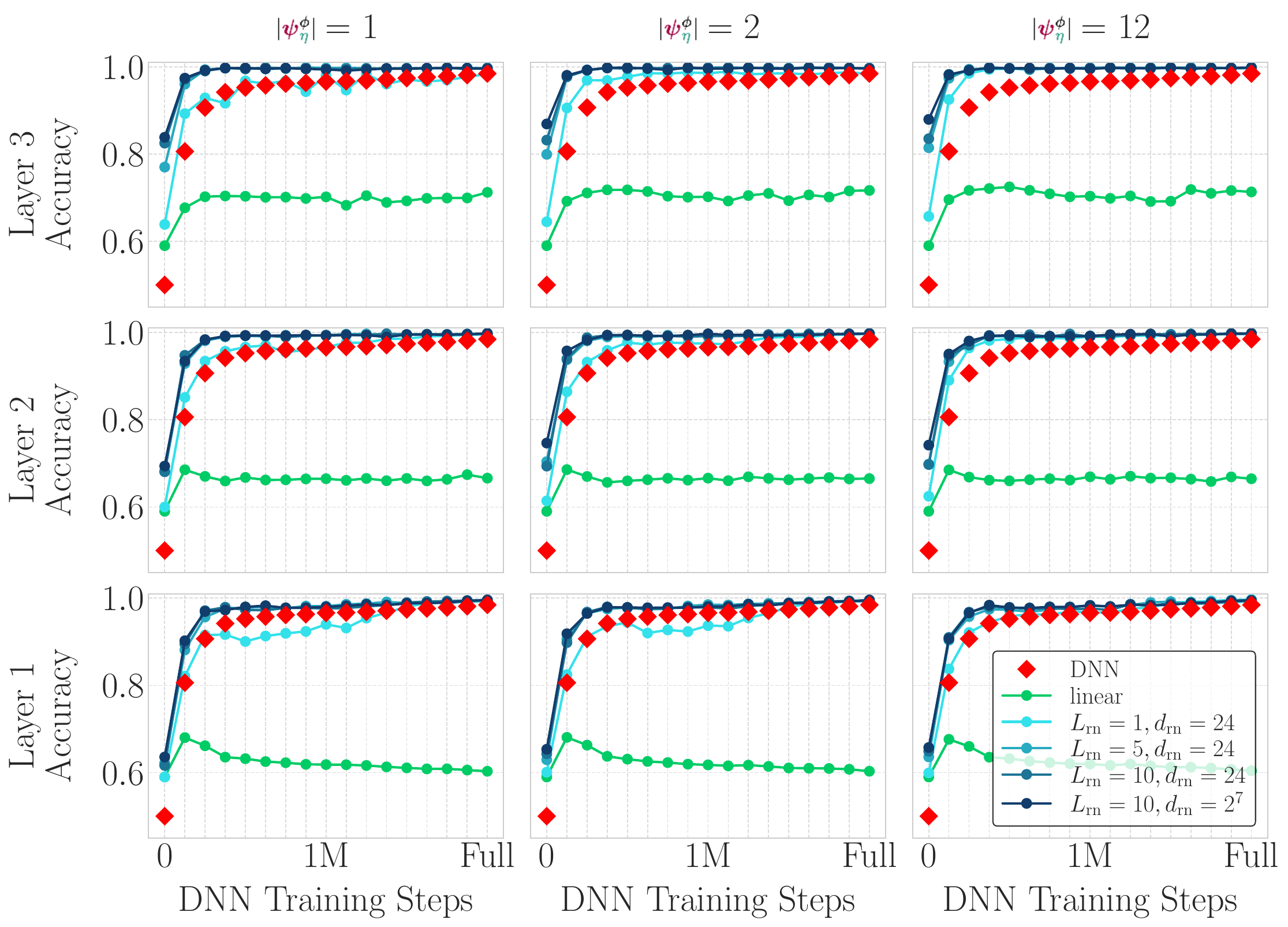}
        \caption{Max IIA for \alganorand}
        \label{fig:training_progression_grid_max_andorand}
        \vspace{1em}
    \end{subfigure}%
    \begin{subfigure}{.5\textwidth}
        \centering
        \includegraphics[width=\textwidth]{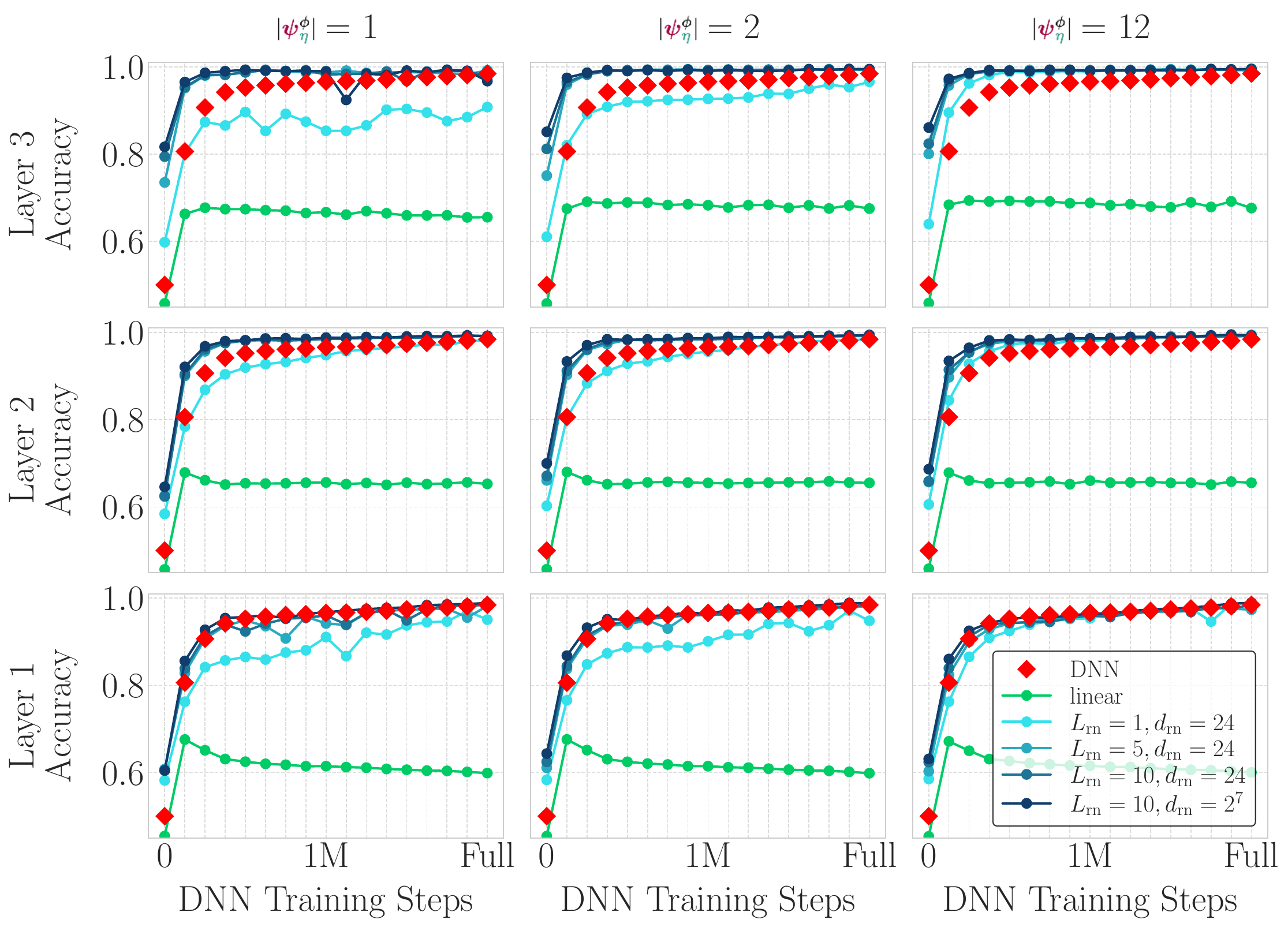}
        \caption{Max IIA for \alganorand}
        \label{fig:training_progression_grid_mean_andorand}
        \vspace{1em}
    \end{subfigure}
    \caption{IIA over 5 seeds for each combination of MLP layer (rows) and intervention size (columns) during training progression for the evaluated algorithms.}
\end{figure}

The hidden size experiments (\cref{fig:hidden_size_andor,fig:hidden_size_andorand}) show that even RevNets with small $\RevNetHiddenDim$ of 4 achieve near-perfect IIA for And-Or-And, while And-Or never reaches perfect IIA in the second layer, regardless of the $\RevNetHiddenDim$. The training progression plots suggest a possible explanation: IIA for And-Or-And initially increases in the last two layers but then decreases, while RevNets maintain near-perfect IIA. This may indicate that And-Or-And is first implemented with simple encodings detectable by linear $\causalmap$s, before evolving into non-linear encodings that only RevNets can detect. The fact that And-Or never achieves high IIA in later layers further suggests it may not be a true abstraction of the model's behaviour, though we note this remains a hypothesis requiring further investigation.

\begin{figure}[h]
    \centering
    \begin{subfigure}{\textwidth}
        \centering
        \includegraphics[width=1.0\textwidth]{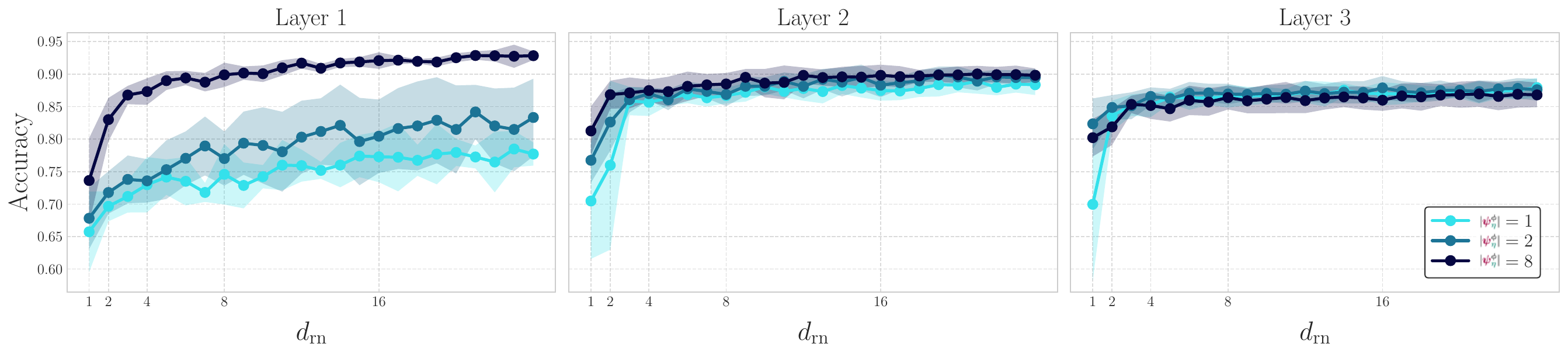}
        \vspace{-15pt}
        \caption{\algandor}
        \label{fig:hidden_size_andor}
        \vspace{1em}
    \end{subfigure}
    \begin{subfigure}{\textwidth}
        \centering
        \includegraphics[width=1.0\textwidth]{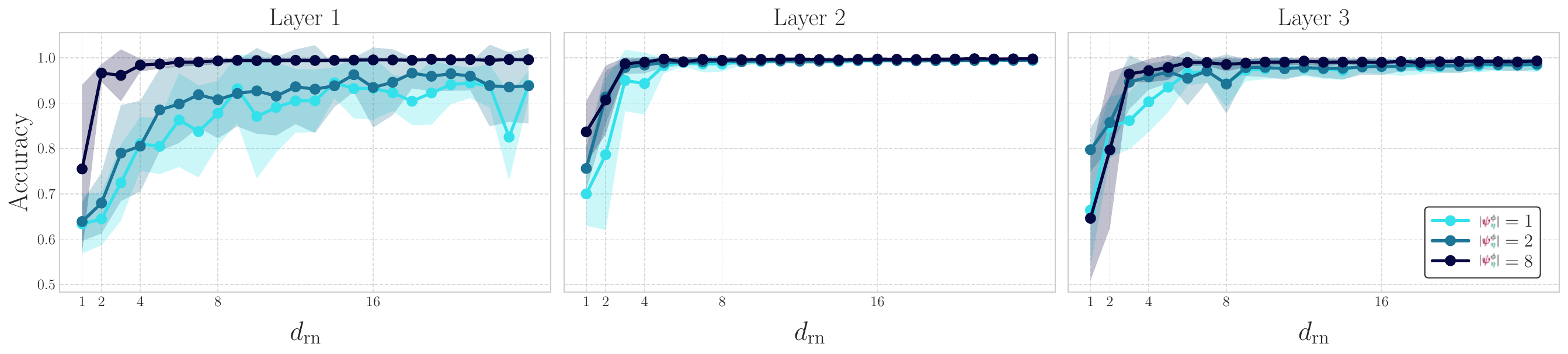}
        \vspace{-15pt}
        \caption{\alganorand}
        \label{fig:hidden_size_andorand}
        \vspace{1em}
    \end{subfigure}
    \caption{Mean IIA over 5 seeds using $\RevNet$ ($\revnetlayers=1$) on the trained DNN. Performance improves with larger hidden dimension $\revnetdim$ and intervention size \smash{$\interventionsize$}. Each subplot corresponds to one of the two candidate algorithms for the distributed law task, showing how $\causalmap$'s representational capacity influences performance.}
    \label{fig:hidden_size_dl_all}
\end{figure}

\paragraph{And-Or-And Training.}
In this section, we analyse a DNN when this model is trained specifically to rely on the \alganorand algorithm (and, consequently, to encode the values of its hidden nodes). 
We do so with the method from \citet{pmlr-v162-geiger22a}, training the DNN to encode \alganorand's hidden nodes' values in its second layer, with an intervention size of 12.
This method is similar to how we train $\causalmap$ (see \cref{app:dl_training_details}), but $\causalmap$ is fixed to the identity function, and the DNN itself is trained; further, the training dataset is composed of 1/4 non-intervened samples, 1/4 samples with interventions on $\node_{(\bx_1\isequal\bx_2)\land(\bx_3\isequal\bx_4)}$, 1/4 on $\node_{(\bx_3\isequal\bx_4)\land(\bx_5\isequal\bx_6)}$, and 1/4 on both variables.
We then evaluate if this DNN abstracts both the \alganorand and \algandor using different $\causalmap$ (as before, after freezing the DNN).
The IIA performance of these $\causalmap$ is presented in \cref{fig:distributed_law_fixAndOrAnd}. 
We can see here that, when using identity and linear alignment maps $\causalmap$, IIA scores suggest that the \alganorand algorithm seems to be implemented perfectly given the second layer, where we have only around 0.75 IIA for the \algandor algorithm. 
However, these differences vanish almost completely using $\RevNet$ as our alignment map.

\begin{figure}[t]
    \centering
    \includegraphics[width=.9\textwidth]{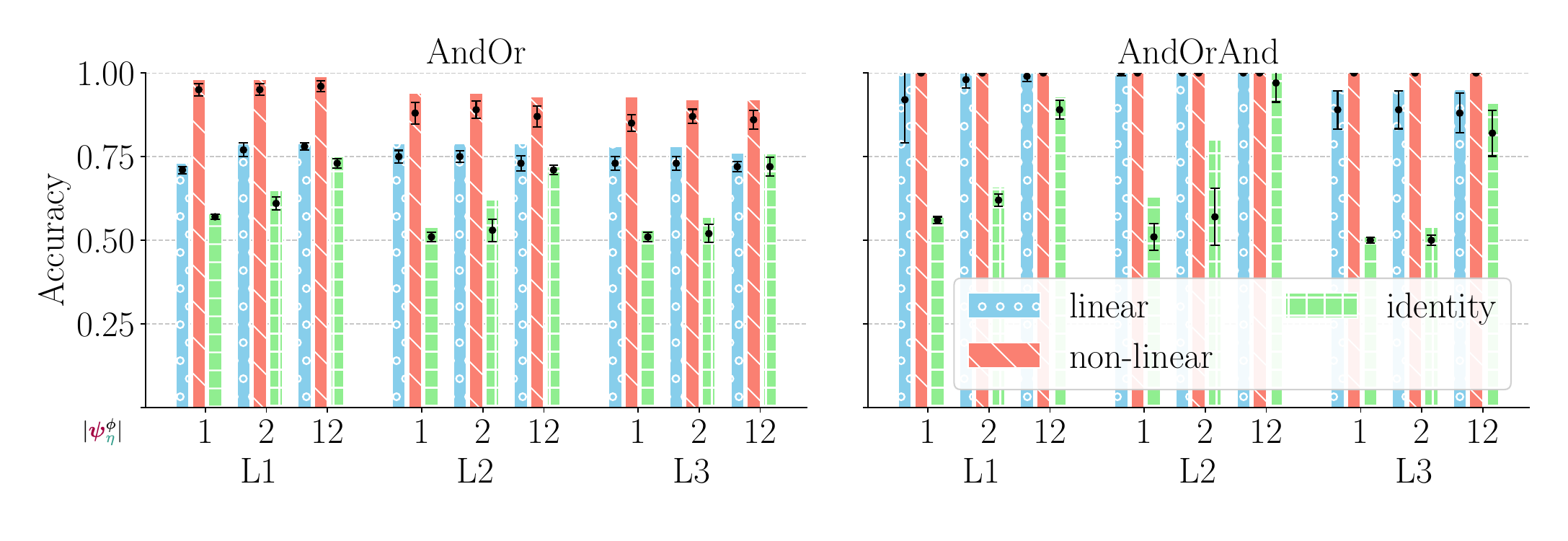}
    \vspace{-10pt}
    \caption{IIA in the distributive law task for causal abstractions trained with different alignment maps $\causalmap$ and a DNN trained to use the \alganorand algorithm. The figure shows results when evaluating if the DNN encodes either of the analysed algorithms for this task.
    The bars represent the max IIA across 10 runs with different random seeds. The black lines represent mean IIA with 95\% confidence intervals. The $\interventionsize$ denotes the intervention size per node. All DNNs reach >99.9\% accuracy after training. The used $\RevNet$ uses $\revnetlayers=10$ and $\revnetdim=16$.}
    \label{fig:distributed_law_fixAndOrAnd}
\end{figure}

\paragraph{And-Or Training.}
In this section, we report an experiment similar to the above, but we train our DNN to rely on the \algandor algorithm instead.
These results are shown in \cref{fig:distributed_law_fixAndOr}.
In this figure, we again see that, when using identity and linear as alignment map $\causalmap$, IIA performance suggests that the \algandor algorithm seems to be implemented perfectly given the second layer, where we have only around 0.65 IIA for the \alganorand algorithm. 
These differences however vanish when using $\RevNet$ as alignment map, which leads to perfect IIA scores with either algorithm.

\begin{figure}[h]
    \centering
    \includegraphics[width=.9\textwidth]{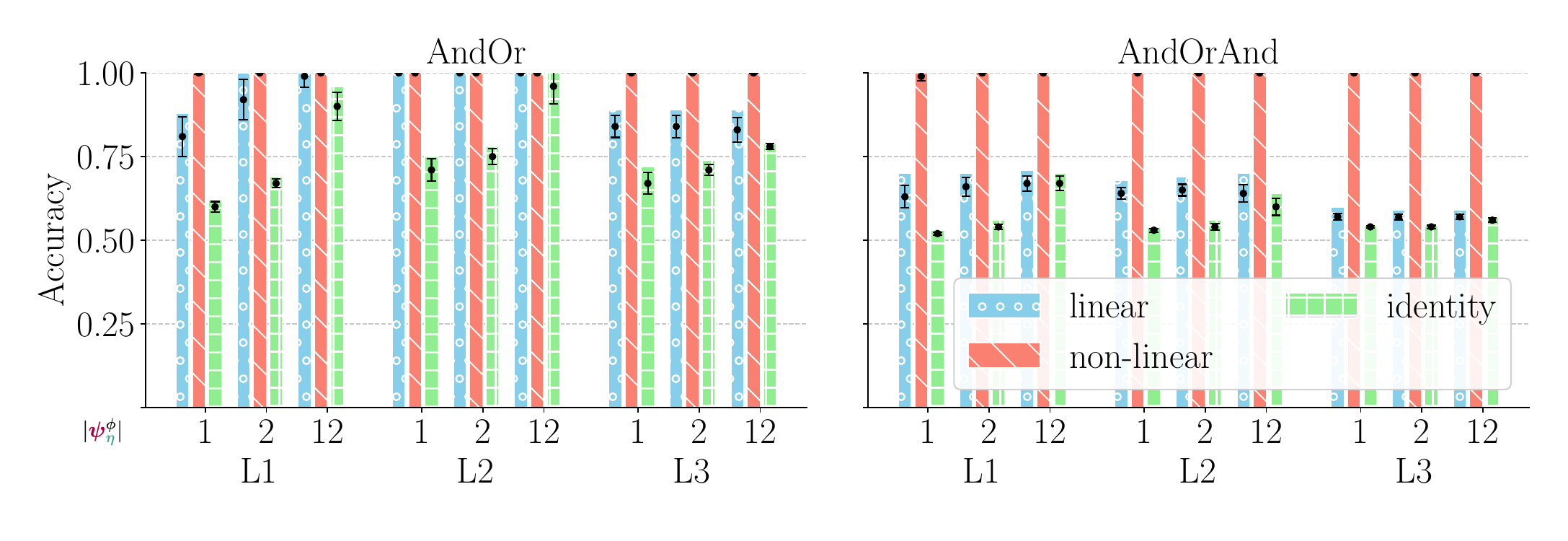}
    \vspace{-10pt}
    \caption{IIA in the distributive law task for causal abstractions trained with different alignment maps $\causalmap$ and a DNN trained to use the \algandor algorithm. The figure shows IIA results when evaluating if the DNN encodes either of the analysed algorithms for this task.
    The bars represent the max IIA across 10 runs with different random seeds. The black lines represent mean IIA with 95\% confidence intervals. The $\interventionsize$ denotes the intervention size per node. Without interventions, all DNNs reach >99.9\% accuracy. The used $\RevNet$ uses $\revnetlayers=10$ and $\revnetdim=16$.}
    \label{fig:distributed_law_fixAndOr}
\end{figure}

\section{Computational Resources}

The experiments on MLP were executed on CPU (10 computers with i7-4770 or newer) over 3 weeks, as we noticed that DAS on small MLPs are faster on CPU than on GPU. 
The experiments on the Pythia models were executed on a single A100 GPU with 80GB of memory using approximately 30 GPU hours, including the hyperparameter tuning.

\end{document}